\tikzset{
  box/.style={
    draw,
    rounded corners,
    minimum width=2cm,
    minimum height=1cm,
    align=center,
  },
  arrow/.style={
    ->,
    >=Latex,
    thick,
    draw=blue,
  },
  dashed-line/.style={
    dashed,
    thick,
    draw=blue,
  }
}
\newtheorem{theorem}{Theorem}[section]
\newtheorem{proposition}[theorem]{Proposition}
\newtheorem{lemma}[theorem]{Lemma}
\newtheorem{corollary}[theorem]{Corollary}
\newtheorem{assumption}[theorem]{Assumption}
\newtheorem{remark}[theorem]{Remark}
\newcounter{theoremctr}
\newcounter{corollaryctr}
\newcounter{assumptionctr}
\newcounter{propositionctr}
\newcounter{lemmactr}
\renewcommand{\thetheoremctr}{\arabic{theoremctr}}     
\renewcommand{\thecorollaryctr}{\arabic{corollaryctr}}  
\renewcommand{\theassumptionctr}{\arabic{assumptionctr}} 
\renewcommand{\thepropositionctr}{\arabic{propositionctr}} 
\renewcommand{\thelemmactr}{\arabic{lemmactr}}
\newcommand{\tmix}{t_{\mathsf{mix}}} 
\newcommand{\mix}{\mathsf{mix}}
\let\bar\overline
\let\tilde\widetilde
\newcommand{\red}[1]{\textcolor{red}{#1}}
\newcommand{\blue}[1]{\textcolor{blue}{#1}}
\newcommand{\yuting}[1]{\red{Yuting: #1}}
\newcommand{\weichen}[1]{\blue{Weichen: #1}}
\let\old@paragraph\paragraph
\renewcommand{\paragraph}[1]{\old@paragraph{#1.}}
\title{Uncertainty quantification for Markov chain induced martingales \\ with application to temporal difference learning}
\author{
  Weichen Wu\thanks{The Voleon Group, New York, NY 10010 USA.}\\
  \and
  Yuting Wei\thanks{Department of Statistics and Data Science, The Wharton School, University of Pennsylvania, Philadelphia, PA 19104, USA.}  \\
  \and 
  Alessandro Rinaldo\thanks{Department of Statistics and Data Sciences, University of Texas, Austin TX 78705, USA.}
  } 
\date{\today}
\begin{document}

\maketitle

\begin{abstract}
We establish novel and general high-dimensional concentration inequalities and Berry-Esseen bounds for vector-valued martingales induced by Markov chains. We apply these results to analyze the performance of the Temporal Difference (TD) learning algorithm with linear function approximations, a widely used method for policy evaluation in Reinforcement Learning (RL), obtaining a sharp high-probability consistency guarantee that matches the asymptotic variance up to logarithmic factors. Furthermore, we establish an $O(T^{-\frac{1}{4}}\log T)$ distributional convergence rate for the Gaussian approximation of the TD estimator, measured in convex distance. Our martingale bounds are of broad applicability, and our analysis of TD learning provides new insights into statistical inference for RL algorithms, bridging gaps between classical stochastic approximation theory and modern RL applications.
\end{abstract}

\setcounter{tocdepth}{2}
\tableofcontents

\section{Introduction}




Markov chains are important tools in statistical machine learning for modeling dependent data.
They provide a theoretical framework for analyzing sequential algorithms, are fundamental to MCMC sampling, Hidden Markov Models, and Reinforcement Learning (RL), and are widely used in a multitude of high-stakes applications, including NLP, finance, biology, and AI systems.

Given their widespread use, it is crucial to provide rigorous probabilistic guarantees on the convergence, stability, and error bounds of Markovian sequences. Indeed, uncertainty quantification is essential for assessing the reliability, robustness, and generalization power of machine learning models built on Markov chains. For example, in MCMC sampling, precise uncertainty quantification ensures reliable convergence diagnostics and variance estimation, preventing misleading inferences in Bayesian models. In RL, uncertainty estimation for value functions helps balance exploration and exploitation, leading to more stable decision-making. Similarly, understanding uncertainty in reverse stochastic processes improves sample quality and diversity in diffusion-based generative models.

In the first part of the paper, we derive novel concentration and high-dimensional Berry--Esseen bounds for a certain type of bounded vector-valued martingale difference sequences induced by (not necessarily reversible) Markov chains, and illustrate the broad applicability of this data dependency structure. While there is a rich literature on concentration bounds for vector- and matrix-valued additive functionals of Markov chains \citep{garg2018matrixexpanderchernoff,qiu2020matrix,neeman2024concentration}, we are in fact not aware of any high-dimensional Berry--Esseen bounds for data of this form. Recently,  \cite{heejong.m.dependence} have studied high-dimensional Gaussian approximations for $m$-dependent sequences over the class of hyper-rectangles, though those bounds and methods do not directly apply to martingale sequences.

In the second part of the paper, we turn to the study of uncertainty quantification for the temporal difference (TD) learning algorithm, a widely used method for value function estimation in RL \citep{sutton2018reinforcement}, assuming Markovian data. TD learning is an instance of stochastic approximation \citep{robbins1951stochastic}, designed to solve fixed-point equations via randomized approximations of residuals. In recent years, largely motivated by the diffusion and success of RL applications, there have been significant advancements in statistical inference techniques for Markov chain-based algorithms in RL; see, e.g., \cite{bhandari2018finite,mou2020linear,Fan2021Hoeffding,li2021sample,samsonov2023finitesample,samsonov2024gaussian,srikant2024rates}. In this paper we obtain novel and sharp bounds that improve on the current state of the art. 





\paragraph{Summary of the contributions}
We make two types of theoretical contributions. First, we derive new high-dimensional finite sample approximations to Markov data and martingale processes in discrete time that are of broad applicability. Specifically,
\begin{itemize}
    \setlength{\itemsep}{0pt}
    

\item In Section \ref{sec:settings}, we introduce the notion of {\it Markov chain induced martingales}, which we argue are naturally suited to analyze the concentration and Gaussian approximation properties of additive functionals of Markov chain data. Then in Section \ref{sec:MC-mtg}, we obtain a Bernstein-type inequality for bounded Markov chain induced martingales in Theorem \ref{thm:matrix-bernstein-mtg}, as well as provide a high-dimensional Gaussian approximation bound measured by convex distance in Theorem \ref{thm:Berry--Esseen-mtg}. To the best of our knowledge, both results are novel. 

\item In Section \ref{sec:MC-Berry--Esseen}, we present new concentration and Gaussian approximation bounds for bounded martingale difference sequences that we require to handle Markov chain induced martingales and may be of independent interest. In particular, in Theorem \ref{thm:Srikant-generalize} and Corollary \ref{cor:Wu}, we obtain novel high-dimensional  Berry--Esseen bounds in Wasserstein distance for normalized sums of discrete time vector-valued martingales with deterministic variance exhibiting dependence on the sample size $n$ of order $O(n^{-\frac{1}{2}}\log n)$. We establish this bound by refining and extending a proof strategy recently put forward by \cite{srikant2024rates}. 

\end{itemize}

Secondly, we establish rates of consistency and distributional approximations to the output of the TD learning algorithm (with Polyak-Ruppert averaging and polynomially vanishing step-sizes), arguably the most popular methodology for policy evaluation using linear approximations in reinforcement learning, with Markovian sequences of length $T$. Specifically, 
\begin{itemize}
\setlength{\itemsep}{0pt}
\item Theorem \ref{thm:TD-whp} gives a high-probability bound on the Euclidean norm of the estimation error of the TD learning algorithm featuring a sample dependence of order $T^{-1/2}$, up to log factors, and an optimal dependence on the asymptotic variance;
\item Theorem \ref{thm:TD-Berry--Esseen} provides a high-dimensional Berry--Esseen bound  for the TD estimator in the convex distance with a sample dependence of order $O(T^{-\frac{1}{4}}\log T)$. 
\end{itemize}
Both results are novel and do not have direct counterparts in the RL literature, which has, for the most part, focused on independent samples; see Section \ref{sec:TD} for a discussion of relevant work. 

\paragraph{Notation}
Throughout the paper, we use boldface small letters to denote vectors and boldface capital letters to denote matrices. For any vector $\bm{x}$, we let $\|\bm{x}\|_2$ be its $L_2$ norm; for any matrix $\bm{M}$, $\mathsf{Tr}(\bm{M})$ denotes its trace, $\mathsf{det}(\bm{M})$ its determinant, $\|\bm{M}\|$ its spectral norm (i.e., the largest singular value), and $\|\bm{M}\|_{\mathsf{F}}$ its Frobenius norm, i.e., $\|\bm{M}\|_{\mathsf{F}}=\sqrt{\mathsf{Tr}(\bm{M}^\top \bm{M})}$.  For any $\bm{M} \in \mathbb{S}^{d \times d}$, the set of all $d \times d$ real symmetric matrices, we use $\lambda_{\max}(\bm{M}) = \lambda_1(\bm{M}) \geq \lambda_2(\bm{M}) \geq \ldots \geq \lambda_d(\bm{M}) = \lambda_{\min}(\bm{M})$ to indicate its eigenvalues.
For sequences $\{f_t\}_{t \in \mathbb{N}}$ and $\{g_t\}_{t \in \mathbb{N}}$ of numbers, we write $f_t \lesssim g_t$, or $f_t = O(g_t)$, to signify that there exists a universal constant $C > 0$  such that $f_t \leq C g_t$ for all $t$. We will use the notation $\tilde{C}$ to express a quantity independent of $T$, the number of iterations/sample size, but possibly dependent on other problem-related parameters. 

For two measures $\mu,\nu$ on the same measurable space $(\mathcal{X},\mathscr{F})$, if $\nu$ is absolutely continuous with respect to $\mu$, we use $\frac{\mathrm{d}\nu}{\mathrm{d}\mu}$ to denote its Radon--Nykodym derivative with respect to $\mu$. 
For any measurable function $f: \mathcal{X} \to \mathbb{R}$ and any number $p > 1$, we set $\|f\|_{\mu,p}^p:=\int_{x \in \mathcal{X}}|f(x)|^p \mathrm{d}\mu(x)$.
This definition can also be extended to the case of $p = \infty$, by defining $\|f\|_{\mu,\infty} = \text{ess} \sup |f|,$ the essential supremum of $|f|$ with respect to $\mu$. To simplify the notation, when $p = 2$, we write $\|f\|_{\mu}:=\|f\|_{\mu,2}$.

We will consider the following measure of distance between two probability distributions $P,Q$ on $\mathbb{R}^d$: the \emph{total variation distance} $d_{\mathsf{TV}(P,Q)}:= \sup_{\mathcal{A} \in \mathscr{B}_d} |P(\mathcal{A}) - Q(\mathcal{A})|$, where $\mathscr{B}_d$ represents the class of all Borel sets in $\mathbb{R}^d$; the \emph{convex distance} $d_{\mathsf{C}}(P,Q) := \sup_{\mathcal{A} \in \mathscr{C}_d} |P(\mathcal{A}) - Q(\mathcal{A})|$, where  $\mathscr{C}_d$ represents the class of all convex subsets of $\mathbb{R}^d$; and the \emph{Wasserstein distance} $d_{\mathsf{W}}(P,Q) := \sup_{h \in \mathsf{Lip}_1}|\mathbb{E}_{\bm{x} \sim P}[h(\bm{x})] - \mathbb{E}_{\bm{x} \sim Q}[h(\bm{x})]|$, where $\mathsf{Lip}_1$ represents the class of all 1-Lipschitz functions from $\mathbb{R}^d$ to $\mathbb{R}$. 

\section{
Markov chain induced martingales}\label{sec:settings}

We study high-dimensional concentration and Gaussian approximations for certain martingale difference sequences that are naturally associated with Markovian sequences, which here we refer to as {\it Markov chain induced martingales,} defined next. Although our results provide novel tools of broad applicability to the analysis of Markov chain data, they are especially useful in TD learning modeling; see Section \ref{sec:TD}.
Below, we briefly introduce the problem setting and main assumptions and formalize the notion of Markov chain induced martingales.

Throughout this paper, we consider a discrete-time Markov chain $\{s_t\}_{t=0,1,2,\ldots}$ on some (sufficiently well-behaved) state space $\mathcal{S}$ satisfying the following assumptions. We refer the reader to Appendix \ref{app:MC-basics} for background and details on the quantities introduced below.
\begin{assumption}\label{as:nu}
\begin{enumerate}
\item The transition kernel $P$ admits a \emph{unique stationary distribution}, denoted as $\mu$.\footnote{It can be guaranteed that any positive recurrent, irreducible and aperiodic Markov chain fulfills this property.}
\item The transition kernel $P$ has a \emph{spectral gap} $1-\lambda > 0$.
\item The initial state $s_0$ is drawn from a probability distribution $\nu$ that is absolutely continuous with respect to the stationary distribution $\mu$; furthermore, there exists $p \in (1,\infty]$, such that $\left\|\frac{\mathrm{d}\nu}{\mathrm{d}\mu}\right\|_{\mu,p} < \infty$, where $\frac{\mathrm{d}\nu}{\mathrm{d}\mu}$ is the Radon--Nykodym derivative of $\nu$ with respect to $\mu$ and $\| \cdot \|_{\mu,p}$ denotes the $L_p(\mu)$ norm.
\end{enumerate}
\end{assumption}
Letting $q \in [1,\infty)$ denote the conjugate of $p$, i.e. $\frac{1}{p} + \frac{1}{q} = 1$ when $p < \infty$ and  $q = 1$ if $p = \infty$,  we say that $\nu$ \emph{satisfies Assumption \ref{as:nu} with parameters $(p,q)$} if these conditions hold true.

\begin{remark}
   In our analysis, we do \emph{not} require the Markov chain to be reversible.
\end{remark}

We will study functions $\bm{f}:\mathcal{S}^2 \to \mathbb{R}^{d}$ of Markov chains that satisfy the following condition. 
\begin{assumption}\label{as:markov-mtg}
For every $s \in \mathcal{S}$, $\mathbb{E}_{s' \sim P(\cdot \mid s)}\bm{f}(s,s') = \bm{0}$ and $\mathbb{E}_{s' \sim P(\cdot|s)}[\|\bm{f}(s,s')\|_2^2] < \infty$.
\end{assumption}

In this paper, we work with {\it bounded} measurable functions $\bm{f}$, which automatically satisfy the second moment condition in the above assumption. 
It is immediate to see that, for a sequence of functions (possibly the same one) $\{\bm{f}_i\}_{1 \leq i \leq n}$ each satisfying Assumption \ref{as:markov-mtg} and a random sequence $\{s_i\}_{i=0}^n$ drawn from the Markov chain, the $d$-dimensional random vectors 
\begin{equation}\label{eq:MC-martingale}
\bm{f}_1(s_{0},s_1), \bm{f}_2(s_{1},s_2), \ldots, \bm{f}_n(s_{n-1},s_n)
\end{equation}
form a martingale difference sequence with respect to the natural filtration of the data $\{ \mathcal{F}_{i} \}_{i=0}^n$, i.e.,
\begin{align*}
\mathbb{E}[\bm{f_i}(s_{i-1},s_i)| \mathcal{F}_{i-1}] = \bm{0}\quad \text{for all }i=1,\ldots,n.
\end{align*}
We call this type of martingale {\it a Markov chain induced martingale.}

We remark that Markov chain induced martingales are well-known quantities that arise naturally when studying concentration properties and Gaussian approximations for additive functionals of discrete Markov chains.
In detail, suppose that we are interested in analyzing the sum
\begin{align}\label{eq:partial-sum}
\sum_{i=1}^n \bm{g}(s_i), \qquad \text{for }
\bm{g} \colon \mathcal{S} \rightarrow \mathbb{R}^d,
\end{align}
where the random sequence $\{s_i\}_{i=1}^n$ is a draw from a Markov chain and $\bm{g}$ satisfies $\mathbb{E}_{s \sim \mu}\mathbb{E}[\bm{g}(s)] = \bm{0}$. 
We recall that the initial state $s_0$  of the Markov chain need not be a draw from the stationary distribution. A widely used approach \citep{
GlynnPeterW.2024SRfP,Makowski2002,li2023online,srikant2024rates} is to relate the above sum to a sum of terms from a martingale difference via {\it the Poisson equation} 
\begin{align}\label{eq:poisson.eq}
\bm{g}(s) = \bm{U}(s)- \mathcal{P}\bm{U}(s), \quad  s \in \mathcal{S},
\end{align}
holding for $\mu$-almost all $s \in \mathcal{S}$,
where the operator $\mathcal{P}$ maps the function $\bm{U} \colon \mathcal{S} \rightarrow \mathbb{R}^d$ to the function $s \in \mathcal{S} \mapsto \mathcal{P}\bm{U}(s) = \mathbb{E}_{s' \sim P(\cdot \mid s)}[\bm{U}(s')]$; see Appendix \ref{app:MC-basics} for details. 
Though there can be multiple solutions to the Poisson equation, it is well known that if the function 
$s \in \mathcal{S} \mapsto \sum_{k=0}^\infty \mathcal{P}^k \bm{g}(s)$  is integrable with respect to the stationary distribution, then it is a solution $\bm{U}$ to \eqref{eq:poisson.eq}; see, e.g., \cite{MC-book}. This is indeed the approach we will follow in Section \ref{sec:TD} when we analyze the properties of the TD learning algorithm. 


Given a solution $\bm{U}$ to the Poisson equation, the partial sum \eqref{eq:partial-sum} can then be expanded as
\begin{align}\label{eq:partial-sum-decompose}
\sum_{i=1}^n \bm{g}(s_i)&= \sum_{i=1}^n \{\bm{U}(s_i) - \mathbb{E}[\bm{U}(s_{i+1})\mid s_i]\} \nonumber \\ 
&= \sum_{i=1}^n \{\bm{U}(s_i) - \mathbb{E}[\bm{U}(s_{i})\mid s_{i-1}]\} + \sum_{i=1}^n \{\mathbb{E}[\bm{U}(s_{i})\mid s_{i-1}] - \mathbb{E}[\bm{U}(s_{i+1})\mid s_i]\} \nonumber \\ 
&= \sum_{i=1}^n \{\bm{U}(s_i) - \mathbb{E}[\bm{U}(s_{i})\mid s_{i-1}]\} + \mathbb{E}[\bm{U}(s_1)\mid s_0] - \mathbb{E}[\bm{U}(s_{n+1})\mid s_n]
\end{align}
where the second identity follows from a reordering of the summands and the last line from telescoping. Thus, by defining 
\begin{align*}
\bm{f}(s,s'):= \bm{U}(s') - \mathcal{P}\bm{U}(s), \quad  s,s' \in \mathcal{S},
\end{align*}
we can easily verify that $\bm{f}$ satisfies Assumption \eqref{as:markov-mtg} and that the first term of \eqref{eq:partial-sum-decompose} can be represented as
\begin{align*}
\sum_{i=1}^n \{\bm{U}(s_i) - \mathbb{E}[\bm{U}(s_{i})\mid s_{i-1}]\} = \sum_{i=1}^n \bm{f}(s_{i-1},s_i),
\end{align*}
which is a sum of the terms forming a Markov chain induced martingale \eqref{eq:MC-martingale}. 
Thus, the task of establishing concentration and Gaussian approximation properties of the sum \eqref{eq:partial-sum} largely reduces to that of analyzing the sum of a martingale difference sequence. This is the main task we undertake in the first half of the paper.

\section{Main results}
In this section, we describe our main results. 
In Section \ref{sec:MC-mtg} we present a novel Bernstein-style concentration inequality and a new Berry--Esseen bound over the class of convex sets in $\mathbb{R}^d$ for Markov chain induced  martingales.  Section \ref{sec:MC-Berry--Esseen} presents general Gaussian approximation and concentration bounds for vector-valued martingales (not necessarily Markov chain induced  martingales) that are needed in our analysis and that may be of independent interest.

\subsection{Bounds for Markov chain induced martingales}\label{sec:MC-mtg}
In our first result, we derive a novel Bernstein-style concentration inequality for the Euclidean norm of an average of a bounded Markov chain induced martingale \eqref{eq:MC-martingale}, assuming the settings and conditions introduced in Section \ref{sec:settings}. We denote with 
\begin{align}\label{eq:defn-Sigman}
\bm{\Sigma}_n = \frac{1}{n} \sum_{i=1}^n \mathbb{E}_{s\sim\mu,s' \sim P(\cdot \mid s)}\Big[\bm{f}_i(s,s')\bm{f}_i^\top(s,s')\Big]
\end{align} 
the average variance of the martingale initialized from the stationary distribution (i.e. when $s_0 \sim \mu$),
which is well defined provided that $ \mathbb{E}_{s \sim \mu, s' \sim P(\cdot|s)}[\|\bm{f}_i(s,s')\|_2^2] < \infty $ for all $i$.

\medskip
\begin{theorem}[Bernstein's inequality for Markov chain induced martingales]\label{thm:matrix-bernstein-mtg}
Consider a Markov chain $\{s_i\}_{i =0,1,\ldots,n}$ satisfying Assumption \ref{as:nu} and let $\{\bm{f}_i\}_{i \in [n]}$ be a sequence of functions 
as in Assumption \ref{as:markov-mtg} such that 
\begin{align*}
\|\bm{f}_i(s,s')\|_2 \leq F,\quad \forall i \in [n], \quad \text{and} \quad  s,s' \in \mathcal{S},
\end{align*}
for some constant $F>0$.
Then for any $\delta \in (0,1)$, 
\begin{align}\label{eq:matrix-Bernstein}
\left\|\frac{1}{n}\sum_{i=1}^n \bm{f}_i(s_{i-1},s_i)\right\|_2  \lesssim \sqrt{\frac{\mathsf{Tr}(\bm{\Sigma}_n)}{n}\log \frac{1}{\delta}}+\frac{\sqrt{q}F}{(1-\lambda)^{\frac{1}{4}} n^{\frac{3}{4}}} \log^{\frac{3}{4}} \left(\frac{1}{\delta}\left\|\frac{\mathrm{d}\nu}{\mathrm{d}\mu}\right\|_{\mu,p}\right) + \frac{F}{n} \log \frac{1}{\delta},
\end{align}
with probability at least $1-\delta$.
\end{theorem}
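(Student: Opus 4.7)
The plan is to combine two concentration ingredients: (i) a dimension-free Freedman--Bernstein inequality for bounded vector-valued martingale difference sequences, which controls the norm of the average in terms of the \emph{random} conditional covariance; and (ii) a Hoeffding-type concentration bound for bounded scalar additive functionals of Markov chains, applied to show that this random covariance concentrates around its stationary counterpart $\bfSigma_n$ defined in \eqref{eq:defn-Sigman}. The $n^{-3/4}$ rate of the middle term in \eqref{eq:matrix-Bernstein} is precisely the geometric mean of the two $n^{-1/2}$ rates produced by these ingredients, and this is what dictates the overall decomposition.

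In Step~1, I would apply a Pinelis-type vector Freedman inequality to the Markov-chain-induced martingale differences $\{\bm{f}_i(s_{i-1},s_i)\}_{i=1}^n$, combined with a dyadic peeling argument to handle the randomness of the conditional variance proxy
\begin{equation*}
  \bfV_n \;:=\; \frac{1}{n}\sum_{i=1}^n \mathbb{E}\bigl[\bm{f}_i(s_{i-1},s_i)\bm{f}_i(s_{i-1},s_i)^\top \mid \mathcal{F}_{i-1}\bigr].
\end{equation*}
This step should yield, with probability at least $1-\delta$,
\begin{equation*}
  \left\|\frac{1}{n}\sum_{i=1}^n \bm{f}_i(s_{i-1},s_i)\right\|_2 \;\lesssim\; \sqrt{\frac{\mathsf{Tr}(\bfV_n)}{n}\,\log\frac{1}{\delta}} \;+\; \frac{F}{n}\log\frac{1}{\delta}.
\end{equation*}
Using a dimension-free version of the inequality is important here so that no $\log d$ factor creeps into the leading term, and the peeling replaces the \emph{a priori} random $\mathsf{Tr}(\bfV_n)$ by its realized value at only a logarithmic cost.

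In Step~2, I would rewrite $\mathsf{Tr}(\bfV_n) = \frac{1}{n}\sum_{i=1}^n \mathsf{Tr}(\bfPhi_i(s_{i-1}))$, where $\bfPhi_i(s):=\mathbb{E}_{s'\sim P(\cdot\mid s)}[\bm{f}_i(s,s')\bm{f}_i(s,s')^\top]$, and observe that this is the empirical average of the bounded scalar functions $\mathsf{Tr}(\bfPhi_i(\cdot))\in[0,F^2]$ along the chain, with stationary mean $\mathsf{Tr}(\bfSigma_n)$. Applying the Markov chain Hoeffding bound (as developed elsewhere in the paper and in the literature) to this scalar functional, started from $\nu$ with Radon--Nikodym exponent $p$ and conjugate $q$, then yields a high-probability deviation of order $F^2/\sqrt{(1-\lambda)n}$ with logarithmic dependence on $\|\mathrm{d}\nu/\mathrm{d}\mu\|_{\mu,p}/\delta$ and polynomial dependence on $q$.

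In Step~3, I would union-bound the two events and substitute the variance control back into the Freedman estimate. By subadditivity $\sqrt{a+b}\le\sqrt{a}+\sqrt{b}$, the trace term splits into the desired leading term $\sqrt{\mathsf{Tr}(\bfSigma_n)\,n^{-1}\log(1/\delta)}$ plus a cross-term which, after collecting the powers of $n$, $1-\lambda$, and $q$ and merging the two logarithms, reduces to exactly the $\sqrt{q}\,F\,(1-\lambda)^{-1/4}\,n^{-3/4}\,\log^{3/4}$ middle term of \eqref{eq:matrix-Bernstein}; the Freedman tail $F\log(1/\delta)/n$ is preserved intact. The main obstacle I anticipate is executing Step~1 dimension-freely with the \emph{trace}-of-conditional-variance scaling (rather than the operator-norm scaling typical of standard matrix Freedman statements), and then calibrating the peeling grid together with the Markov chain Hoeffding exponents carefully enough that the bookkeeping in the cross-term reproduces the stated $\sqrt{q}\,\log^{3/4}$ factor rather than a strictly weaker bound.
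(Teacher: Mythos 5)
Your proposal matches the paper's proof in all essentials: the paper also applies a dimension-free Freedman-type inequality for Hilbert-space-valued martingales (Theorem \ref{thm:Hilbert-Freedman}, stated in the ``either $W_n\geq\sigma^2$ or the bound holds'' form, which lets one plug in a deterministic $\sigma^2$ directly and thereby sidesteps the dyadic peeling you anticipate), then controls $W_n-\mathsf{Tr}(\bm{\Sigma}_n)$ via the scalar Markov-chain Hoeffding bound of Corollary \ref{cor:markov-hoeffding-1d} applied to $s\mapsto\mathbb{E}_{s'\sim P(\cdot\mid s)}\|\bm{f}_i(s,s')\|_2^2$, and finishes with a union bound and $\sqrt{a+b}\leq\sqrt{a}+\sqrt{b}$ to produce the $n^{-3/4}\log^{3/4}$ cross-term. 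Your decomposition, choice of ingredients, and bookkeeping of the $q$, $1-\lambda$, and logarithmic factors are all the same as in Appendix \ref{app:proof-matrix-bernstein-mtg}.
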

\medskip
The proof of this result is provided in Appendix~\ref{app:proof-matrix-bernstein-mtg}. 
To establish Theorem ~\ref{thm:matrix-bernstein-mtg}, we deploy a version of Freedman's inequality for martingales in Hilbert spaces due to \cite{peng2024advances} (see \ref{thm:Hilbert-Freedman} in the appendix), yielding an upper bound that is dependent on the \emph{conditional} covariance matrix
\begin{align*}
\bar{\bm{\Sigma}}_n := \frac{1}{n}\sum_{i=1}^n \mathbb{E}[\bm{f}_i\bm{f}_i^\top\mid \mathcal{F}_{i-1}],
\end{align*}
where $\{ \mathcal{F}_{i} \}_{i=0}^{n}$ is the natural filtration induced by the Markov chain. 
However, since this quantity is not measurable with respect to the trivial $\sigma$-field $\mathscr{F}_0$,
 we further control the difference between the traces of $\bar{\bm{\Sigma}}_n$  and $\bm{\Sigma}_n$ using a one-dimensional Hoeffding's inequality for Markov chains by \cite{Fan2021Hoeffding} (Theorem \ref{thm:markov-hoeffding-1d} in the appendix), which leads to the first two terms in the upper bound \eqref{eq:matrix-Bernstein}. Notice that, assuming for simplicity that only $n$ and $d$ vary in the bound, the second and third terms on the right hand side of \eqref{eq:matrix-Bernstein} both converge faster than $n^{-1/2}$, so Theorem \ref{thm:matrix-bernstein-mtg} shows that the sample mean of the sequence $\{\bm{f}_i\}$ converges to $\bm{0}$ in Euclidean norm at a rate determined by $\sqrt{ \mathsf{Tr}(\bm{\Sigma}_n)/n}$ with high probability, just like in the i.i.d. case. This bound plays a key role in 
our analysis of TD learning in Section~\ref{sec:TD} and can be potentially used to understand the concentration of other machine learning algorithms. 

In our next result, we establish a high-dimensional Berry--Esseen bound for bounded Markov chain induced martingales in the convex distance, which is amenable to constructing confidence regions/sets, a task that would be otherwise difficult in other metrics, e.g the Wasserstein distance. 

\medskip
\begin{theorem}[High-dimensional Berry--Esseen bound on martingales generated from Markov chains]\label{thm:Berry--Esseen-mtg}
Consider a Markov chain $\{s_i\}_{i =1,\ldots,n}$ satisfying Assumption \ref{as:nu} with parameters $(p,q)$ and let $\{\bm{f}_i\}_{i \in [n]}$ be a sequence of functions as in Assumption \ref{as:markov-mtg} such  that 
 $\|\bm{\Sigma}_n^{-\frac{1}{2}}\bm{f}_i(s,s')\|_2 \leq M_i \leq M$ for all $s,s' \in \mathcal{S}$ and $i \in [n]$. Then, letting
\begin{align*}
\bar{M} = \left(\frac{\sum_{i=1}^n M_i^4}{n}\right)^{\frac{1}{4}},
\end{align*}
it holds that
\begin{align}\label{eq:Berry--Esseen-mtg}
&d_{\mathsf{C}}\left(\frac{1}{\sqrt{n}}\sum_{i=1}^n \bm{f}_i(s_{i-1}, s_i), \mathcal{N}(\bm{0},\bm{\Sigma}_n)\right) \nonumber \\ 
&\lesssim \left\{\bar{M}\left(\frac{q}{1-\lambda}\right)^{\frac{1}{4}}d^{\frac{3}{4}}\log^{\frac{1}{4}}\left(d\left\|\frac{\mathrm{d}\nu}{\mathrm{d}\mu}\right\|_{\mu,p}\right)+ \sqrt{M} d^{\frac{5}{8}}\log^{\frac{1}{2}} d\right\}  \frac{\log n}{n^{\frac{1}{4}}} .
\end{align}
\end{theorem}
\medskip

To demonstrate how the Berry--Esseen bound in Theorem \ref{thm:Berry--Esseen-mtg} scales with the dimension and with other problem-related quantities, consider the scenario in which 
$\bm{f}_1 = \bm{f}_2 = \ldots = \bm{f}_n = \bm{f}$, and $M_1 = M_2 = \ldots = M_n = M$. Then, in this case,
\begin{align*}
\bm{\Sigma}_n = \frac{1}{n} \sum_{i=1}^n \mathbb{E}_{s_0 \sim \mu}\Big[\bm{f}_i(s_{i-1},s_i)\bm{f}_i^\top(s_{i-1},s_i)\Big] = \mathbb{E}_{s \sim \mu, s' \sim P(\cdot \mid s)}[\bm{f}(s,s')\bm{f}^\top(s,s')] =: \bm{\Sigma},
\end{align*}
and $\bar{M} = M$. Therefore, \eqref{eq:Berry--Esseen-mtg} implies that
\begin{align*}
d_{\mathsf{C}}\left(\frac{1}{\sqrt{n}}\sum_{i=1}^n \bm{f}(s_{i-1}, s_i), \mathcal{N}(\bm{0},\bm{\Sigma})\right)  \lesssim \left(\frac{q}{1-\lambda}\right)^{\frac{1}{4}}\log^{\frac{1}{4}}\left(d\left\|\frac{\mathrm{d}\nu}{\mathrm{d}\mu}\right\|_{\mu,p}\right)Md^{\frac{3}{4}}n^{-\frac{1}{4}}\log n.
\end{align*}
We remark that the dependence on $d$ appears both explicitly in the $d^{\frac{3}{4}}$ term, and implicitly in $M$ term. 

The proof of Theorem \ref{thm:Berry--Esseen-mtg}, presented in Appendix \ref{app:proof-Berry--Esseen-mtg}, relies on arguments used by \cite{rollin2018} to derive Gaussian approximations for univariate martingales, which can be traced back to \cite{Dvoretzky.clt.martingale:72}. In the recent literature on high-dimensional statistics, this type of technique has been extended to the multivariate setting in 
\citet[][Lemma B.8]{cattaneo2024yurinskiiscouplingmartingales} and \citet[][Theorem 2.1]{
belloni2018highdimensionalcentrallimit}.
Specifically, we construct an auxiliary martingale with deterministic final quadratic variation, i.e. satisfying \eqref{eq:as-P1} below, and then apply a novel high-dimensional Berry--Esseen bound for martingale difference sequences in Wasserstein distance whose proof borrows heavily from the recent work by \cite{srikant2024rates}; see Theorem \ref{thm:Srikant-generalize} and Corollary \ref{cor:Wu} below. The difference between the target martingale and this auxiliary martingale is handled using a novel Hoeffding inequality for Markov chain, given in Theorem \ref{thm:matrix-hoeffding}. Finally, we combine these bounds using the properties of Gaussian distributions, as well as the relationship between convex distance and Wasserstein distance. These supportive Gaussian approximation and concentration results are novel and non-trivial, and may be of independent interest. They are presented, along with detailed commentary and comparisons with the existing literature, in the next section, which may be read as a stand-alone portion of the paper. 
%



\subsection{General vector-valued martingale bounds}
\label{sec:MC-Berry--Esseen}
This section collects several auxiliary results needed for the proof of Theorem \ref{thm:Berry--Esseen-mtg}, which may be of independent interest. The following theorem provides a novel high-dimensional Berry--Esseen bound for vector-valued martingales with \emph{deterministic terminal quadratic variation} in terms of the  Wasserstein distance. See Appendix \ref{app:Srikant-generalize} for the proof.

\medskip
\begin{theorem}[Berry--Esseen bound on vector-valued martingales]\label{thm:Srikant-generalize} 
Let $\{\bm{x}_i\}_{i=1}^n$ be a martingale difference process in $\mathbb{R}^d$ with respect to the filtration $\{\mathscr{F}_i\}_{i=0}^n$.
For every $i \in [n]$, define 
\begin{align*}
&\bm{V}_i := \mathbb{E}[\bm{x}_i\bm{x}_i^\top \mid \mathscr{F}_{i-1}], \quad  \text{and}  \quad \bm{P}_i := \sum_{j=i}^n \bm{V}_i.
\end{align*}
Furthermore, define
\begin{align*}
\bm{\Sigma}_n := \frac{1}{n}\sum_{i=1}^n \mathbb{E}[\bm{x}_i \bm{x}_i^\top \mid \mathscr{F}_0], 
\end{align*}
and assume that
\begin{align}\label{eq:as-P1}
\bm{P}_1 = n\bm{\Sigma}_n \quad \text{almost surely.}
\end{align}
Then for any  $d$-dimensional symmetric positive semi-definite matrix $\bm{\Sigma}$, it can be guaranteed that
\begin{align}\label{eq:Srikant-Berry--Esseen}
d_{\mathsf{W}}\left(\frac{1}{\sqrt{n}}\sum_{i=1}^n \bm{x}_i,\mathcal{N}(\bm{0},\bm{\Sigma}_n)\right) &\lesssim  \frac{(2+\log(d\|(n\bm{\Sigma}_n + \bm{\Sigma})\|))^+}{\sqrt{n}} \sum_{i=1}^n \mathbb{E}\left[\|(\bm{P}_i + \bm{\Sigma})^{-\frac{1}{2}}\bm{x}_i\|_2^2 \|\bm{x}_i\|_2 \right] \nonumber \\ 
&\qquad+ \frac{1}{\sqrt{n}}\left[\mathsf{Tr}(\log(n\bm{\Sigma}_n+\bm{\Sigma})) - \log(\bm{\Sigma}))\right]+ \sqrt{\frac{\mathsf{Tr}(\bm{\Sigma})}{n}}.
\end{align}
\end{theorem}
\medskip

Let us compare this result with analogous ones in the literature.  Theorem \ref{thm:Srikant-generalize} may be be regarded as a multivariate generalization of the univariate bound of Theorem 2.1 of \cite{rollin2018}. When $d=1$, Theorem \ref{thm:Srikant-generalize} agrees with Theorem 2.1 of \cite{rollin2018}, aside from logarithmic  factors; indeed, translating our notation into that of \cite{rollin2018}, we have that
\begin{align*}
	\bm{\Sigma}_n = s_n^2/n, \quad \bm{P}_i = \rho_i^2, \quad \text{and} \quad \bm{\Sigma} = a^2.
\end{align*}
\textcolor{black}{Theorem \ref{thm:Srikant-generalize} may be compared with Theorem 1 of \cite{JMLR2019CLT}, offering point-wise Gaussian approximation bounds for twice-differentiable functions of multivariate martingale sequences with deterministic terminal quadratic variation, and with Theorem 2.1 in \cite{belloni2018highdimensionalcentrallimit}, which focuses on thrice-differentiable functions. In contrast, our bound delivers uniform guarantees, as it establishes convergence rates in the Wasserstein distance, which in turn has direct relation with the convex distance (see, e.g., Theorem \ref{thm:Gaussian-convex-Wass} in the appendix).} 

As alluded above, our strategy to prove Theorem \ref{thm:Srikant-generalize} closely follows the strategy put forward recently by \cite{srikant2024rates}, which deploys Stein's method and Lindeberg swapping.  We point out that approach by addressing a gap in the proof arising when condition \eqref{eq:as-P1} does not hold and by deriving a tighter bound on the smoothness of the solution to the multivariate Stein's equation, which may be of independent interest (see Proposition \ref{prop:Stein-smooth} in Appendix \ref{app:Srikant-generalize}  and compare it to Proposition 2.2 and 2.3 in \cite{gallouët2018regularity}).

Under an additional technical condition, the bound of Theorem \ref{thm:Srikant-generalize} can be further tightened, as illustrated in the next result, which is more directly applicable to constructing the Berry--Esseen bound for Markov chain-induced martingales, e.g., Theorem \ref{thm:Berry--Esseen-mtg}. The proof is in
Appendix \ref{app:proof-cor-Wu}. 

\medskip
\begin{corollary}
\label{cor:Wu}
Under the settings of Theorem \ref{thm:Srikant-generalize}, further assume there exists a uniform constant $M > 0$, such that for any matrix $\bm{A} \in \mathbb{R}^{d \times d}$ and any $i \in [n]$, it satisfies 
\begin{align}\label{eq:3rd-momentum-condition}
\mathbb{E}\left[\|\bm{Ax}_i\|_2^2 \|\bm{x}_i\|_2 \bigg|\mathscr{F}_{i-1}\right] \leq M \mathbb{E}\left[\|\bm{Ax}_i\|_2^2 \bigg|\mathscr{F}_{i-1}\right].
\end{align}
Then,
\begin{align*}
	d_{\mathsf{W}}\left(\frac{1}{\sqrt{n}}\sum_{i=1}^n \bm{x}_i,\mathcal{N}(\bm{0},\bm{\Sigma}_n)\right) \lesssim \left[M(2+\log(dn\|\bm{\Sigma}_n\|))^+ + 1\right]\frac{d \log n}{\sqrt{n}} +\sqrt{\frac{\mathsf{Tr}(\bm{\Sigma}_n)}{n}}.
\end{align*}
\end{corollary}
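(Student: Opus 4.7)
The plan is to apply Theorem \ref{thm:Srikant-generalize} with a judicious choice of the auxiliary matrix $\bm{\Sigma}$, use the extra third-moment hypothesis \eqref{eq:3rd-momentum-condition} to collapse the third-moment term into a trace, and then deploy a log-determinant concavity inequality to reduce what remains to a telescoping sum. Under assumption \eqref{eq:as-P1} one has $\bm{P}_i=n\bm{\Sigma}_n-\sum_{j<i}\bm{V}_j$, and since each $\bm{V}_j$ is $\mathscr{F}_{j-1}$-measurable, so is $\bm{P}_i$, which makes $\bm{A}:=(\bm{P}_i+\bm{\Sigma})^{-1/2}$ an $\mathscr{F}_{i-1}$-measurable (hence conditionally deterministic) matrix for any positive definite $\bm{\Sigma}$. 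Applying \eqref{eq:3rd-momentum-condition} conditionally with this choice of $\bm{A}$ yields
\begin{align*}
\mathbb{E}\!\left[\|(\bm{P}_i+\bm{\Sigma})^{-1/2}\bm{x}_i\|_2^2\,\|\bm{x}_i\|_2 \,\bigg|\, \mathscr{F}_{i-1}\right] \;\leq\; M\,\mathsf{Tr}\!\left((\bm{P}_i+\bm{\Sigma})^{-1}\bm{V}_i\right).
\end{align*}

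The resulting quantity is amenable to telescoping via the standard concavity inequality $\mathsf{Tr}((\bm{X}+\bm{H})^{-1}\bm{H})\leq\log\det(\bm{X}+\bm{H})-\log\det(\bm{X})$, valid for any $\bm{X}\succ 0$ and $\bm{H}\succeq 0$ (it follows from the first-order concavity inequality for $\log\det$ applied at the point $\bm{X}+\bm{H}$). Taking $\bm{X}+\bm{H}=\bm{P}_i+\bm{\Sigma}$ and $\bm{X}=\bm{P}_{i+1}+\bm{\Sigma}$, summing over $i=1,\ldots,n$, and using $\bm{P}_1=n\bm{\Sigma}_n$ together with $\bm{P}_{n+1}=\bm{0}$, I obtain
\begin{align*}
\sum_{i=1}^n \mathsf{Tr}\!\left((\bm{P}_i+\bm{\Sigma})^{-1}\bm{V}_i\right) \;\leq\; \mathsf{Tr}\log(n\bm{\Sigma}_n+\bm{\Sigma}) - \mathsf{Tr}\log(\bm{\Sigma}).
\end{align*}
Substituting into \eqref{eq:Srikant-Berry-Esseen} bounds the first right-hand term there by $M(2+\log(d\|n\bm{\Sigma}_n+\bm{\Sigma}\|))^+[\mathsf{Tr}\log(n\bm{\Sigma}_n+\bm{\Sigma})-\mathsf{Tr}\log(\bm{\Sigma})]/\sqrt{n}$, which is structurally identical to the second term already appearing in that display.

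The final step is to choose $\bm{\Sigma}=\tfrac{1}{n}\bm{\Sigma}_n$ (with an infinitesimal $\epsilon\bm{I}$ perturbation followed by $\epsilon\to 0$ if $\bm{\Sigma}_n$ happens to be singular). Then $n\bm{\Sigma}_n+\bm{\Sigma}=(n+n^{-1})\bm{\Sigma}_n$, so $\mathsf{Tr}\log(n\bm{\Sigma}_n+\bm{\Sigma})-\mathsf{Tr}\log(\bm{\Sigma})=d\log(n^2+1)\lesssim d\log n$; the outer factor satisfies $(2+\log(d\|n\bm{\Sigma}_n+\bm{\Sigma}\|))^+\lesssim (2+\log(dn\|\bm{\Sigma}_n\|))^+$; and the trailing term obeys $\sqrt{\mathsf{Tr}(\bm{\Sigma})/n}=\sqrt{\mathsf{Tr}(\bm{\Sigma}_n)}/n\leq\sqrt{\mathsf{Tr}(\bm{\Sigma}_n)/n}$. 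Collecting these three contributions reproduces the claimed bound. The one genuinely substantive step is the log-determinant telescoping, which converts a pathwise-random sum of traces of conditional covariances into a deterministic boundary expression depending only on $\bm{\Sigma}_n$; the measurability argument for $\bm{P}_i$ and the singular-case perturbation are routine and I do not expect them to present obstacles.
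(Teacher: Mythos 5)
Your proposal is correct and follows essentially the same route as the paper's proof: apply Theorem \ref{thm:Srikant-generalize}, use \eqref{eq:3rd-momentum-condition} with $\bm{A}=(\bm{P}_i+\bm{\Sigma})^{-1/2}$ (valid since $\bm{P}_i$ is $\mathscr{F}_{i-1}$-measurable) to reduce the third-moment sum to $M\sum_i\mathsf{Tr}((\bm{P}_i+\bm{\Sigma})^{-1}\bm{V}_i)$, and telescope via the $\log\det$ concavity (Klein's) inequality to get $d\log n$ up to constants. The only difference is your choice $\bm{\Sigma}=\tfrac{1}{n}\bm{\Sigma}_n$ versus the paper's $\bm{\Sigma}=\bm{\Sigma}_n$, which is immaterial to the final bound (your trailing term $\sqrt{\mathsf{Tr}(\bm{\Sigma}_n)}/n$ is even slightly smaller), and your explicit $\epsilon$-perturbation for singular $\bm{\Sigma}_n$ is a minor refinement the paper leaves implicit.
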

\medskip

The above results bear similarities with Corollary 2.3 in \cite{rollin2018} in the univariate case and  Corollary 3 in \cite{JMLR2019CLT}, valid in multivariate settings (but for a different metric). In establishing our bound, we have lifted the requirement on the conditional third momentum of $\bm{x}_i$ and fixed a flaw in the proof of Corollary 3 of \cite{JMLR2019CLT}.


Importantly, both Theorem \ref{thm:Srikant-generalize} and Corollary \ref{cor:Wu} require a deterministic terminal quadratic variation $\bm{P}_1$; see Equation \eqref{eq:as-P1}. We remark that this condition, while impractical, is fairly standard in the literature: see, e.g., \cite{Bolthausen.82,Haeusler.88,VanDung2014L1BF,Machkouri.Ouchti.2007,rollin2018,JMLR2019CLT,Kojevnikov2022}. 
{\color{black} Recently, generalizing arguments from \cite{belloni2018highdimensionalcentrallimit}, \cite{cattaneo2024yurinskiiscouplingmartingales} derived Berry--Esseen bounds for vector-valued martingales through a generalization of Yurinskii's coupling that does not require this condition. Specifically, \cite{cattaneo2024yurinskiiscouplingmartingales} showed that
\begin{align}
d_{\mathsf{c}}\left(\frac{1}{\sqrt{n}}\sum_{i=1}^n \bm{x}_i,\mathcal{N}(\bm{0},\bm{\Sigma}_n)\right) &\lesssim \inf_{\eta > 0} \left\{ \frac{(\beta_{2,2}d)^{\frac{1}{3}}}{\sqrt{n}\eta} + \left(\frac{\mathbb{E}\|\bm{P}_1 - n\bm{\Sigma}_n\|\cdot d}{n\eta^2}\right)^{\frac{1}{3}} + \eta \sqrt{\|\bm{\Sigma}_n^{-1}\|_{\mathsf{F}}}\right\},
\end{align}
with
$\beta_{2,2} = \sum_{i=1}^n \mathbb{E}[\|\bm{x}_i\|_2^3 + \|\bm{V}_i^\frac{1}{2}\bm{z}_i\|_2^3].$
Here, $\bm{z}_1,\bm{z}_2,\ldots,\bm{z}_n$ are i.i.d. standard Gaussian random variables independent of $\mathscr{F}_n$ and $n\bm{\Sigma}_n$ an arbitrary deterministic positive semi-definite matrix. The rate of convergence is no faster than $O(n^{-\frac{1}{12}})$ and thus slower than the ones we obtain in  Theorem \ref{thm:Berry--Esseen-mtg}; see the discussion paragraph after Appendix \ref{app:proof-cor-Wu} for details.

For the case of a martingale sequence induced by a Markov chain, in Theorem \ref{thm:Berry--Esseen-mtg} we are able to relax the deterministic quadratic variation condition in \eqref{eq:as-P1} via a novel high probability bound on the spectral norm of the difference between $\bm{P}_1$ and $n\bm{\Sigma}_n$. This is the content of our the next result. 
}


\medskip

\begin{theorem}[Matrix Hoeffding's inequality for Markov Chains]\label{thm:matrix-hoeffding}
Consider a Markov chain $\{s_i\}_{i \in [n]}$  satisfying Assumption \ref{as:nu} with parameters $(p,q)$. Let $\{\bm{F}_i\}_{i \in [n]}$ be a sequence of matrix-valued functions from the state space $\mathcal{S}$ into $\mathbb{S}^{d \times d}$ such that $\mathbb{E}_{s \sim \mu}[\bm{F}_i(s)] = \bm{0}$ and $\sup_{s \in \mathcal{S}} \|\bm{F}_i(s)\| \leq M_i$ for all $i \in [n]$ and positive constants $M_1,\ldots,M_n$.
Then for any $\varepsilon > 0$, it holds that 
\begin{align}\label{eq:markov-matrix-hoeffding}
\mathbb{P}\left(\left\|\frac{1}{n}\sum_{i=1}^n \bm{F}_i(s_i)\right\| \geq \varepsilon \right) &\leq 2d^{2-\frac{\pi}{4}} \left\|\frac{\mathrm{d}\nu}{\mathrm{d}\mu}\right\|_{\mu,p}  \exp\left\{-\frac{1-\lambda}{20q}\left(\frac{\pi}{4}\right)^2\frac{n^2\varepsilon^2 }{\sum_{k=1}^n M_k^2} \right\}.
\end{align}
\end{theorem}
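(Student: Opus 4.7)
The plan is to reduce the matrix spectral-norm tail bound to the scalar Hoeffding inequality for Markov chains (Theorem \ref{thm:markov-hoeffding-1d}) through the variational characterization of the operator norm of a symmetric matrix, followed by a discretization of the unit sphere and a union bound.

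First, since each $\bm{F}_i(s)$, and hence $\bm{S}_n := \sum_{i=1}^n \bm{F}_i(s_i)$, lies in $\mathbb{S}^{d\times d}$, we have $\|\bm{S}_n\| = \sup_{\bm{u}\in S^{d-1}}|\bm{u}^\top \bm{S}_n \bm{u}|$. For any fixed $\bm{u}\in S^{d-1}$, the scalar sequence $\{\bm{u}^\top \bm{F}_i(s_i)\bm{u}\}_{i\in[n]}$ is a bounded functional of the chain with $\mathbb{E}_\mu[\bm{u}^\top \bm{F}_i(s)\bm{u}] = 0$ and $|\bm{u}^\top \bm{F}_i(s)\bm{u}| \leq \|\bm{F}_i(s)\| \leq M_i$, so Theorem \ref{thm:markov-hoeffding-1d} delivers a scalar tail bound of the form
$$
\mathbb{P}\!\left(\Big|\sum_{i=1}^n \bm{u}^\top \bm{F}_i(s_i)\bm{u}\Big| \geq t\right) \leq 2\Big\|\tfrac{\mathrm{d}\nu}{\mathrm{d}\mu}\Big\|_{\mu,p}\exp\!\Big\{-\tfrac{(1-\lambda)\,t^2}{20\, q\, \sum_{k=1}^n M_k^2}\Big\},
$$
which accounts for the constants $20$ and $(1-\lambda)/q$ appearing in the exponent of \eqref{eq:markov-matrix-hoeffding}.

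Second, I lift this pointwise control to the supremum over $S^{d-1}$. Taking a net $\mathcal{N}\subset S^{d-1}$ whose resolution is calibrated so that the standard approximation lemma yields $\|\bm{S}_n\|\leq (\pi/4)^{-1}\max_{\bm{u}\in\mathcal{N}}|\bm{u}^\top \bm{S}_n\bm{u}|$, and applying the scalar bound at level $(\pi/4)\cdot n\varepsilon$ to each element of the net, a union bound multiplies the scalar probability by $|\mathcal{N}|$ and contributes the factor $(\pi/4)^2$ inside the exponent, reproducing $(\pi/4)^2 n^2\varepsilon^2/\sum M_k^2$. The remaining and most delicate task is to obtain the polynomial prefactor $d^{2-\pi/4}$ in place of the exponential $(c/\delta)^d$ yielded by a naive volumetric cover of the sphere. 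The natural route is to exchange the sphere covering for a covering of the lower-dimensional submanifold of rank-one projections $\{\bm{u}\bm{u}^\top : \bm{u}\in S^{d-1}\}\subset \mathbb{S}^{d\times d}$ in Frobenius distance, or alternatively to invoke a trace-power inequality $\|\bm{S}_n\|^{2k}\leq\mathrm{Tr}(\bm{S}_n^{2k})$ with $k$ chosen so that $d^{1/(2k)} = d^{\pi/4}$, and then apply Markov's inequality together with the scalar Hoeffding bound applied to the trace expansion.

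The bookkeeping that compresses the prefactor precisely to $d^{2-\pi/4}$ is the step I expect to be the main obstacle: naively the sphere cover is exponential in $d$ and the trace-power route is only polynomial when $k$ is carefully chosen, so achieving exactly this exponent requires matching the resolution parameter (or equivalently the order $k$ of the trace power) to the scaling $(\pi/4)^2$ produced in Step 2 so that the scalar exponent and the counting estimate align. Once this counting estimate is established, the rest of the argument (substituting $t = (\pi/4) n \varepsilon$ into the scalar bound and multiplying by $|\mathcal{N}|$) is routine and immediately produces the claimed inequality \eqref{eq:markov-matrix-hoeffding}.
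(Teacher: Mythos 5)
There is a genuine gap at the step you yourself flag as the main obstacle, and neither of your proposed routes can close it. A net on $S^{d-1}$ (or on the set of rank-one projections $\{\bm{u}\bm{u}^\top\}$, which is still a $(d-1)$-dimensional manifold) has cardinality exponential in $d$ at any fixed resolution, so a union bound over the net necessarily produces a prefactor of the form $e^{cd}$, never a polynomial one like $d^{2-\pi/4}$; no calibration of the resolution against the exponent can change this, because the covering number and the deviation level enter the bound multiplicatively and independently. The trace-power alternative fares no better as stated: $\mathsf{Tr}(\bm{S}_n^{2k})$ is a sum of products of matrix entries evaluated at \emph{different} times $s_{i_1},\ldots,s_{i_{2k}}$, so it is not an additive functional $\sum_i g_i(s_i)$ of the chain and Theorem \ref{thm:markov-hoeffding-1d} simply does not apply to it. Controlling such non-commuting products under Markovian dependence is precisely the hard part of the problem, and it is what the paper's actual proof supplies via the multi-matrix Golden--Thompson inequality of \cite{garg2018matrixexpanderchernoff}: a Chernoff bound on $\mathbb{E}[\mathsf{Tr}\exp(t\sum_i \bm{F}_i(s_i))]$ is reduced, through Kronecker-product identities, to a recursion for vector-valued functions $\bm{g}_k$ driven by the adjoint operator $\mathcal{P}^*$, whose parallel and orthogonal components are contracted using the spectral expansion $\lambda$. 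The constants you are trying to reverse-engineer are signatures of that route — the exponent $d^{1-\frac{\pi}{4}}$ and the $\frac{2}{\pi}e^{\mathbf{i}\theta}$ rescaling come directly from the Golden--Thompson inequality (yielding $d^{2-\pi/4}$ after the Chernoff step and $(\pi/4)^2$ in the exponent), and the factor $20$ comes from the bound $\|\bm{U}_k\|\le\exp(5x^2/(1-\lambda))$ on the $2\times 2$ recursion matrices (Lemma \ref{lemma:Uk}), not from the scalar Hoeffding inequality, whose exponent is $\frac{1-\lambda}{(1+\lambda)q}$ with no such factors.

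Your first step — the scalar bound for a fixed direction $\bm{u}$ — is correct and would suffice for a bound with an exponential-in-$d$ prefactor, but the theorem as stated requires polynomial dependence on $d$, and that requires matrix-exponential machinery (Golden--Thompson or Lieb-type trace inequalities adapted to Markov dependence) that your outline does not contain. Extending $\nu\ne\mu$ via H\"older's inequality, as in \eqref{eq:matrix-hoeffding-holder}--\eqref{eq:matrix-hoeffding-qbound}, is the easy part and your sketch implicitly handles it; the missing idea is the entire mechanism for the dimension-free-in-the-exponent, polynomial-in-the-prefactor concentration of the matrix sum.
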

\medskip
Soon after the initial posting of this paper, we became aware of a recent paper by \citet{neeman2024concentration} who in their Theorem 2.5 already obtained the same bound as in Theorem~\ref{thm:matrix-hoeffding}. Our proof techniques share some similarities with theirs, as we both make use of the multi-matrix Golden-Thompson inequality of \cite{garg2018matrixexpanderchernoff}. Although we no longer claim novelty for Theorem \ref{thm:matrix-hoeffding}, we include our proof in Appendix \ref{app:proof-matrix-hoeffding} for completeness.

The bound of Theorem~\ref{thm:matrix-hoeffding} can be more conveniently re-stated as follows: for every $\delta \in (0,1)$, it can be guaranteed with probability $1-\delta$ that
\begin{align}
\label{eq:thm-matrix-hoeffding-whp}
\left\|\frac{1}{n}\sum_{i=1}^n \bm{F}_i(s_i)\right\|   \lesssim \sqrt{\frac{q}{1-\lambda} \frac{\sum_{j=1}^n M_j^2}{n} \log \Big(\frac{d}{\delta}\left\|\frac{\mathrm{d}\nu}{\mathrm{d}\mu}\right\|_{\mu,p}\Big)} \cdot \frac{1}{\sqrt{n}}.
\end{align}

\section{Application to TD learning with linear function approximation}\label{sec:TD}

In the second part of the paper, we apply our results to study the properties of the TD learning algorithm with linear function approximation under Markovian samples in the context of reinforcement learning (RL). This important subject has been extensively explored in the recent theoretical literature. 
These results can be broadly classified into two categories: (i) non-asymptotic bounds on the discrepancy between the algorithm’s output and the target quantity, and (ii) non-asymptotic distributional guarantees, such as Berry--Esseen bounds, which measure how fast the sequence of estimators converges to the limiting distribution.    However, when it comes to providing tight, non-asymptotic characterizations of general functions of Markov chains — an essential task in analyzing and calibrating machine learning procedures -- existing theoretical tools have limited scope and applicability, at least compared to the tools available for handling independent data.
The bounds on the performance and Gaussian approximations of the output of the TD learning methodology are, to the best of our knowledge, the sharpest to date. 

We begin by providing some background on TD learning and refer the reader to \cite{sutton2018reinforcement} for an accessible yet comprehensive introduction to this topic.



\subsection{TD learning with linear function approximation}

Consider a Markov Reward Process (MRP), where a Markov chain  on a state space $\mathcal{S}$ is associated with a \emph{reward function} $r: \mathcal{S} \to [0,1]$ that maps each state to a reward. 
We observe a sequence of state-reward pairs, 
\begin{align*}
	(s_0,r_0),\ldots (s_t, r_t),\ldots
\end{align*}
where, for each $t$, $r_t = r(s_t)$ and $s_{t+1} \sim P(\cdot \mid s_t)$. 
The task of \emph{value function} evaluation concerns with estimating a function $V:\mathcal{S} \to \mathbb{R}$, where  for every $s \in \mathcal{S}$, 
\begin{align*}
	V(s):=\mathbb{E} \Bigg[\sum_{t=0}^\infty \gamma^t r(s_t)\Bigg].
\end{align*}
Here, $\gamma \in [0,1)$ denotes a discounted factor of future rewards towards the current state. 
In practice, the state space $\mathcal{S}$ used to describe the environment's configuration is often prohibitively large, making it necessary to consider various approximations of $V$. The most simple and tractable form of approximation is the linear function approximation, which has received considerable attention in literature; see, e.g., \cite{bhandari2018finite,patil2023finite,dalal2018finite,li2023sharp,samsonov2023finitesample,samsonov2024gaussian,wu2024statistical}.  
Specifically, the value function $V$ of a policy is approximated by the linear function
\begin{align}\label{eq:linear-approximation}
	 V_{\bm\theta}(s) := {\bm\phi}(s)^\top {\bm\theta}, \qquad \quad  s \in \mathcal{S},
\end{align}
for a set of feature maps $\bm{\phi}: \mathcal{S} \to \mathbb{R}^d$ and a linear coefficient vector  $\bm{\theta} \in \mathbb{R}^d$. Throughout the paper, we assume that $\|\bm{\phi}(s)\|_2 \leq 1$ for all $s \in \mathcal{S}$. 
Recall that we use $\mu$ to denote the stationary distribution of the Markov chain. 
The optimal coefficient vector, denoted as $\bm{\theta}^\star$, is defined by the projected Bellman equation \citep{tsitsiklis1997analysis}, which admits the fixed point equation 
\begin{align}
\label{eq:defn-theta-star}
\bm{A} \bm{\theta}^{\star}=\bm{b}, 
\end{align}
where 
\begin{align}
	\bm{A} :=\mathop{\mathbb{E}}\limits _{s\sim\mu,s'\sim P(\cdot\mid s)}\Big[\bm{\phi}(s)\left(\bm{\phi}(s)-\gamma\bm{\phi}(s')\right)^{\top}\Big]\in\mathbb{R}^{d\times d},\quad
	\text{and}\quad \bm{b} :=\mathop{\mathbb{E}}\limits _{s\sim\mu}\Big[\bm{\phi}(s)r(s)\Big]\in\mathbb{R}^{d}.
	\label{eq:defn-At-mean}
\end{align}
 Below we indicate $\bm{\Sigma}$ as the feature Gram matrix, i.e.
\begin{align}\label{eq:defn-Sigma}
	\bm{\Sigma} := \mathbb{E}_{s \sim \mu } \Big[\bm{\phi}(s)\bm{\phi}^\top(s)\Big],
\end{align}
and use $\lambda_{\Sigma}$ and $\lambda_0$ to denote its largest and smallest eigenvalues respectively. We assume $\lambda_0 > 0$.

\paragraph{The TD learning algorithm} 
Stochastic approximation (SA) is a standard tool to solve the fixed point equation problems of the form \eqref{eq:defn-theta-star} given a sequence of random observations $\{(\bm{A}_t, \bm{b}_t)\}_{t\geq 1}.$
When specialized to the above setting, it is referred to as the Temporal Difference (TD) learning algorithm \citep{sutton1988learning}. 
Specifically, given a sequence of pre-selected step size $\{\eta_t\}_{t\geq 1}$ and an initial estimator $\bm{\theta}_0 = \bm{0}$, TD proceeds using the updating rule 
	\label{eq:TD-update-all} 
	\begin{align}
	\bm{\theta}_{t} & =\bm{\theta}_{t-1}-\eta_{t}(\bm{A}_{t}\bm{\theta}_{t-1}-\bm{b}_{t}),
	\label{eq:TD-update-rule}
\end{align}
where 
\begin{align}
\bm{A}_{t}  :=\bm{\phi}(s_{t-1})\left(\bm{\phi}(s_{t-1})-\gamma\bm{\phi}(s_{t})\right)^{\top} \quad
\text{and }\quad \bm{b}_{t} :=\bm{\phi}(s_{t-1})r_{t-1}.\label{eq:defn-At}
\end{align}
After $T$ iterations, we deploy Polyak-Ruppert averaging \citep{polyak1992acceleration,ruppert1988efficient} and compute 
\begin{align}
	\label{eq:TD-averaging}
	\bar{\bm{\theta}}_T & =\frac{1}{T}\sum_{t=1}^{T}\bm{\theta}_{t}, 
\end{align}
as the estimator for $\bm{\theta}^\star$. In this work, we follow the precedent of \cite{polyak1992acceleration,ruppert1988efficient} and choose \emph{polynomial-decaying} stepsizes $\eta_t = \eta_0 t^{-\alpha}$ with $\alpha \in (\frac{1}{2},1)$. 

\subsection{Convergence and Berry-Essen bounds for TD estimator}

It is known from the literature on stochastic approximations that, in fixed dimensions,  the TD estimator with polynomial-decaying stepsizes and Polyak-Ruppert averaging $\bar{\bm{\theta}}_T$ satisfies the central limit theorem (CLT) \citep[see, e.g.,][]{fort2015central,mou2020linear,li2023online}
\begin{align*}
    \sqrt{T}(\bar{\bm{\theta}}_T - \bm{\theta}^\star) \xrightarrow{d} \mathcal{N}(\bm{0},\tilde{\bm{\Lambda}}^\star),
\end{align*}
with asymptotic covariance matrix $\tilde{\bm{\Lambda}}^\star$ 
\begin{align}
    \label{eq:defn-tilde-Lambdastar}
    \tilde{\bm{\Lambda}}^\star = \bm{A}^{-1} \tilde{\bm{\Gamma}}\bm{A}^{-\top},
\end{align}
where $\tilde{\bm{\Gamma}}$ is the time-averaging covariance matrix
\begin{align}
\label{eq:defn-tilde-Gamma}
\tilde{\bm{\Gamma}}&=\lim_{T \to \infty} \mathbf{Var}_{s_0 \sim \mu,s_{t+1} \sim P(\cdot \mid s_t)} \left[\frac{1}{T}\sum_{t=1}^T (\bm{A}_t\bm{\theta}^\star - \bm{b}_t)\right]\nonumber \\ 
&= \mathbb{E}[(\bm{A}_1 \bm{\theta}^\star - \bm{b}_1)(\bm{A}_1 \bm{\theta}^\star - \bm{b}_1)^\top] + \sum_{t=2}^{\infty} \mathbb{E}[(\bm{A}_1 \bm{\theta}^\star - \bm{b}_1)(\bm{A}_{t} \bm{\theta}^\star - \bm{b}_{t})^\top + (\bm{A}_{t} \bm{\theta}^\star - \bm{b}_{t})(\bm{A}_1 \bm{\theta}^\star - \bm{b}_1)^\top].
\end{align}
However, the above results are asymptotic in nature and do not explicitly reveal the dependence on the dimension and other problem-related quantities. A line of recent work has made headway toward this goal, attaining non-asymptotic distributional characterization of the TD procedure. 
Nonetheless, most of the literature has focused on the independent setting where each $(\bm{A}_t,\bm{b}_t)$ pair is an independent and identically distributed random variable \citep[see, e.g.,][]{mou2020linear,wu2024statistical, samsonov2024gaussian}. 
To describe our results, throughout this section, we make the additional assumption that the Markov chain mixes exponentially fast, a standard condition in finite sample analyses of Markovian data. 
\medskip
\begin{assumption}\label{as:mixing}
There exists constants $m>0,\rho\in (0,1)$, such that for every positive integer $t$,
\begin{align*}
\sup_{s \in \mathcal{S}} d_{\mathsf{TV}}(P^t(\cdot \mid s), \mu) \leq m \rho^t.
\end{align*}
\end{assumption}
\medskip
We remark that, under this assumption, for any $\varepsilon \in (0,1)$, the corresponding \emph{mixing time} 
\begin{align}\label{eq:defn-tmix}
t_{\mix}(\varepsilon):= \min\{t:\sup_{s \in \mathcal{S}}d_{\mathsf{TV}}(P^t(\cdot \mid s),\mu) \leq \varepsilon\},
\end{align}
satisfies the bound
\begin{align}\label{eq:tmix-bound}
\tmix(\varepsilon) \leq \frac{ \log (m/\varepsilon)}{\log (1/\rho)}.
\end{align}

\medskip
\begin{theorem}[High-probability convergence of TD estimator]\label{thm:TD-whp}
Consider TD with Polyak-Ruppert averaging~\eqref{eq:TD-update-all} with Markov samples and decaying stepsizes $\eta_t = \eta_0 t^{-\alpha}$ for $\alpha \in (\frac{1}{2},1)$. 
Suppose that the Markov transition kernel has a unique stationary distribution, has a positive spectral gap, mixes exponentially as indicated by Assumption \ref{as:mixing}, and starts from a distribution satisfying Assumption \ref{as:nu}. Then for every tolerance level $\delta \in (0,1)$, there exists $\eta_0 = \eta_0(\delta)$ such that
\begin{align*}
\|\bar{\bm{\theta}}_T-\bm{\theta}^\star\|_2 \lesssim \sqrt{\frac{\mathsf{Tr}(\tilde{\bm{\Lambda}}^\star)}{T}\log \frac{1}{\delta}} + o\left(\sqrt{\frac{1}{T}}\log^{\frac{3}{2}} \frac{d}{\delta}\right),
\end{align*}
with probability at least $1-\delta$.
\end{theorem}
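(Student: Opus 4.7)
\textbf{Proof proposal for Theorem \ref{thm:TD-whp}.}

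The plan is to reduce the analysis of the averaged TD error to an application of the Bernstein-type Theorem~\ref{thm:matrix-bernstein-mtg} via a Polyak-Juditsky style error decomposition. Define the iterate error $\bm{\Delta}_t := \bm{\theta}_t - \bm{\theta}^\star$, which satisfies the recursion
\[
\bm{\Delta}_t = (\bm{I} - \eta_t \bm{A})\bm{\Delta}_{t-1} \;-\; \eta_t (\bm{A}_t\bm{\theta}^\star - \bm{b}_t) \;+\; \eta_t (\bm{A} - \bm{A}_t)\bm{\Delta}_{t-1}.
\]
Abel summation and telescoping (together with the Polyak-Ruppert averaging and the polynomial stepsize $\eta_t = \eta_0 t^{-\alpha}$ with $\alpha \in (1/2,1)$) then yield an exact decomposition
\[
\bar{\bm{\theta}}_T - \bm{\theta}^\star \;=\; -\,\bm{A}^{-1} \cdot \frac{1}{T}\sum_{t=1}^T (\bm{A}_t\bm{\theta}^\star - \bm{b}_t) \;+\; \mathcal{R}_T^{\mathsf{init}} \;+\; \mathcal{R}_T^{\mathsf{flu}},
\]
in which $\mathcal{R}_T^{\mathsf{init}}$ collects the decay of the initial bias and $\mathcal{R}_T^{\mathsf{flu}}$ collects all cross terms of the form $\eta_t(\bm{A}-\bm{A}_t)\bm{\Delta}_{t-1}$ reweighted by suitable products of $(\bm{I}-\eta_s\bm{A})$. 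The positive definiteness of $\bm{A}$ (which follows from the assumption $\lambda_0>0$ and from $\bm{A}$ being the projected Bellman operator of a $\gamma$-contraction) guarantees that this decomposition is well-defined.

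Next, I would analyze the leading martingale-like term using the Poisson equation framework developed in Section~\ref{sec:settings}. Writing $\bm{g}(s,s') := \bm{\phi}(s)(\bm{\phi}(s)-\gamma\bm{\phi}(s'))^\top \bm{\theta}^\star - \bm{\phi}(s) r(s)$ so that $\bm{A}_t \bm{\theta}^\star - \bm{b}_t = \bm{g}(s_{t-1},s_t)$, I set $\bm{h}(s) := \mathbb{E}_{s'\sim P(\cdot|s)}[\bm{g}(s,s')]$ and let $\bm{U} := \sum_{k=0}^\infty \mathcal{P}^k \bm{h}$, whose finiteness and boundedness follow from the exponential mixing of Assumption~\ref{as:mixing} and the boundedness of $\bm{\phi}$ and $r$. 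Then the derivation leading to~\eqref{eq:partial-sum-decompose} gives $\sum_{t=1}^T \bm{g}(s_{t-1},s_t) = \sum_{t=1}^T \bm{f}(s_{t-1},s_t) + \bm{U}(s_0) - \bm{U}(s_T)$, where
\[
\bm{f}(s,s') := \bigl(\bm{g}(s,s') - \bm{h}(s)\bigr) + \bigl(\bm{U}(s') - \mathcal{P}\bm{U}(s)\bigr)
\]
is a bounded Markov chain induced martingale difference in the sense of Assumption~\ref{as:markov-mtg}. A direct computation shows that the stationary variance of $\bm{A}^{-1}\bm{f}$ coincides with the long-run covariance $\tilde{\bm{\Lambda}}^\star$ defined in \eqref{eq:defn-tilde-Lambdastar}--\eqref{eq:defn-tilde-Gamma}. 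Applying Theorem~\ref{thm:matrix-bernstein-mtg} to $\{\bm{A}^{-1}\bm{f}(s_{t-1},s_t)\}$ with the boundary correction $T^{-1}(\bm{U}(s_0)-\bm{U}(s_T))$ absorbed into the remainder, I obtain with probability at least $1-\delta$
\[
\Bigl\|\bm{A}^{-1}\!\cdot\!\tfrac{1}{T}\!\sum_{t=1}^T(\bm{A}_t\bm{\theta}^\star - \bm{b}_t)\Bigr\|_2 \;\lesssim\; \sqrt{\tfrac{\mathsf{Tr}(\tilde{\bm{\Lambda}}^\star)}{T}\log\tfrac{1}{\delta}} \;+\; o\!\left(\tfrac{1}{\sqrt{T}}\log^{3/2}\tfrac{d}{\delta}\right),
\]
which accounts exactly for the leading term in the statement.

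It remains to absorb $\mathcal{R}_T^{\mathsf{init}}$ and $\mathcal{R}_T^{\mathsf{flu}}$ into the $o(T^{-1/2}\log^{3/2}(d/\delta))$ error. The initial-bias term $\mathcal{R}_T^{\mathsf{init}}$ is deterministic and decays at a polynomial rate determined by $\alpha$; choosing $\eta_0$ small enough (depending on $\delta$) makes this decay dominated by the target rate since $\alpha > 1/2$. The fluctuation term $\mathcal{R}_T^{\mathsf{flu}}$ is the main technical difficulty, since it involves the Markovian noise $(\bm{A}_t-\bm{A})$ coupled with the random iterates $\bm{\Delta}_{t-1}$. My plan is to first derive a high-probability uniform bound of the form $\max_{1\le t \le T}\|\bm{\Delta}_t\|_2 \lesssim t^{-\alpha/2}\,\mathrm{polylog}(T/\delta)$ by an induction/bootstrap argument: on the event that such a bound holds up to time $t-1$, the term $\sum_{s\le t}\eta_s(\bm{A}-\bm{A}_s)\bm{\Delta}_{s-1}$ becomes (conditionally on $\bm{\Delta}_{s-1}$) a sum of a Markov chain induced martingale to which Theorem~\ref{thm:matrix-bernstein-mtg} and the matrix Hoeffding bound \eqref{eq:thm-matrix-hoeffding-whp} apply to control $\|\bm{A}_s-\bm{A}\|$ and yield a self-improving inequality for $\|\bm{\Delta}_t\|$.

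The principal obstacle will be precisely this uniform control of the iterates under Markovian sampling: the iterate $\bm{\Delta}_{t-1}$ is not independent of $\bm{A}_t$ through the shared past of the Markov chain, so one cannot directly treat $\eta_t(\bm{A}-\bm{A}_t)\bm{\Delta}_{t-1}$ as a martingale increment in the obvious way. I would overcome this by combining the Poisson equation decomposition (which turns the Markovian part into a genuine martingale difference up to telescoping boundary terms) with the induction argument above, and by invoking the stronger probability bound of Theorem~\ref{thm:matrix-bernstein-mtg}, which retains only a $\mathsf{Tr}(\bm{\Sigma}_n)$ factor rather than a dimension-inflating operator norm. The resulting compounded logarithmic factors produce precisely the $\log^{3/2}(d/\delta)$ expression appearing in the statement.
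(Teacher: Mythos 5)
Your proposal follows essentially the same route as the paper: an exact Polyak--Ruppert error decomposition (the paper keeps the exact weights $\bm{Q}_t$ from \eqref{eq:defn-Qt} and controls $\bm{Q}_t-\bm{A}^{-1}$ afterwards, while you substitute $\bm{A}^{-1}$ upfront and push the difference into the remainder, which is equivalent), the Poisson-equation construction $\bm{U}_i=\mathbb{E}_i[\sum_{j\ge i}(\bm{A}_j\bm{\theta}^\star-\bm{b}_j)]$ turning the leading sum into the martingale difference $\bm{m}_i$ with stationary variance $\tilde{\bm{\Gamma}}$, an application of Theorem~\ref{thm:matrix-bernstein-mtg} to that martingale, and a bootstrap/induction giving the uniform $\|\bm{\Delta}_t\|_2\lesssim t^{-\alpha/2}$ high-probability bound (the paper's Theorem~\ref{thm:markov-deltat-convergence}) to absorb the fluctuation term. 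The only detail you leave implicit is the paper's specific device for decorrelating $\bm{\Delta}_{i-1}$ from $\bm{A}_i$ in the fluctuation term --- splitting off $\bm{\Delta}_{i_{\mix}}$ at a mixing-time lag and decomposing the residual sum into $t_{\mix}$ interleaved martingales --- but this is the natural implementation of the plan you describe.
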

\medskip

Theorem~\ref{thm:TD-whp}, proved in Appendix \ref{app:proof-markov-deltat-convergence}, establishes the first high-probability convergence guarantee for the TD estimation error with Markov samples that matches the asymptotic variance $\tilde{\bm{\Lambda}}^\star/T$ up to a log factor.  It may be regarded as the Markovian counterpart of Theorem 3.1 in \cite{wu2024statistical}, which focuses on the TD estimation error with \emph{independent} samples under the same settings. 


Our analysis of the TD estimation error borrows tools and ideas from several previous contributions, starting from the seminal work of \cite{polyak1992acceleration} and including some of the most recent results; see, e.g, \cite{li2023online,samsonov2024gaussian,srikant2024rates}. In particular, we apply the induction technique of \cite{srikant2019finite}, \cite{li2023sharp} and \cite{wu2024statistical}, developed for TD learning with \emph{independent} samples. The generalization from independent to Markov samples is highly nontrivial, and the novel theoretical results obtained in the first part of the paper, especially the newly-established matrix Bernstein inequality for Markovian martingales (Corollary \ref{thm:matrix-bernstein-mtg}), play a critical role in our analysis.

\paragraph{Dependence on problem-related quantities}
We point out that the initial stepsize $\eta_0$ depends on the probability tolerance level $\delta$ as well as other problem-related quantities, as specified in Eq.~\eqref{eq:deltat-condition-markov} in Appendix \ref{app:proof-TD-original}. Though not ideal, this dependence  
also appears in the independent setting in \cite[Theorem 3.1]{wu2024statistical}, who further argues that it is unavoidable. It is also notable that the choice of $\eta_0$ does \emph{not} depend on the sample size $T$, as is the case in \cite{samsonov2023finitesample}, who also considers TD with Markov samples, though with a \emph{time-invariant} stepsize choice. 
The upper bound in Theorem \ref{thm:TD-whp} depends on various problem-related quantities, including the mixing speed of the Markov chain (specifically, the mixing factor $\rho$ and the spectral gap $1-\lambda$), the discount factor $\gamma$, the initial stepsize $\eta_0$, the feature covariance matrix $\bm{\Sigma}$ and the time-averaging variance matrix of the TD error, $\tilde{\bm{\Gamma}}$. These intricate dependencies impact the higher-order (in $T$)  reminder terms, which is not shown in our bound but can be tracked in our proofs. 

In our next and final result, we give a novel high-dimensional Berry--Esseen bound for the TD estimator assuming Markovian data. 

\medskip
\begin{theorem}[Berry--Esseen bound for TD estimator]\label{thm:TD-Berry--Esseen}
Consider TD with Polyak-Ruppert averaging~\eqref{eq:TD-update-all} with Markov samples and decaying stepsizes $\eta_t = \eta_0 t^{-\frac{3}{4}}$, where $\eta_0 < \frac{1}{2}$. Suppose that the Markov transition kernel has a unique stationary distribution, has a positive spectral gap, mixes exponentially as indicated by Assumption \ref{as:mixing}, and is initiated from a distribution $\nu$ satisfying Assumption \ref{as:nu}. Further assume that $\lambda_{\min}(\tilde{\bm{\Gamma}}) > 0$. Then when $T$ is sufficiently large\footnote{The exact constraint on $T$ is indicated in \eqref{eq:Lambda-T-condition} in the Appendix.}, 
\begin{align*}
d_{\mathsf{C}}(\sqrt{T}(\bar{\bm{\theta}}_T-\bm{\theta}^\star),\mathcal{N}(\bm{0},\tilde{\bm{\Lambda}}^\star)) \leq \tilde{C}T^{-\frac{1}{4}}\log T + o(T^{-\frac{1}{4}}),
\end{align*}
where $\widetilde{C}$ is a problem-related quantity independent of $T$, with exact form shown in Appendix \ref{app:proof-TD-Berry--Esseen}.
\end{theorem}
\medskip


In a nutshell, Theorem~\ref{thm:TD-Berry--Esseen}, whose proof is given in Appendix \ref{app:proof-TD-Berry--Esseen}, ensures that the rescaled TD estimator with Polyak-Ruppert averaging converges to its Gaussian limit at a rate of $T^{-1/4}\log T$ (holding all other parameters fixed) with respect to the convex distance. 
The dependence on the feature dimension $d$, as well as other problem-related parameters, is unwieldy and thus not given explicitly in the statement of the theorem. However, in principle it can be tracked through the various steps of the proof.

The closest contribution to ours, and indeed the impetus for some of our work,  is the recent excellent paper by \cite{srikant2024rates}, which claims the bound 
\begin{align*}
    d_{\mathsf{W}}(\sqrt{T}(\bar{\bm{\theta}}_T-\bm{\theta}^\star),\mathcal{N}(\bm{0},\tilde{\bm{\Lambda}}^\star)) &\leq \tilde{C}T^{-\frac{1}{6}}\log T.
\end{align*}
where $\tilde{C}$ is a problem-related quantity independent of $T$. In our analysis, we have filled in the gaps in some of their arguments concerning vector-valued martingales and their application to TD learning, while also strengthening the bound from Wasserstein to convex distance.
Other recent and directly relevant contributions by \cite{samsonov2024gaussian} and \cite{wu2024statistical} have also produced high-dimensional Berry--Esseen bound for TD learning with \emph{independent samples},  with \cite{wu2024statistical} claiming the the state-of-the-art rate (in $T$) of $O(T^{-\frac{1}{3}})$. Though it remains to be seen whether the rate of convergence of order  $O(T^{-\frac{1}{4}}\log T)$ for Markovian data that we obtain in Theorem \ref{thm:TD-Berry--Esseen} is sharp, the generalization from independent to Markov samples is nonetheless highly nontrivial and, we believe,  significant. In any case, we are able to show in Appendix \ref{app:proof-Berry--Esseen-tight} that, for any $\alpha > \frac{3}{4}$ and when $T$ is sufficiently large,
\begin{align}\label{eq:TD-Berry--Esseen-tight}
d_{\mathsf{C}}(\sqrt{T}(\bar{\bm{\theta}}_T-\bm{\theta}^\star),\mathcal{N}(\bm{0},\tilde{\bm{\Lambda}}^\star)) &\geq \widetilde{C}T^{-\frac{1}{4}}\log T,\quad\text{for all } \alpha \in \left(\frac{3}{4},1\right).
\end{align}
Here $\tilde{C}$ is a problem-related quantity independent of $T$. This lower bound provides a partial clue that our rate might in fact be optimal.

\section{Discussion and future directions}

In this paper, we derive novel, high-dimensional uncertainty quantification results for Markov chain induced martingales, including a Bernstein-style concentration inequality and a Berry--Esseen bound. Through the process, we construct improved Berry--Esseen bounds for martingales with fixed terminal quadratic variance, which may be of independent interest.

We apply these results to obtain state-of-the-art finite sample convergence guarantees and Gaussian approximations for TD learning with Markovian observation. 
While this work addresses a wide range of theoretical problems, it also points to several directions for future research. First, our Berry--Esseen bound for the convex distance in  Theorem \ref{thm:Berry--Esseen-mtg} exhibits a polynomial dependence on the dimension $d$. This is a feature of virtually all such bounds from the literature, most of which are concerned with independent sums; see, e.g., \cite{Bentkus2003OnTD, Bentkus2005ALB, schulte2019multivariate,10.1214/23-AAP2014, Rai2019, nourdin2021multivariate, cattaneo2024yurinskiiscouplingmartingales}.
A poly-logarithmic dependence in $d$ holds for the sub-class of hyper-rectangles; see \cite{KOJEVNIKOV2022109448}. In regards to other applications, beyond TD learning, Theorems \ref{thm:matrix-bernstein-mtg} and \ref{thm:Berry--Esseen-mtg} provide novel tools for analyzing the statistical properties of other Markov chain-based ML algorithms like MCMC, general stochastic approximation and Q-learning. {\color{black} However,  we remark that, in order to carry out statistical inference, it is necessary to have an (ideally efficient) estimator of the variance matrix $\tilde{\bm{\Lambda}}^\star$. The construction and convergence rate of any such estimator remain important open problems in the field.} Finally, whether it is possible to close the gap between the $O(T^{-\frac{1}{4}}\log T)$ rate in the Berry--Esseen bound for TD with Markov data in Theorem \ref{thm:TD-Berry--Esseen} and the $O(T^{-\frac{1}{3}})$ rate for its independent-sample counterpart \cite{wu2024statistical} is also an interesting problem.

\section*{Ackowledgement}
W. Wu and A.Rinaldo were partially supported in part by NIH under Grant R01 NS121913 and NSF award DMS-2113611.
Y. Wei is supported in part by the NSF grants CCF-2106778, CCF-2418156 and CAREER award DMS-2143215. We thank Bobby Shi for making us aware of Theorem 2.5 of \citet{neeman2024concentration}, giving the same inequality as in Theorem~\ref{thm:matrix-hoeffding}.


\bibliography{refs.bib,bibfileRL,bibfileRL-2}

@article{heejong.m.dependence,
  title={Dual Induction CLT for High-dimensional m-dependent Data},
  author={Bong, Heejong and Kuchibhotla, A. K. and Rinaldo, A.},
  journal={arXiv preprint arXiv:2306.14299

},
  year={2023}
}

@incollection{Dvoretzky.clt.martingale:72,
author = {Dvoretzky, A.},
booktitle  = {Proceedings of the Sixth Berkeley Symposium on Mathematical Statistics and Probability},
editor      = {Le Cam, L. M. and  Neyman, .J and Scott, E. L.},
pages = {513-535},
title = {Asymptotic normality for sums of dependent random variables},
volume = {6.2},
year = {1972},
}

@article{GlynnPeterW.2024SRfP,
author = {Glynn, Peter W. and Infanger, Alex},
journal = {Stochastic systems},
language = {eng},
number = {1},
pages = {47-68},
title = {Solution Representations for Poisson’s Equation, Martingale Structure, and the Markov Chain Central Limit Theorem},
volume = {14},
year = {2024},
}

@Inbook{Makowski2002,
author="Makowski, Armand M.
and Shwartz, Adam",
editor="Feinberg, Eugene A.
and Shwartz, Adam",
title="The Poisson Equation for Countable Markov Chains: Probabilistic Methods and Interpretations",
bookTitle="Handbook of Markov Decision Processes: Methods and Applications",
year="2002",
publisher="Springer US"
}

@book{MC-book,
title = {Markov Chains},
author = {Douc, Randal and Moulines, Eric and  Priouret, Pierre and Soulier, Philippe},
year = {2018},
publisher = {Springer-Verlag}
}

@article{li2021sample,
  title={Sample complexity of asynchronous Q-learning: Sharper analysis and variance reduction},
  author={Li, Gen and Wei, Yuting and Chi, Yuejie and Gu, Yuantao and Chen, Yuxin},
  journal={IEEE Transactions on Information Theory},
  volume={68},
  number={1},
  pages={448--473},
  year={2021},
  publisher={IEEE}
}

@book{sutton2018reinforcement,
  title={Reinforcement learning: An introduction},
  author={Sutton, Richard S and Barto, Andrew G},
  year={2018},
  publisher={MIT press}
}

@inproceedings{patil2023finite,
  title={Finite time analysis of temporal difference learning with linear function approximation: Tail averaging and regularisation},
  author={Patil, Gandharv and Prashanth, LA and Nagaraj, Dheeraj and Precup, Doina},
  booktitle={International Conference on Artificial Intelligence and Statistics},
  pages={5438--5448},
  year={2023},
  organization={PMLR}
}

@article{Machkouri.Ouchti.2007,
author = {M. El Machkouri and L. Ouchti},
title = {{Exact convergence rates in the central limit theorem for a class of martingales}},
volume = {13},
journal = {Bernoulli},
number = {4},
publisher = {Bernoulli Society for Mathematical Statistics and Probability},
pages = {981 -- 999},
keywords = {central limit theorem, Lindeberg’s decomposition, Martingale difference sequence, rate of convergence},
year = {2007},
doi = {10.3150/07-BEJ6116},
URL = {https://doi.org/10.3150/07-BEJ6116}
}

@article{VanDung2014L1BF,
  title={L1 bounds for some martingale central limit theorems},
  author={Le Van Dung and Ta Cong Son and Nguyen D Tien},
  journal={Lithuanian Mathematical Journal},
  year={2014},
  volume={54},
  pages={48 - 60},
  url={https://api.semanticscholar.org/CorpusID:254742181}
}

@article{Haeusler.88,
author = {Erich Haeusler},
title = {{On the Rate of Convergence in the Central Limit Theorem for Martingales with Discrete and Continuous Time}},
volume = {16},
journal = {The Annals of Probability},
number = {1},
publisher = {Institute of Mathematical Statistics},
pages = {275 -- 299},
keywords = {central limit theorem, Martingales with discrete and continuous time, rate of convergence},
year = {1988},
doi = {10.1214/aop/1176991901},
URL = {https://doi.org/10.1214/aop/1176991901}
}

@article{ Bolthausen.82,
author = {E. Bolthausen},
title = {{Exact Convergence Rates in Some Martingale Central Limit Theorems}},
volume = {10},
journal = {The Annals of Probability},
number = {3},
publisher = {Institute of Mathematical Statistics},
pages = {672 -- 688},
keywords = {central limit theorem, Martingales, rates of convergence},
year = {1982},
doi = {10.1214/aop/1176993776},
URL = {https://doi.org/10.1214/aop/1176993776}
}

@article{Bentkus2003OnTD,
  title={On the dependence of the Berry–Esseen bound on dimension},
  author={Vidmantas Yu Bentkus},
  journal={Journal of Statistical Planning and Inference},
  year={2003},
  volume={113},
  pages={385-402},
  url={https://api.semanticscholar.org/CorpusID:119380386}
}

@article{Bentkus2005ALB,
  title={A Lyapunov-type Bound in Rd},
  author={Vidmantas Yu Bentkus},
  journal={Theory of Probability and Its Applications},
  year={2005},
  volume={49},
  pages={311-322},
  url={https://api.semanticscholar.org/CorpusID:123365037}
}

@article{schulte2019multivariate,
  title={Multivariate second order Poincar{\'e} inequalities for {P}oisson functionals},
  author={Schulte, Matthias and Yukich, Joseph E},
  journal={Electronic Journal of Probability},
  volume={24},
  pages={1--33},
  year={2019},
  publisher={Institute of Mathematical Statistics}
}

@article{10.1214/23-AAP2014,
author = {Xiao Fang and Yuta Koike},
title = {{Large-dimensional central limit theorem with fourth-moment error bounds on convex sets and balls}},
volume = {34},
journal = {The Annals of Applied Probability},
number = {2},
publisher = {Institute of Mathematical Statistics},
pages = {2065 -- 2106},
keywords = {Berry–Esseen bound, bootstrap, central limit theorem, large dimensions, Stein’s method},
year = {2024},
doi = {10.1214/23-AAP2014},
URL = {https://doi.org/10.1214/23-AAP2014}
}

@book{Nourdin_Peccati_2012, 
series={Cambridge Tracts in Mathematics}, 
title={Normal Approximations with Malliavin Calculus: From Stein’s Method to Universality}, 
publisher={Cambridge University Press}, 
author={Nourdin, Ivan and Peccati, Giovanni}, 
year={2012}, collection={Cambridge Tracts in Mathematics}
}

@misc{belloni2018highdimensionalcentrallimit,
      title={A high dimensional Central Limit Theorem for martingales, with applications to context tree models}, 
      author={Alexandre Belloni and Roberto I. Oliveira},
      year={2018},
      eprint={1809.02741},
      archivePrefix={arXiv},
      primaryClass={math.ST},
      url={https://arxiv.org/abs/1809.02741}, 
}

@inproceedings{peng2024advances,
 author = {Peng, Yang and Zhang, Liangyu and Zhang, Zhihua},
 booktitle = {Advances in Neural Information Processing Systems},
 editor = {A. Globerson and L. Mackey and D. Belgrave and A. Fan and U. Paquet and J. Tomczak and C. Zhang},
 pages = {24724--24761},
 publisher = {Curran Associates, Inc.},
 title = {Statistical Efficiency of Distributional Temporal Difference Learning},
 url = {https://proceedings.neurips.cc/paper_files/paper/2024/file/2c15b0221da28bc6f4373a7e78b896dd-Paper-Conference.pdf},
 volume = {37},
 year = {2024}
}

@misc{cattaneo2024yurinskiiscouplingmartingales,
      title={Yurinskii's Coupling for Martingales}, 
      author={Matias D. Cattaneo and Ricardo P. Masini and William G. Underwood},
      journal={arXiv preprint arXiv:2210.00362},
      year={2022},
}

@article{KOJEVNIKOV2022109448,
title = {A Berry–Esseen bound for vector-valued martingales},
journal = {Statistics \& Probability Letters},
volume = {186},
pages = {109448},
year = {2022},
author = {Denis Kojevnikov and Kyungchul Song},
}

@article{neeman2024concentration,
    author = {Neeman, Joe and Shi, Bobby and Ward, Rachel},
    title = {Concentration inequalities for sums of Markov-dependent random matrices},
    journal = {Information and Inference: A Journal of the IMA},
    volume = {13},
    number = {4},
    pages = {iaae032},
    year = {2024},
    month = {12},
    issn = {2049-8772},
    doi = {10.1093/imaiai/iaae032},
    url = {https://doi.org/10.1093/imaiai/iaae032},
    eprint = {https://academic.oup.com/imaiai/article-pdf/13/4/iaae032/60957939/iaae032.pdf},
}

@inproceedings{nazarov2003maximal,
  title={On the Maximal Perimeter of a Convex Set in R\^{}n with Respect to a Gaussian Measure},
  author={Nazarov, Fedor},
  booktitle={Geometric Aspects of Functional Analysis: Israel Seminar 2001-2002},
  pages={169--187},
  year={2003},
  organization={Springer}
}

@misc{gallouët2018regularity,
      title={Regularity of solutions of the Stein equation and rates in the multivariate central limit theorem}, 
      author={Thomas Gallouët and Guillaume Mijoule and Yvik Swan},
      year={2018},
      eprint={1805.01720},
      archivePrefix={arXiv},
      primaryClass={math.PR},
      url={https://arxiv.org/abs/1805.01720}, 
}

@article{sutton1988learning,
  title={Learning to predict by the methods of temporal differences},
  author={Sutton, Richard S},
  journal={Machine learning},
  volume={3},
  number={1},
  pages={9--44},
  year={1988},
  publisher={Springer}
}

@unpublished{Carlen2009TRACEIA,
  title={TRACE INEQUALITIES AND QUANTUM ENTROPY: An introductory course},
  author={Eric A. Carlen},
  year={2009},
  url={https://api.semanticscholar.org/CorpusID:15417540}
}

@misc{wu2024statistical,
      title={Statistical Inference for Temporal Difference Learning with Linear Function Approximation}, 
      author={Weichen Wu and Gen Li and Yuting Wei and Alessandro Rinaldo},
      year={2024},
      eprint={2410.16106},
      archivePrefix={arXiv},
      primaryClass={stat.ML},
      url={https://arxiv.org/abs/2410.16106}, 
}

@ARTICLE{li2023sharp,
  author={Li, Gen and Wu, Weichen and Chi, Yuejie and Ma, Cong and Rinaldo, Alessandro and Wei, Yuting},
  journal={IEEE Transactions on Information Theory}, 
  title={High-probability sample complexities for policy evaluation with linear function approximation}, 
  year={2024},
  volume={},
  number={},
  pages={1-1},
  keywords={Function approximation;Complexity theory;Approximation algorithms;Convergence;Stochastic processes;Heuristic algorithms;Markov decision processes;policy evaluation;temporal difference learning;two-timescale stochastic approximation;minimax optimal;function approximation},
  doi={10.1109/TIT.2024.3394685}}

@inproceedings{garg2018matrixexpanderchernoff,
      title={A matrix expander chernoff bound},
  author={Garg, Ankit and Lee, Yin Tat and Song, Zhao and Srivastava, Nikhil},
  booktitle={Proceedings of the 50th Annual ACM SIGACT Symposium on Theory of Computing},
  pages={1102--1114},
  year={2018}
}

@article{Fan2021Hoeffding,
  title={Hoeffding’s Inequality for General Markov Chains and Its Applications to Statistical Learning},
  author={Jianqing Fan and Bai Jiang and Qiang Sun},
  journal={Journal of Machine Learning Research},
  volume={22},
  number={139},
  pages={1--35},
  year={2021},
  url={https://jmlr.org/papers/v22/19-479.html}
}

@inproceedings{mou2020linear,
  title={On linear stochastic approximation: Fine-grained Polyak-Ruppert and non-asymptotic concentration},
  author={Mou, Wenlong and Li, Chris Junchi and Wainwright, Martin J and Bartlett, Peter L and Jordan, Michael I},
  booktitle={Conference on Learning Theory},
  pages={2947--2997},
  year={2020},
  organization={PMLR}
}

@InProceedings{JMLR2019CLT,
  title = 	 {Normal Approximation for Stochastic Gradient Descent via Non-Asymptotic Rates of Martingale CLT},
  author =       {Anastasiou, Andreas and Balasubramanian, Krishnakumar and Erdogdu, Murat A.},
  booktitle = 	 {Proceedings of the Thirty-Second Conference on Learning Theory},
  pages = 	 {115--137},
  year = 	 {2019},
  editor = 	 {Beygelzimer, Alina and Hsu, Daniel},
  volume = 	 {99},
  series = 	 {Proceedings of Machine Learning Research},
  month = 	 {25--28 Jun},
  publisher =    {PMLR},
  pdf = 	 {http://proceedings.mlr.press/v99/anastasiou19a/anastasiou19a.pdf},
  url = 	 {https://proceedings.mlr.press/v99/anastasiou19a.html},
  abstract = 	 {We provide non-asymptotic convergence rates of the Polyak-Ruppert averaged stochastic gradient descent (SGD) to a normal random vector for a class of twice-differentiable test functions. A crucial intermediate step is proving a non-asymptotic martingale central limit theorem (CLT), i.e., establishing the rates of convergence of a multivariate martingale difference sequence to a normal random vector, which might be of independent interest. We obtain the explicit rates for the multivariate martingale CLT using a combination of Stein?s method and Lindeberg?s argument, which is then used in conjunction with a non-asymptotic analysis of averaged SGD proposed in [PJ92]. Our results have potentially interesting consequences for computing confidence intervals for parameter estimation with SGD and constructing hypothesis tests with SGD that are valid in a non-asymptotic sense}
}

@article{rollin2018,
   title={On quantitative bounds in the mean martingale central limit theorem},
   volume={138},
   ISSN={0167-7152},
   url={http://dx.doi.org/10.1016/j.spl.2018.03.004},
   DOI={10.1016/j.spl.2018.03.004},
   journal={Statistics and Probability Letters},
   publisher={Elsevier BV},
   author={Röllin, Adrian},
   year={2018},
   month=jul, pages={171–176} }

@inproceedings{samsonov2023finitesample,
  title={Improved high-probability bounds for the temporal difference learning algorithm via exponential stability},
  author={Samsonov, Sergey and Tiapkin, Daniil and Naumov, Alexey and Moulines, Eric},
  booktitle={The Thirty Seventh Annual Conference on Learning Theory},
  pages={4511--4547},
  year={2024},
  organization={PMLR}
}

@inproceedings{samsonov2024gaussian,
 author = {Samsonov, Sergey and Moulines, Eric and Shao, Qi-Man and Zhang, Zhuo-Song and Naumov, Alexey},
 booktitle = {Advances in Neural Information Processing Systems},
 editor = {A. Globerson and L. Mackey and D. Belgrave and A. Fan and U. Paquet and J. Tomczak and C. Zhang},
 pages = {12408--12460},
 publisher = {Curran Associates, Inc.},
 title = {Gaussian Approximation and Multiplier Bootstrap for Polyak-Ruppert Averaged Linear Stochastic Approximation with Applications to TD Learning},
 url = {https://proceedings.neurips.cc/paper_files/paper/2024/file/1700ad4e6252e8f2955909f96367b34d-Paper-Conference.pdf},
 volume = {37},
 year = {2024}
}

@inproceedings{dalal2018finite,
  title={Finite sample analyses for TD (0) with function approximation},
  author={Dalal, Gal and Sz{\"o}r{\'e}nyi, Bal{\'a}zs and Thoppe, Gugan and Mannor, Shie},
  booktitle={Proceedings of the AAAI Conference on Artificial Intelligence},
  volume={32},
  number={1},
  year={2018}
}

@article{nourdin2021multivariate,
      title={Multivariate normal approximation on the Wiener space: new bounds in the convex distance},
  author={Nourdin, Ivan and Peccati, Giovanni and Yang, Xiaochuan},
  journal={Journal of Theoretical Probability},
  volume={35},
  number={3},
  pages={2020--2037},
  year={2022},
  publisher={Springer}
}

@article{devroye2018total,
  title={The total variation distance between high-dimensional Gaussians},
  author={Devroye, Luc and Mehrabian, Abbas and Reddad, Tommy},
  journal={arXiv preprint arXiv:1810.08693},
  year={2018}
}

@inproceedings{srikant2019finite,
  title={Finite-time error bounds for linear stochastic approximation andtd learning},
  author={Srikant, Rayadurgam and Ying, Lei},
  booktitle={Conference on Learning Theory},
  pages={2803--2830},
  year={2019},
  organization={PMLR}
}

@article{Kojevnikov2022,
   title={A Berry–Esseen bound for vector-valued martingales},
   volume={186},
   ISSN={0167-7152},
   url={http://dx.doi.org/10.1016/j.spl.2022.109448},
   DOI={10.1016/j.spl.2022.109448},
   journal={Statistics and Probability Letters},
   publisher={Elsevier BV},
   author={Kojevnikov, Denis and Song, Kyungchul},
   year={2022},
   month={jul}, pages={109448} 
}

@misc{srikant2024rates,
      title={Rates of convergence in the central limit theorem for markov chains, with an application to td learning},
  author={Srikant, Rayadurgam},
  journal={Mathematics of Operations Research},
  year={2025},
  publisher={INFORMS}
}

@article{Rai2019,
   title={A multivariate Berry–Esseen theorem with explicit constants},
   volume={25},
   ISSN={1350-7265},
   url={http://dx.doi.org/10.3150/18-BEJ1072},
   DOI={10.3150/18-bej1072},
   number={4A},
   journal={Bernoulli},
   publisher={Bernoulli Society for Mathematical Statistics and Probability},
   author={Raič, Martin},
   year={2019},
   month={nov}}

@misc{qiu2020matrix,
      title={A Matrix Chernoff Bound for Markov Chains and Its Application to Co-occurrence Matrices}, 
      author={Jiezhong Qiu and Chi Wang and Ben Liao and Richard Peng and Jie Tang},
      year={2020},
      eprint={2008.02464},
      archivePrefix={arXiv},
      primaryClass={stat.ML}
}

@article{polyak1992acceleration,
  title={Acceleration of stochastic approximation by averaging},
  author={Polyak, Boris T and Juditsky, Anatoli B},
  journal={SIAM journal on control and optimization},
  volume={30},
  number={4},
  pages={838--855},
  year={1992},
  publisher={SIAM}
}

@article{robbins1951stochastic,
  title={A stochastic approximation method},
  author={Robbins, Herbert and Monro, Sutton},
  journal={The annals of mathematical statistics},
  pages={400--407},
  year={1951},
  publisher={JSTOR}
}

@article{fort2015central,
  title={Central limit theorems for stochastic approximation with controlled Markov chain dynamics},
  author={Fort, Gersende},
  journal={ESAIM: Probability and Statistics},
  volume={19},
  pages={60--80},
  year={2015},
  publisher={EDP Sciences}
}

@techreport{ruppert1988efficient,
title = {Efficient estimators from a slowly converging robbins-monro process},
author = {Ruppert, David},
year = {1998}
}

@article{tsitsiklis1997analysis,
  title={An Analysis of Temporal-Difference Learning with Function Approximation},
  author={Tsitsiklis, John and Van Roy, Benjamin},
  journal={IEEE Transactions on Automatic Control},
  volume={42},
  number={5},
  pages={674--690},
  year={1997},
}

@article{bhandari2018finite,
  title={A finite time analysis of temporal difference learning with linear function approximation},
  author={Bhandari, Jalaj and Russo, Daniel and Singal, Raghav},
  journal={Operations Research},
  volume={69},
  number={3},
  pages={950--973},
  year={2021},
  publisher={INFORMS}
}

@article{li2023online,
  title={Online Statistical Inference for Nonlinear Stochastic Approximation with Markovian Data},
  author={Li, Xiang and Liang, Jiadong and Zhang, Zhihua},
  journal={arXiv preprint arXiv:2302.07690},
  year={2023}
}
\bibliographystyle{plainnat}

\appendix 
\section{Preliminary facts}

\subsection{Markov chain basics}\label{app:MC-basics}

\subsubsection{Stationary distribution and corresponding norm}
Let $\mathcal{S}$ denote the state space of the homogeneous, discrete time Markov chain, which we assumed throughout to be a Borel space endowed with it Borel $\sigma$-algebra $\mathscr{F}$. For every $x \in \mathcal{S}$, the \emph{transition kernel} $P(x,\cdot)$ is a probability measure on $\mathcal{S}$ defined as
\begin{align*}
P(x,B) = \mathbb{P}(s_2 \in B \mid s_1 = x), \quad B \in \mathscr{F}
\end{align*}
which is guaranteed to be a regular conditional distribution.  
For any integer $k > 0$, the $k$th power of the kernel $P$ is the regular conditional probability distribution defined iteratively as  
\[
P^k(x,B) = \int_{\mathcal{S}} P(x,dy)P^{k-1}(y,B) = \mathbb{P}(s_k \in B | s_0 = x), \quad x \in \mathcal{S}, B \in \mathscr{F},
\]
where we define $P^0(x,B) = \mathsf{1}_{\{ x \in B\}}$.
We will assume throughout that the Markov chain admits a (unique!) \emph{stationary distribution} $\mu$,  a probability distribution on $(\mathcal{S},\mathscr{F})$ satisfying
\begin{align*}
\mu(B) = \int_{ \mathcal{S}}P(y,B) \mu(\mathrm{d}y), \quad \forall B \in \mathscr{F}.
\end{align*}
 For a vector-valued function $\bm{g}: \mathcal{S} \to \mathbb{R}^{d}$, we use $\mu(\bm{g})$ to denote $\mathbb{E}_{x \sim \mu} [\bm{g}(x)]$, if well-defined. Furthermore, for simplicity, we use $\bm{g}^{\parallel}$ and $\bm{g}^{\perp}$ to denote the functions
\begin{align*}
&\bm{g}^{\parallel}(x) = \mu(\bm{g}), \quad \text{and} \\ 
&\bm{g}^{\perp}(x) = \bm{g}(x) - \bm{g}^{\parallel}(x), \quad \forall x \in \mathcal{S},
\end{align*}
provided that $\mu(\bm{g})$ exists.
We denote with $L_2(\mu)$ the space of Borel functions $\bm{f}\colon \mathcal{S} \to \mathbb{R}^{d}$ such that $\int_{\mathcal{S}} \| \bm{f}(x) \|^2 \mu(dx) < \infty $. Then, $L_2(\mu)$ is a Hilbert space, 
with \emph{inner product} given by
\begin{align*}
\langle \bm{g}_1, \bm{g}_2 \rangle_{\mu} 
= \int_{\mathcal{S}} \bm{g}_1^\top(x)\bm{g}_2(x)\mu(\mathrm{d}x), \quad \bm{g}_1,\bm{g}_2 \in L_2(\mu),
\end{align*}
which, in turn, induces the \emph{$\mu$-norm} 
\begin{align*}
\|\bm{g}\|_{\mu} 
= \left(\int\|\bm{g}(x)\|_2^2 \mu(\mathrm{d}x)\right)^{\frac{1}{2}}, \quad \bm{g} \in L_2(\mu).
\end{align*}
Note that, for a fixed vector $\bm{v}$, $\|\bm{v}\|_{\mu}$ reduces to the Euclidean norm $\|\bm{v}\|_{2}$. 
We shall interchange these two ways of writing for ease of exposition.


To the transition kernel $P$ admitting a stationary distribution $\mu$ there corresponds a bounded linear operator, denoted as $\mathcal{P}$, which maps a vector-valued function $\bm{g} \in L_2(\mu)$ 
to another vector-valued function in $L_2(\mu)$, denoted as $\mathcal{P}\bm{g}$, given by
\begin{align*}
x \in \mathcal{S} \mapsto \mathcal{P}\bm{g}(x) = \int_{y \in \mathcal{S}} \bm{g}(y)P(x,\mathrm{d}y) = \mathbb{E}_{s_2 \sim P(\cdot \mid x)}[\bm{g}(s_2)\mid s_1 = x],
\end{align*}
where, with a slight abuse of notation, we write $P(\cdot \mid s_1)$ for $P(s_1,\cdot)$. Accordingly, for any positive integer $k$, we define the operator $\mathcal{P}^k$ over $L_2(\mu)$ by 
\begin{align*}
x \in \mathcal{S} \mapsto \mathcal{P}^k\bm{g}(x) = \int_{y \in \mathcal{S}} \bm{g}(y)P^k(x,\mathrm{d}y) = \mathbb{E}_{s_k \sim P^k(\cdot \mid x)}[\bm{g}(s_k)\mid s_0 = x].
\end{align*}
The \emph{adjoint operator} of $\mathcal{P}$, denoted as $\mathcal{P}^*$, is a bounded operator mapping a vector-valued function $\bm{g} \in L_2(\mu)$ to another vector-valued function in $L_2(\mu)$, denoted as $\mathcal{P}^*\bm{g}$ such that 
\begin{align}\label{eq:adjoint}
\langle\mathcal{P}\bm{g}_1, \bm{g}_2\rangle_{\mu} = \langle\bm{g}_1, \mathcal{P}^*\bm{g}_2\rangle_{\mu}.
\end{align}

The \emph{adjoint transition kernel} $P^*$ is the transition kernel (regular conditional probability) satisfying 
\begin{align*}
\int_{A }\mu(\mathrm{d}x) P^*(x,B) = \int_{B } P(y,A) \mu(\mathrm{d}y), \quad \forall A, B \in \mathscr{F}.
\end{align*}

Accordingly, $\mathcal{P}^*$ maps any $\bm{g} \in L_2(\mu)$ to a new function $\mathcal{P}^* \bm{g} \in L_2(\mu)$ given by
\begin{align*}
x \in \mathcal{S} \mapsto \mathcal{P}^*\bm{g}(x) = \int_{y \in \mathcal{S}} \bm{g}(y)P^*(x,\mathrm{d}y) = \mathbb{E}[\bm{g}(s_1)\mid s_2 = x],
\end{align*}
and condition \eqref{eq:adjoint} can be equivalently expressed as
\[
\int_{\mathcal{S}} \mu(dx)P^*(x,dy)  \bm{f}^\top(x)  \bm{g}(y) = \int_{\mathcal{S}} \mu(dy) P(y,dx) \bm{f}^\top(x)  \bm{g}(y), \quad \forall \bm{f}, \bm{g} \in L_2(\mu).
\]

A Markov chain is called \emph{self-adjoint}, or \emph{reversible}, if $P = P^*$ or, equivalently, if $\mathcal{P} = \mathcal{P}^*$.

Finally, with a minor abuse of notation, for every measure $\nu$ on $(\mathcal{S},\mathscr{F})$, we use $\mathcal{P}\nu$ to denote the measure defined by
\begin{align*}
\mathcal{P}\nu(B) = \int_{\mathcal{S}} P(y,B)\nu(\mathrm{d}y), \quad \forall B \in \mathscr{F}.
\end{align*}
By definition, the stationary distribution $\mu$ satisfies $\mathcal{P}\mu = \mu$.

\subsubsection{Spectral gap}
For a Markov chain with transition kernel $P$ and unique stationary distribution $\mu$, let $\mathcal{L}_2$ denote the Hilbert space
\begin{align*}
\mathcal{L}_2:= \{g:\mathcal{S} \to \mathbb{R} \mid \|g\|_{\mu} < \infty\}. 
\end{align*}

The \emph{spectral gap} of the Markov chain is defined as $1-\lambda$, where
\begin{align*}
\lambda = \lambda(P):= \sup_{g \in \mathcal{L}_2,\mu(g) = 0} \frac{\|\mathcal{P} g\|_{\mu}}{\|g\|_{\mu}}.
\end{align*}
We now present some properties of the spectral gap.

\begin{proposition}\label{prop:adjoint-spectral}
For any Markov chain with transition kernel $P$, as long as $\lambda(P)$ and the adjoint transition kernel $P^*$ are both well-defined, it can be guaranteed that
\begin{align*}
\lambda(P) = \lambda(P^*).
\end{align*}
\end{proposition}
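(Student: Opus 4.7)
The plan is to view $\lambda(P)$ and $\lambda(P^*)$ as operator norms of $\mathcal{P}$ and $\mathcal{P}^*$ restricted to the closed subspace of mean-zero $L_2(\mu)$-functions, and then invoke the standard Hilbert-space fact that a bounded operator and its adjoint have the same operator norm. Concretely, let $\mathcal{L}_2^0 := \{g \in \mathcal{L}_2 : \mu(g)=0\}$, which is a closed linear subspace of $\mathcal{L}_2$.

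First I would verify that $\mathcal{L}_2^0$ is invariant under both $\mathcal{P}$ and $\mathcal{P}^*$. For $\mathcal{P}$: if $g \in \mathcal{L}_2^0$, then by Fubini and the stationarity identity $\int P(x,B)\mu(dx)=\mu(B)$, we have $\mu(\mathcal{P} g) = \int\!\int g(y)P(x,dy)\mu(dx) = \int g(y)\mu(dy) = 0$. The analogous argument using $\int P^*(x,B)\mu(dx) = \mu(B)$ (which follows directly from the defining relation of $P^*$ with $A=\mathcal{S}$) shows that $\mathcal{P}^*$ also preserves $\mathcal{L}_2^0$. Boundedness of $\mathcal{P}$ and $\mathcal{P}^*$ on $\mathcal{L}_2$ (via Jensen) carries over to the restrictions.

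Second, I would check that the restrictions $\mathcal{P}|_{\mathcal{L}_2^0}$ and $\mathcal{P}^*|_{\mathcal{L}_2^0}$ are Hilbert-space adjoints of each other inside $\mathcal{L}_2^0$, equipped with the inner product $\langle\cdot,\cdot\rangle_\mu$. This is immediate from the adjoint relation \eqref{eq:adjoint}, which holds for all $g_1,g_2 \in \mathcal{L}_2$ and hence in particular on the subspace $\mathcal{L}_2^0$.

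Finally, I would recognize that, by the definition of the spectral expansion,
\begin{equation*}
\lambda(P) = \|\mathcal{P}|_{\mathcal{L}_2^0}\|_{\mathrm{op}}, \qquad \lambda(P^*) = \|\mathcal{P}^*|_{\mathcal{L}_2^0}\|_{\mathrm{op}},
\end{equation*}
where the operator norms are taken on $\mathcal{L}_2^0$. Since a bounded linear operator on a Hilbert space and its adjoint have identical operator norm, the two right-hand sides coincide, yielding $\lambda(P)=\lambda(P^*)$. The only nontrivial step is the invariance of the mean-zero subspace under $\mathcal{P}^*$, which requires knowing that $\mu$ is also stationary for $P^*$; this is where one must be a bit careful, but it follows cleanly from setting $A=\mathcal{S}$ in the definition of $P^*$.
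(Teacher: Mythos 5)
Your proposal is correct and follows essentially the same route as the paper: the paper works with the operators $\mathcal{P}-\Pi$ and $\mathcal{P}^*-\Pi$ on all of $L_2(\mu)$ (where $\Pi$ projects onto constants), which is equivalent to your restriction to the mean-zero subspace, and both arguments conclude by invoking the fact that a Hilbert-space operator and its adjoint have the same operator norm. Your version is, if anything, slightly more careful in spelling out the invariance of the mean-zero subspace under $\mathcal{P}^*$ (i.e., that $\mu$ is also stationary for $P^*$), which the paper leaves implicit.
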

\begin{proof}
Define $\Pi$ as the projection of any function onto the subspace of constant functions:
\begin{align*}
\Pi g(x):= \mu(g), \quad \forall x\in \mathcal{S}.
\end{align*}
It is easy to verify that $\lambda(P)$ is equal to the operator norm of $\mathcal{P}-\Pi$, and that $\lambda(P^*)$ is equal to the operator norm of $\mathcal{P}^* - \Pi$. Furthermore, the operators $\mathcal{P}-\Pi$ and $\mathcal{P}^* - \Pi$ are adjoint. Therefore, the proposition follows from the facts that adjoint operators have the same spectrum in Hilbert spaces. 
\end{proof}

\begin{proposition}\label{prop:spectral-expansion}
For any dimension $d$, and $\bm{g}:\mathcal{S} \to \mathbb{R}^d$, it can be guaranteed that as long as $\mu(\bm{g}) = \bm{0}$ and $\|\bm{g}\|_{\mu} < \infty$,
\begin{align*}
\|\mathcal{P}^*\bm{g}\|_{\mu} \leq \lambda \|\bm{g}\|_{\mu}
\end{align*}
\end{proposition}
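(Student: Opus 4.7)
The plan is to reduce the vector-valued inequality to a componentwise application of the scalar definition of spectral expansion, combined with Proposition~\ref{prop:adjoint-spectral}. First I would write $\bm{g} = (g_1, \ldots, g_d)^\top$ for its scalar components. Since conditional expectation acts linearly on each coordinate, $\mathcal{P}^*\bm{g}$ has components $\mathcal{P}^* g_k$. Using Fubini/tensorization, the vector $\mu$-norm decomposes as
\begin{equation*}
\|\bm{g}\|_\mu^2 = \int_{\mathcal{S}} \sum_{k=1}^d g_k(x)^2 \, \mu(\mathrm{d}x) = \sum_{k=1}^d \|g_k\|_\mu^2,
\end{equation*}
and the same identity holds for $\|\mathcal{P}^*\bm{g}\|_\mu^2$ in terms of $\|\mathcal{P}^* g_k\|_\mu^2$. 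Also, $\mu(\bm{g}) = \bm{0}$ means $\mu(g_k) = 0$ for every $k \in [d]$, and $\|\bm{g}\|_\mu < \infty$ forces each $g_k \in \mathcal{L}_2$.

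Next I would invoke Proposition~\ref{prop:adjoint-spectral} to identify $\lambda(P^*) = \lambda(P) = \lambda$. By definition of the scalar spectral expansion applied to the Markov chain with transition kernel $P^*$ and stationary distribution $\mu$ (the stationarity of $\mu$ under $P^*$ follows directly from the defining relation of $P^*$ with $A = \mathcal{S}$), we get $\|\mathcal{P}^* g_k\|_\mu \leq \lambda \|g_k\|_\mu$ for every $k$.

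Finally, I would combine these ingredients:
\begin{equation*}
\|\mathcal{P}^*\bm{g}\|_\mu^2 = \sum_{k=1}^d \|\mathcal{P}^* g_k\|_\mu^2 \leq \lambda^2 \sum_{k=1}^d \|g_k\|_\mu^2 = \lambda^2 \|\bm{g}\|_\mu^2,
\end{equation*}
and take square roots. The only subtlety — which is not really an obstacle, given Proposition~\ref{prop:adjoint-spectral} — is that the definition of $\lambda$ is stated for $\mathcal{P}$ acting on mean-zero scalar functions, whereas the target inequality involves $\mathcal{P}^*$ on vector-valued mean-zero functions; both gaps are routinely bridged by the coordinatewise reduction and the adjoint-spectrum identification above.
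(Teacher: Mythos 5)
Your proposal is correct and matches the paper's intended argument: the paper's proof of Proposition \ref{prop:spectral-expansion} simply says the claim follows by representing $\bm{g}=\{g_i\}_{1\le i\le d}$ componentwise, which is exactly the coordinatewise reduction (plus the tensorization $\|\bm{g}\|_\mu^2=\sum_k\|g_k\|_\mu^2$ and the identification $\lambda(P^*)=\lambda(P)$ from Proposition \ref{prop:adjoint-spectral}) that you spell out. No gaps.
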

\begin{proof}
This property is easy to verify by representing $\bm{g} = \{g_{i}\}_{1 \leq i\leq d}$. 
\end{proof}

\begin{proposition}\label{prop:P-top}
For any $\bm{g}: \mathcal{S} \to \mathbb{R}^{d}$, if $\mu(\bm{g}) = \bm{0}$, then $\mu(\mathcal{P}^* \bm{g}) = 0$ and therefore $\|\mathcal{P}^{*} \bm{g}\|_{\mu} \leq \lambda \|\bm{g}\|_{\mu}$.
\end{proposition}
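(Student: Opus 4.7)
The plan is to handle the two assertions in sequence, since the norm inequality is essentially a direct invocation of Proposition \ref{prop:spectral-expansion} once the mean-zero statement is in hand.

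For the first assertion, $\mu(\mathcal{P}^* \bm{g}) = \bm{0}$, I would reduce to the scalar case by treating each component $g_i$ of $\bm{g} = (g_1,\ldots,g_d)^\top$ separately. The cleanest route is to apply the adjoint relation \eqref{eq:adjoint} with the constant test function $\mathbf{1} \equiv 1$: since $P(x,\cdot)$ is a probability measure for every $x$, one has $\mathcal{P}\mathbf{1} = \mathbf{1}$, and consequently
\[
\mu(\mathcal{P}^* g_i) \;=\; \langle \mathcal{P}^* g_i,\mathbf{1}\rangle_{\mu} \;=\; \langle g_i, \mathcal{P}\mathbf{1}\rangle_{\mu} \;=\; \langle g_i,\mathbf{1}\rangle_{\mu} \;=\; \mu(g_i) \;=\; 0,
\]
where the last equality uses the hypothesis $\mu(\bm{g}) = \bm{0}$ coordinatewise. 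Stacking over $i = 1,\ldots,d$ yields $\mu(\mathcal{P}^* \bm{g}) = \bm{0}$. An equivalent route, which I would mention as an alternative, proceeds through the identity $\mathcal{P}^* \mu = \mu$, obtained by setting $A = \mathcal{S}$ in the defining relation of $P^*$, and then applying Fubini--Tonelli to swap the order of integration in $\int \int \bm{g}(y)\, P^*(x,\mathrm{d}y)\, \mu(\mathrm{d}x)$.

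For the norm inequality, the hypothesis $\mu(\bm{g}) = \bm{0}$ together with the implicit assumption $\|\bm{g}\|_{\mu} < \infty$ (needed for $\mathcal{P}^* \bm{g}$ to be well-defined in $L_2(\mu)$) places $\bm{g}$ in the domain of Proposition \ref{prop:spectral-expansion}, which immediately delivers $\|\mathcal{P}^* \bm{g}\|_{\mu} \leq \lambda \|\bm{g}\|_{\mu}$. There is no substantive obstacle: the statement is in essence a bookkeeping fact consolidating two structural properties of $\mathcal{P}^*$, namely that it preserves mean-zero functions and that it contracts such functions by the spectral expansion $\lambda$. The only minor subtlety is the passage from the scalar adjoint relation in \eqref{eq:adjoint} to vector-valued functions, but this is dispatched transparently by componentwise reduction.
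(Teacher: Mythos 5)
Your proof is correct and is in substance the same as the paper's: both parts hinge on the fact that $P(y,\cdot)$ is a probability measure (equivalently $\mathcal{P}\mathbf{1}=\mathbf{1}$) to conclude $\mu(\mathcal{P}^*\bm{g})=\mu(\bm{g})=\bm{0}$, and then invoke Proposition \ref{prop:spectral-expansion} for the contraction. The paper carries out the Fubini swap explicitly on the double integral defining $\mathcal{P}^*\bm{g}$ rather than routing through the adjoint identity \eqref{eq:adjoint} with the constant test function, but this is only a difference in packaging, and you note that alternative yourself.
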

\begin{proof} 
By definition, $\mu(\mathcal{P}^* \bm{g})$ is featured by
\begin{align*}
\mu(\mathcal{P}^* \bm{g}) &= \int_{x \in \mathcal{S}} \mathcal{P}^* \bm{g}(x) \mu(\mathrm{d}x) \\ 
&= \int_{x \in \mathcal{S}} \left[\int_{y \in \mathcal{S}} \bm{g}(y)\frac{P(y,\mathrm{d}x)}{\mu(\mathrm{d}x)}\mu(\mathrm{d}y)\right] \mu(\mathrm{d}x)\\ 
&= \int_{y \in \mathcal{S}} \bm{g}(y) \left[\int_{x \in \mathcal{S}} P(y,\mathrm{d}x) \right] \mu(\mathrm{d}y)\\ 
&= \int_{y \in \mathcal{S}} \bm{g}(y) \mu(\mathrm{d}y) = \mu(\bm{g}) = \bm{0}.
\end{align*}
Consequently, $\|\mathcal{P}^* \bm{g}\|_{\mu} \leq \lambda \|\bm{g}\|_{\mu}$ follows from Proposition \ref{prop:spectral-expansion}. 
\end{proof}

As a direct corollary of Proposition \ref{prop:P-top}, it is easy to verify that $(\mathcal{P}^* \bm{g})^{\parallel} = \bm{g}^{\parallel}$ and $(\mathcal{P}^* \bm{g})^{\perp} = \mathcal{P}^{*}(\bm{g}^{\perp})$.


\subsection{Algebraic facts}

\paragraph{Supplementary notation} For any complex number $z \in \mathbb{C}$, we use $\mathsf{Re}(z)$ to denote its real part. If a matrix $\bm{M}$ is positive (semi-)definite, i.e. $\lambda_{\min}(\bm{M}) > (\geq) 0$, we write $\bm{M} \succ (\succeq) \bm{0}$; when two matrices $\bm{X}$ and $\bm{Y}$ satisfy $\bm{X} - \bm{Y} \succ (\succeq) \bm{0}$, we write $\bm{X} \succ(\succeq) \bm{Y}$ or equivalently $\bm{Y} \prec (\preceq) \bm{X}$. 
For a vector $\bm{x} \in \mathbb{R}^d$, we use $\{x_i\}_{1 \leq i \leq d}$ to denote its entries; for a matrix $\bm{X} \in \mathbb{R}^{m \times n}$, we use $\{X_{ij}\}_{1 \leq i \leq m,1\leq j \leq n}$ to denote its entries. The \emph{vectorization} of matrix $\bm{X} \in \mathbb{R}^{m \times n}$, denoted as $\textbf{vec}(\bm{X})$, is defined as a vector in $\mathbb{R}^{mn}$ with entries $[\textbf{vec}(\bm{X})]_{(i-1)m+j} = \bm{X}_{ij}$ for every $1 \leq i \leq m$ and $1 \leq j \leq n$. 
For any matrices $\bm{X} \in \mathbb{R}^{m \times n}$, and $\bm{Y} \in \mathbb{R}^{p \times q}$, their \emph{Kronecker product}, denoted as $\bm{X} \otimes \bm{Y}$, is a matrix of the size of $mp \times nq$, and can be written in blocked form as
\begin{align*}
\bm{X} \otimes \bm{Y} = \begin{bmatrix}
X_{11}\bm{Y} & X_{12}\bm{Y} & \ldots & X _{1n}\bm{Y} \\ 
X_{21}\bm{Y} & X_{22}\bm{Y} & \ldots & X_{2n} \bm{Y} \\ 
\ldots & \ldots & \ldots & \ldots \\ 
X_{m1}\bm{Y} & X_{m2}\bm{Y} &\ldots &X_{mn} \bm{Y}
\end{bmatrix}.
\end{align*}
With subscripts, it can be represented as $(\bm{X} \otimes \bm{Y})_{(i-1)m + j, (k-1)p + \ell} = X_{ik}Y_{j\ell}$.

\subsubsection{General algebraic theorems}

\begin{theorem}[Properties of the Kronecker product]\label{thm:Kronecker}
The following properties hold:
\begin{enumerate}[label={(\arabic*)}]
\item Let $\bm{X},\bm{Y},\bm{Z},\bm{W}$ be four matrices such that the matrix products $\bm{XZ}$ and $\bm{YW}$ are well-defined. Then $(\bm{X} \otimes \bm{Y})(\bm{Z}\otimes \bm{W}) = (\bm{XZ}) \otimes \bm{YW}$;
\item As a direct consequence of (1), let $\bm{X} \in \mathbb{R}^{m \times n}$ and $\bm{Y} \in \mathbb{R}^{p \times q}$, then $\bm{X} \otimes \bm{Y} = (\bm{X} \otimes \bm{I}_p)(\bm{I}_n \otimes \bm{Y})$; 
\item For any matrices $\bm{X},\bm{Y}$, $(\bm{X} \otimes \bm{Y})^\top = \bm{X}^\top \otimes \bm{Y}^\top$;
\item For any matrices $\bm{X} \in \mathbb{R}^{m \times n}$, $\bm{Y} \in \mathbb{R}^{p \times q}$ and $\bm{Z} \in \mathbb{R}^{q \times n}$, it can be guaranteed that $(\bm{A} \otimes \bm{B})\textbf{vec}(\bm{Z}) = \textbf{vec}(\bm{Y}\bm{Z}\bm{X}^\top)$;
\item As a direct consequence of (4), let $\bm{X} \in \mathbb{R}^{m \times n}$ and $\bm{y} \in \mathbb{R}^n$, then $\bm{Xy} = (\bm{I}_m \otimes \bm{y})\textbf{vec}(\bm{X})$;
\item Also as a direct consequence of (5), let $\bm{X},\bm{Y} \in \mathbb{R}^{d \times d}$, then $\mathsf{Tr}(\bm{XY}^\top) = [\mathbf{vec}(\bm{I}_d)]^\top (\bm{X} \otimes \bm{Y}) \mathbf{vec}(\bm{I}_d)$;
\item For any $\bm{X},\bm{Y} \in \mathbb{R}^{d \times d}$, it can be guaranteed that $\exp(\bm{X}) \otimes \exp(\bm{Y}) = \exp(\bm{X} \otimes \bm{I}_{d^2} + \bm{I}_{d^2} \otimes \bm{Y})$.
\end{enumerate}
\end{theorem}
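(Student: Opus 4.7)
These properties are all classical, and the overall plan is to verify them using direct entrywise or block-matrix computation, with property (1) serving as the backbone from which several of the others follow.

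First, I would prove (1) from the block-matrix definition of the Kronecker product. Writing $\bm{X} \otimes \bm{Y}$ and $\bm{Z} \otimes \bm{W}$ in their block-matrix forms, the $(i,k)$ outer block of the product $(\bm{X}\otimes\bm{Y})(\bm{Z}\otimes\bm{W})$ equals $\sum_j (X_{ij}\bm{Y})(Z_{jk}\bm{W}) = \bigl(\sum_j X_{ij}Z_{jk}\bigr)\bm{Y}\bm{W} = (\bm{X}\bm{Z})_{ik}(\bm{Y}\bm{W})$, which is exactly the $(i,k)$ outer block of $(\bm{X}\bm{Z})\otimes(\bm{Y}\bm{W})$. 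Property (2) is then a one-line consequence: by (1), $(\bm{X}\otimes\bm{I}_p)(\bm{I}_n\otimes\bm{Y}) = (\bm{X}\bm{I}_n)\otimes(\bm{I}_p\bm{Y}) = \bm{X}\otimes\bm{Y}$. Property (3) is similarly immediate from the block form: transposing the full matrix both swaps outer block positions (sending $X_{ij}$ to $X_{ji}$) and transposes each inner block $\bm{Y}$ to $\bm{Y}^\top$, yielding $\bm{X}^\top \otimes \bm{Y}^\top$.

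Property (4) is the main substantive identity, and I would establish it by a direct entrywise comparison using the paper's explicit entry formula for the Kronecker product. Both the $(i,j)$-th coordinate of $(\bm{X}\otimes\bm{Y})\textbf{vec}(\bm{Z})$ and of $\textbf{vec}(\bm{Y}\bm{Z}\bm{X}^\top)$ expand, after unpacking the definitions, into the same double sum of the form $\sum_{k,\ell} X_{ik}Y_{j\ell}Z_{\ell k}$. Once (4) is in hand, (5) is the specialization in which one of the two outer factors collapses to a (row or column) vector, and (6) follows by taking $\bm{Z}=\bm{I}_d$ in (4): this yields $(\bm{X}\otimes\bm{Y})\textbf{vec}(\bm{I}_d) = \textbf{vec}(\bm{Y}\bm{X}^\top)$, whence the Frobenius inner product identification $[\textbf{vec}(\bm{I}_d)]^\top\textbf{vec}(\bm{Y}\bm{X}^\top) = \mathsf{Tr}(\bm{Y}\bm{X}^\top) = \mathsf{Tr}(\bm{X}\bm{Y}^\top)$ closes the loop.

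For (7), the first step is to note that $\bm{X}\otimes\bm{I}_d$ and $\bm{I}_d\otimes\bm{Y}$ commute: by (1), both $(\bm{X}\otimes\bm{I})(\bm{I}\otimes\bm{Y})$ and $(\bm{I}\otimes\bm{Y})(\bm{X}\otimes\bm{I})$ equal $\bm{X}\otimes\bm{Y}$. The standard commuting-matrix exponential identity then gives $\exp(\bm{X}\otimes\bm{I} + \bm{I}\otimes\bm{Y}) = \exp(\bm{X}\otimes\bm{I})\exp(\bm{I}\otimes\bm{Y})$. Expanding each exponential as a power series and iterating (1) yields $(\bm{X}\otimes\bm{I})^k = \bm{X}^k\otimes\bm{I}$ and $(\bm{I}\otimes\bm{Y})^k = \bm{I}\otimes\bm{Y}^k$, whence $\exp(\bm{X}\otimes\bm{I}) = \exp(\bm{X})\otimes\bm{I}$ and $\exp(\bm{I}\otimes\bm{Y}) = \bm{I}\otimes\exp(\bm{Y})$, and one final application of (1) gives $\bigl(\exp(\bm{X})\otimes\bm{I}\bigr)\bigl(\bm{I}\otimes\exp(\bm{Y})\bigr) = \exp(\bm{X})\otimes\exp(\bm{Y})$. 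The main obstacle is the careful bookkeeping in (4), where the index offsets and the paper's particular vectorization convention (which differs slightly from the more common column-stacking) make it easy to introduce transposition errors; aside from that, everything is essentially mechanical and cascades from (1).
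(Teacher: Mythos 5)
Your proposal is correct and entirely standard; note, however, that the paper offers no proof of this theorem at all---it is stated in the appendix as a collection of classical facts---so there is nothing to compare against, and your block-matrix/entrywise verification (with (1) as the backbone, (4) checked by index bookkeeping, and (7) via commutativity of $\bm{X}\otimes\bm{I}$ and $\bm{I}\otimes\bm{Y}$ plus the power series) is exactly the argument one would supply. You are also right to flag the vectorization convention as the one delicate point: the paper's stated row-stacking convention for $\mathbf{vec}$ is not fully consistent with the column-stacking form of identity (4) as written (and (4) also contains a $\bm{A}\otimes\bm{B}$ versus $\bm{X}\otimes\bm{Y}$ notational slip), so any careful write-up must fix one convention and adjust the transposes accordingly.
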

\begin{theorem}\label{thm:Klein}
For any positive definite functions $\bm{A},\bm{B} \in \mathbb{S}^{d \times d}$, it can be guaranteed that
\begin{align*}
\mathsf{Tr}(\bm{A}^{-1}(\bm{A}-\bm{B})) \leq \mathsf{Tr}(\log(\bm{A}) -\log(\bm{B})).
\end{align*}
\end{theorem}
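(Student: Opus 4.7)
The plan is to reduce the matrix inequality to the elementary scalar bound $\log x \leq x - 1$ for $x > 0$ (equivalently, $1 - 1/x \leq \log x$) via a simultaneous spectral decomposition argument. Since both sides of the desired inequality are trace functionals and the logarithm, inverse, and trace all interact nicely with eigendecompositions, this reduction should be clean.

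First, I would write the spectral decompositions $\bm{A} = \sum_{i=1}^d \alpha_i \bm{u}_i \bm{u}_i^\top$ and $\bm{B} = \sum_{j=1}^d \beta_j \bm{v}_j \bm{v}_j^\top$ with $\alpha_i, \beta_j > 0$, and define the doubly stochastic array $p_{ij} := (\bm{u}_i^\top \bm{v}_j)^2 \geq 0$, which satisfies $\sum_i p_{ij} = \sum_j p_{ij} = 1$ by orthonormality of the two eigenbases. A direct computation then gives
\begin{align*}
\mathsf{Tr}(\bm{A}^{-1}(\bm{A}-\bm{B})) &= d - \mathsf{Tr}(\bm{A}^{-1}\bm{B}) = \sum_{i,j} p_{ij}\Bigl(1 - \tfrac{\beta_j}{\alpha_i}\Bigr), \\
\mathsf{Tr}(\log \bm{A} - \log \bm{B}) &= \sum_i \log \alpha_i - \sum_j \log \beta_j = \sum_{i,j} p_{ij} \log \tfrac{\alpha_i}{\beta_j},
\end{align*}
where the second line uses the doubly stochastic property to insert the $p_{ij}$'s.

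With both sides expressed as positively weighted averages over the same $(i,j)$ pairs, it suffices to verify the pointwise scalar inequality $1 - \tfrac{\beta_j}{\alpha_i} \leq \log\tfrac{\alpha_i}{\beta_j}$, which after the substitution $x = \beta_j/\alpha_i > 0$ reduces to $1 - x \leq -\log x$, i.e., the classical bound $\log x \leq x-1$ proved in one line by concavity of $\log$ (tangent at $x=1$). Summing against $p_{ij} \geq 0$ yields the claim.

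I do not anticipate a serious obstacle: the only subtlety is bookkeeping the overlap coefficients $p_{ij}$ and confirming that they are doubly stochastic, but that follows immediately from orthonormality of $\{\bm{u}_i\}$ and $\{\bm{v}_j\}$. An equivalent route, should a slicker presentation be preferred, is to invoke the general Klein trace inequality applied to the convex function $f(x) = -\log x$ with derivative $f'(x) = -1/x$, which states $\mathsf{Tr}[f(\bm{B}) - f(\bm{A}) - f'(\bm{A})(\bm{B}-\bm{A})] \geq 0$; rearranging gives exactly the claim. The spectral-decomposition derivation above essentially reproves this special case from scratch and keeps the argument self-contained.
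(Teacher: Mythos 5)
Your proof is correct. The computation checks out: $\mathsf{Tr}(\bm{A}^{-1}\bm{B}) = \sum_{i,j} p_{ij}\,\beta_j/\alpha_i$ with $p_{ij} = (\bm{u}_i^\top\bm{v}_j)^2$, the double stochasticity of $(p_{ij})$ follows from orthonormality of the two eigenbases, and the pointwise bound $1 - x \leq -\log x$ for $x = \beta_j/\alpha_i > 0$ closes the argument. The paper takes the route you mention only as an aside: it simply cites Klein's trace inequality (Theorem 2.11 of Carlen, 2009) for the concave function $f(x) = \log x$ and stops there, so its "proof" is a one-line reduction to an external black box. Your primary argument is a genuinely different, self-contained derivation that essentially reproves this special case of Klein's inequality from scratch via simultaneous spectral decomposition. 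What the paper's approach buys is brevity and generality (the cited inequality holds for any concave $f$); what yours buys is elementarity and transparency --- the reader sees exactly why the inequality holds, reduced to the tangent-line bound for $\log$, without needing to consult the reference or trust the definition of $f(\bm{X})$ for matrix arguments. Both are valid; yours would arguably be the more satisfying inclusion in an appendix meant to be self-contained.
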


\begin{proof}
This conclusion is a direct application of the Klein's inequality (see, for example, Theorem 2.11 of \cite{Carlen2009TRACEIA}), which states that 
\begin{align}\label{eq:Klein-inequality}
\mathsf{Tr}(f(\bm{A})-f(\bm{B})-(\bm{A} - \bm{B})^\top f'(\bm{A})) \geq 0
\end{align}
for any concave function $f: (0,\infty) \to \mathbb{R}$, on the concave function $f(x) = \log x$. For the specific definition of $f(\bm{X})$ where $\bm{X}$ is a positive semidefinite matrix, and the proof of the inequality \eqref{eq:Klein-inequality}, we refer the readers to \cite{Carlen2009TRACEIA}.
\end{proof}


\subsubsection{Properties of the matrices involved in the paper regarding TD learning}
In this section, we present several useful algebraic results, most of which were established by \cite{wu2024statistical}.
The following lemma reveals several critical algebraic features of the matrix $\bm{A}$.
\begin{lemma}[Lemma A.4 of \cite{wu2024statistical}]\label{lemma:A}
Let $\bm{A}_t,\bm{A}$ be defined as in \eqref{eq:defn-At} and \eqref{eq:defn-At-mean} respectively, and $\bm{\Sigma}$ be defined as in \eqref{eq:defn-Sigma} with $\lambda_0$ and $\lambda_{\Sigma}$ being its smallest and largest eigenvalues. Then the following features hold:
\begin{align}
&2(1-\gamma) \bm{\Sigma} \preceq \bm{A} + \bm{A}^\top \preceq 2(1+\gamma) \bm{\Sigma}; \label{eq:lemma-A-1} \\ 
&\min_{1 \leq i \leq d} \mathsf{Re}(\lambda_i(\bm{A})) \geq (1-\gamma) \lambda_0; \label{eq:lemma-A-2}\\ 
&\mathbb{E}[\bm{A}_t^\top \bm{A}_t] \preceq \bm{A} + \bm{A}^\top; \label{eq:lemma-A-3}\\ 
&\bm{A}^\top \bm{A} \preceq \lambda_{\Sigma}(\bm{A} + \bm{A}^\top); \label{eq:lemma-A-4}\\
&\left\|\bm{I}-\eta \bm{A}\right\| \leq 1-\frac{1-\gamma}{2}\lambda_0 \eta, \quad  \forall \eta \in \left(0,\frac{1}{2\lambda_{\Sigma}}\right); \label{eq:lemma-A-5}\\
&\|\bm{A}^{-1}\| \leq \frac{1}{\lambda_0(1-\gamma)}.\label{eq:lemma-A-6}
\end{align}
\end{lemma}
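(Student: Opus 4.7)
The plan is to extract maximal mileage from the algebraic decomposition $\bm{A} = \bm{\Sigma} - \gamma \bm{C}$, where $\bm{C} := \mathbb{E}_{s \sim \mu,\, s' \sim P(\cdot\mid s)}[\bm{\phi}(s)\bm{\phi}(s')^\top]$, so that $\bm{A}+\bm{A}^\top = 2\bm{\Sigma} - \gamma(\bm{C}+\bm{C}^\top)$. To prove \eqref{eq:lemma-A-1}, I observe that for every $v \in \mathbb{R}^d$ we have $v^\top(\bm{C}+\bm{C}^\top)v = 2\,\mathbb{E}[(v^\top\bm{\phi}(s))(v^\top\bm{\phi}(s'))]$; since both $s$ and $s'$ are marginally distributed as $\mu$, Cauchy-Schwarz yields $|v^\top(\bm{C}+\bm{C}^\top)v| \leq 2\,v^\top\bm{\Sigma} v$, and substituting this two-sided bound into the expression for $\bm{A}+\bm{A}^\top$ gives the two desired containments.

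With \eqref{eq:lemma-A-1} in hand, \eqref{eq:lemma-A-2} and \eqref{eq:lemma-A-6} fall out quickly. For \eqref{eq:lemma-A-2}, pick a (possibly complex) unit eigenvector $v$ of $\bm{A}$ with eigenvalue $\lambda$; then $\mathsf{Re}(\lambda) = \mathsf{Re}(v^*\bm{A} v) = \tfrac{1}{2}v^*(\bm{A}+\bm{A}^\top)v \geq (1-\gamma)\lambda_0$ after extending \eqref{eq:lemma-A-1} to complex vectors. For \eqref{eq:lemma-A-6}, take any real unit $v$ and bound $\|\bm{A} v\|_2 \geq v^\top\bm{A} v = \tfrac{1}{2}v^\top(\bm{A}+\bm{A}^\top)v \geq (1-\gamma)\lambda_0$, so $\sigma_{\min}(\bm{A}) \geq (1-\gamma)\lambda_0$. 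Bound \eqref{eq:lemma-A-3} follows by exploiting rank-one structure: since $\bm{A}_t = \bm{\phi}(s_{t-1})(\bm{\phi}(s_{t-1})-\gamma\bm{\phi}(s_t))^\top$ and $\|\bm{\phi}(s_{t-1})\|_2^2 \leq 1$, we have $\bm{A}_t^\top\bm{A}_t \preceq (\bm{\phi}(s_{t-1})-\gamma\bm{\phi}(s_t))(\bm{\phi}(s_{t-1})-\gamma\bm{\phi}(s_t))^\top$; taking expectation under stationarity yields $\mathbb{E}[\bm{A}_t^\top\bm{A}_t] \preceq (1+\gamma^2)\bm{\Sigma} - \gamma(\bm{C}+\bm{C}^\top)$, and subtracting from $\bm{A}+\bm{A}^\top$ leaves the residual $(1-\gamma^2)\bm{\Sigma} \succeq \bm{0}$.

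The main obstacle will be \eqref{eq:lemma-A-4}, since naively combining Jensen with \eqref{eq:lemma-A-3} only delivers the weaker bound $\bm{A}^\top\bm{A} \preceq \bm{A}+\bm{A}^\top$, which is strictly weaker than the target when $\lambda_\Sigma < 1$. My plan is instead a bilinear Cauchy-Schwarz: for a fixed $v$, set $u := \bm{A} v / \|\bm{A} v\|_2$ and write
\begin{align*}
\|\bm{A} v\|_2 \;=\; u^\top \bm{A} v \;=\; \mathbb{E}\bigl[(u^\top\bm{\phi}(s))\cdot \bigl((\bm{\phi}(s)-\gamma\bm{\phi}(s'))^\top v\bigr)\bigr].
\end{align*}
Squaring and applying Cauchy-Schwarz yields
\begin{align*}
\|\bm{A} v\|_2^2 \;\leq\; (u^\top \bm{\Sigma} u)\cdot v^\top \mathbb{E}\bigl[(\bm{\phi}(s)-\gamma\bm{\phi}(s'))(\bm{\phi}(s)-\gamma\bm{\phi}(s'))^\top\bigr] v \;\leq\; \lambda_\Sigma\cdot v^\top(\bm{A}+\bm{A}^\top) v,
\end{align*}
where the last step uses exactly the identity $(1+\gamma^2)\bm{\Sigma}-\gamma(\bm{C}+\bm{C}^\top) \preceq 2\bm{\Sigma}-\gamma(\bm{C}+\bm{C}^\top) = \bm{A}+\bm{A}^\top$ that already appeared in the proof of \eqref{eq:lemma-A-3}. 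Since the bound holds for all $v$, \eqref{eq:lemma-A-4} follows.

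Finally, \eqref{eq:lemma-A-5} is obtained by chaining \eqref{eq:lemma-A-1} and \eqref{eq:lemma-A-4}. Expanding
\begin{align*}
(\bm{I}-\eta\bm{A})^\top(\bm{I}-\eta\bm{A}) \;=\; \bm{I} - \eta(\bm{A}+\bm{A}^\top) + \eta^2 \bm{A}^\top\bm{A} \;\preceq\; \bm{I} - \eta\bigl(1-\eta\lambda_\Sigma\bigr)(\bm{A}+\bm{A}^\top),
\end{align*}
and noting that $\eta < 1/(2\lambda_\Sigma)$ forces $1-\eta\lambda_\Sigma > 1/2$, we may further upper-bound the right-hand side by $\bm{I} - (\eta/2)(\bm{A}+\bm{A}^\top) \preceq \bigl(1-\eta(1-\gamma)\lambda_0\bigr)\bm{I}$ via \eqref{eq:lemma-A-1} and $\bm{\Sigma} \succeq \lambda_0 \bm{I}$. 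Taking spectral norms and invoking $\sqrt{1-x}\leq 1-x/2$ on $[0,1]$ then produces the desired $\|\bm{I}-\eta\bm{A}\| \leq 1 - \tfrac{(1-\gamma)\lambda_0\eta}{2}$.
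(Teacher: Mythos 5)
Your proof is correct in all six parts; I verified each step. The paper itself does not prove this lemma --- it imports it verbatim as Lemma A.4 of \cite{wu2024statistical} --- so there is no in-document proof to compare against, but your argument is a valid self-contained derivation built on the natural decomposition $\bm{A} = \bm{\Sigma} - \gamma\bm{C}$. The two places that deserve a word of care both check out: in \eqref{eq:lemma-A-2} the extension of the real PSD ordering to complex quadratic forms is legitimate because for real symmetric $\bm{M}$ and $v = a + ib$ one has $v^*\bm{M}v = a^\top\bm{M}a + b^\top\bm{M}b$; and in \eqref{eq:lemma-A-5} the chain of Loewner inequalities requires $\bm{A}+\bm{A}^\top \succeq \bm{0}$ (supplied by \eqref{eq:lemma-A-1}) and $\eta(1-\gamma)\lambda_0 \le 1$ (supplied by $\eta < 1/(2\lambda_\Sigma)$ and $\lambda_0 \le \lambda_\Sigma$) before invoking $\sqrt{1-x} \le 1 - x/2$. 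Your bilinear Cauchy--Schwarz for \eqref{eq:lemma-A-4} is the right move: as you note, the lazy route via Jensen and \eqref{eq:lemma-A-3} only gives $\bm{A}^\top\bm{A} \preceq \bm{A}+\bm{A}^\top$, and splitting the test vector into $u = \bm{A}v/\|\bm{A}v\|_2$ on the left factor and $v$ on the right factor is exactly what extracts the $\lambda_\Sigma = \|\bm{\Sigma}\|$ constant.
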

Notice that the bound \eqref{eq:lemma-A-6} directly implies that
\begin{align}\label{eq:theta-norm-bound}
\|\bm{\theta}^\star\|_2 \leq \|\bm{A}^{-1}\|\|\bm{b}\|_2 \leq \frac{1}{\lambda_0(1-\gamma)}.
\end{align}

\begin{lemma}[Lemma A.5 of \cite{wu2024statistical}]\label{lemma:Lyapunov}
For any matrix $\bm{E} \succeq \bm{0}$, there exists a unique positive definite matrix $\bm{X} \succeq \bm{0}$, such that
\begin{align*}
\bm{AX} + \bm{XA}^\top  = \bm{E}.
\end{align*}
Furthermore, it can be guaranteed that
\begin{align*}
\|\bm{X}\| \leq \frac{1}{2(1-\gamma)\lambda_0} \|\bm{E}\|, \quad \text{and} \quad \mathsf{Tr}(\bm{X}) \leq \frac{1}{2(1-\gamma)\lambda_0} \mathsf{Tr}(\bm{E}).
\end{align*}
\end{lemma}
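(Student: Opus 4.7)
The plan is to show existence and the two norm bounds via the standard integral representation of the Lyapunov solution, exploiting the fact that Lemma~\ref{lemma:A} gives $\bm{A}+\bm{A}^\top \succeq 2(1-\gamma)\lambda_0 \bm{I}$, i.e., $-\bm{A}$ is strictly dissipative. First, I would define
\begin{align*}
\bm{X} := \int_0^\infty e^{-\bm{A}t}\,\bm{E}\,e^{-\bm{A}^\top t}\,\mathrm{d}t,
\end{align*}
and verify that the integral converges absolutely. The key preliminary estimate is $\|e^{-\bm{A}t}\|\leq e^{-(1-\gamma)\lambda_0 t}$, which I would prove by computing, for an arbitrary unit vector $\bm{v}$,
\begin{align*}
\frac{\mathrm{d}}{\mathrm{d}t}\|e^{-\bm{A}t}\bm{v}\|_2^2 = -\bm{v}^\top e^{-\bm{A}^\top t}(\bm{A}+\bm{A}^\top)e^{-\bm{A}t}\bm{v} \leq -2(1-\gamma)\lambda_0 \|e^{-\bm{A}t}\bm{v}\|_2^2
\end{align*}
by \eqref{eq:lemma-A-1}, and then applying Grönwall's inequality.

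Next, with $\bm{X}$ in hand I would check that it solves the Lyapunov equation: differentiating $e^{-\bm{A}t}\bm{E}e^{-\bm{A}^\top t}$ in $t$ yields $-\bm{A}e^{-\bm{A}t}\bm{E}e^{-\bm{A}^\top t} - e^{-\bm{A}t}\bm{E}e^{-\bm{A}^\top t}\bm{A}^\top$, and integrating from $0$ to $\infty$ (the boundary term at $\infty$ vanishes by the above decay, while at $0$ it gives $-\bm{E}$) yields $-\bm{A}\bm{X}-\bm{X}\bm{A}^\top = -\bm{E}$. Positive semi-definiteness is immediate because the integrand $e^{-\bm{A}t}\bm{E}e^{-\bm{A}^\top t}$ is PSD for every $t$ when $\bm{E}\succeq \bm{0}$. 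For uniqueness, if $\bm{Y}:=\bm{X}_1-\bm{X}_2$ satisfies $\bm{A}\bm{Y}+\bm{Y}\bm{A}^\top = \bm{0}$, then $\frac{\mathrm{d}}{\mathrm{d}t}\bigl(e^{-\bm{A}t}\bm{Y}e^{-\bm{A}^\top t}\bigr)=\bm{0}$, so this product equals $\bm{Y}$ for all $t$, but tends to $\bm{0}$ as $t\to\infty$; hence $\bm{Y}=\bm{0}$.

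For the two norm bounds, the spectral-norm estimate follows from the submultiplicativity of $\|\cdot\|$:
\begin{align*}
\|\bm{X}\| \leq \int_0^\infty \|e^{-\bm{A}t}\|\,\|\bm{E}\|\,\|e^{-\bm{A}^\top t}\|\,\mathrm{d}t \leq \|\bm{E}\|\int_0^\infty e^{-2(1-\gamma)\lambda_0 t}\,\mathrm{d}t = \frac{\|\bm{E}\|}{2(1-\gamma)\lambda_0}.
\end{align*}
For the trace bound, I would use the identity $\mathsf{Tr}(e^{-\bm{A}t}\bm{E}e^{-\bm{A}^\top t}) = \|e^{-\bm{A}t}\bm{E}^{1/2}\|_{\mathsf{F}}^2$ together with the standard inequality $\|\bm{M}\bm{N}\|_{\mathsf{F}} \leq \|\bm{M}\|\,\|\bm{N}\|_{\mathsf{F}}$, which yields $\mathsf{Tr}(e^{-\bm{A}t}\bm{E}e^{-\bm{A}^\top t}) \leq e^{-2(1-\gamma)\lambda_0 t}\,\mathsf{Tr}(\bm{E})$, and then integrate.

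The main obstacle, and the only non-trivial analytic step, is establishing the exponential operator-norm decay $\|e^{-\bm{A}t}\|\leq e^{-(1-\gamma)\lambda_0 t}$. Since $\bm{A}$ need not be symmetric, one cannot simply invoke a spectral bound on $\bm{A}$ itself; one must instead rely on the symmetric part $\bm{A}+\bm{A}^\top$ and a Grönwall-type argument, as outlined above. Once that estimate is in place, the remaining steps are routine manipulations with the integral representation and standard norm inequalities.
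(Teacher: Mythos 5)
Your proof is correct and complete: the integral representation $\bm{X}=\int_0^\infty e^{-\bm{A}t}\bm{E}e^{-\bm{A}^\top t}\,\mathrm{d}t$, the Gr\"onwall-type decay estimate $\|e^{-\bm{A}t}\|\leq e^{-(1-\gamma)\lambda_0 t}$ derived from $\bm{A}+\bm{A}^\top\succeq 2(1-\gamma)\lambda_0\bm{I}$ (which follows from \eqref{eq:lemma-A-1} and $\bm{\Sigma}\succeq\lambda_0\bm{I}$), and the resulting norm and trace bounds are all sound. Note that this paper does not contain its own proof -- it imports the lemma verbatim from Lemma A.5 of \cite{wu2024statistical} -- but the argument you give is the standard one for such Lyapunov bounds and matches what that reference does.
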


The following lemma further features the compressive power of the matrix $\prod_{k=1}^t (\bm{I} - \eta_k \bm{A})$.
\begin{lemma}[Lemma A.6 of \cite{wu2024statistical}]\label{lemma:R}
Assume that $\beta \in (0,1)$ and $\alpha \in (\frac{1}{2},1)$. Let $\eta_t = \eta_0 t^{-\alpha}$ for all positive integer $t$, then the following inequalities hold:
\begin{align}
&t^{\alpha} \prod_{k=1}^t \left(1-\beta k^{-\alpha} \right) \leq e \left(\frac{\alpha}{\beta}\right)^{\frac{\alpha}{1-\alpha}} ;\label{eq:lemma-R-1} \\ 
&\sum_{i=1}^t i^{-\nu} \prod_{k=i+1}^t \left(1-\beta k^{-\alpha} \right) \leq \frac{1}{\nu-1} \left(\frac{2(\nu-\alpha)}{\beta}\right)^{\frac{\nu-\alpha}{1-\alpha}} t^{\alpha-\nu}, \quad \forall \nu \in (1,\alpha+1]; \label{eq:lemma-R-2} \\ 
&\sum_{i=1}^t i^{-2\alpha} \prod_{k=i+1}^t \left(1-\beta k^{-\alpha} \right)  = \frac{1}{\beta}t^{-\alpha} + o\left(t^{-\alpha}\right); \label{eq:lemma-R-3} \\ 
&\max_{1 \leq i \leq t} i^{-\alpha} \prod_{k=i+1}^t \left(1-\beta k^{-\alpha} \right) \leq e \left(\frac{\alpha}{\beta}\right)^{\frac{\alpha}{1-\alpha}} t^{-\alpha}. \label{eq:lemma-R-4}
\end{align} 
\end{lemma}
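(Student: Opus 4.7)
The plan is to handle all four inequalities by first converting products into exponentials of sums via $\log(1-x) \le -x$ for $x \in [0,1)$, then applying the integral comparison
\begin{align*}
\sum_{k=a+1}^{b} k^{-\alpha} \ge \int_{a+1}^{b+1} x^{-\alpha}\,dx = \frac{(b+1)^{1-\alpha}-(a+1)^{1-\alpha}}{1-\alpha},
\end{align*}
which is valid since $x \mapsto x^{-\alpha}$ is decreasing. These combine to produce the master estimate $\prod_{k=i+1}^{t}(1-\beta k^{-\alpha}) \le \exp(-\tfrac{\beta}{1-\alpha}((t+1)^{1-\alpha}-(i+1)^{1-\alpha}))$, which drives the rest of the argument.

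For \eqref{eq:lemma-R-1}, the claim reduces to showing that the scalar function $h(t) := t^{\alpha}\exp(-\tfrac{\beta}{1-\alpha} t^{1-\alpha})$ is uniformly bounded by $e(\alpha/\beta)^{\alpha/(1-\alpha)}$. A one-line derivative calculation locates the unique critical point at $t^{\star}=(\alpha/\beta)^{1/(1-\alpha)}$, and the resulting maximum $(\alpha/\beta)^{\alpha/(1-\alpha)} e^{-\alpha/(1-\alpha)}$ is comfortably below the target. For \eqref{eq:lemma-R-4}, the same exponential bound reduces the task to maximizing $i \mapsto i^{-\alpha}\exp(-\tfrac{\beta}{1-\alpha}(t^{1-\alpha}-i^{1-\alpha}))$ over $[1,t]$. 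The log-derivative has at most one interior zero, so the maximum is attained at an endpoint: the value at $i=t$ equals $t^{-\alpha}$ and the value at $i=1$ is super-polynomially small in $t^{1-\alpha}$, so the claim follows after absorbing constants.

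The more delicate inequalities \eqref{eq:lemma-R-2} and \eqref{eq:lemma-R-3} both rely on the discrete telescoping identity
\begin{align*}
\beta\, k^{-\alpha}\prod_{\ell=k+1}^{t}(1-\beta\ell^{-\alpha}) = \prod_{\ell=k+1}^{t}(1-\beta\ell^{-\alpha}) - \prod_{\ell=k}^{t}(1-\beta\ell^{-\alpha}).
\end{align*}
For \eqref{eq:lemma-R-2}, I would factor $i^{-\nu} = i^{-(\nu-\alpha)} \cdot i^{-\alpha}$, apply the identity on the $i^{-\alpha}$ factor, and then Abel-sum; the resulting expression reduces, via \eqref{eq:lemma-R-4} and standard finite-difference estimates $(i-1)^{-(\nu-\alpha)} - i^{-(\nu-\alpha)} \asymp i^{-(\nu-\alpha)-1}$, to a sum of the form $\sum_i i^{\alpha-\nu-1} \cdot t^{-\alpha} \cdot i^{\alpha}$ which yields the stated $t^{\alpha - \nu}$ rate once the convergent tail $\sum i^{-(\nu - \alpha + 1)}$ (valid since $\nu > \alpha$) is handled and the constants are tracked. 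For \eqref{eq:lemma-R-3}, the same manipulation yields an exact identity whose leading term is $\tfrac{1}{\beta}\bigl(t^{-\alpha} - \prod_{k=1}^{t}(1-\beta k^{-\alpha})\bigr)$ plus a summation-by-parts correction $\tfrac{1}{\beta}\sum_{i=2}^{t}((i-1)^{-\alpha}-i^{-\alpha})\prod_{k=i}^{t}(1-\beta k^{-\alpha})$. The first piece delivers $\tfrac{1}{\beta}t^{-\alpha}$ (the product $\prod_{k=1}^t$ decays as $\exp(-c t^{1-\alpha})$ and hence is $o(t^{-\alpha})$), and the correction must be shown to be $o(t^{-\alpha})$ as well.

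The main obstacle lies precisely in this last step. Bounding the correction naively with \eqref{eq:lemma-R-4} gives only $O(t^{-\alpha}\log t)$, which is insufficient. The sharpening uses the fact that $\prod_{k=i}^{t}(1-\beta k^{-\alpha})$ is effectively supported on the window $i \in [t-Ct^{\alpha}, t]$, where it behaves like $\exp(-\beta(t-i)t^{-\alpha})$; integrating the approximate profile against $i^{-\alpha-1} \asymp t^{-\alpha-1}$ yields a contribution of order $t^{-\alpha-1} \cdot (t^{\alpha}/\beta) = O(1/(\beta t))$, which is genuinely $o(t^{-\alpha})$ since $\alpha < 1$. Carefully controlling these approximations and verifying that the geometric-profile heuristic becomes a rigorous upper bound is the crux of the argument for \eqref{eq:lemma-R-3}; everything else amounts to bookkeeping.
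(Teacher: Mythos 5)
The paper does not actually prove this lemma: it is imported verbatim as Lemma A.6 of \cite{wu2024statistical}, so there is no in-paper argument to compare against. Judged on its own terms, your strategy (the master bound $\prod_{k=i+1}^{t}(1-\beta k^{-\alpha})\le \exp\bigl(-\tfrac{\beta}{1-\alpha}((t+1)^{1-\alpha}-(i+1)^{1-\alpha})\bigr)$, the telescoping identity, Abel summation, and the geometric-window analysis near $i=t$) is sound and carries \eqref{eq:lemma-R-1}, \eqref{eq:lemma-R-3} and \eqref{eq:lemma-R-4} through. Two small points there: in \eqref{eq:lemma-R-1} your $h(t)$ omits the factor $e^{\beta/(1-\alpha)}$ coming from the lower limit of the integral comparison, and the final constant only closes because $e^{(\beta-\alpha)/(1-\alpha)}\le e$ for $\beta\le 1$, so the margin is not as ``comfortable'' as stated; and in \eqref{eq:lemma-R-4} the interior critical point of $i\mapsto i^{-\alpha}e^{\frac{\beta}{1-\alpha}(i+1)^{1-\alpha}}$ is a minimum (the log-derivative $-\alpha/i+\beta(i+1)^{-\alpha}$ changes sign once, from negative to positive), which is why the endpoint argument is legitimate — worth saying explicitly.

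The genuine gap is in \eqref{eq:lemma-R-2}. After Abel summation the boundary term $\tfrac{1}{\beta}t^{-(\nu-\alpha)}$ gives the right rate, but you bound the correction $\sum_{i}\bigl(i^{-(\nu-\alpha)}-(i+1)^{-(\nu-\alpha)}\bigr)P_i$ by invoking \eqref{eq:lemma-R-4}, i.e.\ $P_i\le C\, i^{\alpha}t^{-\alpha}$, which produces $C(\nu-\alpha)\,t^{-\alpha}\sum_{i\le t} i^{2\alpha-\nu-1}$. The exponent $2\alpha-\nu-1$ is below $-1$ exactly when $\nu>2\alpha$, and in that regime the sum converges to a constant, so you only get $O(t^{-\alpha})$ — strictly weaker than the claimed $t^{\alpha-\nu}$, since $\alpha-\nu<-\alpha$ there (at $\nu=2\alpha$ you are off by a $\log t$). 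The range $\nu\in(2\alpha,\alpha+1]$ is nonempty for every $\alpha<1$ and is precisely the case needed downstream (e.g.\ $\nu=\alpha+1$ in Lemma \ref{lemma:Q-uni}). The fix is the same window analysis you reserve for \eqref{eq:lemma-R-3}: by concavity, $(t+1)^{1-\alpha}-(i+1)^{1-\alpha}\ge(1-\alpha)(t-i)(t+1)^{-\alpha}$, so $P_i\le e^{-\beta(t-i)(t+1)^{-\alpha}}$, the terms with $i\le t/2$ are $e^{-ct^{1-\alpha}}$ and negligible, and for $i>t/2$ one has $\sum_{i>t/2}P_i\lesssim t^{\alpha}/\beta$ by geometric summation, giving a correction of order $(\nu-\alpha)t^{-(\nu-\alpha)-1}\cdot t^{\alpha}/\beta=O(t^{\alpha-\nu}\cdot t^{\alpha-1})=o(t^{\alpha-\nu})$ uniformly over $\nu\in(1,\alpha+1]$. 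With that substitution the argument closes; you would still need a separate (if routine) accounting to recover the specific constant $\tfrac{1}{\nu-1}\bigl(\tfrac{2(\nu-\alpha)}{\beta}\bigr)^{\frac{\nu-\alpha}{1-\alpha}}$ stated in the lemma rather than an unspecified $C(\alpha,\beta,\nu)$.
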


\begin{lemma}[Lemma A.7 of \cite{wu2024statistical}]\label{lemma:Q-uni}
For every $\beta \in (0,1)$, $\alpha \in (\frac{1}{2},1)$ and sufficiently large $T$, it can be guaranteed that
\begin{align}
&t^{-\alpha} \sum_{j=t}^T \prod_{k=t+1}^j (1-\beta k^{-\alpha}) < 3\left(\frac{2}{\beta}\right)^{\frac{1}{1-\alpha}}, \quad \text{and} \label{eq:Q-uni-1}\\ 
&t^{-\alpha} \sum_{j=t}^\infty \prod_{k=t+1}^j (1-\beta k^{-\alpha}) - \frac{1}{\beta} \lesssim  \left(\frac{1}{\beta}\right)^{\frac{1}{1-\alpha}}\Gamma\left(\frac{1}{1-\alpha}\right) t^{\alpha-1}. \label{eq:Q-uni-2}
\end{align}
\end{lemma}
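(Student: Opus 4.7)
The plan is to reduce both sums to continuous integrals and then to explicit Gamma integrals through an appropriate change of variable. The starting point combines the elementary bound $1-x \leq e^{-x}$ with the integral comparison $\sum_{k=t+1}^{j} k^{-\alpha} \geq \int_{t+1}^{j+1}x^{-\alpha}\,\mathrm{d}x$ to give
\[
\prod_{k=t+1}^{j}(1-\beta k^{-\alpha}) \leq \exp\left(-\frac{\beta\left[(j+1)^{1-\alpha}-(t+1)^{1-\alpha}\right]}{1-\alpha}\right).
\]
Because the right-hand side is decreasing in $j$, the sum $\sum_{j\geq t}$ is controlled by $1$ plus the corresponding Riemann integral. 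Setting $u := \beta\left[(x+1)^{1-\alpha}-(t+1)^{1-\alpha}\right]/(1-\alpha)$ converts this integral into
\[
\frac{1}{\beta}\int_{0}^{\infty}e^{-u}\left(t^{1-\alpha}+\frac{(1-\alpha)u}{\beta}\right)^{\alpha/(1-\alpha)}\mathrm{d}u,
\]
which is the common object to analyze for both parts of the lemma.

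For \eqref{eq:Q-uni-1}, I would apply the crude bound $(a+b)^{c}\leq 2^{c-1}(a^{c}+b^{c})$ with $c=\alpha/(1-\alpha)\geq 1$. This splits the integral into a piece proportional to $t^{\alpha}$ and a Gamma term of the form $((1-\alpha)/\beta)^{1/(1-\alpha)}\Gamma(1/(1-\alpha))$. After multiplying by $t^{-\alpha}$ and using $\beta\in(0,1)$, both contributions are subsumed into the stated uniform bound $3(2/\beta)^{1/(1-\alpha)}$; the factors of $(1-\alpha)$ and $\Gamma(1/(1-\alpha))$ can be absorbed into the constant $3$ and the loose exponent $1/(1-\alpha)$.

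For \eqref{eq:Q-uni-2}, the delicate step is the extraction of the leading term $t^{\alpha}/\beta$. I would factor the integrand as $t^{\alpha}(1+w)^{c}$ with $w:=(1-\alpha)u/(\beta t^{1-\alpha})$ and split the $u$-integration according to whether $w\leq 1$ or $w>1$. In the small-$w$ regime, the mean-value estimate $(1+w)^{c}\leq 1+c\cdot 2^{c-1}w$ gives exactly the leading $1/\beta$ plus a correction of order $t^{\alpha-1}/\beta^{2}$; in the large-$w$ regime, $(1+w)^{c}\leq (2w)^{c}$ contributes a Gamma integral of order $t^{-\alpha}$, which is $o(t^{\alpha-1})$ for $\alpha>1/2$ and thus absorbed into the correction. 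The constant thus produced is of order $1/\beta^{2}$, which is uniformly bounded by $(1/\beta)^{1/(1-\alpha)}\Gamma(1/(1-\alpha))$ since $1/(1-\alpha)\geq 2$ on the relevant range of $\alpha$ and $\Gamma$ grows factorially against the $2^{\alpha/(1-\alpha)}$ factor that appears.

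The main obstacle I anticipate is bookkeeping: one must track the constants across the regime split carefully, and verify that (i) the quadratic error from $\log(1-x)=-x-O(x^{2})$, summed as $\sum k^{-2\alpha}$ which converges for $\alpha>1/2$, and (ii) the boundary contribution at $j=t$, which equals $1$ before rescaling and $t^{-\alpha}$ after, are both $o(t^{\alpha-1})$ and therefore do not perturb the leading asymptotics. Once these asymptotic cross-checks are in place, the proof consists in assembling the Gamma-integral evaluation with the bounds from the two sub-regimes.
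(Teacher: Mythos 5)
The paper itself does not prove this lemma: it is imported verbatim as Lemma A.7 of \cite{wu2024statistical}, so there is no in-paper argument to compare against. Judged on its own, your proposal is sound and is the standard route to bounds of this shape. The chain $1-x\le e^{-x}$, the integral comparison $\sum_{k=t+1}^{j}k^{-\alpha}\ge\int_{t+1}^{j+1}x^{-\alpha}\,\mathrm{d}x$, the monotone sum-versus-integral step, and the substitution that produces $\frac{1}{\beta}\int_0^\infty e^{-u}\bigl[(t+1)^{1-\alpha}+\tfrac{(1-\alpha)u}{\beta}\bigr]^{\alpha/(1-\alpha)}\mathrm{d}u$ are all correct (you wrote $t^{1-\alpha}$ where $(t+1)^{1-\alpha}$ belongs, which is immaterial). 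For \eqref{eq:Q-uni-1} the splitting $(a+b)^c\le 2^{c-1}(a^c+b^c)$ with $c=\alpha/(1-\alpha)$ gives $\Gamma(c+1)=\Gamma(1/(1-\alpha))$, and the product $(1-\alpha)^{\alpha/(1-\alpha)}\Gamma(1/(1-\alpha))$ is indeed uniformly bounded (by $1/2$ on $\alpha\in[\tfrac12,1)$, via Stirling), so everything collapses into $3(2/\beta)^{1/(1-\alpha)}$ as you claim. For \eqref{eq:Q-uni-2} the regime split at $w=1$ correctly yields the leading $1/\beta$, a correction of order $\beta^{-2}t^{\alpha-1}$ from the small-$w$ piece, and an $O(t^{-\alpha})=o(t^{\alpha-1})$ contribution from the large-$w$ piece; the ratio $2^{s-1}/\Gamma(s)$ is uniformly bounded for $s=1/(1-\alpha)\ge 2$, so the constant is absorbed by $\lesssim$. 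Two small remarks: (i) since \eqref{eq:Q-uni-2} is stated one-sidedly (no absolute value), only the upper bound on the product is needed, so your cross-check involving the $O(x^2)$ term of $\log(1-x)$ and $\sum k^{-2\alpha}$ is superfluous here — it would matter only for a matching lower bound; (ii) for \eqref{eq:Q-uni-1} the bound is used in the paper for all $t\in[T]$ including $t=1$, so you should confirm your constants close at small $t$ as well (they do, since $(1+1/t)^\alpha\le 2$), rather than relying on "sufficiently large $t$".
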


\begin{lemma}[Lemma A.8 of \cite{wu2024statistical}]\label{lemma:Q}
Let $\eta_t = \eta_0 t^{-\alpha}$ with $\alpha \in (\frac{1}{2},1)$ and $0 \leq \eta_0 \leq \frac{1}{2\lambda_{\Sigma}}$. For every $t \in [T]$, define $\bm{Q}_t$ as
\begin{align}
\bm{Q}_t = \eta_t \sum_{j=t}^{T}\prod_{k=t+1}^{j} (\bm{I}-\eta_k \bm{A}) \label{eq:defn-Qt} 
\end{align}

Then for any positive integer $\ell < T$, the following relation hold:
\begin{subequations}
\begin{align}
\sum_{t=1}^T (\bm{Q}_t - \bm{A}^{-1}) = -\bm{A}^{-1} \sum_{j=1}^T \prod_{k=1}^j (\bm{I}-\eta_k \bm{A}),\label{eq:Qt-Ainv-bound-1}
\end{align}
Furthermore, there exist a sequence of matrix $\{\bm{S}_t\}_{1 \leq t \leq T}$, such that the difference between $\bm{Q}_t$ and $\bm{A}^{-1}$ can be represented as
\begin{align}\label{eq:Qt-Ainv-decompose}
\bm{Q}_t - \bm{A}^{-1}  = \bm{S}_t- \bm{A}^{-1} \prod_{k=t}^T (\bm{I}-\eta_k \bm{A}),
\end{align}
in which $\bm{S}_t$ is independent of $T$, and its norm is bounded by
\begin{align}\label{eq:St1-bound}
\|\bm{S}_t\| \lesssim \eta_0 \Gamma\left(\frac{1}{1-\alpha}\right)\left(\frac{2}{(1-\gamma)\lambda_0 \eta_0}\right)^{\frac{1}{1-\alpha}}t^{\alpha-1}.
\end{align}
\end{subequations}
\end{lemma}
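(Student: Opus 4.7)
My plan is to establish the three claims sequentially, each resting on the telescoping identity
$$\eta_t \prod_{k=t+1}^j(\bm{I}-\eta_k\bm{A}) = \bm{A}^{-1}\prod_{k=t+1}^j(\bm{I}-\eta_k\bm{A}) - \bm{A}^{-1}\prod_{k=t}^j(\bm{I}-\eta_k\bm{A}),$$
which follows immediately from $\eta_t \bm{I} = \bm{A}^{-1}[\bm{I}-(\bm{I}-\eta_t\bm{A})]$ combined with the commutativity of $\bm{A}$ with each factor $(\bm{I}-\eta_k\bm{A})$. Summing this over $t \in \{1,\ldots,j\}$ telescopes to the auxiliary identity $\sum_{t=1}^j \eta_t \prod_{k=t+1}^j(\bm{I}-\eta_k\bm{A}) = \bm{A}^{-1}[\bm{I}-\prod_{k=1}^j(\bm{I}-\eta_k\bm{A})]$. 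For \eqref{eq:Qt-Ainv-bound-1}, I would then expand $\sum_{t=1}^T \bm{Q}_t = \sum_{t=1}^T \eta_t \sum_{j=t}^T \prod_{k=t+1}^j(\bm{I}-\eta_k\bm{A})$, swap the order of summation to $\sum_{j=1}^T \sum_{t=1}^j$, apply the auxiliary identity to collapse the inner sum, and then sum in $j$. The result $T\bm{A}^{-1} - \bm{A}^{-1}\sum_{j=1}^T\prod_{k=1}^j(\bm{I}-\eta_k\bm{A})$ rearranges directly to \eqref{eq:Qt-Ainv-bound-1}.

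For the decomposition \eqref{eq:Qt-Ainv-decompose}, I would define $\bm{S}_t$ to be whatever is required for the identity to hold, namely
$$\bm{S}_t := \bm{Q}_t - \bm{A}^{-1}\Bigl[\bm{I}-\prod_{k=t}^T(\bm{I}-\eta_k\bm{A})\Bigr] = \bm{Q}_t - \sum_{\ell=t}^T \eta_\ell \prod_{k=\ell+1}^T(\bm{I}-\eta_k\bm{A}),$$
where the second equality uses the auxiliary identity over $\ell\in\{t,\ldots,T\}$. To justify the $T$-independence claim, I would pass to the limit $T \to \infty$, observing that $\prod_{k=t}^T(\bm{I}-\eta_k\bm{A}) \to \bm{0}$ under the spectral assumption, which by telescoping forces $\sum_{\ell=t}^T \eta_\ell \prod_{k=\ell+1}^T(\bm{I}-\eta_k\bm{A}) \to \bm{A}^{-1}$; thus $\bm{S}_t$ converges to the $T$-independent object $\bm{Q}_t^\infty - \bm{A}^{-1}$, where $\bm{Q}_t^\infty := \eta_t \sum_{j=t}^\infty \prod_{k=t+1}^j(\bm{I}-\eta_k\bm{A})$ is the convergent infinite extension of $\bm{Q}_t$. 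The claim is therefore interpreted as providing a $T$-uniform envelope on $\|\bm{S}_t\|$ given by the quantity in \eqref{eq:St1-bound}.

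For the norm bound \eqref{eq:St1-bound}, I would leverage that $\{\bm{I}-\eta_k\bm{A}\}_k$ all commute with each other and with $\bm{A}$ (being polynomials in $\bm{A}$), and so admit a common eigenbasis. Combined with the spectral inequality $\|\bm{I}-\eta_k\bm{A}\| \leq 1 - \beta k^{-\alpha}$ from \eqref{eq:lemma-A-5}, with $\beta := (1-\gamma)\lambda_0 \eta_0/2$ (valid since $0 \leq \eta_0 \leq 1/(2\lambda_\Sigma)$), the operator norm of $\bm{S}_t$ is dominated by the scalar quantity $\eta_t \sum_{j=t}^\infty \prod_{k=t+1}^j(1-\beta k^{-\alpha}) - 1/\beta$ (on each eigen-component of $\bm{A}$, the slowest-contracting direction governs the supremum). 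This is precisely the expression controlled by \eqref{eq:Q-uni-2} of Lemma \ref{lemma:Q-uni}, which supplies $\lesssim \eta_0 (1/\beta)^{1/(1-\alpha)}\Gamma(1/(1-\alpha)) t^{\alpha-1}$, matching the stated rate. The main obstacle I anticipate is the careful execution of Step 2: rigorously showing that the apparent $T$-dependence in $\bm{S}_t$ can be absorbed into a $T$-uniform bound via the tail-matching described above, which requires that the scalar tail estimate in \eqref{eq:Q-uni-2} holds uniformly in $T$ and that the matrix-to-scalar reduction in Step 3 preserves this uniformity.
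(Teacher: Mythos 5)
First, note that this paper does not actually prove Lemma \ref{lemma:Q}: it is imported verbatim as Lemma A.8 of \cite{wu2024statistical}, so there is no in-paper proof to compare your route against. On its own merits, your argument for \eqref{eq:Qt-Ainv-bound-1} is correct and complete: the telescoping identity $\eta_t\prod_{k=t+1}^{j}(\bm{I}-\eta_k\bm{A})=\bm{A}^{-1}\prod_{k=t+1}^{j}(\bm{I}-\eta_k\bm{A})-\bm{A}^{-1}\prod_{k=t}^{j}(\bm{I}-\eta_k\bm{A})$ plus the swap of summation order gives the claim in a few lines. Your observation about the $T$-(in)dependence of $\bm{S}_t$ is also astute: the $\bm{S}_t$ forced by \eqref{eq:Qt-Ainv-decompose} equals $\bm{Q}_t+\bm{A}^{-1}\prod_{k=t}^{T}(\bm{I}-\eta_k\bm{A})-\bm{A}^{-1}$, which changes by $(\eta_t-\eta_{T+1})\prod_{k=t+1}^{T}(\bm{I}-\eta_k\bm{A})$ when $T\mapsto T+1$, so the literal ``independent of $T$'' claim cannot hold exactly; reinterpreting it as a $T$-uniform bound is the right call, and that is all the downstream uses (e.g.\ Lemma \ref{lemma:delta-Q}) actually require.

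The genuine gap is in your Step 3, the bound \eqref{eq:St1-bound}. Your reduction to a scalar estimate rests on the matrices $\{\bm{I}-\eta_k\bm{A}\}_k$ admitting ``a common eigenbasis,'' but $\bm{A}$ as defined in \eqref{eq:defn-At-mean} is not symmetric and need not be normal or even diagonalizable, so commuting does not buy you an orthonormal eigenbasis, and the operator norm of $\bm{Q}_t^{\infty}-\bm{A}^{-1}$ is not the maximum over eigendirections of the corresponding scalar quantities. More importantly, even granting diagonalizability, the quantity you must control is a \emph{difference} of two non-vanishing matrices, so you need cancellation eigenvalue-by-eigenvalue: the scalar estimate \eqref{eq:Q-uni-2} subtracts exactly $1/\beta$ with $\beta=\tfrac{1-\gamma}{2}\lambda_0\eta_0$, which corresponds to the single worst-case contraction rate from \eqref{eq:lemma-A-5}, not to $\|\bm{A}^{-1}\|$ or to the other eigenvalues of $\bm{A}$; applying \eqref{eq:Q-uni-2} as a black box therefore does not bound $\|\bm{Q}_t^{\infty}-\bm{A}^{-1}\|$. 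A workable route is to write $\bm{A}^{-1}=\eta_t\sum_{m\geq 0}(\bm{I}-\eta_t\bm{A})^m$ (valid since $\|\bm{I}-\eta_t\bm{A}\|\leq 1-\beta t^{-\alpha}<1$), so that $\bm{Q}_t^{\infty}-\bm{A}^{-1}=\eta_t\sum_{j\geq t}\bigl[\prod_{k=t+1}^{j}(\bm{I}-\eta_k\bm{A})-(\bm{I}-\eta_t\bm{A})^{j-t}\bigr]$, and then expand each bracket as a sum of products in which exactly one factor is replaced by $(\eta_t-\eta_m)\bm{A}$; bounding everything with \eqref{eq:lemma-A-5} and summing the resulting step-size increments is what produces the $t^{\alpha-1}$ rate. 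This is the substantive part of the lemma, and your sketch does not carry it out — you flagged Step 2 as the anticipated obstacle, but the real work is here.
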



\begin{lemma}[Lemma A.10 of \cite{wu2024statistical}]\label{lemma:Q-bound}
Let $\eta_t = \eta_0 t^{-\alpha}$ with $\alpha \in (\frac{1}{2},1)$ and $0 \leq \eta \leq \frac{1}{2\lambda_{\Sigma}}$. For $\bm{Q}_t$ defined as in \eqref{eq:defn-Qt}, it can be guaranteed that
\begin{align}
&\|\bm{Q}_t\| \leq 3\eta_0^{-\frac{\alpha}{1-\alpha}} \left(\frac{4\alpha}{\lambda_0(1-\gamma)}\right)^{\frac{1}{1-\alpha}}, \quad \forall t \in [T];  \label{eq:Q-bound} \\ 
&\|\bm{A} \bm{Q}_t\| \leq 2 + \eta_0 \left(\frac{1}{\beta}\right)^{\frac{1}{1-\alpha}}\Gamma \left(\frac{1}{1-\alpha}\right)t^{\alpha-1}; \label{eq:AQ-bound} \quad \text{and}\\ 
&\frac{1}{T}\sum_{t=1}^T \|\bm{Q}_t - \bm{A}^{-1}\|^2 \leq \frac{1}{\lambda_0(1-\gamma)}T^{\alpha-1} + \widetilde{C}T^{2\alpha-2}\label{eq:AQ-I-bound}
\end{align}
where $\beta = \frac{1-\gamma}{2}\lambda_0 \eta_0$  and $\widetilde{C}$ depends on $\alpha,\eta_0,\lambda_0,\gamma$.
\end{lemma}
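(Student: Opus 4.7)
The plan is to prove the three bounds in sequence, leveraging the contraction estimate \eqref{eq:lemma-A-5} from Lemma~\ref{lemma:A}, the sum estimates from Lemma~\ref{lemma:Q-uni}, and the decomposition \eqref{eq:Qt-Ainv-decompose} from Lemma~\ref{lemma:Q}. The overall strategy is to reduce every operator-norm estimate of $\bm{Q}_t$ or $\bm{Q}_t-\bm{A}^{-1}$ to a scalar computation involving the sequence $\prod_{k}(1-\beta k^{-\alpha})$, where $\beta=\frac{1-\gamma}{2}\lambda_0\eta_0$.

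For part \eqref{eq:Q-bound}, I would apply the triangle inequality and submultiplicativity inside the definition of $\bm{Q}_t$, together with \eqref{eq:lemma-A-5}, to obtain
\begin{align*}
\|\bm{Q}_t\|\;\le\;\eta_t\sum_{j=t}^{T}\prod_{k=t+1}^{j}\|\bm{I}-\eta_k\bm{A}\|\;\le\;\eta_0\,t^{-\alpha}\sum_{j=t}^{T}\prod_{k=t+1}^{j}(1-\beta k^{-\alpha}).
\end{align*}
Plugging in the uniform estimate \eqref{eq:Q-uni-1} then yields the claimed constant bound after substituting $\beta=\frac{1-\gamma}{2}\lambda_0\eta_0$. (The factor $\alpha$ appearing in the stated bound reflects the sharper form of the same argument, in which the constant from \eqref{eq:lemma-R-1} is propagated rather than the crude $3(2/\beta)^{1/(1-\alpha)}$.)

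For part \eqref{eq:AQ-bound}, instead of attacking $\bm{A}\bm{Q}_t$ directly I would use the decomposition \eqref{eq:Qt-Ainv-decompose} to write
\begin{align*}
\bm{A}\bm{Q}_t\;=\;\bm{I}+\bm{A}\bm{S}_t-\prod_{k=t}^{T}(\bm{I}-\eta_k\bm{A}).
\end{align*}
The final product has norm at most $1$ by \eqref{eq:lemma-A-5}, contributing at most $2$ to $\|\bm{A}\bm{Q}_t\|$. The middle term is controlled by the explicit bound \eqref{eq:St1-bound} on $\|\bm{S}_t\|$, after absorbing the scalar $\|\bm{A}\|\le 1+\gamma$ into the constant; this gives the stated $t^{\alpha-1}$ correction.

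For part \eqref{eq:AQ-I-bound}, I would again start from \eqref{eq:Qt-Ainv-decompose}, apply $(a+b)^2\le 2a^2+2b^2$, and bound
\begin{align*}
\sum_{t=1}^{T}\|\bm{Q}_t-\bm{A}^{-1}\|^2\;\le\;2\sum_{t=1}^{T}\|\bm{S}_t\|^2+2\|\bm{A}^{-1}\|^2\sum_{t=1}^{T}\prod_{k=t}^{T}(1-\beta k^{-\alpha})^2.
\end{align*}
The first sum is $O(T^{2\alpha-1})$ by \eqref{eq:St1-bound} (since $2\alpha-2>-1$), contributing the $\tilde C T^{2\alpha-2}$ remainder after dividing by $T$. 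For the second sum, the main obstacle is to show that it is of order $T^{\alpha}$, so that division by $T$ gives the leading $T^{\alpha-1}/[\lambda_0(1-\gamma)]$ term with the correct constant. I would handle this by comparing $\sum_{k=t}^{T}\beta k^{-\alpha}$ to the integral $\beta(T^{1-\alpha}-t^{1-\alpha})/(1-\alpha)$, bounding the product by $\exp(-2\beta(T^{1-\alpha}-t^{1-\alpha})/(1-\alpha))$, and evaluating the resulting Gaussian-like sum via a substitution $u=T^{1-\alpha}-t^{1-\alpha}$; combined with $\|\bm{A}^{-1}\|^2\le 1/[\lambda_0(1-\gamma)]^2$ from \eqref{eq:lemma-A-6}, one factor of $1/[\lambda_0(1-\gamma)]$ cancels against the $\beta$ in the integral, yielding the stated leading constant.
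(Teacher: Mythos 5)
The paper itself gives no proof of this lemma: it is imported verbatim as Lemma A.10 of \cite{wu2024statistical}, so there is no in-paper argument to compare against and your proposal must stand on its own. Your strategy --- reduce every operator norm to the scalar product $\prod_k(1-\beta k^{-\alpha})$ via \eqref{eq:lemma-A-5}, and use the decomposition \eqref{eq:Qt-Ainv-decompose} together with \eqref{eq:St1-bound} for the second and third claims --- is the natural one and is consistent with how the surrounding lemmas are organized. Parts \eqref{eq:Q-bound} and \eqref{eq:AQ-bound} are sound at the level of rates; the only slippage is in constants. For \eqref{eq:Q-bound}, plugging \eqref{eq:Q-uni-1} into your chain gives $3\eta_0(2/\beta)^{1/(1-\alpha)} = 3\eta_0^{-\alpha/(1-\alpha)}\bigl(4/(\lambda_0(1-\gamma))\bigr)^{1/(1-\alpha)}$, which is strictly \emph{weaker} than the stated $\bigl(4\alpha/(\lambda_0(1-\gamma))\bigr)^{1/(1-\alpha)}$ since $\alpha<1$; your parenthetical appeal to \eqref{eq:lemma-R-1} does not by itself close this, because that estimate controls a single product, not the sum over $j$. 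For \eqref{eq:AQ-bound} you pick up the factor $\|\bm{A}\|\le 1+\gamma$ and the implicit constant of \eqref{eq:St1-bound}, which the statement does not carry; this is cosmetic but worth flagging since the lemma is stated with explicit constants.

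The genuine gap is in part \eqref{eq:AQ-I-bound}. Your integral comparison gives $\sum_{t=1}^{T}\prod_{k=t}^{T}(1-\beta k^{-\alpha})^2 \approx T^{\alpha}/(2\beta)$, so the leading term of your bound is
\begin{align*}
\frac{2\|\bm{A}^{-1}\|^2}{T}\cdot\frac{T^{\alpha}}{2\beta} \;=\; \frac{2\|\bm{A}^{-1}\|^2}{(1-\gamma)\lambda_0\eta_0}\,T^{\alpha-1},
\end{align*}
which, using $\|\bm{A}^{-1}\|\le \frac{1}{\lambda_0(1-\gamma)}$ from \eqref{eq:lemma-A-6}, is of order $\frac{1}{(1-\gamma)^3\lambda_0^3\eta_0}T^{\alpha-1}$. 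The claimed cancellation --- ``one factor of $1/[\lambda_0(1-\gamma)]$ cancels against the $\beta$'' --- goes the wrong way: dividing by $\beta=\tfrac{1-\gamma}{2}\lambda_0\eta_0$ \emph{introduces} an extra $\frac{1}{\lambda_0(1-\gamma)\eta_0}$ rather than removing one, so you land far above the stated leading constant $\frac{1}{\lambda_0(1-\gamma)}$ whenever $\lambda_0(1-\gamma)$ is small. The crude split $(a+b)^2\le 2a^2+2b^2$ also already doubles the leading term relative to the statement (a sharper split putting the cross term into the $T^{2\alpha-2}$ remainder fixes that part, as you could check via $\sum_t t^{\alpha-1}\prod_{k=t}^T(1-\beta k^{-\alpha})=O(T^{2\alpha-1})$). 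To recover the constant $\frac{1}{\lambda_0(1-\gamma)}$ one needs a finer argument than operator-norm submultiplicativity plus exponential comparison --- e.g.\ a Lyapunov/telescoping computation on $\bm{Z}_t=\bm{R}_t\bm{R}_t^{\top}$ with $\bm{R}_t=\prod_{k=t}^{T}(\bm{I}-\eta_k\bm{A})$, using the identity $\bm{Z}_{t-1}-\bm{Z}_t=-\eta_{t-1}(\bm{A}\bm{Z}_t+\bm{Z}_t\bm{A}^{\top})+\eta_{t-1}^2\bm{A}\bm{Z}_t\bm{A}^{\top}$ together with Lemma \ref{lemma:Lyapunov}. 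Your argument does establish the correct rate $T^{\alpha-1}+\tilde{C}T^{2\alpha-2}$, which is all the downstream proofs actually use, but it does not prove \eqref{eq:AQ-I-bound} with the stated constant.
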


The following lemma bounds the norms and trace of the time-averaging variance matrix $\tilde{\bm{\Gamma}}$.
\begin{lemma}\label{lemma:Gamma}
Under Assumption \ref{as:mixing}, when $\tilde{\bm{\Gamma}}$ is defined as in \eqref{eq:defn-tilde-Gamma}, it can be guaranteed that
\begin{align*}
\|\tilde{\bm{\Gamma}}\| \leq \|\tilde{\bm{\Gamma}}\|_{\mathsf{F}} \leq \mathsf{Tr}(\tilde{\bm{\Gamma}}) \leq \left(1+\frac{2m}{1-\rho}\right) (2\|\bm{\theta}^\star\|_2+1)^2 \lesssim \frac{m}{1-\rho}\left(\frac{1}{\lambda_0(1-\gamma)}\right)^2.
\end{align*}
\end{lemma}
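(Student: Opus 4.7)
\smallskip
\noindent\textbf{Proof proposal for Lemma \ref{lemma:Gamma}.}
The chain $\|\tilde{\bm{\Gamma}}\| \leq \|\tilde{\bm{\Gamma}}\|_{\mathsf{F}} \leq \mathsf{Tr}(\tilde{\bm{\Gamma}})$ holds for any positive semi-definite matrix, so as a first step I would argue that $\tilde{\bm{\Gamma}}$ is PSD. This is immediate from the first line of the definition \eqref{eq:defn-tilde-Gamma}: $\tilde{\bm{\Gamma}}$ is a limit (along a subsequence suffices, by positivity) of covariance matrices under stationary initialization, hence PSD. The work thus reduces entirely to bounding $\mathsf{Tr}(\tilde{\bm{\Gamma}})$.

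For notational convenience, set $\bm{x}_t := \bm{A}_t\bm{\theta}^\star - \bm{b}_t = \bm{g}(s_{t-1},s_t)$ with $\bm{g}(s,s') := \bm{\phi}(s)\bigl(\bm{\phi}(s) - \gamma\bm{\phi}(s')\bigr)^\top\bm{\theta}^\star - \bm{\phi}(s)r(s)$. Using $\|\bm{\phi}(s)\|_2 \leq 1$ and $r(s) \in [0,1]$, a direct computation gives the deterministic envelope $\|\bm{x}_t\|_2 \leq (1+\gamma)\|\bm{\theta}^\star\|_2 + 1 \leq 2\|\bm{\theta}^\star\|_2 + 1$ for every $t$. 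Moreover, under the stationary initialization $s_0 \sim \mu$, the defining fixed-point equation $\bm{A}\bm{\theta}^\star = \bm{b}$ yields $\mathbb{E}[\bm{x}_t] = \bm{0}$ for all $t$. Taking traces in \eqref{eq:defn-tilde-Gamma}, I obtain
\begin{align*}
\mathsf{Tr}(\tilde{\bm{\Gamma}}) = \mathbb{E}\bigl[\|\bm{x}_1\|_2^2\bigr] + 2\sum_{t=2}^\infty \mathbb{E}\bigl[\bm{x}_1^\top\bm{x}_t\bigr],
\end{align*}
and the first summand is immediately bounded by $(2\|\bm{\theta}^\star\|_2+1)^2$.

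The main analytical step is to control the cross covariances $\mathbb{E}[\bm{x}_1^\top\bm{x}_t]$ using the mixing assumption. The plan is: by conditioning on $\mathcal{F}_1$ and using the Markov property, write $\mathbb{E}[\bm{x}_t\mid s_1] = \int \tilde{h}(s)P^{t-2}(s_1,\mathrm{d}s)$ where $\tilde{h}(s) := \mathbb{E}_{s' \sim P(\cdot\mid s)}[\bm{g}(s,s')]$. Note that $\tilde{h}$ has $\mu$-mean zero and $\|\tilde{h}\|_\infty \leq 2\|\bm{\theta}^\star\|_2 + 1$. A standard Hahn-decomposition argument, applied coordinatewise (or via projection onto unit vectors), combined with Assumption \ref{as:mixing} then yields $\|\mathbb{E}[\bm{x}_t\mid s_1]\|_2 \lesssim m\rho^{t-2}(2\|\bm{\theta}^\star\|_2 + 1)$. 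Cauchy-Schwarz with the envelope on $\bm{x}_1$ then gives $|\mathbb{E}[\bm{x}_1^\top\bm{x}_t]| \lesssim m\rho^{t-2}(2\|\bm{\theta}^\star\|_2 + 1)^2$, and summing the geometric series in $t$ produces the factor $\tfrac{m}{1-\rho}$. Assembling the pieces yields the stated intermediate bound $\mathsf{Tr}(\tilde{\bm{\Gamma}}) \leq \bigl(1 + \tfrac{2m}{1-\rho}\bigr)(2\|\bm{\theta}^\star\|_2 + 1)^2$.

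The final $\lesssim$ bound follows by invoking \eqref{eq:theta-norm-bound} to replace $\|\bm{\theta}^\star\|_2$ by $\tfrac{1}{\lambda_0(1-\gamma)}$, and using $\lambda_0 \leq \|\bm{\Sigma}\| \leq 1$ and $\gamma \in [0,1)$ so that $\tfrac{1}{\lambda_0(1-\gamma)} \geq 1$, which absorbs the additive $1$'s. The only nonroutine step is the mixing bound for a vector-valued mean-zero integrand, but this is a standard consequence of the TV formulation of the mixing assumption, so I do not anticipate genuine obstacles; keeping track of the exact constant in front of $\tfrac{m}{1-\rho}$ is the only potentially delicate point, but it does not affect the $\lesssim$ conclusion.
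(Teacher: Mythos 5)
Your proposal is correct and follows essentially the same route as the paper's proof: reduce to bounding $\mathsf{Tr}(\tilde{\bm{\Gamma}})$, condition the cross term $\mathbb{E}[\bm{x}_1^\top\bm{x}_t]$ on $\mathcal{F}_1$, exploit that the conditional summand has zero mean under the stationary law so the total-variation bound of Assumption \ref{as:mixing} yields a factor $m\rho^{t-2}(2\|\bm{\theta}^\star\|_2+1)^2$, and sum the geometric series. The only cosmetic difference is that you bound $\|\mathbb{E}[\bm{x}_t\mid s_1]\|_2$ and apply Cauchy--Schwarz, whereas the paper bounds the scalar $\mathbb{E}[\bm{x}_1^\top\bm{x}_t]$ directly inside the iterated expectation; you are also slightly more careful about absorbing the additive constants in the final $\lesssim$ step.
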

\begin{proof} 
The first two inequalities follow directly from the fact that $\tilde{\bm{\Gamma}}$, a variance-covariance matrix, is positive semi-definite; the last inequality follows directly from \eqref{eq:theta-norm-bound}. It now boils down to proving that
\begin{align*}
\mathsf{Tr}(\tilde{\bm{\Gamma}}) \leq \left(1+\frac{2m}{1-\rho}\right) (2\|\bm{\theta}^\star\|_2+1).
\end{align*}
For every integer $\ell>1$, we firstly use the iteration of expectations to derive
\begin{align*}
&\mathsf{Tr}(\mathbb{E}[(\bm{A}_1 \bm{\theta}^\star - \bm{b}_1)(\bm{A}_{\ell+1} \bm{\theta}^\star - \bm{b}_{\ell+1})^\top]) \\ 
&= \mathbb{E}_{s_0 \sim \mu,s_1 \sim P(\cdot \mid s_0), s_{\ell} \sim P^{\ell-1}(\cdot \mid s_1),s_{\ell+1} \sim P(\cdot \mid s_{\ell})}[(\bm{A}_1 \bm{\theta}^\star - \bm{b}_1)^\top (\bm{A}_{\ell+1} \bm{\theta}^\star - \bm{b}_{\ell+1})] \\ 
&= \mathbb{E}_{s_0 \sim \mu,s_1 \sim P(\cdot \mid s_0)}[\mathbb{E}_{s_\ell \sim P^{\ell-1}(\cdot \mid s_1)}[\mathbb{E}_{s_{\ell+1} \sim P(\cdot \mid s_{\ell})} (\bm{A}_1 \bm{\theta}^\star - \bm{b}_1)^\top (\bm{A}_{\ell+1} \bm{\theta}^\star - \bm{b}_{\ell+1}) \mid\mathscr{F}_1]];
\end{align*}
meanwhile, by definition,
\begin{align*}
&\mathbb{E}_{s_\ell \sim \mu}[\mathbb{E}_{s_{\ell+1} \sim P(\cdot \mid s_{\ell})} (\bm{A}_1 \bm{\theta}^\star - \bm{b}_1)^\top (\bm{A}_{\ell+1} \bm{\theta}^\star - \bm{b}_{\ell+1}) \mid\mathscr{F}_1] \\ 
&=(\bm{A}_1 \bm{\theta}^\star - \bm{b}_1)^\top \mathbb{E}_{s_\ell \sim \mu,s_{\ell+1} \sim P(\cdot \mid s_{\ell})}[\bm{A}_{\ell+1} \bm{\theta}^\star - \bm{b}_{\ell+1}] = 0.
\end{align*}
Therefore, the basic property of TV distance and Assumption \ref{as:mixing} guarantees
\begin{align*}
&\mathbb{E}_{s_\ell \sim P^{\ell-1}(\cdot \mid s_1)}[\mathbb{E}_{s_{\ell+1} \sim P(\cdot \mid s_{\ell})} (\bm{A}_1 \bm{\theta}^\star - \bm{b}_1)^\top (\bm{A}_{\ell+1} \bm{\theta}^\star - \bm{b}_{\ell+1}) \mid\mathscr{F}_1] \\ 
&= \left(\mathbb{E}_{s_\ell \sim P^{\ell-1}(\cdot \mid s_1)} - \mathbb{E}_{s_{\ell} \sim \mu}\right)[\mathbb{E}_{s_{\ell+1} \sim P(\cdot \mid s_{\ell})} [(\bm{A}_1 \bm{\theta}^\star - \bm{b}_1)^\top (\bm{A}_{\ell+1} \bm{\theta}^\star - \bm{b}_{\ell+1}) \mid\mathscr{F}_1]] \\ 
&\leq d_{\mathsf{TV}}(P^{\ell-1}(\cdot \mid s_1),\mu) \cdot \sup_{s_{\ell} \in \mathcal{S}} \mathbb{E}_{s_{\ell+1} \sim P(\cdot \mid s_{\ell})} [(\bm{A}_1 \bm{\theta}^\star - \bm{b}_1)^\top (\bm{A}_{\ell+1} \bm{\theta}^\star - \bm{b}_{\ell+1}) \mid\mathscr{F}_1]\\ 
&\leq m\rho^{\ell-1} \cdot (2\|\bm{\theta}^\star\|_2+1)^2.
\end{align*}
By taking expectations with regard to $s_0$ and $s_1$, we obtain
\begin{align*}
&\mathsf{Tr}(\mathbb{E}[(\bm{A}_1 \bm{\theta}^\star - \bm{b}_1)(\bm{A}_{\ell+1} \bm{\theta}^\star - \bm{b}_{\ell+1})^\top]) \\ 
&= \mathbb{E}_{s_0 \sim \mu,s_1 \sim P(\cdot \mid s_0)}[\mathbb{E}_{s_\ell \sim P^{\ell-1}(\cdot \mid s_1)}[\mathbb{E}_{s_{\ell+1} \sim P(\cdot \mid s_{\ell})} (\bm{A}_1 \bm{\theta}^\star - \bm{b}_1)^\top (\bm{A}_{\ell+1} \bm{\theta}^\star - \bm{b}_{\ell+1}) \mid\mathscr{F}_1]] \\ 
&\leq m\rho^{\ell-1} \cdot (2\|\bm{\theta}^\star\|_2+1)^2.
\end{align*}
Hence by definition, the trace of $\tilde{\bm{\Gamma}}$ is bounded by
\begin{align*}
\mathsf{Tr}(\tilde{\bm{\Gamma}}) &= \mathbb{E}\|\bm{A}_1 \bm{\theta}^\star - \bm{b}_1\|_2^2 + 2\sum_{\ell=1}^{\infty} \mathsf{Tr}(\mathbb{E}[(\bm{A}_1 \bm{\theta}^\star - \bm{b}_1)(\bm{A}_{\ell+1} \bm{\theta}^\star - \bm{b}_{\ell+1})^\top]) \\ 
&\leq (1 + 2\sum_{\ell=1}^{\infty} m\rho^{\ell-1}) (2\|\bm{\theta}^\star\|_2+1)^2 = \left(1+\frac{2m}{1-\rho}\right) (2\|\bm{\theta}^\star\|_2+1)^2.
\end{align*}
This completes the proof of the lemma.
\end{proof}

\begin{theorem}[Analogy to Theorem A.11 of \cite{wu2024statistical}]\label{thm:Lambda}
Define $\tilde{\bm{\Lambda}}_T$ as 
\begin{align}\label{eq:defn-tilde-LambdaT}
\tilde{\bm{\Lambda}}_T = \frac{1}{T}\sum_{t=1}^T \bm{Q}_t \tilde{\bm{\Gamma}}\bm{Q}_t^\top,
\end{align}
and let $\tilde{\bm{\Lambda}}^\star$ be defined as in \eqref{eq:defn-tilde-Lambdastar}. Define $\bm{X}(\tilde{\bm{\Lambda}}^\star)$ as the unique solution to the Lyapunov equation
\begin{align}\label{eq:defn-X-Lambda}
\eta_0(\bm{AX+XA}^\top) = \tilde{\bm{\Lambda}}^\star;
\end{align}
then as $T \to \infty$, the difference between $\tilde{\bm{\Lambda}}_T$ and $\tilde{\bm{\Lambda}}^\star$ is featured by
\begin{align*}
&\|\tilde{\bm{\Lambda}}_T - \tilde{\bm{\Lambda}}^\star-T^{\alpha-1}\bm{X}(\tilde{\bm{\Lambda}}^\star)\| \leq \widetilde{C}T^{2\alpha-2}\|\tilde{\bm{\Gamma}}\|, \quad \text{and} \\ 
&\mathsf{Tr}(\tilde{\bm{\Lambda}}_T - \tilde{\bm{\Lambda}}^\star-T^{\alpha-1}\bm{X}(\tilde{\bm{\Lambda}}^\star)) \leq \widetilde{C}T^{2\alpha-2} \mathsf{Tr}(\tilde{\bm{\Gamma}}).
\end{align*}
Here, $\widetilde{C}$ is a problem-related quantity that can be represented by $\eta_0,\alpha$ and $\gamma$.
\end{theorem}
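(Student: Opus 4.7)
My plan is to expand $\tilde{\bm{\Lambda}}_T - \tilde{\bm{\Lambda}}^\star$ via the decomposition $\bm{Q}_t = \bm{A}^{-1} + (\bm{Q}_t - \bm{A}^{-1})$, producing three contributions that are linear, linear, and quadratic in $(\bm{Q}_t-\bm{A}^{-1})$:
\begin{align*}
\tilde{\bm{\Lambda}}_T - \tilde{\bm{\Lambda}}^\star = \frac{1}{T}\sum_{t=1}^T\Big[\bm{A}^{-1}\tilde{\bm{\Gamma}}(\bm{Q}_t-\bm{A}^{-1})^\top + (\bm{Q}_t-\bm{A}^{-1})\tilde{\bm{\Gamma}}\bm{A}^{-\top} + (\bm{Q}_t-\bm{A}^{-1})\tilde{\bm{\Gamma}}(\bm{Q}_t-\bm{A}^{-1})^\top\Big].
\end{align*}
For the two linear terms, \eqref{eq:Qt-Ainv-bound-1} reduces the task to controlling $\sum_{j=1}^T\|\prod_{k=1}^j(\bm{I}-\eta_k\bm{A})\|$. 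Using \eqref{eq:lemma-A-5} together with the sharper stretched-exponential bound $\prod_{k=1}^j(1-\beta k^{-\alpha}) \leq \exp(-c\, j^{1-\alpha})$ --- which supersedes the polynomial bound of \eqref{eq:lemma-R-1} --- this sum is bounded by a $T$-independent constant, so the linear piece is only $O(T^{-1})\|\tilde{\bm{\Gamma}}\|$ and is absorbed into $\widetilde{C}T^{2\alpha-2}\|\tilde{\bm{\Gamma}}\|$ since $\alpha > 1/2$.

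For the quadratic piece I would substitute \eqref{eq:Qt-Ainv-decompose}, $\bm{Q}_t - \bm{A}^{-1} = \bm{S}_t - \bm{A}^{-1}\bm{R}_{t-1,T}$ with $\bm{R}_{t-1,T}:=\prod_{k=t}^T(\bm{I}-\eta_k\bm{A})$, and expand into four sub-terms. The pure-$\bm{S}_t$ contribution is immediately $O(T^{2\alpha-2})\|\tilde{\bm{\Gamma}}\|$ via \eqref{eq:St1-bound} combined with $\frac{1}{T}\sum_t t^{2\alpha-2} \lesssim T^{2\alpha-2}$; the two mixed $\bm{S}$--$\bm{R}$ cross terms reduce to $\frac{1}{T}\sum_t t^{\alpha-1}\|\bm{R}_{t-1,T}\|$, which via the substitution $t = T - sT^\alpha$ and the stretched-exponential decay of $\|\bm{R}_{t-1,T}\|$ in $s$ also evaluates to $O(T^{2\alpha-2})\|\tilde{\bm{\Gamma}}\|$. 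The essential remaining object is $\frac{1}{T}\sum_{t=1}^T \bm{A}^{-1}\bm{R}_{t-1,T}\tilde{\bm{\Gamma}}\bm{R}_{t-1,T}^\top \bm{A}^{-\top}$, which must supply the leading $T^{\alpha-1}\bm{X}(\tilde{\bm{\Lambda}}^\star)$ term.

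To extract the leading term I would exploit the backward recurrence $\bm{N}_t := \bm{R}_{t-1,T}\tilde{\bm{\Gamma}}\bm{R}_{t-1,T}^\top = (\bm{I}-\eta_t\bm{A})\bm{N}_{t+1}(\bm{I}-\eta_t\bm{A})^\top$ with $\bm{N}_{T+1} = \tilde{\bm{\Gamma}}$, which telescopes to
\begin{align*}
\tilde{\bm{\Gamma}} - \bm{N}_1 = \sum_{t=1}^T \eta_t\big(\bm{A}\bm{N}_{t+1}+\bm{N}_{t+1}\bm{A}^\top\big) - \sum_{t=1}^T \eta_t^2\,\bm{A}\bm{N}_{t+1}\bm{A}^\top.
\end{align*}
The critical observation is that $\bm{N}_t$ is exponentially small outside the window $t\in[T-\Theta(T^\alpha\log T),\,T]$, while inside this window $|\eta_t-\eta^\star|\lesssim \eta^\star T^{\alpha-1}$ for $\eta^\star := \eta_0 T^{-\alpha}$. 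Replacing $\eta_t$ by $\eta^\star$ in the telescope gives the continuous Lyapunov equation $\eta^\star(\bm{A}\bm{Y}+\bm{Y}\bm{A}^\top) = \tilde{\bm{\Gamma}} + O(T^{\alpha-1})\|\tilde{\bm{\Gamma}}\|$ for $\bm{Y} := \sum_{t=1}^T \bm{N}_{t+1}$, whose solution is $\bm{Y} = T^\alpha \bm{U} + O(T^{2\alpha-1})\|\tilde{\bm{\Gamma}}\|$, where $\bm{U}$ solves $\eta_0(\bm{A}\bm{U}+\bm{U}\bm{A}^\top) = \tilde{\bm{\Gamma}}$. A direct algebraic check using $\eta_0\,\bm{A}^{-1}(\bm{A}\bm{U}+\bm{U}\bm{A}^\top)\bm{A}^{-\top} = \eta_0\,\bm{A}^{-1}\tilde{\bm{\Gamma}}\bm{A}^{-\top} = \tilde{\bm{\Lambda}}^\star$ identifies $\bm{A}^{-1}\bm{U}\bm{A}^{-\top} = \bm{X}(\tilde{\bm{\Lambda}}^\star)$, so dividing by $T$ yields the advertised $T^{\alpha-1}\bm{X}(\tilde{\bm{\Lambda}}^\star) + O(T^{2\alpha-2})\|\tilde{\bm{\Gamma}}\|$.

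The hard part will be making the near-constant-stepsize approximation quantitative: bounding $\sum_t(\eta_t-\eta^\star)\bm{A}\bm{N}_{t+1}$ using $|\eta_t-\eta^\star|\lesssim \alpha\eta_0 T^{-\alpha-1}(T-t)$ together with $\sum_t \|\bm{N}_{t+1}\| \lesssim T^\alpha\|\tilde{\bm{\Gamma}}\|$, and verifying that the quadratic-in-$\eta_t$ remainder $\sum_t \eta_t^2\bm{A}\bm{N}_{t+1}\bm{A}^\top = O(T^{-\alpha}\|\tilde{\bm{\Gamma}}\|)$ and the discarded $\bm{N}_1 = O(\exp(-c\,T^{1-\alpha})\|\tilde{\bm{\Gamma}}\|)$ are both genuinely lower order. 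Finally, the trace bound follows by rerunning the identical argument with $\mathsf{Tr}(\bm{B}\tilde{\bm{\Gamma}}\bm{B}^\top) \leq \|\bm{B}\|^2\mathsf{Tr}(\tilde{\bm{\Gamma}})$ replacing $\|\bm{B}\tilde{\bm{\Gamma}}\bm{B}^\top\| \leq \|\bm{B}\|^2\|\tilde{\bm{\Gamma}}\|$ throughout.
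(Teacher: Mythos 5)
Your proposal is essentially correct, and it is worth noting that the paper itself gives no argument for this theorem: its ``proof'' is a one-line pointer to Theorem A.11 of \cite{wu2024statistical} with symbols renamed. Your decomposition $\bm{Q}_t=\bm{A}^{-1}+\bm{S}_t-\bm{A}^{-1}\prod_{k=t}^T(\bm{I}-\eta_k\bm{A})$ is exactly the toolkit the paper imports (Lemma \ref{lemma:Q}, \eqref{eq:Qt-Ainv-bound-1}, \eqref{eq:St1-bound}), and your bookkeeping of the linear piece ($O(T^{-1})$, absorbed since $2\alpha-2>-1$), the pure-$\bm{S}$ and cross pieces ($O(T^{2\alpha-2})$), and the sign and identification of the leading term all check out: $\bm{A}^{-1}\bm{U}\bm{A}^{-\top}$ does solve $\eta_0(\bm{A}\bm{X}+\bm{X}\bm{A}^\top)=\bm{A}^{-1}\tilde{\bm{\Gamma}}\bm{A}^{-\top}=\tilde{\bm{\Lambda}}^\star$, so uniqueness gives $\bm{X}(\tilde{\bm{\Lambda}}^\star)$. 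The telescoped backward recurrence $\bm{N}_t=(\bm{I}-\eta_t\bm{A})\bm{N}_{t+1}(\bm{I}-\eta_t\bm{A})^\top$ is a clean device for extracting the $T^{\alpha-1}$ coefficient, arguably more transparent than the scalar asymptotics \eqref{eq:lemma-R-3} and \eqref{eq:Q-uni-2} that the cited proof presumably leans on.

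Three details need care when you write this out. First, bounding $\sum_t(\eta_t-\eta^\star)(\bm{A}\bm{N}_{t+1}+\bm{N}_{t+1}\bm{A}^\top)$ by pairing $\max_t|\eta_t-\eta^\star|$ over the effective window with $\sum_t\|\bm{N}_{t+1}\|\lesssim T^\alpha\|\tilde{\bm{\Gamma}}\|$, as your last paragraph suggests, leaks a $\log T$ (the window has width $\Theta(T^\alpha\log T)$) and would degrade the final bound to $T^{2\alpha-2}\log T$; you must instead use the first-moment estimate $\sum_t(T-t)\|\bm{N}_{t+1}\|\lesssim T^{2\alpha}\|\tilde{\bm{\Gamma}}\|$, which follows from the exponential profile $\|\bm{N}_{t+1}\|\le\|\tilde{\bm{\Gamma}}\|e^{-2\beta(T-t)T^{-\alpha}}$ and does hold without logarithms. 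Second, Lemma \ref{lemma:Lyapunov} as stated applies to a positive semi-definite right-hand side, whereas your perturbation $\bm{E}$ is symmetric but sign-indefinite; either split $\bm{E}$ into its positive and negative parts or invoke the integral representation $\mathcal{L}^{-1}\bm{E}=\int_0^\infty e^{-s\bm{A}}\bm{E}e^{-s\bm{A}^\top}\,\mathrm{d}s$ to get $\|T^{-\alpha}\bm{Y}-\bm{U}\|\lesssim\|\bm{E}\|$. Third, your $\bm{Y}=\sum_{t=1}^T\bm{N}_{t+1}$ differs from the quantity actually appearing in the main term, $\sum_{t=1}^T\bm{N}_t$, by $\tilde{\bm{\Gamma}}-\bm{N}_1$; this contributes only $O(T^{-1})\|\tilde{\bm{\Gamma}}\|$ after dividing by $T$ but should be recorded. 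Finally, the resulting constant $\widetilde{C}$ will also depend on $\lambda_0$ (through $\|\bm{A}^{-1}\|$, $\beta$, and $\|\bm{S}_t\|$), consistent with Lemma \ref{lemma:Q-bound} even though the theorem statement omits it.
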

\begin{proof} 
This theorem can be proved by replacing $\bar{\bm{\Lambda}}_T$ by $\tilde{\bm{\Lambda}}_T$, $\bm{\Lambda}^\star$ by $\tilde{\bm{\Lambda}}^\star$ and $\bm{\Gamma}$ by $\tilde{\bm{\Gamma}}$ in the proof of Theorem A.11 in \cite{wu2024statistical}. Details are omitted.
\end{proof}

\begin{lemma}[Analogy to Lemma A.12 of \cite{wu2024statistical}]\label{thm:A-Lambda}
Let $\tilde{\bm{\Lambda}}_T$ and $\tilde{\bm{\Lambda}}^\star$ be defined as in \eqref{eq:defn-tilde-LambdaT} and \eqref{eq:defn-tilde-Lambdastar} respectively. When 
\begin{align}\label{eq:Lambda-T-condition}
T \geq 4 \left(\frac{2}{(1-\gamma)\lambda_0\eta_0}\right)^{\frac{1}{1-\alpha}}(1-\alpha)^{\frac{\alpha}{1-\alpha}}\Gamma(\frac{1}{1-\alpha})\mathsf{cond}(\Gamma),
\end{align}
it can be guaranteed that
\begin{align*}
\lambda_{\min}(\bm{A}\tilde{\bm{\Lambda}}_T \bm{A}^\top) \geq \frac{1}{2}\lambda_{\min}(\tilde{\bm{\Gamma}}) = \frac{1}{2\|\tilde{\bm{\Gamma}}^{-1}\|}.
\end{align*}
\end{lemma}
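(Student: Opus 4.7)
The plan is to exploit the fact that conjugation by $\bm{A}$ transforms the asymptotic limit $\tilde{\bm{\Lambda}}^\star = \bm{A}^{-1}\tilde{\bm{\Gamma}}\bm{A}^{-\top}$ exactly into $\tilde{\bm{\Gamma}}$, and then to transfer the non-asymptotic expansion of $\tilde{\bm{\Lambda}}_T$ around $\tilde{\bm{\Lambda}}^\star$ from Theorem \ref{thm:Lambda} into a minimum-eigenvalue statement via Weyl's inequality. Concretely, I would first invoke Theorem \ref{thm:Lambda} in the form $\|\tilde{\bm{\Lambda}}_T - \tilde{\bm{\Lambda}}^\star - T^{\alpha-1}\bm{X}(\tilde{\bm{\Lambda}}^\star)\| \leq \widetilde{C} T^{2\alpha-2}\|\tilde{\bm{\Gamma}}\|$, pre- and post-multiply by $\bm{A}$ and $\bm{A}^\top$, and use the identity $\bm{A}\tilde{\bm{\Lambda}}^\star\bm{A}^\top = \tilde{\bm{\Gamma}}$ to obtain
\begin{align*}
\bm{A}\tilde{\bm{\Lambda}}_T\bm{A}^\top \;=\; \tilde{\bm{\Gamma}} \;+\; T^{\alpha-1}\,\bm{A}\bm{X}(\tilde{\bm{\Lambda}}^\star)\bm{A}^\top \;+\; \bm{E}_T,
\end{align*}
with $\|\bm{E}_T\|\leq \widetilde{C}\,\|\bm{A}\|^2 T^{2\alpha-2}\|\tilde{\bm{\Gamma}}\|$ and $\|\bm{A}\|\leq 1+\gamma$ from Lemma \ref{lemma:A}.

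The next step is to bound the leading perturbation $\bm{A}\bm{X}(\tilde{\bm{\Lambda}}^\star)\bm{A}^\top$. The key observation is that if $\bm{Y} := \bm{A}\bm{X}(\tilde{\bm{\Lambda}}^\star)\bm{A}^\top$, then substituting $\bm{X}(\tilde{\bm{\Lambda}}^\star) = \bm{A}^{-1}\bm{Y}\bm{A}^{-\top}$ into the defining equation \eqref{eq:defn-X-Lambda}, namely $\eta_0(\bm{A}\bm{X}+\bm{X}\bm{A}^\top) = \tilde{\bm{\Lambda}}^\star = \bm{A}^{-1}\tilde{\bm{\Gamma}}\bm{A}^{-\top}$, yields after pre-multiplying by $\bm{A}$ and post-multiplying by $\bm{A}^\top$ the clean Lyapunov equation $\eta_0(\bm{A}\bm{Y} + \bm{Y}\bm{A}^\top) = \tilde{\bm{\Gamma}}$. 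Applying Lemma \ref{lemma:Lyapunov} to this equation then gives $\|\bm{A}\bm{X}(\tilde{\bm{\Lambda}}^\star)\bm{A}^\top\| \leq \frac{1}{2(1-\gamma)\lambda_0\eta_0}\|\tilde{\bm{\Gamma}}\|$, which is a substantially tighter estimate than the naive one going through $\|\bm{A}\|^2\|\bm{X}(\tilde{\bm{\Lambda}}^\star)\|$ and is essential for matching the threshold in \eqref{eq:Lambda-T-condition}.

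Putting the two bounds together and applying Weyl's inequality gives
\begin{align*}
\lambda_{\min}(\bm{A}\tilde{\bm{\Lambda}}_T\bm{A}^\top) \;\geq\; \lambda_{\min}(\tilde{\bm{\Gamma}}) - \frac{T^{\alpha-1}\|\tilde{\bm{\Gamma}}\|}{2(1-\gamma)\lambda_0\eta_0} - \widetilde{C}(1+\gamma)^2 T^{2\alpha-2}\|\tilde{\bm{\Gamma}}\|.
\end{align*}
To conclude, the plan is to demand that the sum of the two residual terms be at most $\tfrac{1}{2}\lambda_{\min}(\tilde{\bm{\Gamma}}) = \tfrac{1}{2}\|\tilde{\bm{\Gamma}}\|/\mathsf{cond}(\tilde{\bm{\Gamma}})$. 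The $T^{\alpha-1}$ term dominates for large $T$, so the controlling constraint reads roughly $T^{1-\alpha} \gtrsim \mathsf{cond}(\tilde{\bm{\Gamma}})/((1-\gamma)\lambda_0\eta_0)$, and the second-order term can be absorbed at the cost of an extra factor of $2$. Matching this to the precise hypothesis in \eqref{eq:Lambda-T-condition} requires tracking the constant $\widetilde{C}$ in Theorem \ref{thm:Lambda}, which was itself derived from the bounds in Lemma \ref{lemma:R} and Lemma \ref{lemma:Q} and carries the factors $(1-\alpha)^{\alpha/(1-\alpha)}\Gamma(\tfrac{1}{1-\alpha})$ that appear in the threshold.

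The main obstacle is constant bookkeeping: the quantity $\widetilde{C}$ inherited from Theorem \ref{thm:Lambda} depends on $\alpha,\eta_0,\gamma,\lambda_0$ in a nontrivial way through the Gamma-function prefactors, and the conversion from an absolute spectral-norm bound on the residual to a minimum-eigenvalue estimate forces a factor of $\mathsf{cond}(\tilde{\bm{\Gamma}})$ to appear in the threshold on $T$. All other steps are purely algebraic manipulations of Lyapunov equations and Weyl-type spectral inequalities.
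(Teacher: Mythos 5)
The paper gives no self-contained argument for this lemma---its ``proof'' is a one-line pointer to Lemma A.12 of \cite{wu2024statistical} with details omitted---so your write-up is a reconstruction rather than something I can line up against an explicit target. That said, your route is the natural one and is almost certainly what the cited proof does: take the expansion of Theorem \ref{thm:Lambda}, conjugate by $\bm{A}$ using $\bm{A}\tilde{\bm{\Lambda}}^\star\bm{A}^\top=\tilde{\bm{\Gamma}}$, and finish with a Weyl perturbation bound. Your key algebraic step is correct and worth keeping: setting $\bm{Y}=\bm{A}\bm{X}(\tilde{\bm{\Lambda}}^\star)\bm{A}^\top$ and conjugating \eqref{eq:defn-X-Lambda} does give $\eta_0(\bm{A}\bm{Y}+\bm{Y}\bm{A}^\top)=\tilde{\bm{\Gamma}}$, so Lemma \ref{lemma:Lyapunov} yields $\|\bm{Y}\|\leq \|\tilde{\bm{\Gamma}}\|/(2(1-\gamma)\lambda_0\eta_0)$ directly, avoiding the lossier bound through $\|\bm{A}\|^2\|\bm{X}(\tilde{\bm{\Lambda}}^\star)\|$. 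One easy improvement: since $\tilde{\bm{\Lambda}}^\star\succeq\bm{0}$, Lemma \ref{lemma:Lyapunov} gives $\bm{X}(\tilde{\bm{\Lambda}}^\star)\succeq\bm{0}$ and hence $T^{\alpha-1}\bm{Y}\succeq\bm{0}$, so by Weyl this term can only \emph{increase} $\lambda_{\min}$ and need not be subtracted; only the $O(T^{2\alpha-2})$ residual $\bm{E}_T$ has to be dominated by $\tfrac{1}{2}\lambda_{\min}(\tilde{\bm{\Gamma}})$.

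The one genuine issue is the final matching step, which you flag but understate. Your controlling constraint $T^{1-\alpha}\gtrsim \mathsf{cond}(\tilde{\bm{\Gamma}})/((1-\gamma)\lambda_0\eta_0)$ translates to $T\gtrsim \mathsf{cond}(\tilde{\bm{\Gamma}})^{1/(1-\alpha)}\,(\cdots)^{1/(1-\alpha)}$, and even the refined version (dropping the PSD first-order term) gives $T\gtrsim \mathsf{cond}(\tilde{\bm{\Gamma}})^{1/(2-2\alpha)}$. Neither reproduces \eqref{eq:Lambda-T-condition}, where $\mathsf{cond}(\Gamma)$ enters only to the first power while the factor $\bigl(2/((1-\gamma)\lambda_0\eta_0)\bigr)^{1/(1-\alpha)}$ carries the $1/(1-\alpha)$ exponent. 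So as written your argument establishes the conclusion under a hypothesis of the form $T\geq \widetilde{C}\,\mathsf{cond}(\tilde{\bm{\Gamma}})^{1/(1-\alpha)}$, not literally under \eqref{eq:Lambda-T-condition}; closing that gap (or concluding that the stated threshold's dependence on $\mathsf{cond}$ is a transcription artifact from \cite{wu2024statistical}) requires unpacking the constant $\widetilde{C}$ in Theorem \ref{thm:Lambda}, which the paper also omits. You should state explicitly which sufficient condition your proof actually delivers.
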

\begin{proof} 
This lemma can be proved by replacing $\bar{\bm{\Lambda}}_T$ by $\tilde{\bm{\Lambda}}_T$, $\bm{\Lambda}^\star$ by $\tilde{\bm{\Lambda}}^\star$ and $\bm{\Gamma}$ by $\tilde{\bm{\Gamma}}$ in the proof of Lemma A.12 in \cite{wu2024statistical}. Details are omitted.
\end{proof}

\begin{lemma}\label{lemma:delta-Q}
Let $\bm{Q}_t$ be defined as in \eqref{eq:defn-Qt}, then it can be guaranteed that
\begin{align*}
\frac{1}{T}\sum_{t=1}^T \|\bm{Q}_t - \bm{Q}_{t-1}\| \lesssim \eta_0 \left[\eta_0 \Gamma\left(\frac{1}{1-\alpha}\right)+\alpha\right]\left(\frac{2}{(1-\gamma)\lambda_0 \eta_0}\right)^{\frac{1}{1-\alpha}} \frac{\log T}{T}.
\end{align*}
\end{lemma}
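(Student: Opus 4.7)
The plan is to derive a one-step recursive identity relating $\bm{Q}_t$ and $\bm{Q}_{t-1}$, and then bound each term of the resulting decomposition using the preliminary lemmas established earlier in the appendix. Starting from the definition \eqref{eq:defn-Qt} and factoring out the common leading factor $(\bm{I}-\eta_t\bm{A})$ shared by all summands with $j \geq t$ in $\bm{Q}_{t-1}$, one obtains
\begin{align*}
\bm{Q}_{t-1} = \eta_{t-1}\bm{I} + \frac{\eta_{t-1}}{\eta_t}(\bm{I}-\eta_t\bm{A})\bm{Q}_t,
\end{align*}
which rearranges to
\begin{align*}
\bm{Q}_t - \bm{Q}_{t-1} = -\frac{\eta_{t-1}-\eta_t}{\eta_t}\bm{Q}_t \;-\; \eta_{t-1}(\bm{I} - \bm{A}\bm{Q}_t).
\end{align*}
Taking norms and summing over $t$ then reduces the lemma to controlling two sums, each of which I expect to come in at size $O(\log T)$ times problem-dependent constants.

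The first sum is straightforward. By concavity of $x \mapsto x^\alpha$ we have $\tfrac{\eta_{t-1}-\eta_t}{\eta_t} = (t/(t-1))^\alpha - 1 \leq \alpha/(t-1)$, and $\|\bm{Q}_t\|$ is uniformly bounded by \eqref{eq:Q-bound}. Summing $\alpha/(t-1)$ yields the harmonic $\log T$, giving a total contribution of order $\alpha\bigl(2/((1-\gamma)\lambda_0\eta_0)\bigr)^{1/(1-\alpha)}\log T$, which accounts for the $\alpha$ summand inside the brackets of the target bound. For the second sum, I would use the decomposition \eqref{eq:Qt-Ainv-decompose} from Lemma~\ref{lemma:Q} to write $\bm{I} - \bm{A}\bm{Q}_t = -\bm{A}\bm{S}_t + \prod_{k=t}^T(\bm{I}-\eta_k\bm{A})$. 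The $\bm{A}\bm{S}_t$ piece is benign: combining $\|\bm{A}\| \leq 2$ with the bound \eqref{eq:St1-bound} gives $\eta_{t-1}\|\bm{A}\bm{S}_t\| \lesssim \eta_0^2\,\Gamma(1/(1-\alpha))\bigl(2/((1-\gamma)\lambda_0\eta_0)\bigr)^{1/(1-\alpha)}/t$, producing, after summation, the $\eta_0\Gamma(1/(1-\alpha))$ term inside the brackets times $\log T$.

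The main obstacle is controlling the tail-product piece $\sum_{t=2}^T \eta_{t-1}\bigl\|\prod_{k=t}^T(\bm{I}-\eta_k\bm{A})\bigr\|$: a term-by-term bound of each operator norm by one would yield the unacceptable growth $\sum \eta_{t-1} \asymp T^{1-\alpha}$, exceeding the $\log T$ budget. The idea is to exploit the sharp telescoping structure of the tail products. Using \eqref{eq:lemma-A-5}, we bound $\bigl\|\prod_{k=t}^T(\bm{I}-\eta_k\bm{A})\bigr\| \leq P_t := \prod_{k=t}^T(1-\beta/k^\alpha)$ with $\beta = (1-\gamma)\lambda_0\eta_0/2$, and the recursion $P_t = (1-\beta/t^\alpha)P_{t+1}$ rearranges to the telescoping identity $P_{t+1}-P_t = \beta t^{-\alpha}P_{t+1}$. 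Summing then gives $\sum_{t=1}^T \beta t^{-\alpha}P_{t+1} = P_{T+1}-P_1 \leq 1$. Since $\eta_{t-1} \leq 2^\alpha \eta_0\, t^{-\alpha}$ and $P_t \leq P_{t+1}$, we obtain $\sum_{t=2}^T \eta_{t-1}P_t \lesssim \eta_0/\beta$, a constant contribution dominated by the $\log T$ terms. Assembling the three estimates and dividing by $T$ yields the stated rate $\log T/T$, with the constant structure matching $\eta_0\bigl[\eta_0\Gamma(1/(1-\alpha))+\alpha\bigr]\bigl(2/((1-\gamma)\lambda_0\eta_0)\bigr)^{1/(1-\alpha)}$.
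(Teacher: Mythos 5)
Your proposal is correct and follows essentially the same route as the paper: the same one-step recursion $\bm{Q}_{t-1} = \eta_{t-1}\bm{I} + \tfrac{\eta_{t-1}}{\eta_t}(\bm{I}-\eta_t\bm{A})\bm{Q}_t$, the same three-way split via the decomposition \eqref{eq:Qt-Ainv-decompose}, and the same telescoping control of $\sum_t \eta_{t-1}\bigl\|\prod_{k=t}^T(\bm{I}-\eta_k\bm{A})\bigr\|$. Your treatment of the telescoping step is in fact spelled out more explicitly than in the paper, which simply invokes it; no gaps.
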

\begin{proof} By definition, the matrices $\bm{Q}_{t-1}$ and $\bm{Q}_t$ can be related by
\begin{align*}
\bm{Q}_{t-1}& = \eta_{t-1} \bm{I} + \eta_{t-1} \sum_{j=t}^T \prod_{k=t}^j (\bm{I} - \eta_k \bm{A})\\ 
&= \eta_{t-1} \bm{I} + \frac{\eta_{t-1}}{\eta_t} (\bm{I}-\eta_t \bm{A})\bm{Q}_t;
\end{align*}
Therefore, the difference between $\bm{Q}_t$ and $\bm{Q}_{t-1}$ is featured by
\begin{align*}
\bm{Q}_t - \bm{Q}_{t-1} &= -\eta_{t-1}(\bm{I} - \bm{AQ}_{t}) - \left(\frac{\eta_{t-1}}{\eta_t} - 1\right) \bm{Q}_{t} \\ 
&= \eta_{t-1}\bm{AS}_t - \eta_{t-1}\prod_{k=t}^T (\bm{I}-\eta_k \bm{A}) - \left(\frac{\eta_{t-1}}{\eta_t} - 1\right)\bm{Q}_{t}
\end{align*}
where we invoked \eqref{eq:Qt-Ainv-decompose} in the last equation. Hence by triangle inequality,
\begin{align*}
&\frac{1}{T}\sum_{t=1}^T \|\bm{Q}_t - \bm{Q}_{t-1}\|\\ 
&\leq \frac{1}{T}\|\bm{A}\|\sum_{t=1}^T  \eta_{t-1}\|\bm{S}_t\| + \frac{1}{T}\sum_{t=1}^T \eta_{t-1}\left\|\prod_{k=t}^T (\bm{I}-\eta_k \bm{A}) \right\| + \frac{1}{T}\sum_{t=1}^T\left(\frac{\eta_{t-1}}{\eta_t} - 1\right)\|\bm{Q}_{t}\|.
\end{align*}
By \eqref{eq:St1-bound}, \eqref{eq:Q-bound} and the telescoping method, it is easy to verify that the three terms on the right-hand-side can be bounded respectively by
\begin{align*}
&\frac{1}{T}\|\bm{A}\|\sum_{t=1}^T  \eta_{t-1}\|\bm{S}_t\| \lesssim \eta_0^2 \Gamma\left(\frac{1}{1-\alpha}\right)\left(\frac{2}{(1-\gamma)\lambda_0 \eta_0}\right)^{\frac{1}{1-\alpha}} \frac{\log T}{T}, \\ 
&\frac{1}{T}\sum_{t=1}^T \eta_{t-1}\left\|\prod_{k=t}^T (\bm{I}-\eta_k \bm{A}) \right\| \leq \frac{2}{(1-\gamma)\lambda_0 } \frac{1}{T}, \quad \text{and} \\ 
&\frac{1}{T}\sum_{t=1}^T\left(\frac{\eta_{t-1}}{\eta_t} - 1\right)\|\bm{Q}_{t}\| \lesssim \alpha \left(\frac{2}{(1-\gamma)\lambda_0 \eta_0}\right)^{\frac{1}{1-\alpha}} \frac{\log T}{T}.
\end{align*}
The lemma follows immediately.
\end{proof}

\subsection{Other basic facts}
The following theorem proposed by \cite{devroye2018total} gives an upper bound for the total variation (TV) distance between two Gaussian random variables with same means and different covariance matrices.
\begin{theorem}[\cite{devroye2018total}]\label{thm:DMR}
 Let $\bm{\Lambda}_1, \bm{\Lambda}_2 \in \mathbb{S}^{d \times d}$ be two positive definite matrices, and $\bm{\mu}$ be any vector in $\mathbb{R}^d$. Then the TV distance between Gaussian distributions $\mathcal{N}(\bm{\mu},\bm{\Lambda}_1)$ and $\mathcal{N}(\bm{\mu},\bm{\Lambda}_2)$ is bounded by
\begin{align*}
\min\left\{1,\frac{1}{100}\left\|\bm{\Lambda}_1^{-1/2}\bm{\Lambda}_2 \bm{\Lambda}_1^{-1/2}-\bm{I}_d\right\|_{\mathsf{F}} \right\} 
 \leq d_{\mathsf{TV}}(\mathcal{N}(\bm{\mu},\bm{\Lambda}_1),\mathcal{N}(\bm{\mu},\bm{\Lambda}_2)) 
\leq \frac{3}{2}\left\|\bm{\Lambda}_1^{-1/2}\bm{\Lambda}_2 \bm{\Lambda}_1^{-1/2}-\bm{I}_d\right\|_{\mathsf{F}}. 
\end{align*}
\end{theorem}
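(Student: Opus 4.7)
The plan is to reduce the statement to a canonical form via invariance, and then establish the upper and lower bounds separately. First, since total variation distance is translation-invariant, we may take $\bm{\mu} = \bm{0}$. Next, TV distance is preserved under any invertible linear transformation: applying $\bm{x} \mapsto \bm{\Lambda}_1^{-1/2}\bm{x}$ reduces the problem to bounding $d_{\mathsf{TV}}(\mathcal{N}(\bm{0}, \bm{I}_d), \mathcal{N}(\bm{0}, \bm{\Sigma}))$, where $\bm{\Sigma} := \bm{\Lambda}_1^{-1/2}\bm{\Lambda}_2\bm{\Lambda}_1^{-1/2}$ is the very matrix appearing in the Frobenius-norm deviation. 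Finally, by orthogonal invariance we may diagonalize $\bm{\Sigma} = \mathsf{diag}(\lambda_1, \ldots, \lambda_d)$, so that both measures become product Gaussians and $\|\bm{\Sigma}-\bm{I}_d\|_{\mathsf{F}}^2 = \sum_i (\lambda_i - 1)^2$.

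For the upper bound, I would route the argument through the Hellinger distance using the classical relation $d_{\mathsf{TV}} \le \sqrt{2}\, H$ and its product-tensorization identity $1 - H^2/2 = \prod_i (1 - H_i^2/2)$. The one-dimensional computation gives $1 - H_i^2/2 = \bigl(2\sqrt{\lambda_i}/(1+\lambda_i)\bigr)^{1/2}$. Taking logarithms, one gets $\log(1 - H^2/2) = \sum_i \tfrac12 \log\bigl(2\sqrt{\lambda_i}/(1+\lambda_i)\bigr)$, and a careful Taylor expansion of $\phi(\lambda) := \tfrac12\log\bigl(2\sqrt{\lambda}/(1+\lambda)\bigr)$ around $\lambda = 1$, together with monotonicity arguments for $\lambda$ far from $1$, yields $-\phi(\lambda) \lesssim (\lambda - 1)^2$ uniformly. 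Summing over coordinates bounds $H^2$ by a constant times $\sum_i (\lambda_i - 1)^2 = \|\bm{\Sigma}-\bm{I}_d\|_{\mathsf{F}}^2$, from which the constant $3/2$ in the upper bound emerges after tracking inequalities. Note that the upper bound may exceed $1$, so extreme eigenvalue cases are subsumed trivially.

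For the lower bound, the strategy is to construct an explicit test that certifies the measures are distinguishable. A canonical choice is the quadratic form $T(\bm{x}) := \sum_i w_i (x_i^2 - 1)$ with weights $w_i \propto \lambda_i - 1$ normalized so $\sum_i w_i^2 = 1$. Then $\mathbb{E}_{\mathcal{N}(\bm{0},\bm{I}_d)}[T] = 0$ while $\mathbb{E}_{\mathcal{N}(\bm{0},\bm{\Sigma})}[T] = \sum_i w_i(\lambda_i - 1) \asymp \|\bm{\Sigma}-\bm{I}_d\|_{\mathsf{F}}$; both variances are $O(1)$ by the variance formula for chi-squared-type quadratic forms. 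Chebyshev's inequality applied to the event $\{T \ge t\}$ for a suitably chosen threshold $t$ produces a difference in probabilities of order $\|\bm{\Sigma}-\bm{I}_d\|_{\mathsf{F}}$, provided this quantity is below an absolute constant; this is exactly what the $\min\{1, \cdot/100\}$ form accommodates, and the constant $1/100$ then absorbs the second-moment slack in the Chebyshev step.

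The main obstacles I expect are, first, the careful Taylor-with-remainder analysis of $\phi(\lambda)$ needed to upgrade a local $(\lambda-1)^2$ estimate to a uniform bound valid for all $\lambda > 0$ (one has to handle $\lambda \to 0^+$ and $\lambda \to \infty$ differently, probably splitting into a small-perturbation regime and a regime where $H^2$ is already close to $2$ so $d_{\mathsf{TV}}$ is close to $1$); and second, for the lower bound, obtaining the explicit numerical constant $1/100$, which forces a non-asymptotic version of the second-moment argument with concrete Chebyshev-type constants rather than the softer asymptotic $o(1)$ statements that chi-squared concentration naturally provides.
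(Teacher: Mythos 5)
First, a point of comparison: the paper does not prove this statement at all --- Theorem \ref{thm:DMR} is imported verbatim from \cite{devroye2018total}, so your attempt is a from-scratch reconstruction rather than an alternative to an in-paper argument. Your reduction (translation invariance, the map $\bm{x}\mapsto\bm{\Lambda}_1^{-1/2}\bm{x}$, diagonalization) and your upper-bound strategy are sound: the Bhattacharyya affinity computation $1-H_i^2/2=\bigl(2\sqrt{\lambda_i}/(1+\lambda_i)\bigr)^{1/2}$ is correct, and the split into the trivial regime $\|\bm{\Sigma}-\bm{I}_d\|_{\mathsf{F}}\geq 2/3$ (where $\tfrac32\|\bm{\Sigma}-\bm{I}_d\|_{\mathsf{F}}\geq 1\geq d_{\mathsf{TV}}$) versus the regime where every $\lambda_i\in[1/3,5/3]$ (where $-\phi''$ is uniformly bounded, so $-\phi(\lambda)\leq(\lambda-1)^2$ up to a computable constant) does close the problem you flagged at $\lambda\to 0^+$. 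One refinement: use the sharp relation $d_{\mathsf{TV}}\leq H$ (with $H^2=2(1-\rho)$) rather than $d_{\mathsf{TV}}\leq\sqrt2\,H$; with $\sup_{[1/3,5/3]}|\phi''|/2<1$ this yields $d_{\mathsf{TV}}\leq\sqrt{2(1-e^{-\|\bm{\Sigma}-\bm{I}_d\|_{\mathsf{F}}^2})}<\tfrac32\|\bm{\Sigma}-\bm{I}_d\|_{\mathsf{F}}$, whereas the $\sqrt2$ factor would leave you above $3/2$.

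The lower bound, however, has a genuine gap. Your test statistic $T=\sum_i w_i(x_i^2-1)$ has mean $0$ under $\mathcal{N}(\bm 0,\bm I_d)$, mean $\Delta=\|\bm{\Sigma}-\bm{I}_d\|_{\mathsf{F}}$ under $\mathcal{N}(\bm 0,\bm{\Sigma})$, and standard deviation of constant order ($\sqrt{2}$ under the null). In the only regime where the bound is nontrivial, $\Delta$ is small, so the mean separation is a tiny fraction of the standard deviation, and Chebyshev's inequality --- which only bounds tail probabilities from \emph{above} --- cannot certify that $\mathbb{P}_Q(T\geq t)-\mathbb{P}_P(T\geq t)\gtrsim\Delta$ for any threshold $t$. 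What is actually needed is an \emph{anti-concentration} statement: a pointwise lower bound on the density of the Gaussian quadratic chaos $T$ near a well-chosen $t$ (so that shifting its law by $\Delta$ moves probability mass $\asymp\Delta$ across the threshold), or equivalently a Paley--Zygmund-type comparison of the $L^1$ and $L^2$ norms of the first-order term $\tfrac12\langle\bm{\Sigma}-\bm{I}_d,\bm{x}\bm{x}^\top-\bm{I}_d\rangle$ of the log-likelihood ratio. That step is where the entire difficulty and the constant $1/100$ reside in \cite{devroye2018total}, and it is not reachable by the second-moment method as proposed. Separately, note that the statement as transcribed here is actually stronger than the original: \cite{devroye2018total} prove $\tfrac{1}{100}\min\{1,\|\bm{\Lambda}_1^{-1/2}\bm{\Lambda}_2\bm{\Lambda}_1^{-1/2}-\bm{I}_d\|_{\mathsf{F}}\}\leq d_{\mathsf{TV}}$, whereas $\min\{1,\tfrac{1}{100}\|\cdot\|_{\mathsf{F}}\}$ would force $d_{\mathsf{TV}}=1$ whenever the Frobenius deviation exceeds $100$, which is false for any pair of nondegenerate Gaussians; you should prove (and the paper should state) the former.
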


\begin{theorem}[\cite{nazarov2003maximal}]\label{thm:Gaussian-reminder}
Let $\bm{z} \sim \mathcal{N}(\bm{0},\bm{\Lambda})$ be a $d$-dimensional Gaussian random variable, and $\mathcal{A}$ be any non-convex subset of $\mathbb{R}^d$. For any $\varepsilon \geq 0$, we define
\begin{align}\label{eq:defn-A-eps}
&\mathcal{A}^{\varepsilon} := \{\bm{x} \in \mathbb{R}^d: \inf_{y \in \mathcal{A}} \|\bm{x} - \bm{y}\|_2 \leq \varepsilon\}, \quad \text{and} \nonumber \\ 
&\mathcal{A}^{-\varepsilon}:=\{\bm{x} \in \mathbb{R}^d: B(\bm{x},\varepsilon) \subset \mathcal{A}\},
\end{align}
where $B(\bm{x},\varepsilon)$ represents the $d$-dimensional ball centered at $\bm{x}$ with radius $\varepsilon$. Then it can be guaranteed that
\begin{align*}
&\mathbb{P}(\bm{z} \in \mathcal{A}^{\varepsilon} - \mathcal{A}) \lesssim \|\bm{\Lambda}^{-1}\|_{\mathsf{F}}^{\frac{1}{2}}\varepsilon, \quad \text{and} \\ 
&\mathbb{P}(\bm{z} \in \mathcal{A} - \mathcal{A}^{-\varepsilon})\lesssim \|\bm{\Lambda}^{-1}\|_{\mathsf{F}}^{\frac{1}{2}}\varepsilon.
\end{align*}
\end{theorem}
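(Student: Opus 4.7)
The plan is to deduce both bounds from Nazarov's sharp inequality on the Gaussian measure of tubes around the boundary of a convex set \cite{nazarov2003maximal}, extended from the standard Gaussian setting to an arbitrary covariance $\bm{\Lambda}$. I read ``non-convex'' in the hypothesis as a typo for ``convex''; the statement plainly fails without convexity.

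First I would recall Nazarov's theorem for the standard Gaussian measure $\gamma$ on $\mathbb{R}^d$: for every convex $K$, the Gaussian surface measure of $\partial K$ is $O(d^{1/4})$, which upon integration over the $\varepsilon$-tube yields $\gamma(K^{\varepsilon}\setminus K) + \gamma(K\setminus K^{-\varepsilon}) \lesssim d^{1/4}\varepsilon$. To pass to the anisotropic case I would perform the substitution $\bm{z} = \bm{\Lambda}^{1/2}\bm{w}$ with $\bm{w}\sim\mathcal{N}(\bm{0},\bm{I}_d)$. Under this linear change of variable, $\mathcal{A}$ pulls back to the convex set $K:=\bm{\Lambda}^{-1/2}\mathcal{A}$, and the Euclidean $\varepsilon$-ball in $\bm{z}$-space becomes the ellipsoid $E:=\{\bm{y}:\bm{y}^{\top}\bm{\Lambda}\bm{y}\leq\varepsilon^{2}\}$ in $\bm{w}$-space, with principal semi-axes $\lambda_i^{-1/2}\varepsilon$ for $\lambda_i$ the eigenvalues of $\bm{\Lambda}$. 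A direct unpacking of the definitions shows that the events $\{\bm{z}\in\mathcal{A}^{\varepsilon}\setminus\mathcal{A}\}$ and $\{\bm{z}\in\mathcal{A}\setminus\mathcal{A}^{-\varepsilon}\}$ translate into the Minkowski tube events $\{\bm{w}\in(K\oplus E)\setminus K\}$ and $\{\bm{w}\in K\setminus(K\ominus E)\}$ under the standard Gaussian law, respectively.

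The key technical estimate is then the anisotropic Nazarov-type perimeter bound
\begin{align*}
\gamma\bigl((K\oplus E)\setminus K\bigr) \;+\; \gamma\bigl(K\setminus(K\ominus E)\bigr) \;\lesssim\; \varepsilon\,\Bigl(\sum_{i=1}^{d}\lambda_i^{-2}\Bigr)^{1/4} \;=\; \varepsilon\,\|\bm{\Lambda}^{-1}\|_{\mathsf{F}}^{1/2},
\end{align*}
which specializes to the standard bound $d^{1/4}\varepsilon$ when $\bm{\Lambda}=\bm{I}_d$. To derive this, I would diagonalize $\bm{\Lambda}$ and adapt Nazarov's slicing argument coordinate-by-coordinate: decompose the boundary surface integral into contributions from each principal axis and aggregate the directional thicknesses $\lambda_i^{-1/2}\varepsilon$ in an $\ell^{2}$ sense via Cauchy--Schwarz, rather than in the cruder $\ell^{\infty}$ sense.

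The main obstacle is precisely this aggregation step. A naive route -- bounding the Euclidean diameter of $E$ by the spectral norm $\|\bm{\Lambda}^{-1}\|^{1/2}\varepsilon$ and then invoking the standard Nazarov bound on the resulting Euclidean tube -- yields only $d^{1/4}\|\bm{\Lambda}^{-1}\|^{1/2}\varepsilon$, which is weaker than $\|\bm{\Lambda}^{-1}\|_{\mathsf{F}}^{1/2}\varepsilon$ by up to a factor of $d^{1/4}$ in highly anisotropic regimes. Recovering the tighter Frobenius bound requires exploiting the full ellipsoidal (not merely spherical) structure of $E$ inside Nazarov's geometric argument, which is the substantive content of the cited paper. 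Once this anisotropic perimeter estimate is in hand, both inequalities in the statement follow simultaneously, since the argument treats inner and outer tubes symmetrically.
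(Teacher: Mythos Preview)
The paper does not prove this statement at all: it is stated in the ``Other basic facts'' subsection of the appendix as a result attributed to \cite{nazarov2003maximal}, with no accompanying argument. So there is no in-paper proof to compare your proposal against.

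That said, a few remarks on your sketch. You are right that ``non-convex'' is a typo for ``convex''; the result is false otherwise, and every application in the paper is to convex sets. Your reduction to the standard Gaussian via $\bm{z}=\bm{\Lambda}^{1/2}\bm{w}$ is the natural first move, and your identification of the pulled-back tube as the Minkowski enlargement $K\oplus E$ by the ellipsoid $E=\{\bm{y}:\bm{y}^\top\bm{\Lambda}\bm{y}\le\varepsilon^2\}$ is correct. You are also candid about the real issue: the isotropic Nazarov bound $\gamma(K^{\varepsilon'}\setminus K)\lesssim d^{1/4}\varepsilon'$ applied with $\varepsilon'=\|\bm{\Lambda}^{-1/2}\|\varepsilon$ gives only $d^{1/4}\|\bm{\Lambda}^{-1}\|^{1/2}\varepsilon$, not the sharper $\|\bm{\Lambda}^{-1}\|_{\mathsf F}^{1/2}\varepsilon$ that the paper states. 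Your proposal to ``adapt Nazarov's slicing argument coordinate-by-coordinate'' and aggregate via Cauchy--Schwarz is a plausible direction, but it is not a proof --- it is a restatement of what needs to be shown. Nazarov's original argument bounds Gaussian surface area via a delicate integral-geometric decomposition, and it is not obvious that replacing the spherical tube by an ellipsoidal one yields precisely $(\sum_i\lambda_i^{-2})^{1/4}$ rather than some other symmetric function of the semi-axes. So your proposal correctly isolates the crux but does not resolve it; since the paper itself offers no argument either, this is as far as one can go by comparison.
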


The following theorem from \cite{nourdin2021multivariate} relates the distance on convex sets with the Wasserstein distance when one of the distributions being compared corresponds to a Gaussian random variable.
\begin{theorem}\label{thm:Gaussian-convex-Wass}
Let $\bm{x}$ be a random vector in $\mathbb{R}^d$, and $\bm{y} \sim \mathcal{N}(\bm{0},\bm{\Lambda})$ where $\bm{\Lambda} \in \mathbb{S}^{d \times d}$ is positive-definite. Then it can be guaranteed that
\begin{align*}
d_{\mathsf{C}}(\bm{x},\bm{y}) \lesssim \|\bm{\Lambda}^{-1}\|_{\mathsf{F}}^{\frac{1}{4}}\sqrt{d_{\mathsf{W}}(\bm{x},\bm{y})}.
\end{align*}
\end{theorem}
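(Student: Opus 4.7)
The plan is to prove the inequality via a standard Lipschitz smoothing argument, converting a comparison of probabilities on a convex set into a comparison of $1$-Lipschitz expectations (which is what $d_{\mathsf{W}}$ controls), with the discrepancy absorbed by Gaussian anti-concentration (Theorem~\ref{thm:Gaussian-reminder}). Fix any convex $A \in \mathscr{C}_d$ and a smoothing scale $\varepsilon > 0$ to be chosen. I would define
\[
h_A^{\varepsilon}(\bm{z}) := \max\Bigl(0,\; 1 - \tfrac{1}{\varepsilon}\mathrm{dist}(\bm{z}, A)\Bigr),
\]
a $1/\varepsilon$-Lipschitz sandwich satisfying $\mathbf{1}_A(\bm{z}) \le h_A^{\varepsilon}(\bm{z}) \le \mathbf{1}_{A^{\varepsilon}}(\bm{z})$ with $A^{\varepsilon}$ as in \eqref{eq:defn-A-eps}. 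Since $\varepsilon\, h_A^{\varepsilon} \in \mathsf{Lip}_1$, the definition of Wasserstein distance gives $\mathbb{E}[h_A^{\varepsilon}(\bm{x})] - \mathbb{E}[h_A^{\varepsilon}(\bm{y})] \le \varepsilon^{-1} d_{\mathsf{W}}(\bm{x}, \bm{y})$.

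Combining the sandwich bounds yields
\[
\mathbb{P}(\bm{x} \in A) - \mathbb{P}(\bm{y} \in A) \;\le\; \mathbb{E}[h_A^{\varepsilon}(\bm{x})] - \mathbb{E}[h_A^{\varepsilon}(\bm{y})] + \mathbb{P}(\bm{y} \in A^{\varepsilon}\setminus A) \;\le\; \tfrac{1}{\varepsilon} d_{\mathsf{W}}(\bm{x}, \bm{y}) + \mathbb{P}(\bm{y} \in A^{\varepsilon}\setminus A).
\]
Theorem~\ref{thm:Gaussian-reminder} applied to $\bm{y} \sim \mathcal{N}(\bm{0}, \bm{\Lambda})$ controls the second summand by $\|\bm{\Lambda}^{-1}\|_{\mathsf{F}}^{1/2} \varepsilon$ up to a universal constant. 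For the reverse direction I would repeat the construction using the inner parallel body $A^{-\varepsilon}$ in place of $A^{\varepsilon}$: taking $\tilde h(\bm{z}) = \max(0, 1 - \varepsilon^{-1}\mathrm{dist}(\bm{z}, A^{-\varepsilon}))$ and using the convexity of $A$ (which ensures $(A^{-\varepsilon})^{\varepsilon} \subseteq A$, so that $\tilde h \le \mathbf{1}_A$), the symmetric argument gives $\mathbb{P}(\bm{y} \in A) - \mathbb{P}(\bm{x} \in A) \lesssim \varepsilon^{-1} d_{\mathsf{W}}(\bm{x}, \bm{y}) + \|\bm{\Lambda}^{-1}\|_{\mathsf{F}}^{1/2} \varepsilon$, using the second bound of Theorem~\ref{thm:Gaussian-reminder} for $\mathbb{P}(\bm{y} \in A \setminus A^{-\varepsilon})$.

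The final step is to balance the two terms: choosing $\varepsilon = \sqrt{d_{\mathsf{W}}(\bm{x}, \bm{y})}\,\|\bm{\Lambda}^{-1}\|_{\mathsf{F}}^{-1/4}$ equates the $\varepsilon^{-1} d_{\mathsf{W}}$ and $\|\bm{\Lambda}^{-1}\|_{\mathsf{F}}^{1/2}\varepsilon$ contributions at the common value $\|\bm{\Lambda}^{-1}\|_{\mathsf{F}}^{1/4} \sqrt{d_{\mathsf{W}}(\bm{x}, \bm{y})}$. Taking the supremum over $A \in \mathscr{C}_d$ yields the claimed bound on $d_{\mathsf{C}}(\bm{x}, \bm{y})$. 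Essentially no calculation is required beyond the smoothing sandwich and this one-line optimization.

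The main subtleties, rather than obstacles, are: (i) verifying the convex-geometry fact $(A^{-\varepsilon})^{\varepsilon} \subseteq A$ for convex $A$, which is standard but must be invoked to justify the lower sandwich; (ii) handling trivial edge cases where $A^{-\varepsilon} = \emptyset$ (in which case one may instead note the bound holds trivially since the right-hand side becomes vacuous after optimization, or one can restrict to sufficiently small $\varepsilon$); and (iii) checking that Theorem~\ref{thm:Gaussian-reminder}, as invoked, applies to the convex parallel bodies $A^{\varepsilon}\setminus A$ and $A\setminus A^{-\varepsilon}$ of a convex $A$, which is the precise regime in which such Nazarov-type anti-concentration bounds are classically established.
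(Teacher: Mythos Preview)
The paper does not supply its own proof of this statement: it is stated as a known result cited from \cite{nourdin2021multivariate}. Your smoothing argument is correct and is in fact the standard route to such bounds (and essentially the argument in the cited reference): sandwich the indicator of a convex set by a $1/\varepsilon$-Lipschitz function, control the Lipschitz comparison by $d_{\mathsf{W}}$, absorb the slack via the Nazarov-type Gaussian anti-concentration of Theorem~\ref{thm:Gaussian-reminder}, and optimize over $\varepsilon$. One small remark: the inclusion $(A^{-\varepsilon})^{\varepsilon}\subseteq A$ follows directly from the definition of $A^{-\varepsilon}$ (up to a measure-zero closure issue) and does not actually require convexity; convexity is needed only so that $A^{\pm\varepsilon}$ remain convex and Theorem~\ref{thm:Gaussian-reminder} applies.
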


We also recall the following concentration inequalities. 

\begin{theorem}[Corollary A.16 of \cite{wu2024statistical}]\label{thm:vector-Azuma}
Let $\{\bm{x}_i\}_{i \geq 1}$ be a martingale in $\mathbb{R}^d$, and let $W_{\max}$ be a positive constant that satisfies
\begin{align*}
W_{\max} \geq \sum_{i=1}^t \|\bm{x}_i - \bm{x}_{i-1}\|_2^2 \quad \text{almost surely.}
\end{align*}
Then it can be guaranteed with probability at least $1-\delta$ that
\begin{align*}
\|\bm{x}_t\|_2 \leq 2\sqrt{2W_{\max}\log \frac{3}{\delta}}.
\end{align*}
\end{theorem}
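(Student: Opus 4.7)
The plan is to treat $(\bm{x}_i)$ as a martingale taking values in the Hilbert space $(\mathbb{R}^d,\|\cdot\|_2)$ and invoke an exponential tail bound of Pinelis--Freedman type for Hilbert-space martingales. Because a Hilbert space is $2$-smooth, such bounds are dimension-free: they avoid the $\sqrt{d}$ overhead that a union bound over $S^{d-1}$ combined with coordinate-wise Azuma would cost, which is exactly why the claimed estimate scales only in $W_{\max}$ and $\log(1/\delta)$ with no explicit $d$ appearing.

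The first step is to translate the single a.s.\ hypothesis $\sum_{i=1}^t \|\bm{x}_i - \bm{x}_{i-1}\|_2^2 \leq W_{\max}$ into a form usable by such an inequality. Two observations suffice: the sum bound forces $\|\bm{x}_i - \bm{x}_{i-1}\|_2 \leq \sqrt{W_{\max}}$ a.s.\ for every $i$, and a stopping-time truncation at $\tau = \inf\{s : \sum_{i\leq s} \|\bm{x}_i - \bm{x}_{i-1}\|_2^2 > W_{\max}\}$ (which satisfies $\tau > t$ a.s.\ by hypothesis) produces a stopped martingale whose optional quadratic variation is pointwise bounded. A clean route is then to use the exponential supermartingale
\[
Z_s \;=\; \cosh\!\bigl(\lambda\|\bm{x}_s\|_2\bigr)\,\exp\!\Bigl(-\tfrac{\lambda^2}{2}\sum_{i\leq s}\|\bm{x}_i-\bm{x}_{i-1}\|_2^2\Bigr),
\]
in the spirit of Pinelis' classical construction, which is designed to digest a.s.\ control of the optional quadratic variation directly via the $2$-smoothness of $\|\cdot\|_2$. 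After showing $\mathbb{E}[Z_s]\leq 1$ by conditioning on $\mathcal{F}_{s-1}$ and Taylor-expanding $\cosh$, using $\cosh(x)\geq e^{x}/2$ together with Markov's inequality and optimizing over $\lambda=r/W_{\max}$ yields the dimension-free estimate
\[
\mathbb{P}\bigl(\|\bm{x}_t\|_2 \geq r\bigr) \;\leq\; 2\exp\!\bigl(-r^2/(2W_{\max})\bigr).
\]

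The second step is then elementary: setting $r = 2\sqrt{2W_{\max}\log(3/\delta)}$ gives exponent $-4\log(3/\delta)$, so the right-hand side equals $2(\delta/3)^4$, which is comfortably below $\delta$ for $\delta\in(0,1)$, yielding the claimed bound. An alternative is to invoke Theorem \ref{thm:Hilbert-Freedman} of \cite{peng2024advances} directly, with predictable quadratic variation controlled by $W_{\max}$ and individual increment bound $\sqrt{W_{\max}}$, and then absorb the Bernstein correction into the numerical constant; this reuses machinery already deployed elsewhere in the paper at the cost of some slack in the constants.

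The main obstacle, such as it is, lies in the mismatch between the \emph{optional} quadratic variation appearing in the hypothesis and the \emph{predictable} quadratic variation demanded by standard Freedman-type bounds, which is why the Pinelis supermartingale route above is preferable: the supermartingale inequality it produces is pathwise in the optional quadratic variation, matching exactly the assumption. Once that is in place, matching the specific numerical constants $2\sqrt{2}$ and $\log 3$ in the final bound is purely arithmetic.
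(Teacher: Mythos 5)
The paper does not prove this statement---it is imported verbatim as Corollary A.16 of \cite{wu2024statistical}---so the only question is whether your argument stands on its own, and its central step does not: the process you propose is not a supermartingale. Writing $\bm{d}_j = \bm{x}_j - \bm{x}_{j-1}$, the one-step inequality your construction needs is
\begin{align*}
\mathbb{E}\Bigl[\cosh\bigl(\lambda\|\bm{x}_{j-1}+\bm{d}_j\|_2\bigr)\,e^{-\lambda^2\|\bm{d}_j\|_2^2/2}\,\Big|\,\mathcal{F}_{j-1}\Bigr] \;\leq\; \cosh\bigl(\lambda\|\bm{x}_{j-1}\|_2\bigr),
\end{align*}
and this already fails for $d=1$ with asymmetric increments. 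Take $\bm{x}_{j-1}=a<0$ and let $\bm{d}_j$ equal $2$ with probability $1/3$ and $-1$ with probability $2/3$. Expanding $\cosh(\lambda(a+d))=\cosh(\lambda a)\cosh(\lambda d)+\sinh(\lambda a)\sinh(\lambda d)$ and using $\sinh(x)e^{-x^2/2}=x-x^3/3+O(x^5)$ together with $\cosh(x)e^{-x^2/2}=1-O(x^4)$, the left-hand side equals $\cosh(\lambda a)\bigl(1-O(\lambda^4)\bigr)+\tfrac{2}{3}\lambda^3\,|\sinh(\lambda a)|+O(\lambda^5)$, which exceeds $\cosh(\lambda a)$ for all small $\lambda$. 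The offending cross term $\sinh(\lambda a)\,\mathbb{E}[\sinh(\lambda d)e^{-\lambda^2 d^2/2}]$ is precisely what Pinelis' construction is built to avoid: his compensator is the \emph{deterministic} quantity $\tfrac{\lambda^2}{2}\sum_{i\le j}\sup\|\bm{d}_i\|_2^2$ (sum of essential suprema), not the pathwise optional quadratic variation, and under the hypothesis as stated the former may exceed $W_{\max}$ by a factor as large as $t$. So the one claim you present as the advantage of this route---that the supermartingale inequality ``is pathwise in the optional quadratic variation, matching exactly the assumption''---is the step that is false. Your fallback via Theorem \ref{thm:Hilbert-Freedman} has the same defect: that bound is governed by the predictable variation $\sum_i\mathbb{E}[\|\bm{d}_i\|_2^2\mid\mathcal{F}_{i-1}]$, which an almost-sure bound on $\sum_i\|\bm{d}_i\|_2^2$ does not control.

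Two remarks on repairing this. First, every place this paper actually invokes the theorem (for instance the definition of $W_{\max}^r$ in the proof of Theorem \ref{thm:markov-deltat-convergence}) takes $W_{\max}\ge\sum_i\sup\|\bm{d}_i\|_2^2$, the sum of essential suprema of the increments; under that stronger hypothesis Pinelis' inequality for martingales in $2$-smooth spaces applies directly with the deterministic compensator, and your Chernoff bookkeeping is then correct---$r=2\sqrt{2W_{\max}\log(3/\delta)}$ gives failure probability at most $2(\delta/3)^4\le\delta$. Second, if you insist on the literal hypothesis (an almost-sure bound on the pathwise sum), a genuinely dimension-free route is the Burkholder--Davis--Gundy inequality for Hilbert-space martingales with moment constants of order $\sqrt{p}$, yielding $\mathbb{E}\|\bm{x}_t\|_2^p\le (C\sqrt{p})^p W_{\max}^{p/2}$ and hence a subgaussian tail; but this produces an unspecified universal constant rather than the explicit $2\sqrt{2}$ and $\log 3$ of the statement.
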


\begin{theorem}[Theorem 2.1 of \cite{peng2024advances}]\label{thm:Hilbert-Freedman}
Let $\mathcal{X}$ be a Hilbert space, $\{\bm{X}_i\}_{i=1}^n$ be an $\mathcal{X}$-valued martingale difference sequence adapted to the filtration $\{\mathscr{F}_i\}_{i=1}^n$, $\bm{Y}_i := \sum_{j=1}^i \bm{X}_j$ be the corresponding martingale, and $W_i = \sum_{j=1}^i \mathbb{E}\|\bm{X}_j\|^2\mid\mathscr{F}_{j-1}$. Suppose that $\max_{i \in [n]}\|\bm{X}_i\| \leq b$ for some constant $b > 0$. Then for any $\varepsilon$ and $\sigma>0$, it can be guaranteed that
\begin{align*}
\mathbb{P}\left(\exists k \in [n], \text{ s.t. } \|\bm{Y}_k\|\geq \varepsilon \text{ and } W_k \leq \sigma^2 \right) \leq 2\exp\left(-\frac{\varepsilon^2/2}{\sigma^2 + b\varepsilon/3}\right).
\end{align*}
\end{theorem}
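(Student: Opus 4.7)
The theorem is the Hilbert-space analogue of the classical (scalar) Freedman inequality, and the natural route is to build an exponential supermartingale from a hyperbolic cosine of the norm, apply optional stopping at the first hitting time of the level $\varepsilon$, and optimize over the Bernstein parameter. Concretely, I would first localize by introducing the stopping time $\tau := \min\{k \in [n] : \|\bm{Y}_k\| \geq \varepsilon\} \wedge (n+1)$, so that the event in the statement coincides with $\{\tau \leq n,\ W_\tau \leq \sigma^2\}$. Then, for any $\lambda \in (0, 3/b)$, I would set $\psi(\lambda) := (\lambda^2/2)/(1 - \lambda b/3)$ (the classical Bernstein cumulant) and define the candidate supermartingale
$$M_k := \cosh\!\bigl(\lambda \|\bm{Y}_k\|\bigr)\exp\!\bigl(-\psi(\lambda)\, W_k\bigr),$$
which satisfies $M_0 = 1$ and $M_k \geq 0$.

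The crux of the argument --- and the main obstacle beyond the scalar case --- will be to establish the one-step estimate
$$\mathbb{E}\!\bigl[\cosh(\lambda \|\bm{Y}_k\|) \bigm| \mathscr{F}_{k-1}\bigr] \leq \cosh(\lambda \|\bm{Y}_{k-1}\|)\exp\!\bigl(\psi(\lambda)\, \mathbb{E}[\|\bm{X}_k\|^2 \mid \mathscr{F}_{k-1}]\bigr),$$
which, together with the telescoping definition of $W_k$, yields the supermartingale property of $\{M_k\}$. Unlike the scalar setting, $\|\bm{Y}_k\|$ is itself not a martingale, so one cannot apply a one-dimensional Bennett MGF bound to $\|\bm{Y}_k\|$ directly. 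My plan is to exploit the Hilbert polarization identity $\|\bm{Y}_k\|^2 = \|\bm{Y}_{k-1}\|^2 + 2\langle \bm{Y}_{k-1}, \bm{X}_k\rangle + \|\bm{X}_k\|^2$ (i.e., the $2$-smoothness of the Hilbert norm) and to expand $\cosh(\lambda\|\bm{Y}_k\|) = \sum_{j\geq 0}\lambda^{2j}\|\bm{Y}_k\|^{2j}/(2j)!$ as an even power series. Multinomial expansion of each $\|\bm{Y}_k\|^{2j}$ in the three terms will produce monomials of the form $\|\bm{Y}_{k-1}\|^{a}\langle \bm{Y}_{k-1}, \bm{X}_k\rangle^{c}\|\bm{X}_k\|^{e}$; those with $c$ odd will vanish in conditional expectation by the martingale property $\mathbb{E}[\bm{X}_k\mid\mathscr{F}_{k-1}]=\bm{0}$, while the remaining terms can be controlled by Cauchy--Schwarz in $\bm{X}_k$ combined with $\|\bm{X}_k\| \leq b$, giving $\mathbb{E}[\|\bm{X}_k\|^{2r}\mid\mathscr{F}_{k-1}] \leq b^{2r-2}\mathbb{E}[\|\bm{X}_k\|^2\mid\mathscr{F}_{k-1}]$. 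Resumming this bound using $(2r)! \geq 2\cdot 3^{r-1}$ converts the geometric tail into the Bernstein cumulant $\psi(\lambda)$ times $\mathbb{E}[\|\bm{X}_k\|^2\mid\mathscr{F}_{k-1}]$, after which $1+u\leq e^u$ upgrades the additive bound to the multiplicative exponential form above.

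With the supermartingale property secured, the remaining steps are routine. Optional stopping at the bounded time $\tau \wedge n$ will give $\mathbb{E}[M_{\tau\wedge n}] \leq M_0 = 1$. On the event $\{\tau \leq n,\ W_\tau \leq \sigma^2\}$ one has $\|\bm{Y}_\tau\| \geq \varepsilon$ by the definition of $\tau$ and $W_\tau \leq \sigma^2$ by hypothesis, yielding
$$M_{\tau\wedge n} = M_\tau \geq \cosh(\lambda \varepsilon)\exp(-\psi(\lambda)\sigma^2) \geq \tfrac{1}{2}\exp\!\bigl(\lambda \varepsilon - \psi(\lambda)\sigma^2\bigr),$$
and Markov's inequality then produces $\mathbb{P}(\tau \leq n,\ W_\tau \leq \sigma^2) \leq 2\exp(-\lambda \varepsilon + \psi(\lambda)\sigma^2)$. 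Finally, choosing $\lambda^\star = \varepsilon/(\sigma^2 + b\varepsilon/3) \in (0, 3/b)$ minimizes the right-hand side and yields the claimed bound $2\exp\!\bigl(-(\varepsilon^2/2)/(\sigma^2 + b\varepsilon/3)\bigr)$. The factor of $2$ is an unavoidable byproduct of the symmetrization $\cosh(\lambda t)\geq \tfrac{1}{2}e^{\lambda t}$ for $t\geq 0$, which is what allows a scalar exponential supermartingale to dominate a norm-based event in the vector-valued setting.
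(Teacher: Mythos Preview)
The paper does not prove this theorem; it is quoted verbatim from \cite{peng2024advances} as a preliminary fact and used as a black box, so there is no in-paper proof to compare against. Your overall architecture --- a $\cosh$-based exponential supermartingale, optional stopping at the first exit time, and the Bernstein optimization $\lambda^\star = \varepsilon/(\sigma^2 + b\varepsilon/3)$ --- is indeed the standard route to such inequalities (cf.\ Pinelis), and the final two steps are correct as written.

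There is, however, a real gap in your one-step supermartingale argument. You assert that in the multinomial expansion of $\|\bm{Y}_k\|^{2j} = (\|\bm{Y}_{k-1}\|^2 + 2\langle\bm{Y}_{k-1},\bm{X}_k\rangle + \|\bm{X}_k\|^2)^j$, every monomial carrying an odd power $c$ of $\langle\bm{Y}_{k-1},\bm{X}_k\rangle$ has zero conditional expectation. This is false for $c\geq 3$: the martingale property $\mathbb{E}[\bm{X}_k\mid\mathscr{F}_{k-1}]=\bm{0}$ annihilates only terms \emph{linear} in $\bm{X}_k$, and for instance $\mathbb{E}[\langle\bm{Y}_{k-1},\bm{X}_k\rangle^3\mid\mathscr{F}_{k-1}]$ is generically nonzero (take any $\bm{X}_k$ with nonvanishing conditional third moment along $\bm{Y}_{k-1}$). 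Without that cancellation, the power-series bookkeeping you sketch does not close to the Bernstein cumulant $\psi(\lambda)$. The standard remedy is to Taylor-expand $g(t)=\cosh(\lambda\|\bm{Y}_{k-1}+t\bm{X}_k\|)$ to second order in $t$ rather than expand in powers of $\|\bm{Y}_k\|^2$: then $g'(0)$ is genuinely linear in $\bm{X}_k$ and vanishes in conditional expectation, while $g''(t)$ can be bounded pointwise by $\lambda^2\|\bm{X}_k\|^2 e^{\lambda b}\cosh(\lambda\|\bm{Y}_{k-1}\|)$ using the $2$-smoothness of the Hilbert norm, which is enough to recover the one-step inequality.
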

As a direct consequence, under the conditions of Theorem \ref{thm:Hilbert-Freedman}, for every $\delta \in (0,1)$ it can be guaranteed with probability at least $1-\delta$ that
\begin{align*}
W_n > \sigma^2 \quad \text{or} \quad \|\bm{Y}_n\| \lesssim \sqrt{\frac{\sigma^2}{n}\log \frac{1}{\delta}} + \frac{b}{n} \log \frac{1}{\delta}.
\end{align*}

\begin{theorem}[Theorem 1 of \cite{Fan2021Hoeffding}]\label{thm:markov-hoeffding-1d}
Let $\{s_i\}_{i=1}^n$ be a Markov chain on state space $\mathcal{S}$ with a unique stationary distribution $\mu$ and a positive spectral gap $1-\lambda>0$. Let $\{g_i\}_{i=1}^n$ be a sequence of functions mapping from $\mathcal{S}$ to $\mathbb{R}$, bounded by $f_i \in [a_i,b_i]$ respectively. Then when $s_1 \sim \mu$, for every $\varepsilon>0$, it can be guaranteed that
\begin{align*}
\mathbb{P}\left(\sum_{i=1}^n g_i(s_i) - \sum_{i=1}^n \mu(g_i) > \varepsilon \right) \leq \exp\left(-\frac{1-\lambda}{1+\lambda} \frac{\varepsilon^2}{\sum_{i=1}^n (b_i-a_i)^2/4}\right).
\end{align*}
\end{theorem}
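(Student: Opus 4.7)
The plan is to establish this Hoeffding-type bound via the classical Chernoff method combined with an operator-theoretic analysis of a tilted transition kernel. First, I would center the functions by setting $\tilde{g}_i(s) := g_i(s) - \mu(g_i)$; these are mean-zero under $\mu$ and take values in intervals of length $b_i - a_i$. For any parameter $\tau > 0$, Markov's inequality gives
\[
\mathbb{P}\Bigl(\sum_{i=1}^n g_i(s_i) - \sum_{i=1}^n \mu(g_i) > \varepsilon\Bigr) \le e^{-\tau \varepsilon}\, \mathbb{E}_{s_1 \sim \mu}\Bigl[\exp\Bigl(\tau \sum_{i=1}^n \tilde{g}_i(s_i)\Bigr)\Bigr],
\]
so it suffices to bound the moment generating function on the right and then optimize over $\tau$.

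The next step recasts the MGF as an $L^2(\mu)$ operator product. Let $D_i$ denote multiplication by $e^{\tau \tilde{g}_i/2}$ on $L^2(\mu)$, and let $\mathcal{P}$ be the transition operator. Iterating the Markov property and using $s_1 \sim \mu$ yields a representation of the form $\langle \mathbf{1}, D_1 \mathcal{P} D_2^{2} \mathcal{P} \cdots \mathcal{P} D_{n-1}^{2} \mathcal{P} D_n \mathbf{1}\rangle_\mu$. Grouping factors as symmetrized tilted kernels $D_i \mathcal{P} D_{i+1}$ and applying Cauchy--Schwarz bounds the MGF by a product of $L^2(\mu)$ operator norms of the form $\|D_i \mathcal{P} D_i\|_{L^2(\mu)\to L^2(\mu)}$. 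For non-reversible $\mathcal{P}$, one works instead with the self-adjoint operator $\mathcal{P}^*\mathcal{P}$, whose spectral expansion equals $\lambda^2$ by Proposition~\ref{prop:adjoint-spectral}, absorbing a factor of two in the exponent.

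The crucial step is the spectral bound on each tilted operator norm. Decompose $L^2(\mu) = \operatorname{span}\{\mathbf{1}\} \oplus \{f : \mu(f) = 0\}$. On the constant subspace, Hoeffding's lemma for the bounded mean-zero variable $\tilde{g}_i$ yields $\mathbb{E}_\mu[e^{\tau \tilde{g}_i}] \le e^{\tau^2 (b_i-a_i)^2/8}$, while on the orthogonal complement the spectral-gap assumption gives $\|\mathcal{P} f\|_\mu \le \lambda \|f\|_\mu$. A variational interpolation between these two regimes --- in the style of Lezaud's perturbation analysis --- produces the Peskun-type bound
\[
\prod_{i=1}^n \|D_i \mathcal{P} D_i\|_{L^2(\mu)\to L^2(\mu)} \le \exp\Bigl(\frac{1+\lambda}{1-\lambda}\cdot \frac{\tau^2}{8}\sum_{i=1}^n (b_i-a_i)^2 \Bigr).
\]
Plugging this back into the Chernoff bound and optimizing at $\tau^\star \propto \varepsilon (1-\lambda)/((1+\lambda)\sum_i (b_i-a_i)^2)$ recovers the stated exponential with the factor $\frac{1-\lambda}{1+\lambda}$.

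The main obstacle is obtaining the sharp constant $\frac{1+\lambda}{1-\lambda}$ rather than a much looser factor like $(1-\lambda)^{-2}$. A naive approach using the Poisson-equation martingale decomposition plus Azuma--Hoeffding would yield increments of size $O((b_i-a_i)/(1-\lambda))$ and therefore an exponent scaling like $(1-\lambda)^2$, which is too weak. Producing the tight constant requires a careful handling of the interaction between the multiplicative perturbation $D_i$ and the spectral contraction of $\mathcal{P}$, and the non-reversible case is especially delicate since $\mathcal{P}$ is not self-adjoint; the remedy is to pass through the self-adjoint composition $\mathcal{P}^*\mathcal{P}$ (or a Peskun comparison with $(\mathcal{P}+\mathcal{P}^*)/2$ when available) and perform the variational spectral analysis there. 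This spectral bookkeeping is the technical heart of the argument and the step I would most need to write out carefully.
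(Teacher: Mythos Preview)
The paper does not prove this statement at all: it is quoted verbatim as Theorem~1 of \cite{Fan2021Hoeffding} and used as a black-box preliminary (see Appendix~A.3), so there is no ``paper's own proof'' to compare against. Your sketch is in fact a reasonable outline of how the cited reference establishes the bound --- Chernoff's method combined with a spectral analysis of the tilted transition operator on $L^2(\mu)$, with the $\frac{1+\lambda}{1-\lambda}$ factor arising from the interplay between Hoeffding's lemma on the constant subspace and the contraction $\lambda$ on its orthogonal complement --- and your remark that a naive martingale/Azuma route would only give $(1-\lambda)^2$ is correct. For the purposes of this paper, however, no proof is expected: the result is simply invoked.
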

The following corollary follows rather directly by combining Theorem \ref{thm:markov-hoeffding-1d} with Holder's inequality.

\begin{corollary}\label{cor:markov-hoeffding-1d}
Under the conditions of Theorem \ref{thm:markov-hoeffding-1d}, let $\nu$ be a distribution on $\mathcal{S}$ satisfying Assumption \ref{as:nu} with parameters $(p,q)$, then when $s_1 \sim \nu$, for every $\varepsilon>0$, it can be guaranteed that
\begin{align*}
\mathbb{P}\left(\sum_{i=1}^n g_i(s_i) - \sum_{i=1}^n \mu(g_i) > \varepsilon \right) \leq \left\|\frac{d\mathrm{\nu}}{\mathrm{d}\mu}\right\|_{\mu,p} \exp\left(-\frac{1-\lambda}{(1+\lambda)q} \frac{\varepsilon^2}{\sum_{i=1}^n (b_i-a_i)^2/4}\right).
\end{align*}
\end{corollary}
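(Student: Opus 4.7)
The plan is to reduce the case of an arbitrary initial distribution $\nu$ to the stationary case $\mu$ by a change of measure, exactly as suggested by the corollary's reliance on Assumption~\ref{as:nu} with parameters $(p,q)$. Let $\mathbb{P}_\nu$ and $\mathbb{P}_\mu$ denote the laws of the trajectory $(s_1,\dots,s_n)$ under initial distributions $\nu$ and $\mu$, respectively. Since both laws share the same transition kernel $P$ and differ only at time $1$, the Radon-Nikodym derivative on the joint law of $(s_1,\dots,s_n)$ satisfies
\begin{align*}
\frac{\mathrm{d}\mathbb{P}_\nu}{\mathrm{d}\mathbb{P}_\mu}(s_1,\dots,s_n) \;=\; \frac{\mathrm{d}\nu}{\mathrm{d}\mu}(s_1),
\end{align*}
which is well-defined thanks to the absolute continuity part of Assumption~\ref{as:nu}.

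Next, for the event $A := \bigl\{\sum_{i=1}^n g_i(s_i) - \sum_{i=1}^n \mu(g_i) > \varepsilon\bigr\}$, I would write
\begin{align*}
\mathbb{P}_\nu(A) \;=\; \mathbb{E}_\mu\!\left[\mathbf{1}_A \cdot \frac{\mathrm{d}\nu}{\mathrm{d}\mu}(s_1)\right]
\end{align*}
and apply Hölder's inequality with conjugate exponents $(p,q)$ under $\mathbb{P}_\mu$:
\begin{align*}
\mathbb{P}_\nu(A) \;\leq\; \bigl(\mathbb{P}_\mu(A)\bigr)^{1/q}\, \left\|\frac{\mathrm{d}\nu}{\mathrm{d}\mu}\right\|_{\mu,p}.
\end{align*}
Note that the $L^p(\mathbb{P}_\mu)$-norm of the random variable $\tfrac{\mathrm{d}\nu}{\mathrm{d}\mu}(s_1)$ equals $\|\tfrac{\mathrm{d}\nu}{\mathrm{d}\mu}\|_{\mu,p}$ because $s_1 \sim \mu$ under $\mathbb{P}_\mu$. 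The limiting case $p=\infty$, $q=1$ reduces to the trivial bound $\mathbb{P}_\nu(A) \leq \|\tfrac{\mathrm{d}\nu}{\mathrm{d}\mu}\|_{\mu,\infty}\,\mathbb{P}_\mu(A)$.

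Finally, I would invoke Theorem~\ref{thm:markov-hoeffding-1d} to control $\mathbb{P}_\mu(A)$:
\begin{align*}
\mathbb{P}_\mu(A) \;\leq\; \exp\!\left(-\frac{1-\lambda}{1+\lambda}\cdot\frac{\varepsilon^2}{\sum_{i=1}^n (b_i-a_i)^2/4}\right),
\end{align*}
and then raise to the power $1/q$, which precisely transforms the exponent $\tfrac{1-\lambda}{1+\lambda}$ into $\tfrac{1-\lambda}{(1+\lambda)q}$, yielding the claimed bound. There is no real obstacle here — the only point to be careful about is confirming that the Radon-Nikodym derivative of the joint law depends only on $s_1$, which follows from the Markov property and the fact that the transition kernel is unchanged.
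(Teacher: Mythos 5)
Your proposal is correct and follows essentially the same route the paper takes (the paper only sketches this corollary as ``Theorem \ref{thm:markov-hoeffding-1d} plus H\"older,'' but carries out the identical change-of-measure-and-H\"older argument in full for the matrix analogue in Appendix \ref{app:proof-matrix-hoeffding}, via Eqs.~\eqref{eq:matrix-hoeffding-holder}--\eqref{eq:matrix-hoeffding-qbound}). The only cosmetic difference is that you apply H\"older directly to $\mathbf{1}_A$ on the trajectory space, so $\|\mathbf{1}_A\|_{L^q(\mathbb{P}_\mu)}=\mathbb{P}_\mu(A)^{1/q}$ holds with equality, whereas the paper first conditions on $s_1$ and uses $f^q\le f$ for $f\in[0,1]$; both yield the same bound.
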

Equivalently, for any $\delta \in (0,1)$, it can be guaranteed with probability at least $1-\delta$ that
\begin{align*}
\sum_{i=1}^n g_i(s_i) - \sum_{i=1}^n \mu(g_i) \lesssim \log^{\frac{1}{2}} \left(\left\|\frac{d\mathrm{\nu}}{\mathrm{d}\mu}\right\|_{\mu,p} \frac{1}{\delta}\right) \cdot \sqrt{\frac{q}{1-\lambda} \sum_{i=1}^n (b_i-a_i)^2}.
\end{align*}

\section{Proof of theoretical results regarding general Markov chains}

\subsection{Proof of Theorem \ref{thm:matrix-hoeffding}}\label{app:proof-matrix-hoeffding}
For simplicity, we firstly consider the case that $\nu = \mu$, and then generalize our result to any distribution $\nu$ that satisfies Assumption \ref{as:nu}. 
A classic Chernoff argument indicates
\begin{align}\label{eq:matrix-chernoff}
\mathbb{P}_{s_1 \sim \mu}\left(\left\|\frac{1}{n}\sum_{i=1}^n \bm{F}_i(s_i)\right\| \geq \varepsilon \right) &\leq 2\inf_{t \geq 0} \exp(-nt\varepsilon) \mathbb{E}\left[\mathsf{Tr}\left(\exp\left(t\sum_{i=1}^n \bm{F}_i(s_i)\right)\right)\right].
\end{align}
In order to bound the right-hand-side, \cite{garg2018matrixexpanderchernoff} illustrated in their Equation (11), through an application of the multi-matrix Golden-Thompson inequality, that there exists a probability distribution $\phi$ on the interval $[-\frac{\pi}{2},\frac{\pi}{2}]$, such that
\begin{align}\label{eq:multi-matrix-golden-thompson}
\mathsf{Tr}\left(\exp\left(t\sum_{i=1}^n \bm{F}_i(s_i)\right)\right) \leq d^{1-\frac{\pi}{4}}\int_{-\frac{\pi}{2}}^{\frac{\pi}{2}}\mathsf{Tr}\left[\prod_{i=1}^n \exp\left(\frac{2}{\pi}e^{\mathbf{i}\theta}t\bm{F}_i(s_i)\right)\prod_{i=n}^1 \exp\left(\frac{2}{\pi}e^{-\mathbf{i}\theta}t\bm{F}_i(s_i)\right)\right]\mathrm{d}\phi(\theta).
\end{align}
Furthermore, by repeatedly applying the basic properties of Kronecker products, the trace on the right-hand-side can be computed as 
\begin{align}\label{eq:matrix-hoeffding-kronecker}
&\mathsf{Tr}\left[\prod_{i=1}^n \exp\left(\frac{2}{\pi}e^{\mathbf{i}\theta}t\bm{F}_i(s_i)\right)\prod_{i=n}^1 \exp\left(\frac{2}{\pi}e^{-\mathbf{i}\theta}t\bm{F}_i(s_i)\right)\right] \nonumber \\ 
&= [\mathbf{vec}(\bm{I}_d)]^\top \left\{\prod_{i=n}^1 \exp\left(\frac{2}{\pi}e^{-\mathbf{i}\theta}t\bm{F}_i(s_i)\right) \otimes  \prod_{i=n}^1\exp\left(\frac{2}{\pi}e^{\mathbf{i}\theta}t\bm{F}_i(s_i)\right)\right\}\mathbf{vec}(\bm{I}_d) \nonumber \\ 
&= [\mathbf{vec}(\bm{I}_d)]^\top \prod_{i=n}^1\left\{\exp\left(\frac{2}{\pi}e^{-\mathbf{i}\theta}t\bm{F}_i(s_i)\right) \otimes \exp\left(\frac{2}{\pi}e^{\mathbf{i}\theta}t\bm{F}_i(s_i)\right)\right\}\mathbf{vec}(\bm{I}_d) \nonumber \\ 
&= [\mathbf{vec}(\bm{I}_d)]^\top \prod_{i=n}^1 \exp(t\bm{H}_i(s_i,\theta))\mathbf{vec}(\bm{I}_d),
\end{align}
where we define, for simplicity,
\begin{align}\label{eq:defn-Hi}
\bm{H}_i(s_i,\theta):= \frac{2}{\pi}e^{-\mathbf{i}\theta}\bm{F}_i(s_i) \otimes \bm{I}_{d^2} + \bm{I}_{d^2} \otimes \frac{2}{\pi}e^{\mathbf{i}\theta}\bm{F}_i(s_i).
\end{align}
Through the deduction of \eqref{eq:matrix-hoeffding-kronecker}, we applied properties (6), (1) and (7) in Theorem \ref{thm:Kronecker} in the second, third and fourth line respectively. 
It is easy to verify that the matrix function $\bm{H}_i$ defined as in \eqref{eq:defn-Hi} has the following properties:
\begin{align}\label{eq:Hi-properties}
&\mathbb{E}_{s_i \sim \mu} [\bm{H}_i(s_i,\theta)] = \bm{0}, \quad \forall i \in [n], \theta \in \left[-\frac{\pi}{2},\frac{\pi}{2}\right];\\ 
&\|\bm{H}_i(s_i,\theta)\| \leq \frac{4}{\pi} M_i, \quad \text{a.s.}, \quad \forall i \in [n],  \theta \in \left[-\frac{\pi}{2},\frac{\pi}{2}\right].
\end{align}

As a combination of \eqref{eq:matrix-chernoff}, \eqref{eq:multi-matrix-golden-thompson} and \eqref{eq:matrix-hoeffding-kronecker}, our goal is to bound
\begin{align*}
&\mathbb{P}\left(\left\|\frac{1}{n}\sum_{i=1}^n \bm{F}_i(s_i)\right\| \geq \varepsilon \right)  \\ 
&\leq 2d^{1-\frac{\pi}{4}} \inf_{t \geq 0} \exp(-nt\varepsilon) [\mathbf{vec}(\bm{I}_d)]^\top \mathbb{E}_{\theta \sim \phi} \mathbb{E}_s\left[\prod_{i=n}^1 \exp(t\bm{H}_i(s_i,\theta))\mathbf{vec}(\bm{I}_d)\right] \\ 
&\leq 2d^{1-\frac{\pi}{4}} \inf_{t \geq 0}  \sup_{\theta \in [-\frac{\pi}{2},\frac{\pi}{2}]} \exp(-nt\varepsilon) [\mathbf{vec}(\bm{I}_d)]^\top \mathbb{E}_s\left[\prod_{i=n}^1 \exp(t\bm{H}_i(s_i,\theta))\mathbf{vec}(\bm{I}_d)\right]
\end{align*}
Here, we use $\mathbb{E}_s$ to denote the expectation taken over all the samples in the Markov chain $s_1,s_2,\ldots,s_n$, where $s_1 \sim \mu$ and $s_{i+1} \sim P(\cdot \mid s_i)$ for all $1 \leq i < n$. In order to bound this expectation, we define, for any $t>0$ and $\theta  \in [-\frac{\pi}{2},\frac{\pi}{2}]$, a sequence of vector-valued functions $\{\bm{g}_k\}_{k \in [n]}$ taking values in $\mathbb{R}^{d^2}$, where 
\begin{align*}
&\bm{g}_0 (x) = \textbf{vec}(\bm{I}_d), \quad \text{and} \\ 
&\bm{g}_k (x) = \mathbb{E}\left[\prod_{i=k}^1 \exp(t\bm{H}_i(s_i,\theta))\textbf{vec}(\bm{I}_d)\bigg| s_{k+1} = x\right], \quad k \geq 1.
\end{align*}
In this way, the left-hand-side of \eqref{eq:markov-matrix-hoeffding} is bounded by
\begin{align*}
\mathbb{P}_{s_1 \sim \mu}\left(\left\|\frac{1}{n}\sum_{i=1}^n \bm{F}_i(s_i)\right\| \geq \varepsilon \right) &\leq   2d^{1-\frac{\pi}{4}} \inf_{t \geq 0}  \sup_{\theta \in [-\frac{\pi}{2},\frac{\pi}{2}]} \exp(-nt\varepsilon) [\mathbf{vec}(\bm{I}_d)]^\top \mu(\bm{g}_n) \\ 
&\leq 2d^{1-\frac{\pi}{4}} \inf_{t \geq 0}  \sup_{\theta \in [-\frac{\pi}{2},\frac{\pi}{2}]} \exp(-nt\varepsilon) \left\|\mathbf{vec}(\bm{I}_d)\right\|_2 \cdot \|\mu(\bm{g}_n)\|_2 \\ 
&= 2d^{1-\frac{\pi}{4}} \inf_{t \geq 0}  \sup_{\theta \in [-\frac{\pi}{2},\frac{\pi}{2}]} \exp(-nt\varepsilon) \sqrt{d} \cdot \|\bm{g}_n^{\parallel}\|_{\mu}
\end{align*}
where we applied the fact that $s_{n+1} \sim \mu$ when $s_1 \sim \mu$. Recall that by definition, $\mu(\bm{g}_n) = \mathbb{E}_{x \sim \mu}[\bm{g}_n(x)]$ and $\bm{g}_n^{\parallel}$ represents the constant function taking this value.
Next, we construct a recursive relation among the sequence $\{\bm{g}_k\}_{k \in [n]}$ for the purpose of bounding the norm of $\bm{g}_n$. Notice that, on one hand,
\begin{align*}
\exp\left(t\bm{H}_k(x,\theta)\right)\bm{g}_{k-1}(x) = \mathbb{E}\left[\prod_{i=k}^1 \exp(t\bm{H}_i(s_i))\bigg| s_{k} = x\right];
\end{align*}
on the other hand, 
\begin{align*}
\bm{g}_k(x) &= \mathbb{E}\left[\prod_{i=k}^1 \exp(t\bm{H}_i(s_i,\theta))\textbf{vec}(\bm{I}_d)\bigg| s_{k+1} = x\right] \\ 
&=  \int_{y \in \mathcal{S}} \mathbb{E}\left[\prod_{i=k}^1 \exp(t\bm{H}_i(s_i,\theta))\textbf{vec}(\bm{I}_d)\bigg| s_{k} = y\right] \mathrm{d}\mathbb{P}(s_k = y\mid s_{k+1} = x) \\ 
&= \int_{y \in \mathcal{S}} \left\{\mathbb{E}\left[\prod_{i=k}^1 \exp(t\bm{H}_i(s_i,\theta))\textbf{vec}(\bm{I}_d)\bigg| s_{k} = y\right] \right\}P^*(x,\mathrm{d}y);
\end{align*}
here the contents in the curly bracket is equal to $\exp\left(t\bm{H}_k(y,\theta)\right)\bm{g}_{k-1}(u)$. Therefore, the sequence $\{\bm{g}_k\}$ is featured by the recursive relation
\begin{align}\label{eq:markov-matrix-hoeffding-G-recursive}
\bm{g}_{k}(x) &= \int_{y \in \mathcal{S}}\exp\left(t\bm{H}_k(y,\theta)\right)\bm{g}_{k-1}(y) P^*(x,\mathrm{d}y) \nonumber \\ 
&= (\mathcal{P}^* (\exp(t\bm{H}_k)\bm{g}_{k-1}))(x).
\end{align}
Based on this recursive relationship, the following proposition comes in handy for bounding the norm of $\bm{g}_k$ recursively.

\begin{proposition}\label{prop:PE}
For any matrix function $\bm{g}: \mathcal{S} \to \mathbb{R}^{m}$ and any bounded symmetric matrix function $\bm{H}: \mathcal{S} \to \mathbb{S}^{m \times m}$ with  $\mu(\bm{H}) = \bm{0}$ and $\|\bm{F}(x)\| \leq M$ almost surely. Then the following hold for any $t > 0$:
\begin{align}
&\left\|\left(\mathcal{P}^* \exp(t\bm{H})\bm{g}^{\parallel}\right)^{\parallel}\right\|_{\mu} \leq \alpha_1 \|\bm{g}^{\parallel}\|_{\mu},\quad \text{where} \quad \alpha_1 = \exp(tM)-tM;\label{eq:PE1}\\
&\left\|\left(\mathcal{P}^* \exp(t\bm{H})\bm{g}^{\parallel}\right)^{\perp}\right\|_{\mu} \leq \alpha_2 \|\bm{g}^{\parallel}\|_{\mu},\quad \text{where} \quad \alpha_2 = \lambda(\exp(tM)-1);\label{eq:PE2}\\ 
&\left\|\left(\mathcal{P}^* \exp(t\bm{H})\bm{g}^{\perp}\right)^{\parallel}\right\|_{\mu} \leq \alpha_3 \|\bm{g}^{\perp}\|_{\mu},\quad \text{where} \quad \alpha_3 = \exp(tM)-1; \label{eq:PE3}\\ 
&\left\|\left(\mathcal{P}^* \exp(t\bm{H})\bm{g}^{\perp}\right)^{\perp}\right\|_{\mu} \leq \alpha_4 \|\bm{g}^{\parallel}\|_{\mu},\quad \text{where} \quad \alpha_4 = \lambda \exp(tM).\label{eq:PE4}
\end{align}
\end{proposition}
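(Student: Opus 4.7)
My plan is to treat the four bounds in a common framework built from two principles: (i) the operator $\mathcal{P}^*$ preserves constants, and by Proposition~\ref{prop:P-top} satisfies $\|\mathcal{P}^*h\|_{\mu}\leq\lambda\|h\|_{\mu}$ whenever $\mu(h)=\bm{0}$; and (ii) pointwise operator-norm bounds on $\exp(t\bm{H}(x))$ and its Taylor remainders, where the hypothesis $\mu(\bm{H})=\bm{0}$ is used to kill selected first-order terms. Throughout I will exploit the decomposition $\exp(t\bm{H})=\bm{I}+t\bm{H}+[\exp(t\bm{H})-\bm{I}-t\bm{H}]$ or the simpler $\exp(t\bm{H})=\bm{I}+[\exp(t\bm{H})-\bm{I}]$, depending on which telescoping makes the leading terms vanish after integration.

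For \eqref{eq:PE1}, since $\mathcal{P}^*$ fixes constants and $\mu$ is invariant under $\mathcal{P}^*$, I would first observe $(\mathcal{P}^*\exp(t\bm{H})\bm{g}^{\parallel})^{\parallel}=\mu(\exp(t\bm{H}))\bm{g}^{\parallel}$. Using $\mu(\bm{H})=\bm{0}$ together with the pointwise Taylor bound $\|\exp(t\bm{H}(x))-\bm{I}-t\bm{H}(x)\|\leq\exp(tM)-1-tM$ then yields $\|\mu(\exp(t\bm{H}))\|\leq 1+(\exp(tM)-1-tM)=\exp(tM)-tM$. For \eqref{eq:PE2}, subtracting the constant just obtained gives $(\mathcal{P}^*\exp(t\bm{H})\bm{g}^{\parallel})^{\perp}=\mathcal{P}^*[(\exp(t\bm{H})-\mu(\exp(t\bm{H})))\bm{g}^{\parallel}]$; the argument of $\mathcal{P}^*$ is mean-zero, so Proposition~\ref{prop:P-top} contributes the factor $\lambda$, and the best-constant-approximation inequality in $L_2(\mu)$ bounds $\|(\exp(t\bm{H})-\mu(\exp(t\bm{H})))\bm{g}^{\parallel}\|_{\mu}\leq\|(\exp(t\bm{H})-\bm{I})\bm{g}^{\parallel}\|_{\mu}\leq(\exp(tM)-1)\|\bm{g}^{\parallel}\|_{\mu}$.

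For \eqref{eq:PE3}, $(\mathcal{P}^*\exp(t\bm{H})\bm{g}^{\perp})^{\parallel}=\mu(\exp(t\bm{H})\bm{g}^{\perp})$, which---using $\mu(\bm{g}^{\perp})=\bm{0}$---I would rewrite as $\mu[(\exp(t\bm{H})-\bm{I})\bm{g}^{\perp}]$, so Jensen together with $\|\exp(t\bm{H}(x))-\bm{I}\|\leq\exp(tM)-1$ gives $\|\mu[(\exp(t\bm{H})-\bm{I})\bm{g}^{\perp}]\|_2\leq(\exp(tM)-1)\|\bm{g}^{\perp}\|_{\mu}$. For \eqref{eq:PE4} (read with $\|\bm{g}^{\perp}\|_{\mu}$ on the right-hand side, as the natural analogue), Proposition~\ref{prop:P-top} implies that $\mathcal{P}^*$ sends perpendicular functions to perpendicular functions, so $(\mathcal{P}^*\exp(t\bm{H})\bm{g}^{\perp})^{\perp}=\mathcal{P}^*[(\exp(t\bm{H})\bm{g}^{\perp})^{\perp}]$; spectral expansion contributes $\lambda$, orthogonal projection is non-expansive in $L_2(\mu)$, and the pointwise bound $\|\exp(t\bm{H}(x))\|\leq\exp(tM)$ (valid since $\bm{H}(x)$ is symmetric with $\|\bm{H}(x)\|\leq M$) yields $\|\exp(t\bm{H})\bm{g}^{\perp}\|_{\mu}\leq\exp(tM)\|\bm{g}^{\perp}\|_{\mu}$.

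The main subtlety---and what distinguishes \eqref{eq:PE1} from a naive $\exp(tM)$ bound---is exploiting $\mu(\bm{H})=\bm{0}$ to cancel the linear Taylor term inside $\mu(\exp(t\bm{H}))$, upgrading the remainder from order $\exp(tM)-1$ to $\exp(tM)-1-tM$, i.e.\ order $(tM)^2/2$ for small $tM$. This refinement is what will matter downstream: when the four estimates are iterated through the recursion \eqref{eq:markov-matrix-hoeffding-G-recursive}, the $\alpha_1$ and $\alpha_4$ terms contribute the ``diagonal'' Hoeffding-type exponent while the mixing cross-terms $\alpha_2\alpha_3$ decay quadratically in $tM$, giving the $1/20$ constant in Theorem~\ref{thm:matrix-hoeffding}.
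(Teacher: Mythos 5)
Your proposal is correct and follows essentially the same route as the paper's proof: project onto the parallel/perpendicular components, use $\mu(\bm{H})=\bm{0}$ to cancel the linear Taylor term in \eqref{eq:PE1}, invoke the spectral expansion $\lambda$ via Proposition \ref{prop:P-top} for the perpendicular outputs, and apply the pointwise bounds $\|\exp(t\bm{H}(x))-\bm{I}\|\leq \exp(tM)-1$ and $\|\exp(t\bm{H}(x))\|\leq \exp(tM)$; the minor variations (bounding the remainder rather than the series term by term, subtracting the mean rather than $\bm{I}$ before applying best-constant approximation) are cosmetic. You are also right that the right-hand side of \eqref{eq:PE4} should read $\|\bm{g}^{\perp}\|_{\mu}$, which is what the paper's own proof establishes.
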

\begin{proof} 
See Appendix \ref{proof-prop-PE}.
\end{proof}

A recursive relation can be constructed to bound the norm of $\bm{g}_n^{\parallel}$. Specifically, for every $k \in [n]$, \eqref{eq:markov-matrix-hoeffding-G-recursive},the triangle inequality and \eqref{eq:PE1}, \eqref{eq:PE2} guarantee that
\begin{align*}
\|\bm{g}_k^{\parallel}\|_{\mu} &= \left\|(\mathcal{P}^* \exp(t\bm{H}_k)\bm{g}_{k-1})^{\parallel}\right\|_{\mu} \\ 
&\leq \left\|(\mathcal{P}^* \exp(t\bm{H}_k)\bm{g}_{k-1}^{\parallel})^{\parallel}\right\|_{\mu} + \left\|(\mathcal{P}^* \exp(t\bm{H}_k))\bm{g}_{k-1}^{\perp})^{\parallel}\right\|_{\mu} \\ 
&\leq \alpha_{1k} \|\bm{g}_{k-1}^{\parallel}\|_{\mu} + \alpha_{2k} \|\bm{g}_{k-1}^{\perp}\|_{\mu};
\end{align*}
here, we define $\alpha_{1k} = \exp(\frac{4}{\pi}tM_k) - \frac{4}{\pi}tM_k$ and $\alpha_{2k} = \lambda(\exp(\frac{4}{\pi}tM_k)-1)$. In order to iteratively bound the norm of $\bm{g}_k^{\parallel}$, we also need to bound the norm of $\bm{g}_{k-1}^{\perp}$. Towards this end, we observe, due to \eqref{eq:markov-matrix-hoeffding-G-recursive}, the triangle inequality and \eqref{eq:PE3}, \eqref{eq:PE4} that for every $k \in [n]$,
\begin{align*}
\|\bm{g}_k^{\perp}\|_{\mu} &= \left\|(\mathcal{P}^* \exp(t\bm{H}_k))\bm{g}_{k-1})^{\perp}\right\|_{\mu} \\ 
&\leq \left\|(\mathcal{P}^* \exp(t\bm{H}_k))\bm{g}_{k-1}^{\parallel})^{\perp}\right\|_{\mu} + \left\|(\mathcal{P}^* \exp(t\bm{H}_k))\bm{g}_{k-1}^{\perp})^{\perp}\right\|_{\mu} \\ 
&\leq \alpha_{3k} \|\bm{g}_{k-1}^{\parallel}\|_{\mu} + \alpha_{4k} \|\bm{g}_{k-1}^{\perp}\|_{\mu}.
\end{align*}
Here again, we define $\alpha_{3k} = \exp(\frac{4}{\pi}tM_k) - 1$ and $\alpha_{4k} = \lambda \exp(\frac{4}{\pi}tM_k)$. For simplicity, we denote
\begin{align*}
&\bm{x}_0 = \begin{pmatrix}
\|\bm{g}_0^{\parallel}\|_{\mu} \\ 
\|\bm{g}_0^{\perp}\|_{\mu} 
\end{pmatrix}=\begin{pmatrix}
\sqrt{d} \\ 
0
\end{pmatrix}, \quad \text{and} \\ 
&\bm{x}_k = \underset{\bm{U}_j}{\underbrace{\begin{pmatrix}
\alpha_{1k} & \alpha_{3k} \\ 
\alpha_{3k} & \alpha_{4k}
\end{pmatrix}}} \bm{x}_{k-1}, \quad \forall k \in [n].
\end{align*}
It can then be easily verified through an induction argument that 
\begin{align*}
\|\bm{g}_n^{\parallel}\|_{\mu} \leq x_{n1}.
\end{align*}
Notice here that we applied the fact $\alpha_{2k}< \alpha_{3k}$, since $\lambda < 1$. Consequently, 
\begin{align*}
\|\bm{g}_n^{\parallel}\|_{\mu} &\leq x_{n1} 
\leq \|\bm{x}_n\|_2 
= \left\|\prod_{k=1}^n \bm{U}_k \bm{x}_0\right\|_2 
 \leq \prod_{k=1}^n \|\bm{U}_k\| \|\bm{x}_0\|_2 =  \sqrt{d} \cdot \prod_{k=1}^n \|\bm{U}_k\|.
\end{align*}
Since $\bm{U}_k$ is a symmetric $2 \times 2$ matrix, its operator norm is featured by
\begin{align*}
\|\bm{U}_k\| = \max\{|\sigma_{k1}|,|\sigma_{k2}|\}
\end{align*}
where $\sigma_{k1}$ and $\sigma_{k2}$ are the two eigenvalues of $\bm{U}_k$. Recall from elementary linear algebra that $\sigma_{j1}$ and $\sigma_{j2}$ are solutions to the equation
\begin{align*}
(\alpha_{1k} - x)(\alpha_{4k}-x) -\alpha_{3k}^2 = 0.
\end{align*}
Since $\alpha_{1k} > 0$ and $\alpha_{4k} > 0$, it can be easily verified that
\begin{align*}
\|\bm{U}_k\| = \frac{\alpha_{1k} + \alpha_{4k}}{2} + \frac{\sqrt{(\alpha_{1k} - \alpha_{4k})^2 + 4\alpha_{3k}^2}}{2}.
\end{align*}
The following lemma comes in handy for bounding the norm of $\bm{U}_k$.
\begin{lemma}\label{lemma:Uk}
For any $x>0$ and $\lambda \in (0,1)$, denote $\alpha_1 = e^x - x$, $\alpha_3 = e^x - 1$ and $\alpha_4 = \lambda e^x$. It can then be guaranteed that
\begin{align*}
\frac{\alpha_1 + \alpha_4}{2} + \frac{\sqrt{(\alpha_1 - \alpha_4)^2 + 4\alpha_3^2}}{2} \leq \exp\left(\frac{5}{1-\lambda}x^2\right).
\end{align*}
\end{lemma}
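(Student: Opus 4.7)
The plan is to exploit the structure of $\lambda_{\max}$ for a $2 \times 2$ symmetric matrix. Writing $E := \exp(5x^2/(1-\lambda))$ and $U := \begin{pmatrix} \alpha_1 & \alpha_3 \\ \alpha_3 & \alpha_4 \end{pmatrix}$, the claim $\lambda_{\max}(U) \leq E$ is equivalent, via the quadratic formula applied to the characteristic polynomial of $U$, to the conjunction
\begin{align*}
E \;\geq\; \tfrac{\alpha_1+\alpha_4}{2} \qquad \text{and} \qquad (E-\alpha_1)(E-\alpha_4) \;\geq\; \alpha_3^2.
\end{align*}
I will verify the stronger pair $E \geq \alpha_1$ and $E \geq \alpha_4$ separately, together with the determinant-type inequality.

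For $E \geq \alpha_1 = e^x - x$, both sides equal $1$ at $x = 0$, and the derivative of the difference equals $(10x/(1-\lambda))E - (e^x - 1)$. Using the elementary bound $e^x - 1 \leq x e^x$, showing that this derivative is non-negative reduces to verifying $10/(1-\lambda) \geq e^{x - 5x^2/(1-\lambda)}$; the right-hand side is maximized at $x = (1-\lambda)/10$ with value $e^{(1-\lambda)/20} \leq e^{1/20}$, which is plainly bounded by $10/(1-\lambda)$. For $E \geq \alpha_4 = \lambda e^x$, the function $g(x) := E - \lambda e^x$ is positive at $0$, decreases initially, and attains a unique interior minimum at $x^{\star}$ characterized by $(10 x^{\star}/(1-\lambda))E(x^{\star}) = \lambda e^{x^{\star}}$. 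Combining this relation with the classical inequality $-\log\lambda \geq 1-\lambda \geq (1-\lambda)/20$ forces $x^{\star} \leq (1-\lambda)/10$, whence $g(x^{\star}) = E(x^{\star})\bigl(1 - 10x^{\star}/(1-\lambda)\bigr) \geq 0$.

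The main obstacle is the determinant inequality $(E-\alpha_1)(E-\alpha_4) \geq \alpha_3^2$. Setting $\phi(x) := (E-\alpha_1)(E-\alpha_4) - \alpha_3^2$, a direct Taylor expansion at the origin gives $\phi(0) = \phi'(0) = 0$ and $\phi''(0) = 7 + \lambda > 0$, so the inequality is comfortable near $x = 0$. To extend it globally, the plan is to combine the lower bound $E - 1 \geq 5x^2/(1-\lambda)$ (from $e^y \geq 1+y$) with the derived bound $\alpha_1 - 1 = e^x - 1 - x \leq x\alpha_3$ (obtained by integrating $e^t - 1 \leq \alpha_3$ over $[0,x]$) and the identity $\alpha_4 = \lambda + \lambda\alpha_3$, which together yield controllable lower bounds on the two factors in a moderate-$x$ regime, reducing the inequality to an explicit polynomial-in-$x$ bound. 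In the large-$x$ regime, the crude estimate $\lambda_{\max}(U) \leq \max(\alpha_1,\alpha_4) + \alpha_3 \leq 2 e^x$ is already dominated by $E$ since $5x^2/(1-\lambda) - x \to \infty$; matching the two regimes at a suitable threshold depending on $\lambda$ then closes the argument. The delicate point is that the coefficient $5$ in the exponent must simultaneously absorb the second-derivative gap near $0$ and dominate the exponential growth of $\alpha_1, \alpha_3, \alpha_4$ for all $x > 0$ and $\lambda \in (0,1)$, which is what makes a uniform argument non-trivial.
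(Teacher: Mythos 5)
Your reformulation of the claim as $\lambda_{\max}(U)\leq E$ for $U=\begin{pmatrix}\alpha_1 & \alpha_3\\ \alpha_3 & \alpha_4\end{pmatrix}$, and the equivalence with $E\geq\tfrac{\alpha_1+\alpha_4}{2}$ together with $(E-\alpha_1)(E-\alpha_4)\geq\alpha_3^2$, is correct, and your verifications of $E\geq\alpha_1$ and $E\geq\alpha_4$ are sound. The genuine gap is that the determinant inequality --- which you yourself identify as the main obstacle --- is never actually proved; what you give is a plan, and the specific bounds you propose do not close it. Concretely, in the moderate regime your two lower bounds are $E-\alpha_1\geq \tfrac{5x^2}{1-\lambda}-x\alpha_3$ and $E-\alpha_4\geq(1-\lambda)+\tfrac{5x^2}{1-\lambda}-\lambda\alpha_3$. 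As $x\to 0$ the target is an inequality at order $x^2$ (both $(E-\alpha_1)(E-\alpha_4)$ and $\alpha_3^2$ are $\Theta(x^2)$, with ratio $4-\tfrac{1-\lambda}{2}$), so the constant term $(1-\lambda)$ in the second factor must be retained; but that term is swamped by $-\lambda\alpha_3$ as soon as $x\gtrsim 1-\lambda$, which forces a further case split in the coupled variables $(x,1-\lambda)$ that your sketch does not describe. If one instead discards the constant term via AM--GM, the product of your two lower bounds is only $\Theta(x^3)$, which is beaten by $\alpha_3^2=\Theta(x^2)$ for small $x$. So the "explicit polynomial-in-$x$ bound" you allude to is not obtained by the stated combination, and the matching of regimes is left entirely open. (Your large-$x$ Gershgorin bound $\lambda_{\max}(U)\leq 2e^x\leq E$ does work, in fact with a threshold independent of $\lambda$ since $\tfrac{5x^2}{1-\lambda}\geq 5x^2\geq x+\log 2$ for $x\geq 0.49$; the problem is purely the window below that threshold.)

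For comparison, the paper sidesteps the determinant inequality altogether. It rewrites the eigenvalue as
\begin{align*}
\lambda_{\max}(U)=\alpha_1+\frac{2\alpha_3^2}{\sqrt{(\alpha_1-\alpha_4)^2+4\alpha_3^2}+(\alpha_1-\alpha_4)},
\end{align*}
shows by a monotonicity argument that the denominator is at least its value $2(1-\lambda)$ at $x=0$, and thereby reduces the lemma to the scalar inequality $(e^x-x)+\tfrac{(e^x-1)^2}{1-\lambda}\leq\exp\!\big(\tfrac{5x^2}{1-\lambda}\big)$. That scalar inequality decouples cleanly: for $x\in(0,1)$ it follows from $e^x-x<1+x^2$, $e^x-1<2x$ and $1+y\leq e^y$, and for $x>1$ from monotonicity of the difference. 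If you want to salvage your route, the analogous move would be to divide the determinant inequality by $E-\alpha_4$ and find a usable lower bound on that factor; note however that $E-\alpha_4\geq 1-\lambda$ is \emph{false} for small $x>0$ (the difference equals $1-\lambda$ at $x=0$ with negative derivative $-\lambda$ there), which is precisely why the paper's denominator $\sqrt{(\alpha_1-\alpha_4)^2+4\alpha_3^2}+(\alpha_1-\alpha_4)$, and not $2(E-\alpha_4)$, is the right object to bound from below.
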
  
\begin{proof} 
See Appendix \ref{app:proof-lemma-Uk}.
\end{proof}

As a direct implication of lemma \ref{lemma:Uk}, the norm of $\bm{U}_k$ is bounded by
\begin{align*}
\|\bm{U}_k\| \leq \exp\left(\frac{5}{1-\lambda} (\frac{4}{\pi}t)^2 M_k^2\right);
\end{align*}
consequently, the norm of $\bm{g}_n^{\parallel}$ is bounded by
\begin{align*}
\|\bm{g}_n^{\parallel}\|_{\mu} &\leq \sqrt{d}\exp\left(\left(\frac{4}{\pi}\right)^2\frac{5t^2}{1-\lambda} \sum_{k=1}^n M_k^2 \right) 
\end{align*}
for any $t \geq 0$ and $\theta \in \left[-\frac{\pi}{2},\frac{\pi}{2}\right]$. Therefore, the left-hand-side of \eqref{eq:markov-matrix-hoeffding} is bounded by
\begin{align*}
\mathbb{P}_{s_1 \sim \mu}\left(\left\|\frac{1}{n}\sum_{i=1}^n \bm{F}_i(s_i)\right\| \geq \varepsilon \right) &\leq   2d^{2-\frac{\pi}{4}} \inf_{t \geq 0}  \exp(-nt\varepsilon)\exp\left(\left(\frac{4}{\pi}\right)^2\frac{5t^2}{1-\lambda} \sum_{k=1}^n M_k^2 \right).
\end{align*}

By letting
\begin{align*}
t = \frac{1-\lambda}{10} \left(\frac{\pi}{4}\right)^2 \frac{n}{\sum_{k=1}^n M_k^2} \cdot \varepsilon,
\end{align*}
we obtain
\begin{align*}
\mathbb{P}_{s_1 \sim \mu}\left(\left\|\frac{1}{n}\sum_{i=1}^n \bm{F}_i(s_i)\right\| \geq \varepsilon \right) &\leq 2d^{2-\frac{\pi}{4}} \exp\left\{-\frac{1-\lambda}{20}\left(\frac{\pi}{4}\right)^2\frac{n^2\varepsilon^2 }{\sum_{k=1}^n M_k^2} \right\}
\end{align*}
which completes the analysis for the case that $\nu = \mu$. In order to extend this result to general $\nu$, we define functions $f(x)$ and $g(x)$ as 
\begin{align*}
f(x):= \mathbb{P}\left(\left\|\frac{1}{n}\sum_{i=1}^n \bm{F}_i(s_i)\right\| \geq \varepsilon \Bigg| s_1 = x\right), \quad \text{and} \quad g(x) := \frac{\nu(\mathrm{d}x)}{\mu(\mathrm{d}x)}
\end{align*}
respectively. Then by definition, $g(x) \geq 0$ and $f(x) \in [0,1]$ for all $x \in \mathcal{S}$. Therefore, the Holder's inequality guarantees
\begin{align}\label{eq:matrix-hoeffding-holder}
\mathbb{P}_{s_1 \sim \nu}\left(\left\|\frac{1}{n}\sum_{i=1}^n \bm{F}_i(s_i)\right\| \geq \varepsilon \right)&=\int_{\mathcal{S}}f(x) \nu(\mathrm{d}x)\nonumber \\
&= \int_{\mathcal{S}}f(x) g(x) \mu(\mathrm{d}x) \nonumber \\ 
&=\|fg\|_{\mu,1} \leq \|f\|_{\mu,q} \|g\|_{\mu,p} = \|f\|_{\mu,q}\left\|\frac{\mathrm{d}\nu}{\mathrm{d}\mu}\right\|_{\mu,p},
\end{align}
where, since $q > 1$ and $f(x) \in [0,1]$ almost surely, $\|f\|_{\mu,q}$ is bounded by
\begin{align}\label{eq:matrix-hoeffding-qbound}
\|f\|_{\mu,q} &= \left(\int_{\mathcal{S}}f^q(x)\mathrm{d}\mu \right)^{\frac{1}{q}}  
\leq \left(\int_{\mathcal{S}}f(x)\mathrm{d}\mu \right)^{\frac{1}{q}}  
\leq \left[\mathbb{P}_{s_1 \sim \mu}\left(\left\|\frac{1}{n}\sum_{i=1}^n \bm{F}_i(s_i)\right\| \geq \varepsilon \right)\right]^{\frac{1}{q}}.
\end{align}
Theorem \ref{thm:matrix-hoeffding} follows immediately.

\subsection{Proof of Theorem \ref{thm:Srikant-generalize}}\label{app:Srikant-generalize}
This proof is a correction and improvement of Theorem 1 in \cite{srikant2024rates}, which applies Stein's method and Lindeberg's decomposition.
Using notation from \cite{JMLR2019CLT}, we denote, for every $k \in [n]$,
\begin{align*}
\bm{S}_k = \sum_{j=1}^k \bm{x}_k,  \quad \bm{T}_k = \sum_{j=k}^n \bm{V}_j^{\frac{1}{2}}\bm{z}_j, \quad \text{and} \quad \bm{T}_k' = \bm{T}_k + \bm{\Sigma}^{\frac{1}{2}}\bm{z}'.
\end{align*}
Here, $\{\bm{z}_k\}_{k=1}^n$ and $\bm{z}'$ are $i.i.d.$ standard Gaussian random variables in $\mathbb{R}^d$ and independent of the filtration $\{\mathscr{F}_k\}_{k=0}^n$. Note that since we assume $\bm{P}_1 = n\bm{\Sigma}_n$ almost surely, and that $\{\bm{V}_j\}_{1 \leq j \leq k}$ are all measurable with respect to $\mathscr{F}_{k-1}$, the matrices 
\begin{align*}
&\bm{P}_k = \bm{P}_1 - \sum_{j=1}^{k-1} \bm{V}_j = n\bm{\Sigma}_n - \sum_{j=1}^{k-1} \bm{V}_j, \quad \text{and} \\ 
&\bm{P}_{k+1} = \bm{P}_1 - \sum_{j=1}^{k} \bm{V}_j = n\bm{\Sigma}_n - \sum_{j=1}^{k} \bm{V}_j
\end{align*}
are also measurable with respect to $\mathscr{F}_{k-1}$. 
Next, since $\frac{1}{\sqrt{n}}\bm{T}_1 \sim \mathcal{N}(\bm{0},\bm{\Sigma}_n)$, we have that 
\begin{align}\label{eq:Srikant-Wasserstein}
d_{\mathsf{W}}\left(\frac{1}{\sqrt{n}}\sum_{k=1}^n \bm{x}_k,\mathcal{N}(\bm{0},\bm{\Sigma}_n)\right) &= \sup_{h \in \mathsf{Lip}_1} \left|\mathbb{E}\left[h\left(\frac{1}{\sqrt{n}}\bm{S}_n\right)\right] - \mathbb{E}\left[h\left(\frac{1}{\sqrt{n}}\bm{T}_1\right)\right]\right| \nonumber \\ 
&= \frac{1}{\sqrt{n}} \sup_{h \in \mathsf{Lip}_1}|\mathbb{E}[h(\bm{S}_n)]-\mathbb{E}[h(\bm{T}_1)]|\nonumber \\ 
&\leq \frac{1}{\sqrt{n}} \sup_{h \in \mathsf{Lip}_1}|\mathbb{E}[h(\bm{S}_n)]-\mathbb{E}[h(\bm{T}_1')]| + \frac{1}{\sqrt{n}} \sup_{h \in \mathsf{Lip}_1}|\mathbb{E}[h(\bm{T}_1)]-\mathbb{E}[h(\bm{T}_1')]|\nonumber \\ 
&\overset{(i)}\leq \frac{1}{\sqrt{n}} \sup_{h \in \mathsf{Lip}_1}|\mathbb{E}[h(\bm{S}_n)]-\mathbb{E}[h(\bm{T}_1')]| + \frac{1}{\sqrt{n}} \sup_{h \in \mathsf{Lip}_1} \mathbb{E}|h(\bm{T}_1) - h(\bm{T}_1')| \nonumber \\ 
&\overset{(ii)}\leq \frac{1}{\sqrt{n}} \sup_{h \in \mathsf{Lip}_1}|\mathbb{E}[h(\bm{S}_n)]-\mathbb{E}[h(\bm{T}_1')]| + \frac{1}{\sqrt{n}} \mathbb{E}\|\bm{\Sigma}^{\frac{1}{2}}\bm{z}\|_2 \nonumber \\ 
&\overset{(iii)}\leq \frac{1}{\sqrt{n}} \sup_{h \in \mathsf{Lip}_1}|\mathbb{E}[h(\bm{S}_n)]-\mathbb{E}[h(\bm{T}_1')]| + \sqrt{\frac{\mathsf{Tr}(\bm{\Sigma})}{n}},
\end{align}
where we have used Jensen's inequality in steps (i) and (iii) and the Lipchitz property in (ii). 
Using Lindeberg's swapping and the triangle inequality, we obtain that 
\begin{align}\label{eq:Srikant-Lindeberg}
|\mathbb{E}[h(\bm{S}_n)]-\mathbb{E}[h(\bm{T}_1')]| &\leq \sum_{k=1}^n |\mathbb{E}[h(\bm{S}_k+\bm{T}_{k+1}')]-\mathbb{E}[h(\bm{S}_{k-1}+\bm{T}_k')]|,
\end{align}
where we define $\bm{S}_0=\bm{T}_{n+1}=0$ for consistency. For each $k \in [n]$, define
\begin{align*}
\tilde{h}_k(\bm{x}):= h(\bm{P}_k'^{\frac{1}{2}}\bm{x}+\bm{S}_{k-1}),
\end{align*}
where we denote, for simplicity, $$\bm{P}_k' = \bm{P}_k + \bm{\Sigma}.$$ 
Since $h \in \mathsf{Lip}_1$, $\tilde{h}_k$ has Lipchitz coefficient $L_k = \|\bm{P}_k'^{\frac{1}{2}}\|$. 
Let  $f_k: \mathbb{R}^d \to \mathbb{R}$ be such that
\begin{align*}
\tilde{h}_k(\bm{x})-\mathbb{E}[\tilde{h}_k(\bm{z})] = \Delta f_k(\bm{x}) - \bm{x}^\top \nabla f_k(\bm{x})
\end{align*}
holds for any $\bm{x} \in \mathbb{R}^d$, where $\bm{z}$ is the $d$-dimensional standard Gaussian random variable; $f_k$ is the solution to the multivariate Stein equation, and therefore well defined and measurable with respect to $\mathcal{F}_{k-1}$ \citep[see, e.g.,][Section 4.3]{Nourdin_Peccati_2012}. 
Therefore, 
\begin{align*}
&h(\bm{S}_k + \bm{T}_{k+1}') -\mathbb{E}[h(\bm{S}_{k-1} + \bm{T}_k')\mid \mathscr{F}_{k-1}]\\ 
&= h(\bm{S}_k + \bm{T}_{k+1}') -\mathbb{E}[h(\bm{S}_{k-1} + \bm{P}_k'^{\frac{1}{2}}\bm{z})\mid \mathscr{F}_{k-1}] \\ 
&= \tilde{h}_k(\bm{P}_k'^{-\frac{1}{2}}(\bm{x}_k + \bm{T}_{k+1}'))-\mathbb{E}[\tilde{h}_k(\bm{z})] \\ 
&= \Delta f_k(\bm{P}_k'^{-\frac{1}{2}}(\bm{x}_k + \bm{T}_{k+1})) - \left[\bm{P}_k'^{-\frac{1}{2}}(\bm{x}_k + \bm{T}_{k+1}')\right]^\top \nabla f_k(\bm{P}_k'^{-\frac{1}{2}}(\bm{x}_k + \bm{T}_{k+1}')).
\end{align*}
Notice that by definition, $\bm{P}_k'^{-\frac{1}{2}}\bm{T}_{k+1} \mid \mathscr{F}_{k-1} \sim \mathcal{N}(\bm{0},\bm{P}_k'^{-\frac{1}{2}}\bm{P}_{k+1}'\bm{P}_k'^{-\frac{1}{2}})$. Below we let $\tilde{\bm{z}}_k$ be a random vector such that $\tilde{\bm{z}}_k \mid \mathscr{F}_{k-1} \sim \mathcal{N}(\bm{0},\bm{P}_k'^{-\frac{1}{2}}\bm{P}_{k+1}'\bm{P}_k'^{-\frac{1}{2}})$ and independent of $\bm{x}_k$  conditionally on $\mathscr{F}_{k-1}$. By taking conditional expectations, we obtain
\begin{align*}
&\mathbb{E}\left[h(\bm{S}_k + \bm{T}_{k+1}')\mid \mathscr{F}_{k-1}\right] - \mathbb{E}[h(\bm{S}_{k-1} + \bm{T}_k')\mid \mathscr{F}_{k-1}]\\ 
&=\mathbb{E}[\Delta f_k(\bm{P}_k'^{-\frac{1}{2}}\bm{x}_k + \tilde{\bm{z}}_k)\mid \mathscr{F}_{k-1}]- \mathbb{E}\left[(\bm{P}_k'^{-\frac{1}{2}}\bm{x}_k + \tilde{\bm{z}}_k)^\top \nabla f_k(\bm{P}_k'^{-\frac{1}{2}}\bm{x}_k + \tilde{\bm{z}}_k)\mid \mathscr{F}_{k-1}\right].
\end{align*}
Since, by definition, $\Delta f_k = \mathsf{Tr}(\nabla^2 f_k)$, this can be further decomposed 
\begin{align}\label{eq:srikant-decompose}
&\mathbb{E}\left[h(\bm{S}_k + \bm{T}_{k+1}')\mid \mathscr{F}_{k-1}\right] - \mathbb{E}[h(\bm{S}_{k-1} + \bm{T}_k')\mid \mathscr{F}_{k-1}]\nonumber \\ 
&= \underset{I_1}{\underbrace{\mathbb{E}\left[\mathsf{Tr}\left(\bm{P}_k'^{-\frac{1}{2}}\bm{P}_{k+1}'\bm{P}_k'^{-\frac{1}{2}}\nabla^2 f_k(\bm{P}_k'^{-\frac{1}{2}}\bm{x}_k + \tilde{\bm{z}}_k)\right)-\tilde{\bm{z}}_k^\top \nabla f_k(\bm{P}_k'^{-\frac{1}{2}}\bm{x}_k + \tilde{\bm{z}}_k) \bigg| \mathscr{F}_{k-1}\right]}} \nonumber \\ 
&+ \underset{I_2}{\underbrace{\mathbb{E}\left[\mathsf{Tr}\left(\left(\bm{I}-\bm{P}_k'^{-\frac{1}{2}}\bm{P}_{k+1}'\bm{P}_k'^{-\frac{1}{2}}\right)\nabla^2 f_k(\bm{P}_k'^{-\frac{1}{2}}\bm{x}_k + \tilde{\bm{z}}_k)\right)\bigg|\mathscr{F}_{k-1}\right]}} \nonumber \\ 
&- \underset{I_3}{\underbrace{\mathbb{E}\left[(\bm{P}_k'^{-\frac{1}{2}}\bm{x}_k)^\top \nabla f_k(\tilde{\bm{z}}_k) \bigg|\mathscr{F}_{k-1}\right]}} - \underset{I_4}{\underbrace{\mathbb{E}\left[(\bm{P}_k'^{-\frac{1}{2}}\bm{x}_k)^\top \left(\nabla f_k(\bm{P}_k'^{-\frac{1}{2}}\bm{x}_k + \tilde{\bm{z}}_k)- \nabla f_k(\tilde{\bm{z}}_k) \right)\bigg|\mathscr{F}_{k-1}\right]}}.
\end{align}
Of the four terms on the right-hand-side, $I_1$ is equal to $\bm{0}$ according to Lemma 2 in \cite{srikant2024rates}; $I_3$ is also $\bm{0}$ according to martingale property. Below, we will handle $I_2$ and $I_4$.

\paragraph{Controlling $I_2$} We first make the observation that
\begin{align*}
\bm{I}-\bm{P}_k'^{-\frac{1}{2}}\bm{P}_{k+1}'\bm{P}_k'^{-\frac{1}{2}}=\bm{P}_k'^{-\frac{1}{2}}(\bm{P}_k'-\bm{P}_{k+1}')\bm{P}_k'^{-\frac{1}{2}}=\bm{P}_k'^{-\frac{1}{2}}\bm{V}_k\bm{P}_k'^{-\frac{1}{2}};
\end{align*}
Therefore, the term $I_2$ can be represented as
\begin{align}\label{eq:Srikant-I2}
&\mathbb{E}\left[\mathsf{Tr}\left(\left(\bm{I}-\bm{P}_k'^{-\frac{1}{2}}\bm{P}_{k+1}'\bm{P}_k'^{-\frac{1}{2}}\right)\nabla^2 f_k(\bm{P}_k'^{-\frac{1}{2}}\bm{x}_k + \tilde{\bm{z}}_k)\right)\bigg|\mathscr{F}_{k-1}\right] \nonumber\\ 
&= \mathbb{E}\left[\mathsf{Tr}\left(\bm{P}_k'^{-\frac{1}{2}}\bm{V}_k\bm{P}_k'^{-\frac{1}{2}} \nabla^2 f_k(\bm{P}_k'^{-\frac{1}{2}}\bm{x}_k + \tilde{\bm{z}}_k)\right)\bigg|\mathscr{F}_{k-1}\right]\nonumber\\ 
&\overset{(i)}{=}\mathsf{Tr}\left\{\bm{P}_k'^{-\frac{1}{2}}\bm{V}_k\bm{P}_k'^{-\frac{1}{2}}\mathbb{E}\left[\nabla^2 f_k(\bm{P}_k'^{-\frac{1}{2}}\bm{x}_k + \tilde{\bm{z}}_k)\bigg|\mathscr{F}_{k-1}\right]\right\} \nonumber\\ 
&= \mathsf{Tr}\left\{\bm{P}_k'^{-\frac{1}{2}}\bm{V}_k\bm{P}_k'^{-\frac{1}{2}}\mathbb{E}\left[\nabla^2 f_k(\tilde{\bm{z}}_k)\bigg|\mathscr{F}_{k-1}\right]\right\}\nonumber \\ 
&+ \mathsf{Tr}\left\{\bm{P}_k'^{-\frac{1}{2}}\bm{V}_k\bm{P}_k'^{-\frac{1}{2}}\mathbb{E}\left[\nabla^2 f_k(\bm{P}_k'^{-\frac{1}{2}}\bm{x}_k + \tilde{\bm{z}}_k)-\nabla^2 f_k(\tilde{\bm{z}}_k)\bigg|\mathscr{F}_{k-1}\right]\right\},
\end{align}
where we invoked the linearity of trace and expectation in (i). 


\paragraph{Controlling $I_4$} For clarity, we define a uni-dimensional function
\begin{align*}
F(t) := (\bm{P}_k'^{-\frac{1}{2}}\bm{x}_k)^\top \nabla f_k(t\bm{P}_k'^{-\frac{1}{2}}\bm{x}_k + \tilde{\bm{z}}_k), \quad \forall t \in [0,1].
\end{align*}
Since $f_k$ is twice-differentiable, the derivative of $F(t)$ is given by
\begin{align*}
F'(t) = (\bm{P}_k'^{-\frac{1}{2}}\bm{x}_k)^\top \nabla^2 f_k(t\bm{P}_k'^{-\frac{1}{2}}\bm{x}_k + \tilde{\bm{z}}_k) (\bm{P}_k'^{-\frac{1}{2}}\bm{x}_k).
\end{align*}
Consequently, the Lagrange's mean-value theorem guarantees the existence of $\Theta  \in [0,1]$, such that
{\color{black}
\begin{align*}
&(\bm{P}_k'^{-\frac{1}{2}}\bm{x}_k)^\top \left(\nabla f_k(\bm{P}_k'^{-\frac{1}{2}}\bm{x}_k + \tilde{\bm{z}}_k)- \nabla f_k(\tilde{\bm{z}}_k) \right) \\
&= F(1) - F(0) \\ 
&= \int_0^1 F'(t)\mathrm{d}t \\ 
&= \int_0^1 (\bm{P}_k'^{-\frac{1}{2}}\bm{x}_k)^\top \nabla^2 f_k(t\bm{P}_k'^{-\frac{1}{2}}\bm{x}_k + \tilde{\bm{z}}_k) (\bm{P}_k'^{-\frac{1}{2}}\bm{x}_k)\mathrm{d}t.
\end{align*}
}
Consequently, the term $I_4$ is characterized by
{\color{black}
\begin{align}\label{eq:Srikant-I4}
&\mathbb{E}\left[(\bm{P}_k'^{-\frac{1}{2}}\bm{x}_k)^\top \left(\nabla f_k(\bm{P}_k'^{-\frac{1}{2}}\bm{x}_k + \tilde{\bm{z}}_k)- \nabla f_k(\tilde{\bm{z}}_k) \right)\bigg|\mathscr{F}_{k-1}\right]\nonumber \\ 
&=\mathbb{E}\left[\int_0^1 (\bm{P}_k'^{-\frac{1}{2}}\bm{x}_k)^\top \nabla^2f_k(t \bm{P}_k'^{-\frac{1}{2}}\bm{x}_k + \tilde{\bm{z}}_k)(\bm{P}_k'^{-\frac{1}{2}}\bm{x}_k) \mathrm{d}t\bigg|\mathscr{F}_{k-1}\right]\nonumber \\
&= \mathbb{E}\left[(\bm{P}_k'^{-\frac{1}{2}}\bm{x}_k)^\top \nabla^2f_k(\tilde{\bm{z}}_k)(\bm{P}_k'^{-\frac{1}{2}}\bm{x}_k) \bigg|\mathscr{F}_{k-1}\right] \nonumber \\ 
&+ \mathbb{E}\left[\int_0^1(\bm{P}_k'^{-\frac{1}{2}}\bm{x}_k)^\top (\nabla^2f_k(t \bm{P}_k'^{-\frac{1}{2}}\bm{x}_k + \tilde{\bm{z}}_k) - \nabla^2 f_k(\tilde{\bm{z}}_k))(\bm{P}_k'^{-\frac{1}{2}}\bm{x}_k)\mathrm{d}t \bigg|\mathscr{F}_{k-1}\right],
\end{align}
}
where $\Theta \in (0,1)$. Here, the first term on the right-most part of the equation can be further computed by
\begin{align}\label{eq:Srikant-I41}
&\mathbb{E}\left[(\bm{P}_k'^{-\frac{1}{2}}\bm{x}_k)^\top \nabla^2f_k(\tilde{\bm{z}}_k)(\bm{P}_k'^{-\frac{1}{2}}\bm{x}_k) \bigg|\mathscr{F}_{k-1}\right] \nonumber \\
&= \mathbb{E}\left[\mathsf{Tr}\left((\bm{P}_k'^{-\frac{1}{2}}\bm{x}_k)^\top \nabla^2f_k(\tilde{\bm{z}}_k)(\bm{P}_k'^{-\frac{1}{2}}\bm{x}_k)\right)\bigg|\mathscr{F}_{k-1}\right]\nonumber \\ 
&\overset{(i)}{=} \mathbb{E}\left[\mathsf{Tr}\left((\bm{P}_k'^{-\frac{1}{2}}\bm{x}_k)(\bm{P}_k'^{-\frac{1}{2}}\bm{x}_k)^\top \nabla^2f_k(\tilde{\bm{z}}_k)\right)\bigg|\mathscr{F}_{k-1}\right]\nonumber\\
&\overset{(ii)}{=} \mathsf{Tr}\left\{\mathbb{E}\left[\bm{P}_k'^{-\frac{1}{2}}\bm{x}_k\bm{x}_k^\top \bm{P}_k'^{-\frac{1}{2}}\nabla^2f_k(\tilde{\bm{z}}_k)\bigg|\mathscr{F}_{k-1}\right]\right\}\nonumber \\ 
&\overset{(iii)}{=}\mathsf{Tr}\left\{\mathbb{E}[\bm{P}_k'^{-\frac{1}{2}}\bm{x}_k\bm{x}_k^\top \bm{P}_k'^{-\frac{1}{2}} \mid \mathscr{F}_{k-1}] \mathbb{E}[\nabla^2f_k(\tilde{\bm{z}}_k)\bigg|\mathscr{F}_{k-1}]\right\}\nonumber \\
&\overset{(iv)}{=}\mathsf{Tr}\left\{\bm{P}_k'^{-\frac{1}{2}}\bm{V}_k \bm{P}_k'^{-\frac{1}{2}}\mathbb{E}[\nabla^2f_k(\tilde{\bm{z}}_k)\bigg|\mathscr{F}_{k-1}]\right\};
\end{align}
notice that we applied the basic property of matrix trace in (i), the linearity of expectation in (ii), the conditional independence between $\bm{x}_k$ and $\tilde{\bm{z}}_k$ in (iii), and the definition of $\bm{V}_k$ in (iv). The right-most part of this equation is exactly the same as the first term on the right-most part of \eqref{eq:Srikant-I2}.

\medskip
Consequently, putting relations \eqref{eq:srikant-decompose}, \eqref{eq:Srikant-I2}, \eqref{eq:Srikant-I4} and \eqref{eq:Srikant-I41}, we obtain
{\color{black}
\begin{align}\label{eq:Srikant-reorganize}
&\mathbb{E}\left[h(\bm{S}_k + \bm{T}_{k+1}')\mid \mathscr{F}_{k-1}\right] - \mathbb{E}[h(\bm{S}_{k-1} + \bm{T}_k')\mid \mathscr{F}_{k-1}]\nonumber \\ 
&=\mathsf{Tr}\left\{\bm{P}_k'^{-\frac{1}{2}}\bm{V}_k\bm{P}_k'^{-\frac{1}{2}}\mathbb{E}\left[\nabla^2 f_k(\bm{P}_k^{-\frac{1}{2}}\bm{x}_k + \tilde{\bm{z}}_k)-\nabla^2 f_k(\tilde{\bm{z}}_k)\bigg|\mathscr{F}_{k-1}\right]\right\} \nonumber \\ 
&- \mathbb{E}\left[\int_0^1 (\bm{P}_k'^{-\frac{1}{2}}\bm{x}_k)^\top (\nabla^2f_k(t \bm{P}_k'^{-\frac{1}{2}}\bm{x}_k + \tilde{\bm{z}}_k) - \nabla^2 f_k(\tilde{\bm{z}}_k))(\bm{P}_k'^{-\frac{1}{2}}\bm{x}_k)\mathrm{d}t \bigg|\mathscr{F}_{k-1}\right]
\end{align}
}
Both terms on the right-hand-side can be bounded by the Holder's property of $\nabla^2 f_k$. Specifically, we have the following proposition:
\begin{proposition}\label{prop:Stein-smooth}
Let $h: \mathbb{R}^d \to \mathbb{R} \in \mathsf{Lip}_1$, $\bm{\mu} \in \mathbb{R}^d$ and $\bm{\Sigma} \succ \bm{0} \in \mathbb{S}^{d \times d}$. Further define $g: \mathbb{R}^d \to \mathbb{R}$ as
\begin{align*}
g(\bm{x})= h(\bm{\Sigma}^{\frac{1}{2}}\bm{x} +\bm{\mu}), \quad \forall x \in \mathbb{R}^d,
\end{align*}
and use $f_g$ to denote the solution to Stein's equation
\begin{align*}
\Delta f(\bm{x}) - \bm{x}^\top \nabla f(\bm{x}) = g(\bm{x}) - \mathbb{E}[g(\bm{z})]
\end{align*}
where $\bm{z}$ is the $d$-dimensional standard Gaussian distribution. It can then be guaranteed that
\begin{align}
\left\|\nabla^2 f_g(\bm{x}) - \nabla^2 f_g(\bm{y})\right\| \lesssim (2+\log (d\|\bm{\Sigma}\|))^+ \cdot \|\bm{\Sigma}^{\frac{1}{2}}(\bm{x} - \bm{y})\|_2 + e^{-1}.
\end{align}
\end{proposition}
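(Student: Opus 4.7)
The plan is to exploit the Ornstein--Uhlenbeck (OU) semigroup representation of the Stein solution,
\[
f_g(\bm{x}) \;=\; -\int_0^\infty \bigl[P_t g(\bm{x}) - \mathbb{E} g(\bm{z})\bigr]\,dt, \qquad P_t g(\bm{x}) := \mathbb{E}\bigl[g(r\bm{x} + s\bm{z})\bigr],
\]
with $r:=e^{-t}$, $s:=\sqrt{1-e^{-2t}}$, and $\bm{z}\sim\mathcal{N}(\bm{0},\bm{I}_d)$ independent of everything, and then apply Gaussian integration by parts \emph{twice} to move both spatial derivatives onto the Gaussian density, yielding
\[
\nabla^2 P_t g(\bm{x}) \;=\; \frac{r^2}{s^2}\,\mathbb{E}\bigl[g(r\bm{x}+s\bm{z})(\bm{z}\bm{z}^\top-\bm{I})\bigr].
\]
The problem is thereby reduced to bounding $\|\nabla^2 P_t g(\bm{x})-\nabla^2 P_t g(\bm{y})\|$ and integrating over $t$.

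Write $\bm{w}:=\bm{x}-\bm{y}$. The key step is to derive two complementary estimates. Since $g=h\circ(\bm{\Sigma}^{1/2}\cdot+\bm{\mu})$ with $h$ being $1$-Lipschitz, we have the deterministic bound $|g(r\bm{x}+s\bm{z})-g(r\bm{y}+s\bm{z})|\leq r\|\bm{\Sigma}^{1/2}\bm{w}\|_2$; combining this with Cauchy--Schwarz and the Gaussian identity $\mathbb{E}[((u^\top\bm{z})^2-1)^2]=2$ for unit vectors $u$ gives the \emph{difference} estimate
\[
\|\nabla^2 P_t g(\bm{x}) - \nabla^2 P_t g(\bm{y})\| \;\lesssim\; \frac{r^3}{s^2}\,\|\bm{\Sigma}^{1/2}\bm{w}\|_2.
\]
Separately, centering inside the expectation via $\mathbb{E}[\bm{z}\bm{z}^\top-\bm{I}]=\bm{0}$ and applying Cauchy--Schwarz against $|g(r\bm{x}+s\bm{z})-g(r\bm{x})|\leq s\|\bm{\Sigma}^{1/2}\bm{z}\|_2$ (whose $L^2$-norm is $s\sqrt{\mathsf{Tr}(\bm{\Sigma})}$) yields the \emph{uniform} estimate
\[
\|\nabla^2 P_t g(\bm{x})\| \;\lesssim\; \frac{r^2}{s}\sqrt{\mathsf{Tr}(\bm{\Sigma})} \;\leq\; \frac{r^2}{s}\sqrt{d\|\bm{\Sigma}\|}.
\]

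I then split $\int_0^\infty = \int_0^{t_0}+\int_{t_0}^\infty$ at a threshold $t_0$, using the uniform bound on $[0,t_0]$ and the difference bound on $[t_0,\infty)$. Straightforward antiderivative computations yield $\int_0^{t_0}(r^2/s)\,dt = \sqrt{1-e^{-2t_0}}=:s_0$ and $\int_{t_0}^\infty(r^3/s^2)\,dt\lesssim |\log s_0|$, so that
\[
\|\nabla^2 f_g(\bm{x})-\nabla^2 f_g(\bm{y})\| \;\lesssim\; s_0\sqrt{d\|\bm{\Sigma}\|} \;+\; |\log s_0|\,\|\bm{\Sigma}^{1/2}\bm{w}\|_2.
\]
Balancing by $s_0 = (\|\bm{\Sigma}^{1/2}\bm{w}\|_2/\sqrt{d\|\bm{\Sigma}\|})\wedge 1$ and invoking the elementary inequality $x\log(1/x)\leq e^{-1}$ for all $x>0$ produces the claimed bound; the regime $\|\bm{\Sigma}^{1/2}\bm{w}\|_2\geq\sqrt{d\|\bm{\Sigma}\|}$ is handled trivially by $s_0=1$.

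The hard part is the structural choice to push \emph{both} derivatives onto the Gaussian density rather than just one: this is what turns the uniform bound into a $\sqrt{\mathsf{Tr}(\bm{\Sigma})}$ scaling, which --- after the optimal choice of $s_0$ and the trick $x\log(1/x)\leq e^{-1}$ --- produces a logarithmic rather than polynomial dependence on $d$. Two secondary technicalities are worth noting: since $g$ is only Lipschitz, the integration-by-parts identity for $\nabla^2 P_t g$ should be justified via a standard mollification argument followed by dominated convergence, and the additive $e^{-1}$ term in the conclusion is an unavoidable artifact of applying $x\log(1/x)\leq e^{-1}$ to handle the small-$\|\bm{\Sigma}^{1/2}\bm{w}\|_2$ regime.
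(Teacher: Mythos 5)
Your proposal is correct and follows essentially the same route as the paper's proof: after a change of variables $t=e^{-2\tau}$ your OU-semigroup representation with both derivatives pushed onto the Gaussian is exactly the integral formula the paper imports from Gallou\"et et al., and the subsequent steps (splitting the time integral, the Lipschitz difference bound on one piece, the centering argument giving a $\sqrt{d\|\bm{\Sigma}\|}$-type uniform bound on the other, optimizing the split point, and invoking $x\log(1/x)\leq e^{-1}$) coincide with the paper's. The only cosmetic difference is that you obtain the uniform estimate via Cauchy--Schwarz, yielding $\sqrt{\mathsf{Tr}(\bm{\Sigma})}$, where the paper uses $\mathbb{E}\big[|(\bm{\alpha}^\top\bm{z})^2-1|\,\|\bm{z}\|_2\big]\lesssim\sqrt{d}$ together with $\|\bm{\Sigma}^{1/2}\bm{z}\|_2\leq\|\bm{\Sigma}^{1/2}\|\|\bm{z}\|_2$; both land on the same final bound.
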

\begin{proof} 
See Appendix \ref{app:proof-Stein-smooth}. 
\end{proof}

Proposition \ref{prop:Stein-smooth} guarantees for every $t \in [0,1]$ that
\begin{align}
\notag \left\|\nabla^2f_k(t \bm{P}_k'^{-\frac{1}{2}}\bm{x}_k + \tilde{\bm{z}}_k) - \nabla^2 f_k(\tilde{\bm{z}}_k)\right\| &\leq (2+\log(d\|\bm{P}_k'\|))^+ \|\bm{x}_k\|_2 + e^{-1}\nonumber \\ 
&\leq (2+\log(d\|(n\bm{\Sigma}_n + \bm{\Sigma})\|))^+ \|\bm{x}_k\|_2 + e^{-1}
\end{align}

Consequently, the first term on the right hand side of \eqref{eq:Srikant-reorganize} can be bounded by
\begin{align*}
&\left|\mathsf{Tr}\left\{\bm{P}_k'^{-\frac{1}{2}}\bm{V}_k\bm{P}_k'^{-\frac{1}{2}}\mathbb{E}\left[\nabla^2 f_k(\bm{P}_k^{-\frac{1}{2}}\bm{x}_k + \tilde{\bm{z}}_k)-\nabla^2 f_k(\tilde{\bm{z}}_k)\bigg|\mathscr{F}_{k-1}\right]\right\}\right| \\ 
&\leq (2+\log(d\|(n\bm{\Sigma}_n + \bm{\Sigma})\|))^+\mathbb{E}\left[\|\bm{P}_k'^{-\frac{1}{2}}\bm{x}_k\|_2^2 \bigg|\mathscr{F}_{k-1}\right] \mathbb{E}[\|\bm{x}_k\|_2 |\mathscr{F}_{k-1}] + e^{-1} \mathbb{E}\left[\|\bm{P}_k'^{-\frac{1}{2}}\bm{x}_k\|_2^2 \bigg|\mathscr{F}_{k-1}\right] \\ 
\end{align*}
Here, we further notice that since $\bm{P}_k'$ is measurable with respect to $\mathscr{F}_{k-1}$, and that $\|\bm{P}_k'^{-\frac{1}{2}}\bm{x}_k\|_2^2$ and $\|\bm{x}_k\|_2$ are positively correlated \footnote{We refer readers to Appendix \ref{app:proof-correlated-norms} for the details of this claim.}, 
\begin{align}\label{eq:correlated-norms}
\mathbb{E}\left[\|\bm{P}_k'^{-\frac{1}{2}}\bm{x}_k\|_2^2 \bigg|\mathscr{F}_{k-1}\right] \mathbb{E}[\|\bm{x}_k\|_2 |\mathscr{F}_{k-1}] \leq \mathbb{E}\left[\|\bm{P}_k'^{-\frac{1}{2}}\bm{x}_k\|_2^2 \|\bm{x}_k\|_2 \bigg|\mathscr{F}_{k-1}\right].
\end{align}
Therefore, the upper bound can be further simplified by
\begin{align*}
&\left|\mathsf{Tr}\left\{\bm{P}_k'^{-\frac{1}{2}}\bm{V}_k\bm{P}_k'^{-\frac{1}{2}}\mathbb{E}\left[\nabla^2 f_k(\bm{P}_k^{-\frac{1}{2}}\bm{x}_k + \tilde{\bm{z}}_k)-\nabla^2 f_k(\tilde{\bm{z}}_k)\bigg|\mathscr{F}_{k-1}\right]\right\}\right| \\ 
&\leq (2+\log(d\|(n\bm{\Sigma}_n + \bm{\Sigma})\|))^+ \mathbb{E}\left[\|\bm{P}_k'^{-\frac{1}{2}}\bm{x}_k\|_2^2 \|\bm{x}_k\|_2 \bigg|\mathscr{F}_{k-1}\right]+ \mathbb{E}\left[\|\bm{P}_k'^{-\frac{1}{2}}\bm{x}_k\|_2^2 \bigg|\mathscr{F}_{k-1}\right].
\end{align*}
Meanwhile, the second term can also be bounded by
{\color{black}
\begin{align*}
&\left|\mathbb{E}\left[\int_0^1(\bm{P}_k^{-\frac{1}{2}}\bm{x}_k)^\top (\nabla^2f_k(t \bm{P}_k^{-\frac{1}{2}}\bm{x}_k + \tilde{\bm{z}}_k) - \nabla^2 f_k(\tilde{\bm{z}}_k))(\bm{P}_k^{-\frac{1}{2}}\bm{x}_k)\mathrm{d}t \bigg|\mathscr{F}_{k-1}\right]\right| \\ 
&\leq (2+\log(d\|(n\bm{\Sigma}_n + \bm{\Sigma})\|))^+ \mathbb{E}\left[\|\bm{P}_k'^{-\frac{1}{2}}\bm{x}_k\|_2^2 \|\bm{x}_k\|_2 \bigg|\mathscr{F}_{k-1}\right]+ \mathbb{E}\left[\|\bm{P}_k'^{-\frac{1}{2}}\bm{x}_k\|_2^2 \bigg|\mathscr{F}_{k-1}\right].
\end{align*}
}
Since these two bounds are equivalent, in combination, the triangle inequality guarantees
\begin{align*}
&\Big|\mathbb{E}\left[h(\bm{S}_k + \bm{T}_{k+1})\mid \mathscr{F}_{k-1}\right] - \mathbb{E}[h(\bm{S}_{k-1} + \bm{T}_k)\mid \mathscr{F}_{k-1}]\Big|\\ 
& \lesssim (2+\log(d\|(n\bm{\Sigma}_n + \bm{\Sigma})\|))^+ \mathbb{E}\left[\|\bm{P}_k'^{-\frac{1}{2}}\bm{x}_k\|_2^2 \|\bm{x}_k\|_2 \bigg|\mathscr{F}_{k-1}\right]+ \mathbb{E}\left[\|\bm{P}_k'^{-\frac{1}{2}}\bm{x}_k\|_2^2 \bigg|\mathscr{F}_{k-1}\right].
\end{align*}
Plugging into \eqref{eq:Srikant-Lindeberg}, we obtain
\begin{align}\label{eq:Srikant-Lindeberg-2}
|\mathbb{E}[h(\bm{S}_n)]-\mathbb{E}[h(\bm{T}_1')]| &\leq \sum_{k=1}^n |\mathbb{E}[h(\bm{S}_k+\bm{T}_{k+1}')]-\mathbb{E}[h(\bm{S}_{k-1}+\bm{T}_k')]| \nonumber \\ 
&\lesssim (2+\log(d\|(n\bm{\Sigma}_n + \bm{\Sigma})\|))^+ \sum_{k=1}^n \mathbb{E}\left[\mathbb{E}\left[\|(\bm{P}_k + \bm{\Sigma})^{-\frac{1}{2}}\bm{x}_k\|_2^2 \|\bm{x}_k\|_2 \bigg|\mathscr{F}_{k-1}\right]\right] \nonumber \\ 
&+ \sum_{k=1}^n \mathbb{E}\left[\mathbb{E}\left[\|(\bm{P}_k + \bm{\Sigma})^{-\frac{1}{2}}\bm{x}_k\|_2^2 \bigg|\mathscr{F}_{k-1}\right]\right].
\end{align}
We now aim to proof the last summation on the right-hand-side of \eqref{eq:Srikant-Lindeberg-2}. The law of total expectation directly implies, for every $k \in [n]$, that
\begin{align*}
\mathbb{E}\left[\mathbb{E}\left[\|(\bm{P}_k + \bm{\Sigma})^{-\frac{1}{2}}\bm{x}_k\|_2^2 \bigg|\mathscr{F}_{k-1}\right]\right] = \mathbb{E}\left[\mathsf{Tr}(\bm{P}_k'^{-1}\bm{V}_k)\mid \mathscr{F}_0\right].
\end{align*}
To further feature the summation, we invoke a telescoping technique. By taking $\bm{A} = \bm{P}_k'$ and $\bm{B} = \bm{P}_{k+1}'$ in Theorem \ref{thm:Klein}, the summand can be bounded by
\begin{align*}
\mathsf{Tr}(\bm{P}_k'^{-1}\bm{V}_k)&= \mathsf{Tr}(\bm{P}_k'^{-1}(\bm{P}_k' - \bm{P}_{k+1}')) \leq \mathsf{Tr}(\log (\bm{P}_k')) - \mathsf{Tr}(\log(\bm{P}_{k+1}')).
\end{align*}
Summing from $k=1$ through $k=n$, we obtain
\begin{align}\label{eq:Srikant-telescope}
\sum_{k=1}^n \mathbb{E}\left[\mathbb{E}\left[\|(\bm{P}_k + \bm{\Sigma})^{-\frac{1}{2}}\bm{x}_k\|_2^2 \bigg|\mathscr{F}_{k-1}\right]\right] 
&=\sum_{k=1}^n \mathbb{E}\left[\mathsf{Tr}(\bm{P}_k'^{-1}\bm{V}_k)\mid\mathscr{F}_0\right] \nonumber \\
&\leq \sum_{k=1}^n \mathbb{E}\left[\mathsf{Tr}(\log (\bm{P}_k')) - \mathsf{Tr}(\log(\bm{P}_{k+1}')) \big|\mathscr{F}_0\right] \nonumber \\ 
&= \mathbb{E}[\mathsf{Tr}(\log \bm{P}_1')]-\mathbb{E}[\mathsf{Tr}(\log \bm{P}_n')]\nonumber \\
&= \mathsf{Tr}(\log(n\bm{\Sigma}_n+\bm{\Sigma})) - \log(\bm{\Sigma})).
\end{align}
Our target result follows by combining \eqref{eq:Srikant-Wasserstein}, \eqref{eq:Srikant-Lindeberg-2} and \eqref{eq:Srikant-telescope}.

\subsection{Proof of Corollary \ref{cor:Wu}}\label{app:proof-cor-Wu}
In order to simplify the upper bound in Theorem \ref{thm:Srikant-generalize}, we firstly take $\bm{\Sigma} = \bm{\Sigma}_n$, yielding
\begin{align*}
d_{\mathsf{W}}\left(\frac{1}{\sqrt{n}}\sum_{k=1}^n \bm{x}_k,\mathcal{N}(\bm{0},\bm{\Sigma}_n)\right)  
& \lesssim  \frac{(2+\log(dn\|\bm{\Sigma}_n\|))^+}{\sqrt{n}} \sum_{k=1}^n \mathbb{E}\left[\mathbb{E}\left[\|(\bm{P}_k + \bm{\Sigma}_n)^{-\frac{1}{2}}\bm{x}_k\|_2^2 \|\bm{x}_k\|_2 \bigg|\mathscr{F}_{k-1}\right]\right] \\ 
&+ \frac{1}{\sqrt{n}}\left[\mathsf{Tr}(\log((n+1)\bm{\Sigma}_n)) - \log(\bm{\Sigma_n}))\right]+ \sqrt{\frac{\mathsf{Tr}(\bm{\Sigma_n})}{n}}.
\end{align*}
For the first term on the second line, we observe
\begin{align*}
\mathsf{Tr}(\log((n+1)\bm{\Sigma}_n)) - \log(\bm{\Sigma_n}))&= \sum_{i=1}^d \log (\lambda_i((n+1)\bm{\Sigma}_n)) - \sum_{i=1}^d \log (\lambda_i(\bm{\Sigma}_n))\\ 
&= \sum_{i=1}^n (\log (n+1) + \log \lambda_i(\bm{\Sigma}_n)) - \log \lambda_i(\bm{\Sigma}_n) = d \log(n+1).
\end{align*}
Meanwhile, the condition \eqref{eq:3rd-momentum-condition} can be invoked on the first term to obtain
\begin{align*}
\sum_{k=1}^n \mathbb{E}\left[\mathbb{E}\left[\|(\bm{P}_k + \bm{\Sigma}_n)^{-\frac{1}{2}}\bm{x}_k\|_2^2 \|\bm{x}_k\|_2 \bigg|\mathscr{F}_{k-1}\right]\right] &\leq M \cdot \sum_{k=1}^n\mathbb{E}\left[\mathbb{E}\left[\|(\bm{P}_k + \bm{\Sigma}_n)^{-\frac{1}{2}}\bm{x}_k\|_2^2 \bigg|\mathscr{F}_{k-1}\right]\right] \\ 
&= M \cdot \left[\mathsf{Tr}(\log((n+1)\bm{\Sigma}_n)) - \log(\bm{\Sigma_n}))\right]\\ 
& = Md\log(n+1),
\end{align*}
where the third line follows from \eqref{eq:Srikant-telescope}. In combination, the Wasserstein distance can be bounded by
\begin{align*}
d_{\mathsf{W}}\left(\frac{1}{\sqrt{n}}\sum_{k=1}^n \bm{x}_k,\mathcal{N}(\bm{0},\bm{\Sigma}_n)\right) &\lesssim \frac{M(2+\log(dn\|\bm{\Sigma}_n\|))^+ + 1}{\sqrt{n}}\cdot d\log n+ \sqrt{\frac{\mathsf{Tr}(\bm{\Sigma_n})}{n}}.
\end{align*}

\paragraph{Comparison with the corresponding uni-dimensional Corollary 2.3 in \cite{rollin2018}} When $d=1$, the condition \eqref{eq:3rd-momentum-condition} reduces to
\begin{align*}
\mathbb{E}|x_k|^3\mid \mathcal{F}_{k-1} \leq M \mathbb{E}[x_k^2]\mid \mathcal{F}_{k-1}, \quad \forall k \in [n].
\end{align*}
Notice that this is a weaker assumption than the one outlined in Equation (2.13) of \cite{rollin2018}, which further assumes that the conditional third momentum of $x_k$ is uniformly bounded. In our proof of Corollary \ref{cor:Wu}, we invoke a telescoping method that not only simplifies the proof but also yields a tighter upper bound.

\paragraph{Comments on Corollary 3 in \cite{JMLR2019CLT}} In a similar attempt to generalize this uni-dimensional corollary from \cite{rollin2018} to multi-dimensional settings, Corollary 3 of \cite{JMLR2019CLT} made the assumption that 
\begin{align}\label{eq:JMLR-Cor-assumption}
\mathbb{E}[\|\bm{x}_k\|_2^3 \mid \mathcal{F}_{k-1}] \leq \beta \vee \delta \mathsf{Tr}(\bm{V}_k), \quad \text{a.s.}
\end{align}
and applied the technique used by \cite{rollin2018}, which involved defining a sequence of stopping times $\{\tau_k\}_{k \in [n]}$. However, this generalization is non-trivial since the positive semi-definite order of symmetric matrices is \emph{incomplete}; in other words, when $\bm{A},\bm{B} \in \mathbb{S}^{d \times d}$, $\bm{A} \npreceq \bm{B}$ does not imply $\bm{A} \succ \bm{B}$. Consequently, the derivation from Equation (A.36) to (A.37) in the proof of Corollary 3 of \cite{JMLR2019CLT} is invalid: apparently, the authors' reasoning was, under their notation:
\begin{align*}
\mathsf{Tr}(\bar{\bm{V}}_{\tau_k} - \bar{\bm{V}}_{\tau_{k-1}})&= \mathsf{Tr}(\bar{\bm{V}}_{\tau_k} - \bar{\bm{V}}_{\tau_{k-1} + 1}) + \mathsf{Tr}(\bm{V}_k) \\ 
&\leq\mathsf{Tr}\left( \frac{k}{n}\bm{\Sigma}_n - \frac{k-1}{n}\bm{\Sigma}_n\right) + \beta^{\frac{2}{3}} = \frac{1}{n}\mathsf{Tr}(\bm{\Sigma}_n)+ \beta^{\frac{2}{3}}
\end{align*}
where the inequality on the second line follows from $\bm{V}_{\tau_k} \preceq \frac{k}{n}\bm{\Sigma}_n$, and that $\bm{V}_{\tau_{k-1}+1} \npreceq \frac{k-1}{n}\bm{\Sigma}_n$. However, since the positive semi-definite order is incomplete, the fact that $\bm{V}_{\tau_{k-1}+1} \npreceq \frac{k-1}{n}\bm{\Sigma}_n$ is not equivalent to $\bm{V}_{\tau_{k-1}+1} \succ \frac{k-1}{n}\bm{\Sigma}_n$; neither is there any guarantee that $\mathsf{Tr}(\bm{V}_{\tau_{k-1}+1})$ is greater than $\frac{k-1}{n}\mathsf{Tr}(\bm{\Sigma}_n)$.

In our proof of Corollary \ref{cor:Wu}, we solve this problem by invoking a different assumption \eqref{eq:3rd-momentum-condition} than \eqref{eq:JMLR-Cor-assumption}, and the telescoping method. 

{\color{black}\paragraph{Comments on \cite{cattaneo2024yurinskiiscouplingmartingales}}  Without assuming $\bm{P}_1 =n\bm{\Sigma}_n$ almost surely, \cite{cattaneo2024yurinskiiscouplingmartingales} established the following high-dimensional Berry--Esseen bound in convex distance:
\begin{align}\label{eq:Cattaneo}
d_{\mathsf{c}}\left(\frac{1}{\sqrt{n}}\sum_{i=1}^n \bm{x}_i,\mathcal{N}(\bm{0},\bm{\Sigma}_n)\right) &\lesssim \inf_{\eta > 0} \left\{ \frac{(\beta_{2,2}d)^{\frac{1}{3}}}{\sqrt{n}\eta} + \left(\frac{\mathbb{E}\|\bm{P}_1 - n\bm{\Sigma}_n\|\cdot d}{n\eta^2}\right)^{\frac{1}{3}} + \eta \sqrt{\|\bm{\Sigma}_n^{-1}\|_{\mathsf{F}}}\right\},
\end{align}
where
\begin{align*}
\beta_{2,2} = \sum_{i=1}^n \mathbb{E}[\|\bm{x}_i\|_2^3 + \|\bm{V}_i^\frac{1}{2}\bm{z}_i\|_2^3].
\end{align*}
Here $\bm{z}_1,\bm{z}_2,\ldots,\bm{z}_n$ are i.i.d. standard Gaussian random variables independent of $\mathscr{F}_n$. 
 The second term on the right-hand-side of \eqref{eq:Cattaneo} captures the discrepancy between $\bm{P}_1$ and $n\bm{\Sigma}_n$, while 
the last term arises from the Gaussian anti-concentration inequality over the set of convex sets (see, e.g., Theorem \ref{thm:Gaussian-reminder} in the supplementary material, which we also apply in our proof of Theorem \ref{thm:Berry--Esseen-mtg}.) However, $\beta_{2,2}$ necessarily grows at least linearly in $n$. Indeed,
\begin{align*}
\sum_{i=1}^n \mathbb{E}\|\bm{V}_i^\frac{1}{2}\bm{z}_i\|_2^3 &\geq \sum_{i=1}^n \left(\mathbb{E}\|\bm{V}_i^\frac{1}{2}\bm{z}_i\|_2^2\right)^{\frac{3}{2}} \\ 
&=\sum_{i=1}^n \left(\mathbb{E}[\mathsf{Tr}(\bm{V}_i)]\right)^{\frac{3}{2}} \\ 
&\geq n^{-1/2} \cdot \left(\sum_{i=1}^n \mathbb{E}[\mathsf{Tr}(\bm{V}_i)]\right)^{\frac{3}{2}} = n [\mathsf{Tr}(\bm{\Sigma}_n)]^{\frac{3}{2}}.
\end{align*}
Consequently, the bound given by \eqref{eq:Cattaneo} cannot decay faster than
\begin{align*}
&\inf_{\eta > 0} \left\{ \frac{(\beta_{2,2}d)^{\frac{1}{3}}}{\sqrt{n}\eta} + \left(\frac{\mathbb{E}\|\bm{P}_1 - n\bm{\Sigma}_n\|\cdot d}{n\eta^2}\right)^{\frac{1}{3}} + \eta \sqrt{\|\bm{\Sigma}_n^{-1}\|_{\mathsf{F}}}\right\} \\ 
&\geq \inf_{\eta > 0} \left\{n^{-\frac{1}{6}}d^{\frac{1}{3}}[\mathsf{Tr}(\bm{\Sigma}_n)]^{\frac{1}{2}}\eta^{-1} + \eta \sqrt{\|\bm{\Sigma}_n^{-1}\|_{\mathsf{F}}}\right\} \\
&\geq d^{\frac{1}{6}}[\mathsf{Tr}(\bm{\Sigma}_n)]^{\frac{1}{4}}\|\bm{\Sigma}_n^{-1}\|_{\mathsf{F}}^{\frac{1}{4}}n^{-\frac{1}{12}},
\end{align*}
where the last line invokes the AM-GM inequality. This yields a convergence rate of at best $O(n^{-1/12})$, which is slower than the $O(n^{-1/4}\log n)$ rate obtained in Theorem \ref{thm:Berry--Esseen-mtg}. The improvement in our result comes from exploiting the Markov chain structure to derive a sharper bound on the discrepancy between $\bm{P}_1$ and $n\bm{\Sigma}_n$, together with a Stein's equation argument that tightens the contribution of the first term in \eqref{eq:Cattaneo}.
}

\subsection{Proof of Corollary \ref{thm:matrix-bernstein-mtg}}\label{app:proof-matrix-berstein-mtg}
\label{app:proof-matrix-bernstein-mtg}


Since $\mathbb{E}_{s' \sim P(\cdot \mid s)}[\bm{f}_i(s,s')] = \bm{0}$ holds for all $s \in \mathcal{S}$ and $i \in [n]$, it can be guaranteed that
\begin{align*}
\left\{\frac{1}{n}\bm{f}_i(s_{i-1},s_i)\right\}_{i=1}^n
\end{align*}
is a martingale difference sequence adapted to the filtration $\{\mathcal{F}_i\}_{i=1}^n$; therefore, by defining
\begin{align*}
&\bm{X}_i := \frac{1}{n}\bm{f}_i(s_{i-1},s_i), \\ 
&\bm{Y}_i := \sum_{j=1}^i \bm{X}_j, \quad \text{and} \\ 
&W_i:= \sum_{j=1}^i \mathbb{E}\|\bm{X}_j\|_2^2 \mid \mathscr{F}_{j-1},
\end{align*}
we can apply Theorem \ref{thm:Hilbert-Freedman} to obtain that for every $\sigma^2$ and $\delta \in (0,1)$, it can be guaranteed with probability at least $1-\frac{\delta}{2}$ that
\begin{align}\label{eq:matrix-freedman}
\left\|\frac{1}{n}\sum_{i=1}^n \bm{f}_i(s_{i-1},s_i)\right\|_2 \lesssim \sqrt{\frac{\sigma^2}{n}\log \frac{1}{\delta}} + \frac{F}{n} \log \frac{1}{\delta}, \quad \text{or} \quad W_n \geq \sigma^2.
\end{align}
In what follows, we aim to bound the $W_n$ by controlling its different from $\mathsf{Tr}(\bm{\Sigma}_n)$. Specifically, we can define
\begin{align*}
\bm{g}_{i-1}(s) = \frac{1}{n^2} \mathbb{E}_{s' \sim P(\cdot \mid s)} \|\bm{f}_i(s,s')\|_2^2 
\end{align*}
and apply Corollary \ref{cor:markov-hoeffding-1d} with $a_i = 0$, $b_i = F^2/n^2$ to obtain
\begin{align*}
W_n - \mathsf{Tr}(\bm{\Sigma}_n) &= \sum_{i=1}^n [g_{i}(s_{i-1})-\mu(g_i)]\\ 
&\lesssim \sqrt{\frac{q}{1-\lambda}} \frac{F}{\sqrt{n}}\log^{\frac{1}{2}}\left(\frac{1}{\delta}\left\|\frac{\mathrm{d}\nu}{\mathrm{d}\mu}\right\|_{\mu,p}\right)
\end{align*}
with probability at least $1-\frac{\delta}{2}$. Hence, the triangle inequality directly yields
\begin{align}\label{eq:matrix-bernstein-quadratic}
W_n \leq \mathsf{Tr}(\bm{\Sigma}_n) +  \sqrt{\frac{q}{1-\lambda}} \frac{F}{\sqrt{n}}\log^{\frac{1}{2}}\left(\frac{1}{\delta}\left\|\frac{\mathrm{d}\nu}{\mathrm{d}\mu}\right\|_{\mu,p}\right)
\end{align}
The theorem follows by combining \eqref{eq:matrix-freedman}, \eqref{eq:matrix-bernstein-quadratic} using a union bound argument and taking
\begin{align*}
\sigma^2 = \mathsf{Tr}(\bm{\Sigma}_n) +  \sqrt{\frac{q}{1-\lambda}} \frac{F}{\sqrt{n}}\log^{\frac{1}{2}}\left(\frac{1}{\delta}\left\|\frac{\mathrm{d}\nu}{\mathrm{d}\mu}\right\|_{\mu,p}\right).
\end{align*}

\subsection{Proof of Corollary \ref{thm:Berry--Esseen-mtg}}\label{app:proof-Berry--Esseen-mtg}

Part of the proof is inspired by the proof of Lemma B.8 in \citet{cattaneo2024yurinskiiscouplingmartingales} and Theorem 2.1 in \cite{
belloni2018highdimensionalcentrallimit}. We firstly notice that the convex distance satisfies
\begin{align*}
d_{\mathsf{C}}\left(\frac{1}{\sqrt{n}}\sum_{i=1}^n \bm{f}_i(s_{i-1}, s_i), \mathcal{N}(\bm{0},\bm{\Sigma}_n)\right) = d_{\mathsf{C}}\left(\frac{1}{\sqrt{n}}\sum_{i=1}^n \bm{\Sigma}_n^{-\frac{1}{2}}\bm{f}_i(s_{i-1}, s_i), \mathcal{N}(\bm{0},\bm{I})\right);
\end{align*}
hence, we can assume without loss of generality that $\bm{\Sigma}_n = \bm{I}$.\footnote{To see this, take $\bm{g}_i = \bm{\Sigma}_n^{-\frac{1}{2}}\bm{f}_i$ and the proof follows for the sequence $\{\bm{g}_i\}_{1 \leq i \leq n}$.}

Borrowing the notation from the proof of Theorem \ref{thm:Srikant-generalize} and Corollary \ref{cor:Wu}, we define, for each $k=1,\ldots,n$,
\begin{align*}
&\bm{S}_k := \sum_{j=1}^k \bm{f}_j(s_{j-1},s_j), \\ 
&\bm{V}_k := \mathbb{E}[\bm{f}_k(s_{k-1},s_k) \bm{f}_k^\top(s_{k-1},s_k)\mid \mathscr{F}_{k-1}]   \\ 
&\bm{T}_k := \sum_{j=k}^n \bm{V}_j^{1/2}\bm{z}_j, \text{ and } \\ 
&\bm{P}_k := \sum_{j=k}^n \bm{V}_j.
\end{align*}
This proof approaches the theorem in four steps:
\begin{enumerate}
\item Find a value $\kappa = \kappa(n) = O(\frac{\log n}{\sqrt{n}})$, such that 
\begin{align*}
\mathbb{P}(\|\bm{P}_1 - n\bm{I}\| \geq n\kappa) \leq n^{-\frac{1}{2}}.
\end{align*}
\item Construct a martingale $\{\tilde{\bm{S}}_{j}\}_{j=1}^N$, whose differentiation satisfies the condition of Theorem \ref{thm:Srikant-generalize}, 
such that
\begin{align*}
\mathbb{E}[\tilde{\bm{S}}_N \tilde{\bm{S}}_N^\top] = n(1+ \kappa) \bm{I}, \quad \text{and} \quad \mathbb{P}\left(\|\bm{S}_n - \tilde{\bm{S}}_{N}\|_2 > \sqrt{2d\kappa n  \log n}\right)\leq  n^{-\frac{1}{2}}.
\end{align*}
\item Apply Theorem \ref{thm:Srikant-generalize} to $\tilde{\bm{S}}_{N}$ to derive a Berry--Esseen bound between the distributions of $\tilde{\bm{S}}_{N}$ and $\mathcal{N}(\bm{0},(1+\kappa) \bm{I}))$;
\item Combine the results above to achieve the desired Berry--Esseen bound.
\end{enumerate}
\paragraph{Step 1: find $\kappa$} Due to the Markovian property, the matrix $\bm{V}_k$ is a function of $s_{k-1}$ for every $k \in [n]$. Define
\begin{align*}
\bar{\bm{V}}_k = \mathbb{E}_{s \sim \mu,s' \sim P(\cdot \mid s)}[\bm{f}_i(s,s')\bm{f}_i^\top (s,s')],
\end{align*}
then it can be guaranteed that
\begin{align*}
\mathbb{E}_{s_{k-1}\sim \mu} [\bm{V}_k] = \bar{\bm{V}}_k, 
\end{align*}
and that
\begin{align*}
\|\bm{V}_k - \bar{\bm{V}}_k\| \leq M_k^2, \quad \text{a.s.}
\end{align*}
hold for every $k \in [n]$.
Consequently, a direct application of Theorem \ref{thm:matrix-hoeffding} yields
\begin{align*}
\|\bm{P}_1 - n\bm{I}\|= n \cdot \left\|\frac{1}{n}\sum_{i=1}^n (\bm{V}_k - \bar{\bm{V}}_k)\right\| &\leq \sqrt{\sum_{i=1}^n M_i^4} \sqrt{\frac{20q}{1-\lambda} \log \left(\frac{2d}{n^{-\frac{1}{2}}}\left\|\frac{\mathrm{d}\nu}{\mathrm{d}\mu}\right\|_{\mu,p}\right)} \\ 
 &\leq \sqrt{\sum_{i=1}^n M_i^4}\sqrt{\frac{40q}{1-\lambda} \log \left(2d n \left\|\frac{\mathrm{d}\nu}{\mathrm{d}\mu}\right\|_{\mu,p}\right)},
\end{align*}
with probability at least $1-n^{-\frac{1}{2}}$. In what follows, we take
\begin{align*}
\kappa &= \frac{1}{\sqrt{n}}\sqrt{\frac{\sum_{i=1}^n M_i^4 }{n} \frac{40q}{1-\lambda} \log \left(2d n \left\|\frac{\mathrm{d}\nu}{\mathrm{d}\mu}\right\|_{\mu,p}\right)} = \frac{\bar{M}^2}{\sqrt{n}} \sqrt{\frac{40q}{1-\lambda} \log \left(2d n \left\|\frac{\mathrm{d}\nu}{\mathrm{d}\mu}\right\|_{\mu,p}\right)}.
\end{align*}
\paragraph{Step 2: Construct $\{\tilde{\bm{S}}_j\}_{j=1}^N$} Define the stopping time
\begin{align*}
\tau := \sup\left\{t \leq n: \sum_{i=1}^t \bm{V}_i \preceq n(1+\kappa)\bm{I}\right\},
\end{align*}
and let
{\color{black}
\begin{align*}
&m:= \left\lceil \frac{d}{M^2} \left\|n (1+\kappa)\bm{I} - \sum_{i=1}^\tau \bm{V}_i\right\| \right\rceil, \quad \text{and} \quad N:= \left\lceil \frac{dn (1+\kappa)}{M^2} \right\rceil + n.
\end{align*}
}
By definition, it can be guaranteed that $n+m \leq N$, and that $N \asymp n$. We now construct a martingale difference process $\{ \tilde{\bm{x}}_i \}_{i=1}^{N}$ in the following way: for $1 \leq i \leq \tau$, let $\tilde{\bm{x}}_i = \bm{f}_i(s_{i-1},s_i)$ and for $\tau < i \leq \tau + m$, let 
{\color{black}
\begin{align*}
\tilde{\bm{x}}_i = \frac{1}{\sqrt{m}} \left(n (1+\kappa)\bm{I} - \sum_{i=1}^\tau \bm{V}_i \right)^{\frac{1}{2}}\bm{\epsilon}_i,
\end{align*}
}
where $\bm{\epsilon_i} = (\epsilon_{i1},\epsilon_{i2},\cdots,\epsilon_{id})$ and $\{\epsilon_{ij}\}_{\tau < i \leq \tau + m, 1 \leq j \leq d}$ are $i.i.d.$ Rademacher random variables independent of the $s_i$'s, i.e.
\begin{align*}
\epsilon_{ij} = \begin{cases}
+1, \quad \text{w.p. }\frac{1}{2}; \\ 
-1, \quad \text{w.p. }\frac{1}{2}.
\end{cases}
\end{align*}
In particular, it holds that, for any $ \tau < i \leq \tau + m$,  
\begin{align*}
\mathbb{E}[\tilde{\bm{x}}_i] = \bm{0}, \quad \mathbb{E}[\tilde{\bm{x}}_i \tilde{\bm{x}}_i^\top\mid \mathscr{F}_{i-1}] = \frac{1}{m} \left(n (1+\kappa)\bm{I} - \sum_{i=1}^\tau \bm{V}_i \right),
\end{align*}
and 
{\color{black}
\begin{align*}
\|\tilde{\bm{x}}_i\|_2^2 \leq \frac{d}{m} \left\|n (1+\kappa)\bm{I} - \sum_{i=1}^\tau \bm{V}_i\right\| \leq M^2
\end{align*}
}
almost surely. Finally, if $\tau + m < i \leq  N$, we simply set $\tilde{\bm{x}}_i = \bm{0}$. 

The martingale $\{\tilde{\bm{S}}_j\}_{j=1}^N$ is naturally constructed by
\begin{align*}
\tilde{\bm{S}}_j = \sum_{i=1}^j \tilde{\bm{x}}_i.
\end{align*}
In this step, we explore the difference between $\bm{S}_n$ and $\tilde{\bm{S}}_N$. Specifically, observe that
\begin{align*}
&\mathbb{P}\left(\|\bm{S}_n - \tilde{\bm{S}}_{N}\|_2 > \sqrt{2d\kappa n\log n}\right) \\ 
&= \mathbb{P}(\|\bm{S}_n - \tilde{\bm{S}}_{N}\|_2 > \sqrt{2d\kappa n\log n}, \|\bm{P}_1 - n\bm{I}\| \leq \kappa n) \\ 
&+ \mathbb{P}(\|\bm{S}_n - \tilde{\bm{S}}_{N}\|_2 > \sqrt{2d\kappa n\log n}, \|\bm{P}_1 - n\bm{I}\| > \kappa n)  \\ 
&\leq \mathbb{P}(\|\bm{S}_n - \tilde{\bm{S}}_{N}\|_2 > \sqrt{2d\kappa n\log n}, \|\bm{P}_1 - n\bm{I}\| \leq \kappa n) + \mathbb{P}(\|\bm{P}_1 - n\bm{I}\| > \kappa n) \\ 
&\leq \mathbb{P}(\|\bm{S}_n - \tilde{\bm{S}}_{N}\|_2 > \sqrt{2d\kappa n\log n}, \|\bm{P}_1 - n\bm{I}\| \leq \kappa n ) + n^{-\frac{1}{2}}.
\end{align*}
To bound the first term on the left hand side of the last inequality, notice that, when $\|\bm{P}_1 - n\bm{I}\| \leq \kappa n$,
\begin{align*}
\bm{P}_1 = \sum_{i=1}^n \bm{V}_i \preceq n(1+\kappa) \bm{I}.
\end{align*}
Thus, on the same event, $\tau = n$ and for every $j \in [d]$, 
\begin{align*}
\lambda_j \leq \|n(1+ \kappa) \bm{I} - \bm{P}_1\| \leq 2\kappa n.
\end{align*}

Consequently, 
{\color{black}
\begin{align*}
\|\bm{S}_n - \tilde{\bm{S}}_{N}\|_2^2 &= \left\|\sum_{i=n+1}^{n+m} \tilde{\bm{x}}_i \right\|_2^2 \\
&= \frac{1}{m}\left\|(n(1+ \kappa) \bm{I} - \bm{P}_1)^{\frac{1}{2}}\sum_{i=n+1}^{n+m} \bm{\epsilon}_i\right\|_2^2 \\ 
&\leq \frac{\|n(1+ \kappa) \bm{I} - \bm{P}_1\|}{m} \left\|\sum_{i=n+1}^{n+m} \bm{\epsilon}_i\right\|_2^2 \\ 
&\leq \frac{2\kappa n}{m}  \sum_{j=1}^d \left(\sum_{i=n+1}^{n+m}\epsilon_{ij}\right)^2.
\end{align*}
}
Since $\{\epsilon_{ij}\}_{n+1 \leq i \leq n+m, 1 \leq j \leq d}$ are $i.i.d.$ Rademacher random variables, the Hoeffding's inequality guarantees that
\begin{align*}
\left|\sum_{i=n+1}^{n+m}\epsilon_{ij}\right| \lesssim \sqrt{m \log n}
\end{align*}
with probability at least $1-2n^{-\frac{1}{2}}$. As a direct consequence, the difference between $\bm{S}_n$ and $\tilde{\bm{S}}_N$ can be bounded by
\begin{align*}
\|\bm{S}_n - \tilde{\bm{S}}_{N}\|_2^2 \lesssim 2 d \kappa n\log n
\end{align*}
with probability at least $1-2n^{-\frac{1}{2}}$. In combination, we obtain
\begin{align*}
\mathbb{P}(\|\bm{S}_n - \tilde{\bm{S}}_N\|_2 \gtrsim \sqrt{2d\kappa n\log n}) \leq 3n^{-\frac{1}{2}}.
\end{align*}

\paragraph{Step 3: Berry--Esseen bound on $\tilde{\bm{S}}_N$} It can be easily verified that the sequence $\{ \tilde{\bm{x}}_i\}_{i=1}^N$ is a martingale difference such that
\begin{align*}
&\sum_{i=1}^N \mathbb{E}[\tilde{\bm{x}}_i \tilde{\bm{x}}_i^\top \mid \mathscr{F}_{i-1}] = n(1+ \kappa) \bm{I}, \quad \text{and} \quad \|\bm{\tilde{x}}_i\|_2 \leq M \quad \text{a.s., } \forall i \in [N].
\end{align*}
Hence, Corollary \ref{cor:Wu} can be applied on $\tilde{\bm{S}}_N$ to obtain 
\begin{align*}
&d_{\mathsf{W}}\left(\frac{\tilde{\bm{S}}_N}{\sqrt{n}}, \mathcal{N}(\bm{0},(1+ \kappa) \bm{I})\right) \lesssim Md \log d\cdot \frac{\log^2 n}{\sqrt{n}}.
\end{align*} 
\paragraph{Step 4: Completing the proof} By the triangle inequality,
\begin{align}\label{eq:Wu-initial-decompose}
d_{\mathsf{C}}\left(\frac{\bm{S}_n}{\sqrt{n}},\mathcal{N}(\bm{0},\bm{I})\right) \leq d_{\mathsf{C}}\left(\frac{\bm{S}_n}{\sqrt{n}},\mathcal{N}(\bm{0},(1+\kappa) \bm{I})\right) + d_{\mathsf{C}}\left(\mathcal{N}(\bm{0},\bm{I}),\mathcal{N}(\bm{0},(1 + \kappa) \bm{I})\right)
\end{align}
where the second term on the right-hand-side can be bounded using a direct application of Theorem \ref{thm:DMR} by
\begin{align}\label{eq:Wu-Gaussian-comparison}
d_{\mathsf{C}}\left(\mathcal{N}(\bm{0},\bm{I}),\mathcal{N}(\bm{0},(1 + \kappa) \bm{I})\right) \lesssim \kappa \|\bm{I}\|_{\mathsf{F}} \leq \kappa \sqrt{d}  = O\left( \bar{M}^2\sqrt{\frac{d}{n}} \log n \right).
\end{align}
For the first term, consider any convex set $\mathcal{A} \subset \mathbb{R}^d$, the triangle inequality guarantees, for every $x > 0$, that
\begin{align}\label{eq:Wu-decompose-upper}
\mathbb{P}\left(\frac{\bm{S}_n}{\sqrt{n}} \in \mathcal{A}\right)&= \mathbb{P}\left(\frac{\bm{S}_n}{\sqrt{n}} \in \mathcal{A}, \left\|\frac{\bm{S}_n - \tilde{\bm{S}}_N}{\sqrt{n}}\right\|_2 \leq x\right) + \mathbb{P}\left(\frac{\bm{S}_n}{\sqrt{n}} \in \mathcal{A}, \left\|\frac{\bm{S}_n - \tilde{\bm{S}}_N}{\sqrt{n}}\right\|_2 > x\right) \nonumber \\ 
&\leq \mathbb{P}\left(\frac{\tilde{\bm{S}}_N}{\sqrt{n}} \in \mathcal{A}^x \right) + \mathbb{P}\left(\left\|\frac{\bm{S}_n - \tilde{\bm{S}}_N}{\sqrt{n}}\right\|_2 > x\right)\nonumber \\
&\leq \mathbb{P}\left(\frac{\tilde{\bm{T}}_N}{\sqrt{n}} \in \mathcal{A}^x \right) + d_{\mathsf{C}}\left(\frac{\tilde{\bm{S}}_N}{\sqrt{n}},\frac{\tilde{\bm{T}}_N}{\sqrt{n}}\right)+ \mathbb{P}\left(\left\|\frac{\bm{S}_n - \tilde{\bm{S}}_N}{\sqrt{n}}\right\|_2 > x\right) \nonumber \\ 
&\leq \mathbb{P}\left(\frac{\tilde{\bm{T}}_N}{\sqrt{n}} \in \mathcal{A} \right) + (1+\kappa)^{-1/2} d^{\frac{1}{4}} x + d_{\mathsf{C}}\left(\frac{\tilde{\bm{S}}_N}{\sqrt{n}},\frac{\tilde{\bm{T}}_N}{\sqrt{n}}\right)+ \mathbb{P}\left(\left\|\frac{\bm{S}_n - \tilde{\bm{S}}_N}{\sqrt{n}}\right\|_2 > x\right).
\end{align}
Here, $\tilde{\bm{T}}_N \sim \mathcal{N}(\bm{0},n(\bm{\Sigma}_n +\kappa \bm{I}))$ and we applied Theorem \ref{thm:Gaussian-reminder} on the last inequality. On the right-most part of the inequality, the third term can be bounded by the Berry--Esseen bound on $\tilde{\bm{S}}_N$ and Theorem \ref{thm:Gaussian-convex-Wass}:
\begin{align*}
d_{\mathsf{C}}\left(\frac{\tilde{\bm{S}}_N}{\sqrt{n}},\frac{\tilde{\bm{T}}_N}{\sqrt{n}}\right)
&\leq \|(1+\kappa)^{-1} \bm{I}\|_{\mathsf{F}}^{\frac{1}{4}} \sqrt{d_{\mathsf{W}}\left(\frac{\tilde{\bm{S}}_N}{\sqrt{n}},\frac{\tilde{\bm{T}}_N}{\sqrt{n}}\right)} \\ 
&\leq (1+\kappa)^{-\frac{1}{4}}d^{\frac{1}{8}} \cdot \sqrt{M} d^{\frac{1}{2}}\log^{\frac{1}{2}} d n^{-\frac{1}{4}}\log n \\ 
&\leq \sqrt{M}d^{\frac{5}{8}}\log^{\frac{1}{2}} d n^{-\frac{1}{4}}\log n
\end{align*}
Therefore, taking $x = \sqrt{2d\kappa \log n}$ in \eqref{eq:Wu-decompose-upper} yields
\begin{align*}
\mathbb{P}\left(\frac{\bm{S}_n}{\sqrt{n}} \in \mathcal{A}\right)& \leq \mathbb{P}\left(\frac{\tilde{\bm{T}}_N}{\sqrt{n}} \in \mathcal{A} \right) +  d^{\frac{1}{4}} \cdot \sqrt{2d\kappa\log n} \\
&+ \sqrt{M}d^{\frac{5}{8}}\log^{\frac{1}{2}} d n^{-\frac{1}{4}}\log n + 3n^{-\frac{1}{2}}.
\end{align*}
A simple reorganization yields
\begin{align}\label{eq:Wu-upper-bound}
&  \mathbb{P}\left(\frac{\bm{S}_n}{\sqrt{n}} \in \mathcal{A}\right)- \mathbb{P}\left(\frac{\tilde{\bm{T}}_N}{\sqrt{n}} \in \mathcal{A} \right)\nonumber \\ 
&\lesssim \left\{\bar{M}\left(\frac{q}{1-\lambda}\right)^{\frac{1}{4}}d^{\frac{3}{4}}\log^{\frac{1}{4}}\left(d\left\|\frac{\mathrm{d}\nu}{\mathrm{d}\mu}\right\|_{\mu,p}\right)+ \sqrt{M} d^{\frac{5}{8}}\log^{\frac{1}{2}} d\right\} n^{-\frac{1}{4}}\log n.
\end{align}
Meanwhile, a union bound argument and the triangle inequality guarantee
\begin{align*}
\mathbb{P}\left(\frac{\bm{S}_n}{\sqrt{n}} \in \mathcal{A}\right)& \geq \mathbb{P}\left(\frac{\bm{S}_n}{\sqrt{n}} \in \mathcal{A} \cup \left\|\frac{\bm{S}_n - \tilde{\bm{S}}_N}{\sqrt{n}}\right\|_2 > x\right) - \mathbb{P}\left(\left\|\frac{\bm{S}_n - \tilde{\bm{S}}_N}{\sqrt{n}}\right\|_2 > x\right) \\ 
&\geq \mathbb{P}\left(\frac{\tilde{\bm{S}}_N}{\sqrt{n}} \in \mathcal{A}^{-x} \right) - \mathbb{P}\left(\left\|\frac{\bm{S}_n - \tilde{\bm{S}}_N}{\sqrt{n}}\right\|_2 > x\right) \\ 
&\geq \mathbb{P}\left(\frac{\tilde{\bm{T}}_N}{\sqrt{n}} \in \mathcal{A}^{-x} \right) - d_{\mathsf{C}}\left(\frac{\tilde{\bm{S}}_N}{\sqrt{n}},\frac{\tilde{\bm{T}}_N}{\sqrt{n}}\right)- \mathbb{P}\left(\left\|\frac{\bm{S}_n - \tilde{\bm{S}}_N}{\sqrt{n}}\right\|_2 > x\right)  \\ 
&\geq \mathbb{P}\left(\frac{\tilde{\bm{T}}_N}{\sqrt{n}} \in \mathcal{A} \right) - (1+\kappa)^{-\frac{1}{2}} d^{\frac{1}{4}}x - d_{\mathsf{C}}\left(\frac{\tilde{\bm{S}}_N}{\sqrt{n}},\frac{\tilde{\bm{T}}_N}{\sqrt{n}}\right)- \mathbb{P}\left(\left\|\frac{\bm{S}_n - \tilde{\bm{S}}_N}{\sqrt{n}}\right\|_2 > x\right);
\end{align*}
consequently, it can be symmetrically proved that
\begin{align}\label{eq:Wu-lower-bound}
&\mathbb{P}\left(\frac{\tilde{\bm{T}}_N}{\sqrt{n}} \in \mathcal{A} \right)-\mathbb{P}\left(\frac{\bm{S}_n}{\sqrt{n}} \in \mathcal{A}\right) \nonumber  \\ 
&\lesssim \left\{\bar{M}\left(\frac{q}{1-\lambda}\right)^{\frac{1}{4}}d^{\frac{3}{4}}\log^{\frac{1}{4}}\left(d\left\|\frac{\mathrm{d}\nu}{\mathrm{d}\mu}\right\|_{\mu,p}\right)+ \sqrt{M} d^{\frac{5}{8}}\log^{\frac{1}{2}} d\right\} n^{-\frac{1}{4}}\log n.
\end{align}
The theorem follows by combining \eqref{eq:Wu-initial-decompose}, \eqref{eq:Wu-Gaussian-comparison}, \eqref{eq:Wu-upper-bound}, \eqref{eq:Wu-lower-bound} and taking a supremum over $\mathcal{A} \in \mathscr{C}$.

\section{Proof of results regarding TD learning}\label{app:proof-TD}

Throughout this section, we denote
\begin{align}
&\bm{\Delta}_t = \bm{\theta}_t - \bm{\theta}^\star, \quad \forall t \in [T], \quad \text{and}  \\
&\bar{\bm{\Delta}}_T = \bar{\bm{\theta}}_T - \bm{\theta}^\star.
\end{align}

Furthermore, for every $t = 0,1,\ldots,T$, we denote 
\begin{align*}
\mathbb{E}_t[\cdot] := \mathbb{E}[\cdot \mid s_0,s_1,\ldots,s_t].
\end{align*}
Without any subscript, the operator $\mathbb{E}$ represents taking expectation with respect to all the samples starting from $s_0$.

\subsection{$L^2$ convergence of the TD estimation error}
The following theorem captures the asymptotic property of $\mathbb{E}\|\bm{\Delta}_t\|_2^2$ with Markov samples and is useful in our proofs for other results. 
Note that the bound holds, non-asymptotically, for all $t \geq t^\star$ where $t^\star$ is a problem-dependent quantity; we state it as an asymptotic result only for convenience.

\begin{theorem}\label{thm:markov-L2-convergence}
Consider TD with Polyak-Ruppert averaging~\eqref{eq:TD-update-all} with Markov samples and decaying stepsizes $\eta_t = \eta_0 t^{-\alpha}$ for $\alpha \in (\frac{1}{2},1)$. Suppose that the Markov transition kernel has a unique stationary distribution, a strictly positive spectral gap, and mixes exponentially as indicated by Assumption \ref{as:mixing}.  It can then be guaranteed that when $t \to \infty$,
\begin{align*}
\mathbb{E}\big[\|\bm{\Delta}_t\|_2^2\big] \lesssim (2\|\bm{\theta}^\star\|_2+1)^2 \left[\frac{1}{(1-\rho)^2}\frac{\eta_0}{\lambda_0(1-\gamma)}t^{-\alpha} \log^2 t + o\left(t^{-\alpha} \log^2 t\right)\right].
\end{align*}
\end{theorem}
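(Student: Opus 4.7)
My plan is to start from the basic one-step TD recursion
\begin{align*}
\bm{\Delta}_t = (\bm{I} - \eta_t \bm{A})\bm{\Delta}_{t-1} - \eta_t (\bm{A}_t - \bm{A}) \bm{\Delta}_{t-1} - \eta_t \bm{\xi}_t,
\qquad \bm{\xi}_t := \bm{A}_t \bm{\theta}^\star - \bm{b}_t,
\end{align*}
and to control $\mathbb{E}\|\bm{\Delta}_t\|_2^2$ by an inductive/telescoping argument. Squaring, the ``good'' deterministic contraction comes from Lemma \ref{lemma:A}, which yields $\|\bm{I}-\eta_t\bm{A}\| \le 1-\tfrac{1-\gamma}{2}\lambda_0\eta_t$ for small enough $\eta_t$, and produces a factor $\prod_{k=j+1}^t(1-\beta k^{-\alpha})$ that can be summed via Lemma \ref{lemma:R}. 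The genuine difficulties are the two terms $\eta_t(\bm{A}_t - \bm{A})\bm{\Delta}_{t-1}$ and $\eta_t\bm{\xi}_t$, both of which couple $(\bm{A}_t,\bm{b}_t)$ to the past via the Markov chain.

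For the noise term $\eta_t \bm{\xi}_t$, the natural object to bound is $\sum_{j=1}^t \eta_j (\prod_{k=j+1}^t(\bm{I}-\eta_k\bm{A}))\bm{\xi}_j$. Under Markov samples, $\{\bm{\xi}_j\}$ is not a martingale difference, so the plan is to invoke the Poisson equation exactly as described in Section \ref{sec:settings}: write $\bm{\xi}_j = \bm{U}(s_{j-1},s_j) - \mathcal{P}\bm{U}(s_{j-1})$ plus a telescoping remainder, thereby extracting a Markov chain induced martingale to which Corollary \ref{thm:matrix-bernstein-mtg} (or simply the variance computation used in its proof) applies. Squaring and taking expectation produces a bound of order $\sum_{j=1}^t \eta_j^2 \prod_{k=j+1}^t(1-\beta k^{-\alpha})^2 \asymp \eta_t/(\lambda_0(1-\gamma))$ via \eqref{eq:lemma-R-3}, which gives the $t^{-\alpha}$ scaling, and the $(1-\rho)^{-2}$ prefactor comes from the variance constant $\mathsf{Tr}(\tilde{\bm{\Gamma}})$ (Lemma \ref{lemma:Gamma}) together with the mixing bounds from Assumption \ref{as:mixing}.

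For the ``bias'' term $\eta_t(\bm{A}_t-\bm{A})\bm{\Delta}_{t-1}$, the obstruction is precisely the Markovian coupling: $\bm{A}_t$ depends on $s_{t-1},s_t$ while $\bm{\Delta}_{t-1}$ depends on the whole history $s_0,\dots,s_{t-1}$, so one cannot zero out a conditional expectation as in the i.i.d.\ case. The standard remedy I will use is a \emph{buffer/mixing-time shift}: fix $\tau \asymp \tmix(1/t) \lesssim \log t$ and rewrite $(\bm{A}_t - \bm{A})\bm{\Delta}_{t-1}$ as $(\bm{A}_t - \bm{A})\bm{\Delta}_{t-1-\tau} + (\bm{A}_t - \bm{A})(\bm{\Delta}_{t-1}-\bm{\Delta}_{t-1-\tau})$. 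Conditioning on $\mathscr{F}_{t-1-\tau}$, the first piece has conditional expectation bounded in norm by $m\rho^{\tau}\|\bm{\Delta}_{t-1-\tau}\|_2 = O(1/t)\|\bm{\Delta}_{t-1-\tau}\|_2$ thanks to Assumption \ref{as:mixing}, while the second piece is handled by the one-step drift $\|\bm{\Delta}_{t-1}-\bm{\Delta}_{t-1-\tau}\|_2 \lesssim \sum_{k=t-\tau}^{t-1} \eta_k(\|\bm{\Delta}_{k-1}\|_2+1)$. This gives a cross term of order $\eta_t \tau \cdot \max_{k \in [t-\tau,t]} \eta_k \mathbb{E}\|\bm{\Delta}_k\|_2^2$, which is exactly what introduces the $\log^2 t$ factor (one $\log t$ from $\tau$ and another from summing later).

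Putting it together, I will set up an inductive ansatz $\mathbb{E}\|\bm{\Delta}_t\|_2^2 \le C\, t^{-\alpha}\log^2 t$ with $C \asymp (2\|\bm{\theta}^\star\|_2+1)^2 \eta_0/[(1-\rho)^2 \lambda_0(1-\gamma)]$, verify it against the recursion obtained by squaring the three-term decomposition above, and use Young's inequality to absorb the cross terms into the contraction coefficient; Lemmas \ref{lemma:R} and \ref{lemma:Q-uni} handle all the step-size sums. The main technical obstacle, as explained, is the bias term $\eta_t(\bm{A}_t-\bm{A})\bm{\Delta}_{t-1}$, since naïve Cauchy--Schwarz loses a factor of $t^{\alpha}$; the buffer argument is what recovers the correct $t^{-\alpha}\log^2 t$ rate and produces the stated $(1-\rho)^{-2}$ dependence on the mixing factor.
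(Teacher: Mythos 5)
Your proposal is essentially correct and follows the same backbone as the paper's argument: square the one-step recursion, isolate the two Markovian cross terms $\mathbb{E}[\bm{\Delta}_{t-1}^\top(\bm{A}_t-\bm{A})\bm{\Delta}_{t-1}]$ and $\mathbb{E}[\bm{\Delta}_{t-1}^\top(\bm{A}_t\bm{\theta}^\star-\bm{b}_t)]$, handle them with the mixing-time buffer $\bm{\Delta}_{t-1}=\bm{\Delta}_{t-t_{\mix}}+(\bm{\Delta}_{t-1}-\bm{\Delta}_{t-t_{\mix}})$ (the paper packages the drift bounds in Lemma \ref{lemma:E-delta-tmix}), and close with an induction on the ansatz $\widetilde{C}\,t^{-\alpha}\log^2 t$ followed by the iteration \eqref{eq:lemma-R-3} to pin down the leading constant. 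The one place you genuinely depart is the noise term: you propose unrolling the recursion and running the Poisson-equation/Markov-induced-martingale machinery of Section \ref{sec:settings} (or the variance computation behind Theorem \ref{thm:matrix-bernstein-mtg}), whereas the paper simply applies the \emph{same} buffer trick to the one-step cross term $I_4$ in \eqref{eq:markov-L2-iter}, using only the TV-mixing bound of Assumption \ref{as:mixing}. Your route is viable and is in fact the one the paper uses later for the \emph{averaged} iterate, but for this $L^2$ bound it is heavier than needed: with time-varying weights $\bm{R}_j^t$ the Poisson telescoping does not collapse and leaves boundary terms plus a sum $\sum_j(\bm{R}_j^t-\bm{R}_{j-1}^t)\mathbb{E}_{j-1}[\bm{U}_j]$ that you would still have to control, and your write-up silently switches between the unrolled picture (for the noise) and the one-step picture (for the bias), which you would need to reconcile before the induction goes through. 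Two small inaccuracies in your accounting of constants: the $(1-\rho)^{-2}$ in the stated bound does not come from $\mathsf{Tr}(\tilde{\bm{\Gamma}})$ (which only carries one power of $(1-\rho)^{-1}$) but from $t_{\mix}^2\lesssim \log^2 t/(1-\rho)^2$ in the buffer argument (see \eqref{eq:tmix-bound-L2}), and likewise both logarithms in $\log^2 t$ arise from $t_{\mix}^2$ rather than one from $\tau$ and one from the summation. Neither affects the rate, and with the one-step picture adopted throughout your argument closes.
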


\begin{proof}
We firstly construct an iterative relation along the sequence $\{\mathbb{E}\|\bm{\Delta}_t\|_2^2\}_{t \geq 0}$ in general, and then refine our analysis using a specific choice of stepsizes. The TD iteration rule \eqref{eq:TD-update-all} directly implies that
\begin{align*}
\|\bm{\Delta}_{t}\|_2^2 &= \|\bm{\Delta}_{t-1}\|_2^2 -2\eta_t \bm{\Delta}_{t-1}^\top (\bm{A}_t \bm{\theta}_{t-1} - \bm{b}_t) + \eta_t^2 \|\bm{A}_t \bm{\theta}_{t-1} - \bm{b}_t\|_2^2 \\ 
&\leq \|\bm{\Delta}_{t-1}\|_2^2 -2\eta_t \bm{\Delta}_{t-1}^\top (\bm{A}_t \bm{\theta}^\star + \bm{A}_t \bm{\Delta}_{t-1} - \bm{b}_t) + 2\eta_t^2 (\|\bm{A}_t \bm{\Delta}_{t-1}\|_2^2 + \|\bm{A}_t \bm{\theta}^\star - \bm{b}_t\|_2^2)\\
&= \|\bm{\Delta}_{t-1}\|_2^2 -2\eta_t \bm{\Delta}_{t-1}^\top \bm{A \Delta}_{t-1} - 2\eta_t\bm{\Delta}_{t-1}^\top(\bm{A}_t-\bm{A})\bm{\Delta}_{t-1}\\ 
& - 2\eta_t \bm{\Delta}_{t-1}^\top (\bm{A}_t \bm{\theta}^\star - \bm{b}_t)+ 2\eta_t^2 (\|\bm{A}_t \bm{\Delta}_{t-1}\|_2^2 + \|\bm{A}_t \bm{\theta}^\star - \bm{b}_t\|_2^2).
\end{align*}
Since $\bm{\Delta}_{t-1}^\top \bm{A \Delta}_{t-1} \geq \lambda_0(1-\gamma)\|\bm{\Delta}\|_{t-1}$ due to \eqref{eq:lemma-A-1} and $\|\bm{A}_t\|\leq 1+\gamma$, we can bound $\mathbb{E}\|\bm{\Delta}_{t}\|_2^2$ by 
\begin{align}\label{eq:markov-L2-iter}
\mathbb{E}\|\bm{\Delta}_{t}\|_2^2 &\leq \mathbb{E}\|\bm{\Delta}_{t-1}\|_2^2 -2\lambda_0(1-\gamma) \eta_{t} \mathbb{E}\|\bm{\Delta}_{t-1}\|_2^2 +2\eta_t^2(1+\gamma)^2  \mathbb{E}\|\bm{\Delta}_{t-1}\|_2^2 \nonumber \\ 
&- 2\eta_t \mathbb{E}[\bm{\Delta}_{t-1}^\top(\bm{A}_t - \bm{A})\bm{\Delta}_{t-1}] - 2\eta_t \mathbb{E}[\bm{\Delta}_{t-1}^\top (\bm{A}_t \bm{\theta}^\star - \bm{b}_t)] + 2\eta_t^2 \mathbb{E}\|\bm{A}_t \bm{\theta}^\star - \bm{b}_t\|_2^2 \nonumber \\ 
&= \underset{I_1}{\underbrace{\left(1-2\lambda_0(1-\gamma)\eta_t + 2\eta_t^2(1+\gamma)^2\right) \mathbb{E}\|\bm{\Delta}_{t-1}\|_2^2 }}+ \underset{I_2}{\underbrace{2\eta_t^2 \mathbb{E}\|\bm{A}_t \bm{\theta}^\star - \bm{b}_t\|_2^2}} \nonumber \\ 
&- 2\eta_t\underset{I_3}{\underbrace{ \mathbb{E}[\bm{\Delta}_{t-1}^\top(\bm{A}_t - \bm{A})\bm{\Delta}_{t-1}]}} - 2 \eta_t \underset{I_4} {\underbrace{\mathbb{E}[\bm{\Delta}_{t-1}^\top (\bm{A}_t \bm{\theta}^\star - \bm{b}_t)]}}.
\end{align}

In this expression, $I_1$ is contractive with respect to $\mathbb{E}\|\bm{\Delta}_{t-1}\|_2^2$ as long as $\eta_t$ is sufficiently small, while $I_2$ is proportional to $\eta_t^2$ since $\mathbb{E}\|\bm{A}_t\bm{\theta}^\star - \bm{b}_t\|_2^2$ is independent of $t$. These two terms are desirable and can be left as they are; 

The difficulty of this proof lies in bounding $I_3$ and $I_4$ using Markov samples. Notice that with $i.i.d.$ sampling, both terms are actually $0$; hence, we aim to bound them by applying the mixing property of the Markov chain. 

To simplify notation, throughout the proof, we denote 
\begin{align*}
t_{\mix}:=t_{\mix}(\eta_t) + 1,
\end{align*}
so that with Markov samples, $s_{t-1}\mid \mathscr{F}_{t-t_{\mix}} \sim P^{t_{\mix}-1}(\cdot \mid s_{t-t_{\mix}})$, and that
\begin{align*}
d_{\mathsf{TV}}(P^{t_{\mix}-1}(\cdot \mid s_{t-t_{\mix}}),\mu) \leq \eta_t.
\end{align*}
Meanwhile, Assumption \ref{as:mixing} implies that

\begin{align}\label{eq:tmix-bound-L2}
t_{\mix} \leq \frac{\log(m/\eta_t)}{\log(1/\rho)} +1 = \frac{\log(m/\eta_0) + \alpha \log t}{\log(1/\rho)} + 1< \frac{\log(m/\eta_0) + \alpha \log t}{1-\rho} + 1.
\end{align}

In other words, $t_{\mix}(\eta_t)$ grows at most by $O(\log t)$; therefore, in what follows, we can assume that $t$ is large enough such that $t \geq 2t_{\mix}$. The essential idea of bounding $I_3$ and $I_4$ involves decomposing $\bm{\Delta}_{t-1}$ by
\begin{align*}
\bm{\Delta}_{t-1} = (\bm{\Delta}_{t-1} - \bm{\Delta}_{t-t_{\mix}}) + \bm{\Delta}_{t-t_{\mix}},
\end{align*}
where the norm of $(\bm{\Delta}_{t-1} - \bm{\Delta}_{t-t_{\mix}})$ is bounded by the decaying stepsizes, while the correlation between $\bm{\Delta}_{t-t_{\mix}}$ and $\bm{A}_t, \bm{b}_t$ is bounded by the mixing property of the Markov chain.

We address $I_3$ and $I_4$ respectively.

\paragraph{Bounding $I_3$} The definition of $t_{\mix}$ implies
\begin{align*}
&\left|\mathbb{E}[\bm{\Delta}_{t-t_{\mix}}^\top (\bm{A}_t - \bm{A})\bm{\Delta}_{t-t_{\mix}}]\right|\\
&= \left|\mathbb{E}[\mathbb{E}_{t-t_{\mix}}[\bm{\Delta}_{t-t_{\mix}}^\top (\bm{A}_t - \bm{A})\bm{\Delta}_{t-t_{\mix}}]]\right| \\ 
&= \left|\mathbb{E} \left[\mathbb{E}_{s_{t-1} \sim P^{t_{\mix}-1}(\cdot \mid s_{t-t_{\mix}})}[\mathbb{E}_{s_t \sim P(\cdot \mid s_{t-1})}[\bm{\Delta}_{t-t_{\mix}}^\top\bm{A}_t \bm{\Delta}_{t-t_{\mix}}]] - \mathbb{E}_{s_{t-1} \sim \mu}[\mathbb{E}_{s_t \sim P(\cdot \mid s_{t-1})}[\bm{\Delta}_{t-t_{\mix}}^\top\bm{A}_t \bm{\Delta}_{t-t_{\mix}}]  \right]] \right|\\ 
&\leq \mathbb{E}\sup_{s_{t-1}} |\mathbb{E}_{s_t \sim P(\cdot \mid s_{t-1})}\bm{\Delta}_{t-t_{\mix}}^\top\bm{A}_t \bm{\Delta}_{t-t_{\mix}}| \cdot d_{\mathsf{TV}}(P^{t_{\mix}-1}(\cdot \mid s_{t-t_{\mix}}),\mu) \\ 
&\leq \mathbb{E}[2\|\bm{\Delta}_{t - t_{\mix}}\|_2^2 ] \cdot \eta_t;
\end{align*}
notice that the inequality on the fourth line follows from the basic property of the TV distance.
As a direct consequence, $I_3$ is featured by 
\begin{align*}
I_3 &= \mathbb{E}[\bm{\Delta}_{t-t_{\mix}}^\top (\bm{A}_t - \bm{A})\bm{\Delta}_{t-t_{\mix}}] + 2\mathbb{E}[\bm{\Delta}_{t-t_{\mix}}^\top (\bm{A}_t - \bm{A})(\bm{\Delta}_{t-1} - \bm{\Delta}_{t-t_{\mix}})] \\ 
&+ \mathbb{E}[(\bm{\Delta}_{t-1} - \bm{\Delta}_{t-t_{\mix}})^\top (\bm{A}_t - \bm{A})(\bm{\Delta}_{t-1} - \bm{\Delta}_{t-t_{\mix}})] \\ 
&\geq -2\eta_t\mathbb{E}\|\bm{\Delta}_{t-t_{\mix}}\|_2^2 -4 \mathbb{E}[\|\bm{\Delta}_{t-t_{\mix}}\|_2\|\bm{\Delta}_{t-1} - \bm{\Delta}_{t-t_{\mix}}\|_2]-2 \mathbb{E}\|\bm{\Delta}_{t-1} - \bm{\Delta}_{t-t_{\mix}}\|_2^2 \\ 
&= -2\eta_t\mathbb{E}\|\bm{\Delta}_{t-t_{\mix}}\|_2^2 -4 \mathbb{E}[\|\bm{\Delta}_{t-t_{\mix}}\|_2\|\bm{\theta}_{t-1} - \bm{\theta}_{t-t_{\mix}}\|_2]-2 \mathbb{E}\|\bm{\theta}_{t-1} - \bm{\theta}_{t-t_{\mix}}\|_2^2.
\end{align*}
To lower bound the right-hand-side of the last expression, the following lemma comes in handy, with its proof postponed to Appendix \ref{app:proof-lemma-E-delta-tmix}.
\begin{lemma}\label{lemma:E-delta-tmix}
For the TD iterations \eqref{eq:TD-update-all} with Markov samples and non-increasing stepsizes $\eta_1 \geq \ldots \geq \eta_T$ it holds that, for all $t \geq t_{\mix}$, 
\begin{subequations}
\begin{align}
&\mathbb{E}\|\bm{\theta}_{t-1} - \bm{\theta}_{t-t_{\mix}}\|_2 \leq t_{\mix}\eta_{t-t_{\mix}}(2\|\bm{\theta^\star}\|_2 + 1)+ 2\eta_{t - t_{\mix}}\sum_{i=t-t_{\mix}}^{t-2}\mathbb{E}\|\bm{\Delta}_i\|_2; \label{eq:E-delta-tmix-1}\\ 
&\mathbb{E}\|\bm{\theta}_{t-1} - \bm{\theta}_{t-t_{\mix}}\|_2^2 \leq 2t_{\mix}\eta_{t-t_{\mix}}^2\left[t_{\mix}(2\|\bm{\theta}^\star\|_2+1)^2 + 4 \sum_{i=t-t_{\mix}}^{t-2} \mathbb{E}\|\bm{\Delta}_i\|_2^2\right]; \label{eq:E-delta-tmix-2}\\ 
&\mathbb{E}[\|\bm{\Delta}_{t-t_{\mix}}\|_2 \|\bm{\theta}_{t-1} - \bm{\theta}_{t-t_{\mix}}\|_2] \nonumber \\ 
&\leq t_{\mix}\eta_{t-t_{\mix}}(2\|\bm{\theta}^\star\|_2 +1)\mathbb{E}\|\bm{\Delta}_{t- t_{\mix}}\|_2+ \eta_{t - t_{\mix}}\sum_{i=t-t_{\mix}}^{t-2}\mathbb{E}\|\bm{\Delta}_{i}\|_2^2 + t_{\mix} \eta_{t - t_{\mix}} \mathbb{E}\|\bm{\Delta}_{t- t_{\mix}}\|_2^2.\label{eq:E-delta-tmix-3}
\end{align}
\end{subequations}
\end{lemma}

Lemma \ref{lemma:E-delta-tmix} directly leads to  
\begin{align}\label{eq:markov-L2-I3-bound}
I_3 &\geq -2\eta_t\mathbb{E}\|\bm{\Delta}_{t-t_{\mix}}\|_2^2 - 4t_{\mix}^2 \eta_{t-t_{\mix}}^2(2\|\bm{\theta^\star}\|_2 + 1)^2 - 16 t_{\mix} \eta_{t-t_{\mix}}^2\sum_{i=t-t_{\mix}}^{t-2}\mathbb{E}\|\bm{\Delta}_i\|_2^2 \nonumber \\ 
&- 4t_{\mix}\eta_{t-t_{\mix}}(2\|\bm{\theta}^\star\|_2 +1)\mathbb{E}\|\bm{\Delta}_{t- t_{\mix}}\|_2 - 4\eta_{t - t_{\mix}}\sum_{i=t-t_{\mix}}^{t-2}\mathbb{E}\|\bm{\Delta}_i\|_2^2  - 4t_{\mix} \eta_{t - t_{\mix}} \mathbb{E}\|\bm{\Delta}_{t- t_{\mix}}\|_2^2 \nonumber \\ 
&= -(2\eta_t +4t_{\mix}\eta_{t-t_{\mix}})\mathbb{E}\|\bm{\Delta}_{t-t_{\mix}}\|_2^2 - (16 t_{\mix} \eta_{t-t_{\mix}}^2+ 4\eta_{t - t_{\mix}})\sum_{i=t-t_{\mix}}^{t-2}\mathbb{E}\|\bm{\Delta}_i\|_2^2 \nonumber \\ 
&-4t_{\mix}^2 \eta_{t-t_{\mix}}^2(2\|\bm{\theta^\star}\|_2 + 1)^2 -4t_{\mix}\eta_{t-t_{\mix}}(2\|\bm{\theta}^\star\|_2 +1)\mathbb{E}\|\bm{\Delta}_{t- t_{\mix}}\|_2.
\end{align}

\paragraph{Bounding $I_4$} Similarly, we decompose $I_4$ as
\begin{align*}
I_4 &=  \mathbb{E}[\bm{\Delta}_{t-t_{\mix}}^\top (\bm{A}_t \bm{\theta}^\star - \bm{b}_t)] + \mathbb{E}[(\bm{\Delta}_{t-1} - \bm{\Delta}_{t-t_{\mix}})^\top (\bm{A}_t \bm{\theta}^\star - \bm{b}_t)]\\ 
&= \mathbb{E}[\bm{\Delta}_{t-t_{\mix}}^\top (\bm{A}_t \bm{\theta}^\star - \bm{b}_t)] + \mathbb{E}[(\bm{\theta}_{t-1} - \bm{\theta}_{t-t_{\mix}})^\top (\bm{A}_t \bm{\theta}^\star - \bm{b}_t)].
\end{align*}
The first term can be bounded using  the $t_{\mix}$ separation: 
\begin{align*}
&|\mathbb{E}[\bm{\Delta}_{t-t_{\mix}}^\top (\bm{A}_t \bm{\theta}^\star - \bm{b}_t)]| \\ 
&= |\mathbb{E} [\mathbb{E}_{t-t_{\mix}}[\bm{\Delta}_{t-t_{\mix}}^\top (\bm{A}_t \bm{\theta}^\star - \bm{b}_t)]]|\\ 
&= |\mathbb{E} [\mathbb{E}_{s_{t-1} \sim P^{t_{\mix}-1}(\cdot \mid s_{t-t_{\mix}})}[\mathbb{E}_{s_t \sim P(\cdot \mid s_{t-1})}[\bm{\Delta}_{t-t_{\mix}}^\top(\bm{A}_t \bm{\theta}^\star - \bm{b}_t)]] -  \mathbb{E}_{s_{t-1}\sim \mu }[\mathbb{E}_{s_t \sim P(\cdot \mid s_{t-1}) }[\bm{\Delta}_{t-t_{\mix}}^\top (\bm{A}_t \bm{\theta}^\star - \bm{b}_t)]]]|\\
&\leq \mathbb{E} \sup_{s_{t-1}} |\mathbb{E}_{s_t \sim P(\cdot \mid s_{t-1})}\bm{\Delta}_{t-t_{\mix}}^\top(\bm{A}_t \bm{\theta}^\star - \bm{b}_t)| \cdot d_{\mathsf{TV}}(P^{t_{\mix}-1}(\cdot \mid s_{t-t_{\mix}}),\mu)\\
&\leq  \mathbb{E}\|\bm{\Delta}_{t-t_{\mix}}\|_2 (2\|\bm{\theta}^\star\|_2 + 1) \cdot \eta_t,
\end{align*}
while the second term can be bounded by stepsizes:
\begin{align*}
&\mathbb{E}[(\bm{\theta}_{t-1} - \bm{\theta}_{t-t_{\mix}})^\top (\bm{A}_t \bm{\theta}^\star - \bm{b}_t)]\\ 
&\geq -(2\|\bm{\theta}^\star\|_2 + 1)\mathbb{E}\|\bm{\theta}_{t-1} - \bm{\theta}_{t-t_{\mix}}\|_2 \\ 
&\geq -(2\|\bm{\theta}^\star\|_2 + 1) \left[t_{\mix} \eta_{t-t_{\mix}} (2\|\bm{\theta}^\star\|_2 + 1) + 2\eta_{t-t_{\mix}} \sum_{i=t-t_{\mix}}^{t-2} \mathbb{E}\|\bm{\Delta}_i\|_2\right]\\ 
&= -\eta_{t-t_{\mix}}(2\|\bm{\theta}^\star\|_2 + 1) \left[t_{\mix}(2\|\bm{\theta}^\star\|_2 + 1) + 2\sum_{i=t-t_{\mix}}^{t-2} \mathbb{E}\|\bm{\Delta}_i\|_2 \right].
\end{align*}
Therefore, $I_4$ can be bounded by
\begin{align}\label{eq:markov-L2-I4-bound}
I_4 &\geq -\eta_t \mathbb{E}\|\bm{\Delta}_{t-t_{\mix}}\|_2 (2\|\bm{\theta}^\star\|_2 + 1) \nonumber \\ 
&-\eta_{t-t_{\mix}}(2\|\bm{\theta}^\star\|_2 + 1) \left[t_{\mix}(2\|\bm{\theta}^\star\|_2 + 1) + 2\sum_{i=t-t_{\mix}}^{t-2} \mathbb{E}\|\bm{\Delta}_i\|_2 \right].
\end{align}
\paragraph{Combining terms} With $I_3$ and $I_4$ bounded, we now return to Equation \eqref{eq:markov-L2-iter}. $\mathbb{E}\|\bm{\Delta}_{t}\|_2^2$ can be upper bounded by 
\begin{align}\label{eq:markov-L2-iter-bound}
\mathbb{E}\|\bm{\Delta}_{t}\|_2^2 &\leq I_1 + I_2 - 2\eta_t(I_3 + I_4) \nonumber \\ 
&\leq [1-2\lambda_0(1-\gamma)\eta_t + 2\eta_t^2(1+\gamma)^2]\mathbb{E}\|\bm{\Delta}_{t-1}\|_2^2 + 2\eta_t^2(2\|\bm{\theta}\|_2+1)^2 \nonumber \\
&+2\eta_t (2\eta_t +4t_{\mix}\eta_{t-t_{\mix}})\mathbb{E}\|\bm{\Delta}_{t-t_{\mix}}\|_2^2 + 2\eta_t (16 t_{\mix} \eta_{t-t_{\mix}}^2+ 4\eta_{t - t_{\mix}})\sum_{i=t-t_{\mix}}^{t-2}\mathbb{E}\|\bm{\Delta}_i\|_2^2 \nonumber \\ 
&+8\eta_t t_{\mix}^2 \eta_{t-t_{\mix}}^2 (2\|\bm{\theta^\star}\|_2 + 1)^2 + 8\eta_t t_{\mix}\eta_{t-t_{\mix}}(2\|\bm{\theta}^\star\|_2 +1)\mathbb{E}\|\bm{\Delta}_{t- t_{\mix}}\|_2 \nonumber \\ 
&+ 2\eta_t^2 \mathbb{E}\|\bm{\Delta}_{t-t_{\mix}}\|_2 (2\|\bm{\theta}^\star\|_2 + 1) +2\eta_t \eta_{t-t_{\mix}}(2\|\bm{\theta}^\star\|_2 + 1)t_{\mix}(2\|\bm{\theta}^\star\|_2 + 1) \nonumber \\ 
&+4\eta_t \eta_{t-t_{\mix}}(2\|\bm{\theta}^\star\|_2 + 1)\sum_{i=t-t_{\mix}}^{t-2} \mathbb{E}\|\bm{\Delta}_i\|_2 .
\end{align}
\paragraph{Specifying the polynomially-decaying stepsizes} With polynomially-decaying stepsizes, when $t$ is sufficiently large, it can be guaranteed that $t > 2t_{\mix}$, and therefore $\eta_{t-t_{\mix}} \geq 2^{-\alpha} \eta_t$. Furthermore, for sufficiently large $t$, $\eta_t (1+\gamma)^2 < \lambda_0(1-\gamma)$. Therefore, by dividing $(2\|\bm{\theta}^\star\|_2+1)^2$ on both sides and combining terms, we can simplify Equation \eqref{eq:markov-L2-iter-bound} as
\begin{align}\label{eq:markov-L2-iter-simplify}
\frac{\mathbb{E}\|\bm{\Delta}_{t}\|_2^2}{(2\|\bm{\theta}^\star\|_2+1)^2} &\leq (1-\widetilde{C}_1t^{-\alpha})\frac{\mathbb{E}\|\bm{\Delta}_{t-1}\|_2^2}{(2\|\bm{\theta}^\star\|_2+1)^2}  + \widetilde{C}_2 t^{-2\alpha}\log^2 t + \widetilde{C}_3 t^{-2\alpha} \log t \frac{\mathbb{E}\|\bm{\Delta}_{t-t_{\mix}}\|_2^2}{{(2\|\bm{\theta}^\star\|_2+1)^2} } \nonumber \\ 
&+ \widetilde{C}_4 t^{-2\alpha} \log t \sum_{i=t-t_{\mix}}^{t-2}\frac{\mathbb{E}\|\bm{\Delta}_i\|_2^2}{{(2\|\bm{\theta}^\star\|_2+1)^2} } + \widetilde{C}_5 t^{-2\alpha} \sum_{i=t-t_{\mix}}^{t-2}\frac{\mathbb{E}\|\bm{\Delta}_i\|_2}{2\|\bm{\theta}^\star\|_2+1} ,
\end{align}
where $\widetilde{C}_1$ through $\widetilde{C}_5$ are constants depending on $\alpha,\eta_0,m$ and $\rho$. Notice that the $\log t$ terms occur due to $t_{\mix} = O(\log t)$; see \eqref{eq:tmix-bound-L2}. We will use an induction argument based on the relation \eqref{eq:markov-L2-iter-simplify}. For simplicity, let
\begin{align*}
X_t = \frac{\|\bm{\Delta}_t\|_2}{2\|\bm{\theta}^\star\|_2+1};
\end{align*}
now suppose that
\begin{align}\label{eq:markov-L2-induction-assumption}
\mathbb{E}[X_t^2] \leq \widetilde{C} \cdot \frac{\log^2 t}{t^{\alpha}}, \quad \forall 1 < t \leq k,
\end{align}
for some $\widetilde{C}$. 

Our goal is to demonstrate, inductively, that
\begin{align}\label{eq:markov-L2-induction-goal}
\mathbb{E}[X_{k+1}^2] \leq \widetilde{C} \cdot \frac{\log^2 (k+1)}{(k+1)^{\alpha}}.
\end{align}
Towards this end, the iterative relation \eqref{eq:markov-L2-iter-simplify} implies that
\begin{align}\label{eq:markov-L2-induction-1}
\mathbb{E}[X_{k+1}^2] &\leq \left(1-\widetilde{C}_1(k+1)^{-\alpha}\right)\mathbb{E}[X_k^2] + \widetilde{C}_2 (k+1)^{-2\alpha}\log^2(k+1) \nonumber \\ 
&+ \widetilde{C}_3(k+1)^{-2\alpha} \log (k+1) \mathbb{E}[X_{k+1-t_{\mix}}^2] \nonumber \\ 
&+ \widetilde{C}_4(k+1)^{-2\alpha} \log(k+1) \sum_{i=k+1-t_{\mix}}^{k-1} \mathbb{E}[X_i^2] \nonumber \\ 
&+ \widetilde{C}_5(k+1)^{-2\alpha}\sum_{i=k+1-t_{\mix}}^{k-1} \mathbb{E}[X_i].
\end{align}
Here, the induction assumption guarantees that, as long as $k > 2t_{\mix}$, 
\begin{align*}
&\mathbb{E}[X_k^2] \leq \widetilde{C} k^{-\alpha} \log^2 k, \\ 
&\mathbb{E}[X_{k+1-t_{\mix}}^2] \leq \widetilde{C} (k+1-t_{\mix})^{-\alpha} \log^2(k+1-t_{\mix}) < 2^{-\alpha} \widetilde{C} (k+1)^{-\alpha} \log^2(k+1), 
\end{align*}
and that 
\begin{align*}
\sum_{i=k+1-t_{\mix}}^{k-1} \mathbb{E}[X_i] &\leq t_{\mix} \cdot \widetilde{C} (k+1-t_{\mix})^{-\frac{\alpha}{2}} \log(k+1-t_{\mix}) \\ 
&\lesssim \widetilde{C} \cdot (k+1)^{-\frac{\alpha}{2}} \log^2(k+1),\\ 
\sum_{i=k+1-t_{\mix}}^{k-1} \mathbb{E}[X_i^2] &\leq t_{\mix} \cdot \widetilde{C} (k+1-t_{\mix})^{-\alpha} \log^2(k+1-t_{\mix}) \\ 
&\lesssim \widetilde{C} \cdot (k+1)^{-\alpha} \log^3(k+1).
\end{align*}
Plugging these inequalities into the iteration relation \eqref{eq:markov-L2-induction-1}, we obtain that for sufficiently large $k$,
\begin{align*}
\mathbb{E}[X_{k+1}^2] &\leq \widetilde{C} \cdot \left[k^{-\alpha} \log^2 k - \widetilde{C}_1(k+1)^{-\alpha} k^{-\alpha} \log^2 k + \widetilde{C}_3 (k+1)^{-\frac{5}{2}\alpha}\log^2(k+1)\right]\\ 
& + \widetilde{C}_2(k+1)^{-2\alpha} \log^2(k+1).
\end{align*}
Here, $\widetilde{C}_1, \widetilde{C}_2$ and $\widetilde{C}_3$ are again constants independent of $t$, with there exact values can change from \eqref{eq:markov-L2-induction-1}. Therefore, it suffices to prove that
\begin{align}\label{eq:markov-L2-induction-2}
&\widetilde{C} \cdot \left[k^{-\alpha} \log^2 k - \widetilde{C}_1(k+1)^{-\alpha} k^{-\alpha} \log^2 k + \widetilde{C}_3 (k+1)^{-\frac{5}{2}\alpha}\log^2(k+1)\right]\nonumber \\ 
& + \widetilde{C}_2(k+1)^{-2\alpha} \log^2(k+1) \leq \widetilde{C} (k+1)^{-\alpha} \log^2(k+1).
\end{align}
Notice that when $x$ is sufficiently large, the function $f(x) = x^{-\alpha}\log^2(x)$ is monotonically decreasing; therefore, for sufficiently large $k$, it can be guaranteed that $k^{-\alpha} \log^2 k > (k+1)^{-\alpha} \log (k+1)$. Therefore, the left-hand-side of \eqref{eq:markov-L2-induction-2} is upper bounded by
\begin{align*}
&\widetilde{C} \cdot \left[k^{-\alpha} \log^2 k - \widetilde{C}_1(k+1)^{-\alpha} k^{-\alpha} \log^2 k + \widetilde{C}_3 (k+1)^{-\frac{5}{2}\alpha}\log^2(k+1)\right]\\ 
 &+\widetilde{C}_2(k+1)^{-2\alpha} \log^2(k+1) \\ 
&\leq \widetilde{C} \cdot \left[k^{-\alpha} \log^2 (k+1) - \widetilde{C}_1(k+1)^{-\alpha} (k+1)^{-\alpha} \log^2 (k+1) + \widetilde{C}_3 (k+1)^{-\frac{5}{2}\alpha}\log^2(k+1)\right]\\ 
 &+\widetilde{C}_2(k+1)^{-2\alpha} \log^2(k+1) \\ 
 &= \widetilde{C} \log^2(k+1) \cdot \left[k^{-\alpha} +\left(\frac{\widetilde{C}_2}{\widetilde{C}}- \widetilde{C}_1\right)(k+1)^{-2\alpha} + \widetilde{C}_3 (k+1)^{-\frac{5}{2}\alpha} \right].
\end{align*}
Hence, in order to prove \eqref{eq:markov-L2-induction-2}, it suffices to show that
\begin{align*}
k^{-\alpha} +\left(\frac{\widetilde{C}_2}{\widetilde{C}}- \widetilde{C}_1\right)(k+1)^{-2\alpha} + \widetilde{C}_3 (k+1)^{-\frac{5}{2}\alpha} \leq (k+1)^{-\alpha},
\end{align*}
which is equivalent to
\begin{align*}
(k+1)^{2\alpha}\left[k^{-\alpha} - (k+1)^{-\alpha}\right] + \widetilde{C}_3 (k+1)^{-\frac{\alpha}{2}} \leq \widetilde{C}_1 - \frac{\widetilde{C}_2}{\widetilde{C}}.
\end{align*}
Here, we further notice that the function $f(x) = x^{-\alpha}$ is monotonically decreasing and convex, so $k^{-\alpha}-(k+1)^{-\alpha} = f(k) -f(k+1) \leq -f'(k+1) = \alpha(k+1)^{-\alpha-1}$. Hence, the proof boils down to showing
\begin{align*}
\widetilde{C}_1 - \frac{\widetilde{C}_2}{\widetilde{C}} &\geq (k+1)^{2\alpha}\cdot \alpha(k+1)^{-\alpha-1} + \widetilde{C}_3 (k+1)^{-\frac{\alpha}{2}} \\ 
&= \alpha(k+1)^{\alpha-1} + \widetilde{C}_3 (k+1)^{-\frac{\alpha}{2}}
\end{align*}
for an appropriate $\widetilde{C}$ that is independent of $t$ and satisfies the induction assumption \eqref{eq:markov-L2-induction-assumption}. 
Towards this end, we define a function $f(\widetilde{C},k)$ as
\begin{align*}
f(\widetilde{C}, k):= \widetilde{C}_1 - \frac{\widetilde{C}_2}{\widetilde{C}} - \alpha(k+1)^{\alpha-1}- \widetilde{C}_3 (k+1)^{-\frac{\alpha}{2}}
\end{align*}
It is easy to verify that for any $\widetilde{C}$,
\begin{align*}
\lim_{k \to \infty} f(\widetilde{C}, k) = \widetilde{C}_1 - \frac{\widetilde{C}_2}{\widetilde{C}}.
\end{align*}
Therefore, we can take 
\begin{align*}
&k^\star = \min\left\{k:f\left(\max_{1 \leq t \leq k} \frac{t^\alpha}{\log^2 t} \mathbb{E}[X_t^2], k\right) \geq 0 \right\}, \quad \text{and} \\
&\widetilde{C} = \max_{1 \leq t \leq k^\star} \frac{t^\alpha}{\log^2 t} \mathbb{E}[X_t^2].
\end{align*}
On one hand, if $k^\star$ does not exist, then from our analysis we can conclude that 
\begin{align*}
\mathbb{E}[X_t^2] \leq \frac{\widetilde{C}_2}{\widetilde{C}_1} \frac{\log^2 t}{t^{\alpha}}
\end{align*}
for all $t \geq 1$; on the other hand, if $k^\star$ does exist, then an induction argument guarantees that
\begin{align*}
\mathbb{E}[X_t^2] \leq \widetilde{C} \frac{\log^2 t}{t^{\alpha}}
\end{align*}
for all $t \geq 1$. In both cases, \eqref{eq:markov-L2-induction-goal} holds true. 
\paragraph{Specification of the coefficient} We next try to specify the coefficient corresponding to the leading term of the upper bound. In the previous paragraph, we have essentially proved that, there exists a $t^\star \in \mathbb{N}$ depending on $\alpha,\eta_0,\lambda_0,m$ and $\rho$ such that
\begin{align*}
\mathbb{E}[X_t^2] \leq 1, \quad \text{for all }\quad t \geq t^\star.
\end{align*}
Hence, when $t > t^\star$, a closer examination of \eqref{eq:markov-L2-iter-bound} yields
\begin{align*}
\mathbb{E}[X_t^2] \leq (1-\lambda_0(1-\gamma)\eta_t) \mathbb{E}[X_{t-1}^2] + C  \frac{\eta_0^2}{(1-\rho)^2} t^{-2\alpha}\log^2 t
\end{align*}
for a \emph{universal constant} $C$. Hence by iteration, it can be guaranteed that when $t > t^\star$,
\begin{align*}
\mathbb{E}[X_t^2] \leq \underset{I_1}{\underbrace{\prod_{i=t^\star+1}^t (1-\beta i^{-\alpha})X_{t^\star}}} + C  \underset{I_2}{\underbrace{\frac{\eta_0^2}{(1-\rho)^2}\sum_{i=t^\star}^t (i^{-2\alpha} \log^2 i)\prod_{k=i+1}^t (1-\beta k^{-\alpha})}}.
\end{align*}
Here, it is easy to verify that $I_1$ converges exponentially with respect to $t$, and that $I_2$ is upper bounded by
\begin{align*}
I_2 &\leq \frac{\eta_0^2}{(1-\rho)^2} \log^2 t \sum_{i=t^\star}^t i^{-2\alpha} \prod_{k=i+1}^t (1-\beta k^{-\alpha}) \\ 
&\leq \frac{\eta_0^2}{(1-\rho)^2} \log^2 t \sum_{i=1}^t i^{-2\alpha} \prod_{k=i+1}^t (1-\beta k^{-\alpha}) \\ 
&\overset{(i)}{=} \frac{\eta_0^2}{(1-\rho)^2} \log^2 t \left(\frac{1}{\beta} t^{-\alpha} + O(t^{-1})\right)\\ 
&= \frac{\eta_0}{(1-\rho)^2 \lambda_0(1-\gamma)} t^{-\alpha}\log^2 t + o(t^{-\alpha}\log^2 t).
\end{align*}
The theorem follows immediately.

\end{proof}

\subsection{High-probability convergence guarantee for the original TD estimation error}\label{app:proof-TD-original}
Similar to the case with $i.i.d.$ samples, we firstly state the following theorem for the high-probability convergence rate for the original TD estimation error $\bm{\Delta}_t$ with Markov samples.
\begin{theorem}
\label{thm:markov-deltat-convergence}
Consider TD with Polyak-Ruppert averaging~\eqref{eq:TD-update-all} with Markov samples and decaying stepsizes $\eta_t = \eta_0 t^{-\alpha}$ for $\alpha \in (\frac{1}{2},1)$. Suppose that the Markov transition kernel has a unique stationary distribution $\mu$, a strictly positive spectral gap $1-\lambda > 0$, and Assumption \ref{as:mixing} holds true. Then for any $\delta \in (0,1)$, there exists $\eta_0 > 0$, such that with probability at least $1-{\delta}$,
\begin{align*}
\|\bm{\Delta}_t\|_2 &\leq \frac{13}{2}\|\bm{\theta}^\star\|_2 + \frac{5}{4} \quad \text{and}\\
\|\bm{\Delta}_t\|_2 &\lesssim \eta_0 \sqrt{\frac{2\tmix(t^{-\frac{\alpha}{2}})}{(2\alpha-1)}\log \frac{9T\tmix(t^{-\frac{\alpha}{2}})}{\delta}}(2\|\bm{\theta}^\star\|_2  + 1)\left(\frac{(1-\gamma)\lambda_0 \eta_0}{4\alpha}\right)^{-\frac{\alpha}{2(1-\alpha)}}t^{-\frac{\alpha}{2}}
\end{align*}
hold simultaneously for all $t \in [T]$.
\end{theorem}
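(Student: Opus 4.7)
\textbf{Proof proposal for Theorem \ref{thm:markov-deltat-convergence}.}

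The plan is to proceed by induction on $t$, simultaneously tracking both the uniform bound $\|\bm{\Delta}_t\|_2 \leq \frac{13}{2}\|\bm{\theta}^\star\|_2 + \frac{5}{4}$ and the sharper $t^{-\alpha/2}$ bound. The starting point is the rewriting of the TD recursion as a linear contraction driven by additive Markovian noise:
\begin{align*}
\bm{\Delta}_t = (\bm{I} - \eta_t \bm{A})\bm{\Delta}_{t-1} + \eta_t (\bm{A} - \bm{A}_t)\bm{\Delta}_{t-1} - \eta_t (\bm{A}_t \bm{\theta}^\star - \bm{b}_t).
\end{align*}
Unrolling from $\bm{\Delta}_0 = -\bm{\theta}^\star$ yields
\begin{align*}
\bm{\Delta}_t = \prod_{k=1}^t (\bm{I} - \eta_k \bm{A}) \bm{\Delta}_0 + \sum_{i=1}^t \eta_i \prod_{k=i+1}^t (\bm{I} - \eta_k \bm{A})\, \bm{\xi}_i, \qquad \bm{\xi}_i := (\bm{A} - \bm{A}_i)\bm{\Delta}_{i-1} - (\bm{A}_i\bm{\theta}^\star - \bm{b}_i).
\end{align*}
By \eqref{eq:lemma-A-5}, for $\eta_0$ sufficiently small the deterministic matrix product decays geometrically, so the first term contributes only transient mass.

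First, I would prove the uniform bound by a bootstrap induction: assuming $\|\bm{\Delta}_s\|_2 \leq \frac{13}{2}\|\bm{\theta}^\star\|_2 + \frac{5}{4}$ holds for every $s < t$, the noise $\bm{\xi}_s$ is uniformly bounded in norm by a problem-dependent constant times $\|\bm{\theta}^\star\|_2 + 1$. This lets me apply a high-probability estimate to $\sum_{i=1}^{t} \eta_i \prod_{k=i+1}^t (\bm{I} - \eta_k \bm{A})\bm{\xi}_i$ and conclude the uniform bound propagates to $t$. The key decomposition is to split $\bm{\xi}_i$ into the conditionally-centered martingale part $\bm{\xi}_i - \mathbb{E}[\bm{\xi}_i \mid \mathscr{F}_{i-1}]$ and the residual $\bar{\bm{g}}(s_{i-1}) := \mathbb{E}[\bm{\xi}_i \mid \mathscr{F}_{i-1}]$. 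The first is a Markov chain induced martingale in the sense of Section \ref{sec:settings}, and the second is an additive functional of the chain whose stationary mean vanishes (since $\bm{A}\bm{\theta}^\star = \bm{b}$).

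The bulk of the work lies in sharpening the noise control to the $t^{-\alpha/2}$ scale. Here I would invoke the Poisson equation machinery of Section \ref{sec:settings} applied to $\bar{\bm{g}}$, obtaining a decomposition $\bar{\bm{g}}(s_{i-1}) = \bm{U}(s_{i-1}) - \mathcal{P}\bm{U}(s_{i-1})$ with $\bm{U} = \sum_{k\geq 0} \mathcal{P}^k \bar{\bm{g}}$, whose $L_\infty$-norm is controlled by the mixing time $\tmix$. The weighted sum $\sum_i \eta_i \prod_{k=i+1}^t (\bm{I}-\eta_k \bm{A}) \bar{\bm{g}}(s_{i-1})$ can then be reorganized via summation by parts (borrowing the telescoping identity \eqref{eq:partial-sum-decompose}) into a second Markov chain induced martingale, plus a boundary term controlled through Lemma \ref{lemma:R}. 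At this point the entire noise is a weighted Markov chain induced martingale with $L^\infty$-bounded increments, and I would apply the Bernstein inequality in Theorem \ref{thm:matrix-bernstein-mtg} (or its vector-Freedman precursor, Theorem \ref{thm:Hilbert-Freedman}) to the reweighted sum. Careful bookkeeping of $\prod_{k=i+1}^t \|\bm{I}-\eta_k \bm{A}\|$ via \eqref{eq:lemma-R-1}--\eqref{eq:lemma-R-3} will yield an effective variance of order $\tmix \cdot t^{-\alpha}$, producing the claimed $\eta_0 \sqrt{\tmix / (2\alpha-1)\cdot \log(T\tmix/\delta)}\, t^{-\alpha/2}$ rate after tuning $\eta_0$.

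The hardest step will be this last one: closing the induction requires the concentration bound to reproduce a constant matching the induction hypothesis without blowing up, so I have to be delicate about the constants coming out of Theorem \ref{thm:matrix-bernstein-mtg} (especially the $(1-\lambda)^{-1/4}$ and Radon--Nikodym factors) and to align them with the admissible range of $\eta_0$. A secondary subtlety is that $\bm{\Delta}_{i-1}$ inside $\bm{\xi}_i$ is $\mathscr{F}_{i-1}$-measurable but itself random, so when bounding $(\bm{A}-\bm{A}_i)\bm{\Delta}_{i-1}$ I must invoke the uniform bound a priori, which is exactly why the two statements must be proved in parallel. A union bound over $t \in [T]$ together with a suitable choice of failure probability $\delta/T$ inside Theorem \ref{thm:matrix-bernstein-mtg} then delivers the simultaneous statement in the conclusion.
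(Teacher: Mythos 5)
Your setup (unrolling the recursion, running the uniform bound and the $t^{-\alpha/2}$ bound as a parallel induction, and union-bounding over $t\in[T]$) matches the paper's, but your treatment of the Markovian noise diverges from it and contains a genuine gap. The paper does \emph{not} use one-step conditioning plus a Poisson equation: it freezes the iterate $t_{\mix}(\varepsilon)$ steps back, writing $\bm{\zeta}_i = \mathbb{E}_{i_{\mix}}[\bm{\zeta}_{i,\mix}] + (\bm{\zeta}_{i,\mix}-\mathbb{E}_{i_{\mix}}[\bm{\zeta}_{i,\mix}]) + (\bm{\zeta}_i - \bm{\zeta}_{i,\mix})$ with $\bm{\zeta}_{i,\mix} = (\bm{A}_i-\bm{A})\bm{\theta}_{i_{\mix}} - (\bm{b}_i - \bm{b})$; the first piece is $O(\varepsilon)$ by the TV-mixing bound, the third is $O(t_{\mix}\eta_i)$ because consecutive iterates move slowly, and the second is split into $t_{\mix}$ interleaved martingales, each controlled by the vector Azuma inequality (Theorem \ref{thm:vector-Azuma}) rather than by Theorem \ref{thm:matrix-bernstein-mtg}. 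This is precisely where the $\tmix(t^{-\frac{\alpha}{2}})$ factors in the statement come from: the paper eventually sets $\varepsilon = t^{-\alpha/2}$.

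The gap in your route is that the residual $\mathbb{E}[\bm{\xi}_i\mid\mathscr{F}_{i-1}]$ is \emph{not} a function $\bar{\bm{g}}(s_{i-1})$ of the current state alone. It equals $\bm{G}(s_{i-1})\bm{\Delta}_{i-1} + \bm{h}(s_{i-1})$ with $\bm{G}(s)=\bm{A}-\mathbb{E}[\bm{A}_1\mid s_0=s]$, and $\bm{\Delta}_{i-1}$ depends on the entire trajectory. The Poisson-equation and telescoping machinery of Section \ref{sec:settings} applies to $\bm{h}$ but not to $\bm{G}(s_{i-1})\bm{\Delta}_{i-1}$ as written; to salvage it you would need a matrix Poisson equation for $\bm{G}$ together with a double Abel summation tracking simultaneously the increments of the weights $\bm{R}_i^t$ and of $\bm{\Delta}_i$, producing extra error terms of order $\eta_i\|\bm{\Delta}_i-\bm{\Delta}_{i-1}\|_2$ that must themselves be fed back into the induction. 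For the same reason, the martingale part $\bm{\xi}_i - \mathbb{E}[\bm{\xi}_i\mid\mathscr{F}_{i-1}]$ is not a Markov chain induced martingale in the sense of \eqref{eq:MC-martingale} (its increments are not functions of $(s_{i-1},s_i)$ only), so Theorem \ref{thm:matrix-bernstein-mtg} does not apply directly: its variance step, which compares the conditional quadratic variation to $\mathsf{Tr}(\bm{\Sigma}_n)$ via the scalar Markov Hoeffding bound, breaks down, and only the cruder Freedman/Azuma route with a worst-case variance proxy survives — which is exactly what the paper uses. None of this is fatal in principle, but as stated your plan skips the step that carries the actual difficulty, and the constants it would produce (powers of $1/(1-\lambda)$ or $1/(1-\rho)$ from summing the Poisson series) would not match the $\tmix(t^{-\frac{\alpha}{2}})$ form of the claimed bound without further translation.
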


\begin{proof}
Recalling the TD update rule \eqref{eq:TD-update-all}, we represent $\bm{\Delta}_t$ as
\begin{align*}
\bm{\Delta}_t = \bm{\theta}_t - \bm{\theta}^\star
&= (\bm{\theta}_{t-1}-\eta_t(\bm{A}_{t}\bm{\theta}_{t-1}-\bm{b}_{t})) - \bm{\theta}^\star\\
&= \bm{\Delta}_{t-1} - \eta_t (\bm{A}\bm{\theta}_{t-1}-\bm{b} + \bm{\zeta}_{t})\\
&= (\bm{I}-\eta_t \bm{A})\bm{\Delta}_{t-1} -\eta_t \bm{\zeta}_{t},
\end{align*}
where $\bm{\zeta}_t$ is defined as
\begin{align}\label{eq:defn-zetat}
\bm{\zeta}_t := (\bm{A}_t -\bm{A})\bm{\theta}_{t-1} - (\bm{b}_t-\bm{b}).
\end{align}
Therefore by induction, $\bm{\Delta}_t$ can be expressed as a weighted sum of $\{\bm{\zeta}_i\}_{0 \leq i < t}$, namely
\begin{align}
& \bm{\Delta}_t = \prod_{k=1}^{t} (\bm{I}-\eta_k \bm{A}) \bm{\Delta}_0 -\sum_{i=0}^{t-1} \bm{R}_i^t \bm{\zeta}_i,  \label{eq:delta-t-markov} \\ 
&  \text{where} \quad \bm{R}_i^t = \eta_i \prod_{k=i+1}^{t-1} (\bm{I}-\eta_k \bm{A}).
\end{align}

The difficulty in bounding the second term of \eqref{eq:delta-t-markov} lies in the fact that with Markov samples, $\{\bm{\zeta_i}\}_{i > 0}$ is no longer a martingale difference process. Therefore, we further decompose $\bm{\zeta}_i$ into three parts, namely
\begin{align}\label{eq:markov-zetai-decompose}
\bm{\zeta}_i = \mathbb{E}_{i_{\mix}}[\bm{\zeta}_{i,\mix}] + (\bm{\zeta}_{i,\mix} - \mathbb{E}_{i_{\mix}}[\bm{\zeta}_{i,\mix}]) + (\bm{\zeta}_i - \bm{\zeta}_{i,\mix}).
\end{align}
In order to simplify notation, throughout this proof we denote
\begin{align*}
t_{\mix} = t_{\mix}(\varepsilon) + 1,
\end{align*}
where $\varepsilon \in (0,1)$ is to be specified later. 
Furthermore, for every $t > 0$, we denote
\begin{align} 
&i_{\mix} = \max\{0,i - t_{\mix}(\varepsilon)\}, \quad \text{and} \label{eq:defn-t-imix}\\ 
&\bm{\zeta}_{i,\mix} = (\bm{A}_i - \bm{A})\bm{\theta}_{i_{\mix}} - (\bm{b}_i - \bm{b}),\label{eq:defn-zeta-imix}
\end{align}

The intuition behind the construction of $\bm{\zeta}_{i,\mix}$ is to guarantee that the samples $(\bm{A}_i,\bm{b}_i)$ and the iterated estimator $\bm{\theta}_{i_{\mix}}$ are separated in the Markov chain by at least $t_{\mix}$ samples, so that their distributions are close to independent. Recall from \eqref{eq:tmix.bound} that the mixing property of the Markov chain featured by Assumption \ref{as:mixing} guarantees
\begin{align}
\tmix \leq \frac{ \log (m/\varepsilon)}{\log (1/\rho)} + 1.
\end{align}

Furthermore, the difference bewteen $\bm{\zeta}_i $ and $\bm{\zeta}_{i,\mix}$ is 
\begin{align}
\bm{\zeta}_i - \bm{\zeta}_{i,\mix} = (\bm{A}_i - \bm{A}) (\bm{\theta}_{i-1} - \bm{\theta}_{i,\mix}).
\end{align}
Therefore, with the decomposition \eqref{eq:markov-zetai-decompose}, $\bm{\Delta}_t$ can be characterized as
\begin{align}\label{eq:markov-deltat-decompose}
\bm{\Delta}_t &= \underset{I_1}{\underbrace{\prod_{k=1}^t (\bm{I}-\eta_k \bm{A}) \bm{\Delta}_0}} - \underset{I_2}{\underbrace{\sum_{i=1}^t \bm{R}_i^t \mathbb{E}_{i_{\mix}}[\bm{\zeta}_{i,\mix}]}} - \underset{I_3}{\underbrace{\sum_{i=1}^t \bm{R}_i^t (\bm{A}_i - \bm{A}) (\bm{\theta}_{i-1} - \bm{\theta}_{i,\mix})}} \nonumber \\ 
&- \underset{I_4}{\underbrace{\sum_{i=1}^t \bm{R}_i^t (\bm{\zeta}_{i,\mix} - \mathbb{E}_{i_{\mix}}[\bm{\zeta}_{i,\mix}])}}.
\end{align}

In what follows, we denote
\begin{align*}
& \beta=\frac{1-\gamma}{2}\lambda_0\eta_0,\\ 
& R = \frac{13}{2}\|\bm{\theta}^\star\|_2 + \frac{5}{4}, \quad \text{and}\\
& \mathcal{H}_t = \left\{\max_{1 \leq i \leq t}\|\bm{\Delta}_i\|_2 \leq R\right\}.
\end{align*}
Furthermore, for any given $\delta$, let\footnote{Notice that the existence of $t^\star(\delta)$ is guaranteed by a similar reasoning to that in the proof of Theorem B.1 in \cite{wu2024statistical}.}
\begin{align}\label{eq:defn-tstar-markov}
t^\star = t^\star(\delta):= \inf\Bigg\{t \in \mathbb{N}^+: &\int_t^{\infty} \exp \left(-\frac{2\alpha-1}{2^{11}\eta_0^2} \frac{\log(1/\rho)}{\log(8m\eta_0/\beta)}\left(\frac{\beta}{2\alpha}\right)^{\frac{\alpha}{1-\alpha}}x^{-\alpha}\right)\mathrm{d}x \nonumber \\ 
&\leq \frac{\delta}{27} \cdot \frac{\log(8m\eta_0/\beta)}{\log(1/\rho)}\Bigg\},
\end{align}
and assume that 
\begin{align}\label{eq:deltat-condition-markov}
\eta_0 \sqrt{\frac{2}{2\alpha-1} \frac{\log(8m\eta_0/\beta)}{\log(1/\rho)}} \max\left\{\frac{1}{4\sqrt{1-\alpha}},\log \frac{9\log(8m\eta_0/\beta)t^\star}{\log(1/\rho)\delta}\right\}\leq \frac{1}{32}.
\end{align}
We break down the proof of the theorem into a sequence of steps:
\begin{enumerate}
\item We obtain convergence rates for the four terms on the right-hand-side of \eqref{eq:markov-deltat-decompose};
\item We lower bound  the probability of $\mathcal{H}_{t^\star}$ by 
 $1-\frac{\delta}{3}$;
\item We lower bound the probability of $\mathcal{H}_{\infty}$ by 
  $1-\frac{2\delta}{3}$;
\item Using the results from the first steps, we arrive at a final bound on $\|\bm{\Delta}_t\|_2$.
\end{enumerate}

\paragraph{Step 1: Basic convergence properties of the four terms on the right-hand-side of \eqref{eq:markov-deltat-decompose}} As is shown in the proof of Theorem B.1 in \cite{wu2024statistical}, the norm of $I_1$ is bounded by
\begin{align}\label{eq:markov-deltat-I1-bound}
\left\|\prod_{k=0}^{t-1} (\bm{I}-\eta_k \bm{A}) \bm{\Delta}_0\right\| \leq \left(\frac{1-\gamma}{2}\lambda_0 \eta_0\right)^{-\frac{\alpha}{1-\alpha}} t^{-\alpha} \|\bm{\Delta}_0\|_2.
\end{align}
For the term $I_2$, we observe that for $i \leq t_{\mix}$, since $i_{\mix} = 0$, 
\begin{align*}
\mathbb{E}_{i_{\mix}}\left[\bm{\zeta}_{i,\mix}\right]= \mathbb{E}_0 [(\bm{A}_i-\bm{A})\bm{\theta}_0 - (\bm{b}_i - \bm{b})] = \bm{0};
\end{align*}
otherwise when $i > t_{\mix}$, since $i_{\mix} = i-t_{\mix}$, 
\begin{align}\label{eq:E-zeta-imix}
\left\|\mathbb{E}_{i_\mix}[\bm{\zeta}_{i,\mix}]\right\|_2&\leq \left\| (\mathbb{E}_{i_{\mix}}[\bm{A}_i] - \mathbb{E}[\bm{A}_i]) \bm{\theta}_{i_{\mix}}\right\|_2  + \left\|\mathbb{E}_{i_{\mix}}[\bm{b}_i] - \mathbb{E}[\bm{b}_i]\right\|_2\nonumber \\
&\leq d_{\text{TV}}(P^{t_{\mix}}(\cdot|s_{i_{\mix}}),\mu) \left(\sup_{s_{i-1},s_i \in \mathcal{S}} \| \bm{A}_i \bm{\theta}_{i_{\mix}} \|_2 + \sup_{s_i \in \mathcal{S}} \|\bm{b}_i\|_2\right)\nonumber \\
&\leq (2\max_{1 \leq i < t}\|\bm{\Delta}_i\|_2+ 2\|\bm{\theta}^\star\|_2 + 1) d_{\text{TV}}(P^{t_{\mix}}(\cdot|s_{i_{\mix}}),\mu) \nonumber \\ 
&\leq \varepsilon (2\max_{1 \leq i < t}\|\bm{\Delta}_i\|_2+ 2\|\bm{\theta}^\star\|_2 + 1).
\end{align}
Meanwhile, we observe that the sum of $\|\bm{R}_i^t\|$ can be bounded by 
\begin{align*}
\sum_{i=1}^t \|\bm{R}_i^t\| &\leq \sum_{i=1}^t \eta_i \prod_{k=i+1}^t \|\bm{I}-\eta_k \bm{A}\|\\ 
&\leq \sum_{i=1}^t \eta_0 i^{-\alpha} \prod_{k=i+1}^t \left(1-\frac{1-\gamma}{2}\lambda_0 \eta_0 k^{-\alpha}\right)\\ 
&= \sum_{i=1}^t \eta_0 i^{-\alpha} \prod_{k=i+1}^t (1-\beta k^{-\alpha})\\ 
&= \frac{\eta_0}{\beta} \sum_{i=1}^t \left(\prod_{k=i+1}^t - \prod_{k=i}^t\right)(1-\beta k^{-\alpha}) \\ 
&= \frac{\eta_0}{\beta}\left(1-\prod_{k=1}^t (1-\beta k^{-\alpha})\right) < \frac{\eta_0}{\beta},
\end{align*}
where the second inequality follows from \eqref{eq:lemma-A-5} in Lemma \ref{lemma:A}.
Therefore, the norm of $I_2$ is bounded by 
\begin{align}\label{eq:markov-deltat-I2-bound}
\left\| \sum_{i=1}^t \bm{R}_i^t \mathbb{E}_{i_{\mix}}[\bm{\zeta}_{i,\mix}] \right\|&\leq \sum_{i=1}^t \left\|\bm{R}_i^t\right\| \left\|\mathbb{E}_{i_\mix}[\bm{\zeta}_{i,\mix}]\right\|_2 \nonumber \\ 
&\leq \frac{\varepsilon \eta_0}{\beta} (2\max_{1 \leq i \leq t}\|\bm{\Delta}_i\|_2 + 2\|\bm{\theta}^\star\|_2  + 1).
\end{align}
For the term $I_3$, we firstly bound the difference betweeen $\bm{\theta}_i$ and $\bm{\theta}_{i,\mix}$ by $R$ and the initial stepsize $\eta_0$. Notice that given the induction assumption, 
\begin{align*}
\left\|\bm{\theta}_{i-1} - \bm{\theta}_{i,\mix}\right\|_2 = \left\|\sum_{j=i_{\mix}+1}^{i-1} (\bm{\theta}_{j} - \bm{\theta}_{j-1}) \right\|_2 &= \left\| \sum_{j=i_{\mix}+1}^{i-1} \eta_j (\bm{A}_j \bm{\theta}_{j-1} - \bm{b}_j)\right\|_2 \\ 
&\leq \sum_{j=i_{\mix}}^{i-1} \eta_j \|\bm{A}_j \bm{\theta}_{j-1} - \bm{b}_j\|_2 \\ 
&\leq \frac{t_{\mix}}{1-\alpha} \eta_i (2 \max_{1 \leq i < t} \|\bm{\Delta}_i\|_2 + 2\|\bm{\theta}^\star\|_2  + 1).
\end{align*}
Hence, the norm of $I_{3}$ can be bounded by
\begin{align}\label{eq:markov-deltat-I3-bound}
&\left\|\sum_{i=1}^t \bm{R}_i^t (\bm{A}_i - \bm{A}) (\bm{\theta}_i - \bm{\theta}_{i,\mix})\right\|_2  \nonumber \\
& \leq \sum_{i=1}^t \|\bm{R}_i^t\| \|\bm{A}_i - \bm{A}\| \left\|\bm{\theta}_{i-1} - \bm{\theta}_{i,\mix}\right\|_2 \nonumber \\ 
&\leq \sum_{i=1}^t \left(\eta_i \prod_{k=i+1}^t \left(1-\beta k^{-\alpha} \right)\right) \cdot 4 \cdot \frac{t_{\mix}}{1-\alpha} \eta_i (2 \max_{1 \leq i < t} \|\bm{\Delta}_i\|_2 + 2\|\bm{\theta}^\star\|_2  + 1)\nonumber \\ 
&\lesssim \sum_{i=1}^t \eta_i^2 \prod_{k=i+1}^t \left(1-\beta k^{-\alpha} \right) \cdot  \frac{t_{\mix}}{1-\alpha} (2\max_{1 \leq i < t} \|\bm{\Delta}_i\|_2 + 2\|\bm{\theta}^\star\|_2  + 1)\nonumber \\ 
&\leq \frac{\eta_0^2t_{\mix}}{(1-\alpha)} (2\max_{1 \leq i < t} \|\bm{\Delta}_i\|_2 + 2\|\bm{\theta}^\star\|_2  + 1) \cdot \sum_{i=1}^t i^{-2\alpha} \prod_{k=i+1}^t (1-\beta k^{-\alpha})\nonumber \\
&\leq \frac{\eta_0^2t_{\mix}}{1-\alpha} (2\max_{1 \leq i < t} \|\bm{\Delta}_i\|_2 + 2\|\bm{\theta}^\star\|_2  + 1) \cdot \frac{1}{2\alpha-1}\left(\frac{\beta}{2\alpha}\right)^{\frac{\alpha}{1-\alpha}}t^{-\alpha}.
\end{align}
Notice that the last inequality follows by taking $\nu = 2\alpha$ in \eqref{eq:lemma-R-2} in Lemma \ref{lemma:R}. 

For the term $I_4$, we again invoke the vector Azuma's inequality (Theorem \ref{thm:vector-Azuma}). 
For simplicity, we define
\begin{align}
	\bm{X}_i := \bm{\zeta}_{i,\text{mix}}-\mathbb{E}_{i_{\text{mix}}}[\bm{\zeta}_{i,\text{mix}}].
\end{align}
By definition, we can see that for every integer $r \in [\tmix]$, the sequence $\{ \bm{X}_{r + i t_{\text{mix}}} \}_{i=0,1,\ldots}$
form a martingale difference process. Throughout this part, we assume, without loss of generality that $t = t' \cdot \tmix$ for a positive integer $t'$. Hence, we can first consider the norm of the summation
\begin{align}
\sum_{i'=0}^{t'} \bm{R}_{r+i'\tmix}^t \bm{X}_{r+i'\tmix},
\end{align}
which we will bound using vector Azuma's inequality. Towards that goal, we define 
\begin{align*}
W_{\max}^r :=  \sum_{i'=0}^{t'} \sup\left\|\bm{R}_{r+i'\tmix}^t \bm{X}_{r+i'\tmix}\right\|_2^2.
\end{align*}
By definition, this term is bounded by
\begin{align*}
W_{\max}^r &\leq \sum_{i'=0}^{t'}\|\bm{R}_{r+i'\tmix}^t\|^2 \sup \|\bm{X}_{r+i'\tmix}\|_2^2 \leq  \sum_{i'=0}^{t'}\|\bm{R}_{r+i'\tmix}^t\|^2 \cdot \max_{1 \leq i \leq t} \sup \|\bm{X}_i\|_2^2;
\end{align*}
by summing over $r = 1$ through $r = \tmix$, we obtain
\begin{align*}
\sum_{r=1}^{\tmix} W_{\max}^r &\leq \sum_{r=1}^{\tmix} \sum_{i'=0}^{t'}\|\bm{R}_{r+i'\tmix}^t\|^2 \cdot  \max_{1 \leq i \leq t} \sup \|\bm{X}_i\|_2^2 \\ 
&\leq  \left(\max_{1 \leq i \leq t} \sup \|\bm{X}_i\|_2^2\right)\cdot \sum_{i=1}^t \|\bm{R}_i^t\|^2 .
\end{align*}
Meanwhile, the triangle inequality directly implies
\begin{align*}
\max_{1 \leq i \leq t} \sup \|\bm{X}_i\|_2^2 &= \max_{1 \leq i \leq t} \sup \|\bm{\zeta}_{i,\text{mix}}-\mathbb{E}_{i_{\text{mix}}}[\bm{\zeta}_{i,\text{mix}}]\| \\ 
&\leq 4\max_{1 \leq i < t} \|\bm{\Delta}_i\|_2 + 2\|\bm{\theta}^\star\|_2  + 1.
\end{align*}
In combination, we have that
\begin{align*}
\sum_{r=1}^{\tmix} \sqrt{W_{\max}^r} &\leq \sqrt{\tmix} \sqrt{\sum_{r=1}^{\tmix} W_{\max}^r} \\ 
&\leq \sqrt{\tmix} \sup \|\bm{X}_i\|_2 \sqrt{\sum_{i=1}^t \|\bm{R}_i^t\|^2} \\ 
&\leq \sqrt{\tmix} (4\max_{1 \leq i < t} \|\bm{\Delta}_i\|_2 + 2\|\bm{\theta}^\star\|_2  + 1)\cdot \sqrt{\sum_{i=1}^t \|\bm{R}_i^t\|^2} \\ 
&\leq \sqrt{\tmix} (4\max_{1 \leq i < t} \|\bm{\Delta}_i\|_2 + 2\|\bm{\theta}^\star\|_2  + 1)\cdot \sqrt{\sum_{i=1}^t \eta_i^2 \prod_{k=i+1}^t (1-\frac{1-\gamma}{2}\lambda_0\eta_k)^2}\\
&\leq \sqrt{\tmix} (4\max_{1 \leq i < t} \|\bm{\Delta}_i\|_2 + 2\|\bm{\theta}^\star\|_2  + 1)\cdot \eta_0 \cdot \sqrt{\sum_{i=1}^t i^{-2\alpha} \prod_{k=i+1}^t (1-\beta k^{-\alpha})}\\
&\leq \sqrt{\tmix} (4\max_{1 \leq i < t} \|\bm{\Delta}_i\|_2 + 2\|\bm{\theta}^\star\|_2  + 1)\cdot \eta_0 \cdot \sqrt{\frac{1}{2\alpha-1}}\left(\frac{\beta}{2\alpha}\right)^{\frac{\alpha}{2(1-\alpha)}}t^{-\frac{\alpha}{2}},
\end{align*}
where we invoked Lemma \ref{lemma:R} in the last inequality, following the same logic as in the last line of \eqref{eq:markov-deltat-I3-bound}. 
Consequently, the vector Azuma's inequality (Theorem \ref{thm:vector-Azuma}) 
, combined with a union bound argument, yields the bound on the norm of $I_4$
\begin{align}\label{eq:markov-deltat-I4-bound}
&\left\|\sum_{i=1}^t \bm{R}_i^t (\bm{\zeta}_{i,\mix} - \mathbb{E}_{i_{\mix}}[\bm{\zeta}_{i,\mix}])\right\|_2\nonumber \leq 2\sqrt{2\log \frac{9t_{\mix}}{\delta_t}} \sum_{r=1}^{\tmix} \sqrt{W_{\max}^r} \nonumber \\ 
&\leq 2 \eta_0 \sqrt{\frac{2\tmix}{2\alpha-1}\log \frac{3\tmix}{\delta_t}}(4\max_{1 \leq i < t} \|\bm{\Delta}_i\|_2 + 2\|\bm{\theta}^\star\|_2  + 1)\cdot \left(\frac{\beta}{2\alpha}\right)^{\frac{\alpha}{2(1-\alpha)}}t^{-\frac{\alpha}{2}},
\end{align}
with probability at least $1-\delta_t/3$. 
\paragraph{Step 2: Bounding $\mathbb{P}(\mathcal{H}_{t^\star})$} By definition, $\mathbb{P}(\mathcal{H}_0) = 1$. We will show via induction that, for all other $1 \leq t \leq t^\star$
\begin{align*}
\mathbb{P}(\mathcal{H}_t) \geq 1-\frac{t}{3t^\star} \delta.
\end{align*}
By taking $\varepsilon = \frac{\beta}{8\eta_0}$ in \eqref{eq:markov-deltat-I2-bound}, we obtain that
\begin{align*}
\left\| \sum_{i=1}^t \bm{R}_i^t \mathbb{E}_{i_{\mix}}[\bm{\zeta}_{i,\mix}] \right\|_2 \leq \frac{1}{8}\left(2\max_{1 \leq i < t} \|\bm{\Delta}_i\|_2 + 2\|\bm{\theta}^\star\|_2  + 1\right).
\end{align*}
Next, putting together \eqref{eq:markov-deltat-I3-bound}, condition \eqref{eq:deltat-condition-markov} and the bound on $\tmix$ as specified by \eqref{eq:tmix-bound} guarantees that
\begin{align*}
\left\|\sum_{i=1}^t \bm{R}_i^t (\bm{A}_i - \bm{A}) (\bm{\theta}_i - \bm{\theta}_{i,\mix})\right\|_2 \leq \frac{1}{8}\left(2\max_{1 \leq i < t} \|\bm{\Delta}_i\|_2 + 2\|\bm{\theta}^\star\|_2  + 1\right).
\end{align*}
Similarly, setting $\delta_t = \frac{\delta}{3t^\star}$ in \eqref{eq:markov-deltat-I4-bound} and using the condition \eqref{eq:deltat-condition-markov}, we have that, with probability at least $1-\frac{\delta}{3t^\star}$, 
\begin{align*}
\left\|\sum_{i=1}^t \bm{R}_i^t (\bm{\zeta}_{i,\mix} - \mathbb{E}_{i_{\mix}}[\bm{\zeta}_{i,\mix}])\right\|_2 &\leq 2\tmix \eta_0 \sqrt{2\log \frac{9\tmix t^\star}{\delta}}(4\max_{1 \leq i < t} \|\bm{\Delta}_i\|_2 + 2\|\bm{\theta}^\star\|_2  + 1) \\ 
&\leq \frac{1}{16}(4\max_{1 \leq i < t} \|\bm{\Delta}_i\|_2 + 2\|\bm{\theta}^\star\|_2  + 1).
\end{align*}
Therefore, when $\mathcal{H}_{t-1}$ holds true, the norm of $\bm{\Delta}_t$ can be bounded using the triangle inequality as
\begin{align}\label{eq:markov-deltat-induction}
\|\bm{\Delta}_t\|_2 &\leq \|\bm{\theta}^\star\|_2 + \frac{1}{8}\left(2R + 2\|\bm{\theta}^\star\|_2  + 1\right) + \frac{1}{8}\left(2R + 2\|\bm{\theta}^\star\|_2  + 1\right) + \frac{1}{16}(4R + 2\|\bm{\theta}^\star\|_2  + 1) \nonumber \\ 
&= \frac{3}{4}R + \frac{13}{8}\|\bm{\theta}^\star\|_2 + \frac{5}{16} = R,
\end{align}
with probability of at least $1-\frac{\delta}{3t^\star}$. It then follows that
\begin{align*}
\mathbb{P}(\mathcal{H}_{t-1} \setminus \mathcal{H}_t) \leq \frac{\delta}{3t^\star},
\end{align*}
and thus that
\begin{align*}
\mathbb{P}(\mathcal{H}_{t^\star}) \geq 1-\frac{\delta}{3}.
\end{align*}
\paragraph{Step 3: Bounding $\mathbb{P}(\mathcal{H}_{\infty})$} For $t > t^\star$, we sharpen the induction argument by a more refined choice of $\delta_t$. In detail, let 
\begin{align*}
\delta_t = 3\tmix(\beta/8\eta_0)\exp\left\{-\frac{2\alpha-1}{2^{11} \eta_0^2} \left(\frac{1}{\tmix(\beta/8\eta_0)}\right)\left(\frac{\beta}{2\alpha}\right)^{\frac{\alpha}{1-\alpha}} t^{-\alpha}\right\}.
\end{align*}
Then the norm of $I_4$ is bounded by
\begin{align*}
&\left\|\sum_{i=1}^t \bm{R}_i^t (\bm{\zeta}_{i,\mix} - \mathbb{E}_{i_{\mix}}[\bm{\zeta}_{i,\mix}])\right\|_2\\ 
&\leq 2 \eta_0 \sqrt{\frac{2\tmix}{2\alpha-1}\log \frac{3\tmix}{\delta_t}}(4\max_{1 \leq i < t} \|\bm{\Delta}_i\|_2 + 2\|\bm{\theta}^\star\|_2  + 1)\cdot \left(\frac{\beta}{2\alpha}\right)^{\frac{\alpha}{2(1-\alpha)}}t^{-\frac{\alpha}{2}} \\ 
&\leq \frac{1}{16}(4\max_{1 \leq i < t} \|\bm{\Delta}_i\|_2 + 2\|\bm{\theta}^\star\|_2  + 1).
\end{align*}
with probability at least $1-\delta_t$. Hence, using induction, when $\mathcal{H}_{t-1}$ holds true,  the bound in \eqref{eq:markov-deltat-induction} implies that $\mathcal{H}_t$ also holds true with probability at least $1-\delta_t$. In other words,
\begin{align*}
\mathbb{P}(\mathcal{H}_{t-1}) - \mathbb{P}(\mathcal{H}_t) = \mathcal{P}(\mathcal{H}_{t-1} \setminus \mathcal{H}_t) \leq \delta_t.
\end{align*}
Consequently, the definition of $t^\star$ \eqref{eq:defn-tstar-markov} guarantees
\begin{align*}
\mathbb{P}(\mathcal{H}_{\infty}) &= \mathbb{P}(\mathcal{H}_{t^\star}) - \sum_{t=t^\star+1}^{\infty} \mathbb{P}(\mathcal{H}_{t-1}) - \mathbb{P}(\mathcal{H}_t) \\ 
&\geq \left(1-\frac{\delta}{3}\right) - \sum_{t=t^\star+1}^{\infty} \delta_t \\ 
&\geq \left(1-\frac{\delta}{3}\right) - \int_{t^\star}^{\infty} 3\tmix(\beta/8\eta_0)\exp\left\{-\frac{2\alpha-1}{2^{11} \eta_0^2} \left(\frac{1}{\tmix(\beta/8\eta_0)}\right)\left(\frac{\beta}{2\alpha}\right)^{\frac{\alpha}{1-\alpha}} x^{-\alpha}\right\}\mathrm{d}x \\ 
&\geq \left(1-\frac{\delta}{3}\right) - \frac{\delta}{3} = 1-\frac{2\delta}{3}.
\end{align*}
\paragraph{Step 4: Refining the bound on $\|\bm{\Delta}_t\|_2$} In order to bound the norm of $\bm{\Delta}_t$ by $O(t^{-\frac{\alpha}{2}})$ and thus conclude the proof, we take $\varepsilon = t^{-\frac{\alpha}{2}}$. Then,
\begin{align*}
\tmix(\varepsilon) \leq \frac{\log m + \frac{\alpha}{2}\log t}{\log(1/\rho)}.
\end{align*}
With this bound, \eqref{eq:markov-deltat-I2-bound}, \eqref{eq:markov-deltat-I3-bound} and \eqref{eq:markov-deltat-I4-bound} yield that, with probability at least $1-\frac{\delta}{3T}$,
\begin{align*}
&\left\| \sum_{i=1}^t \bm{R}_i^t \mathbb{E}_{i_{\mix}}[\bm{\zeta}_{i,\mix}] \right\|\leq \frac{\eta_0}{\beta}(2\max_{1 \leq i < t} \|\bm{\Delta}_i\|_2 + 2\|\bm{\theta}^\star\|_2  + 1)t^{-\frac{\alpha}{2}}, \\ 
&\left\|\sum_{i=1}^t \bm{R}_i^t (\bm{A}_i - \bm{A}) (\bm{\theta}_i - \bm{\theta}_{i,\mix})\right\|_2 \leq \frac{\eta_0^2}{(1-\alpha)(2\alpha-1)} \cdot \tmix(2\max_{1 \leq i < t} \|\bm{\Delta}_i\|_2 + 2\|\bm{\theta}^\star\|_2  + 1) \left(\frac{\beta}{2\alpha}\right)^{-\frac{\alpha}{1-\alpha}}t^{-\alpha}, \\ 
&\left\|\sum_{i=1}^t \bm{R}_i^t (\bm{\zeta}_{i,\mix} - \mathbb{E}_{i_{\mix}}[\bm{\zeta}_{i,\mix}])\right\|_2 \leq 2 \eta_0 \sqrt{\frac{2\tmix}{(2\alpha-1)}\log \frac{9T\tmix}{\delta}}(4\max_{1 \leq i < t} \|\bm{\Delta}_i\|_2 + 2\|\bm{\theta}^\star\|_2  + 1)\left(\frac{\beta}{2\alpha}\right)^{-\frac{\alpha}{2(1-\alpha)}}t^{-\frac{\alpha}{2}}.
\end{align*}
The final result follows from the triangle inequality and the union bound. 
\end{proof}


\subsection{Proof of Theorem \ref{thm:TD-whp}}\label{app:proof-markov-deltat-convergence}
Recall from Theorem \ref{thm:Lambda} that
\begin{align*}
\mathsf{Tr}(\tilde{\bm{\Lambda}}_T - \bm{\Lambda}^\star) = T^{\alpha-1}\mathsf{Tr}(\bm{X}(\tilde{\bm{\Lambda}}^\star)) + O(T^{2\alpha-2}),
\end{align*}
where $\bm{X}(\tilde{\bm{\Lambda}}^\star)$ is the solution to the Lyapunov equation
\begin{align*}
\eta_0(\bm{AX+XA}^\top) = \tilde{\bm{\Lambda}}^\star.
\end{align*}
By combining Lemma \ref{lemma:A}, Lemma \ref{lemma:Lyapunov} and Lemma \ref{lemma:Gamma}, we obtain
\begin{align*}
\mathsf{Tr}(\bm{X}(\tilde{\bm{\Lambda}}^\star)) &\leq \frac{\mathsf{Tr}(\tilde{\bm{\Lambda}}^\star)}{\eta_0\lambda_0(1-\gamma)} \leq \frac{\|\bm{A}^{-1}\|^2 \mathsf{Tr}(\tilde{\bm{\Gamma}})}{\eta_0\lambda_0(1-\gamma)} 
\leq \frac{\mathsf{Tr}(\tilde{\bm{\Gamma}})}{\eta_0\lambda_0^3(1-\gamma)^3} \\ 
&\lesssim \frac{m}{1-\rho} \cdot \frac{1}{\eta_0\lambda_0^5(1-\gamma)^5}.
\end{align*}
Hence, the difference between $\tilde{\bm{\Lambda}}_T$ and $\bm{\Lambda}^\star$ is given by
\begin{align*}
\mathsf{Tr}(\tilde{\bm{\Lambda}}_T - \bm{\Lambda}^\star)  \leq \widetilde{C}T^{\alpha-1},
\end{align*}
where $\widetilde{C}$ can be represented by $\lambda_0,\eta_0$ and $\gamma$.
Therefore, it suffices to show that with probability at least $1-\delta$, the averaged TD error can be bounded by
\begin{align*}
\|\bar{\bm{\Delta}}_T\|_2 &\lesssim 2\sqrt{\frac{2\mathsf{Tr}(\widetilde{\bm{\Lambda}}_T)}{T} \log \frac{6d}{\delta}} + o\left(T^{-\frac{1}{2}}\log^{\frac{3}{2}}\frac{dT}{\delta}\right).
\end{align*}
As a direct implication of \eqref{eq:delta-t-markov}, $\bar{\bm{\Delta}}_T$ can be decomposed as 
\begin{align*}
\bar{\bm{\Delta}}_T &= \frac{1}{T}\sum_{t=1}^T \bm{\Delta}_t \\ 
 &= \frac{1}{T}\sum_{t=1}^T \left(\prod_{k=1}^t (\bm{I}-\eta_k \bm{A})\Delta_0 - \sum_{i=1}^{t} \bm{R}_{i}^t (\bm{A}_i\bm{\theta}^\star -\bm{b}_i) - \sum_{i=1}^{t} \bm{R}_{i}^t (\bm{A}_i - \bm{A})\bm{\Delta}_{i-1}\right) \\ 
&= \frac{1}{T}\sum_{t=1}^T\prod_{k=1}^t (\bm{I}-\eta_k \bm{A})\Delta_0 - \frac{1}{T}\sum_{t=1}^T\sum_{i=1}^{t} \bm{R}_{i}^t (\bm{A}_i\bm{\theta}^\star -\bm{b}_i) - \frac{1}{T}\sum_{t=1}^T\sum_{i=1}^{t}\bm{R}_{i}^t (\bm{A}_i - \bm{A})\bm{\Delta}_{i-1} \\ 
&= \frac{1}{T}\sum_{t=1}^T\prod_{k=1}^t (\bm{I}-\eta_k \bm{A})\Delta_0 - \frac{1}{T} \sum_{i=1}^T \sum_{t=i}^T \bm{R}_{i}^t (\bm{A}_i\bm{\theta}^\star -\bm{b}_i) - \frac{1}{T} \sum_{i=1}^T \sum_{t=i}^T \bm{R}_{i}^t (\bm{A}_i - \bm{A})\bm{\Delta}_{i-1},
\end{align*}
where we have switched the order of summation in the last equation. The definition of $\bm{Q}_t$ \eqref{eq:defn-Qt} implies 
\begin{align*}
\bar{\bm{\Delta}}_T = \underset{I_1}{\underbrace{\frac{1}{T\eta_0} \bm{Q}_0 \bm{\Delta}_0}} - \underset{I_2}{\underbrace{\frac{1}{T} \sum_{i=1}^T \bm{Q}_i(\bm{A}_i \bm{\theta}^\star-\bm{b}_i)}} - \underset{I_3}{\underbrace{\frac{1}{T}\sum_{i=1}^T \bm{Q}_i (\bm{A}_i-\bm{A})\bm{\Delta}_{i-1}}},
\end{align*}
where $I_1$ can be bounded by
\begin{align*}
\left\|\frac{1}{T\eta_0} \bm{Q}_0 \bm{\Delta}_0\right\|_2 \leq \frac{1}{T\eta_0} \|\bm{Q}_0\| \|\bm{\Delta}_0\|\leq \frac{3}{T}\left(\frac{2}{\beta}\right)^{\frac{1}{1-\alpha}} \|\bm{\Delta}_0\|_2. 
\end{align*}

We now proceed to bounding $I_2$ and $I_3$respectively. 

Throughout the proof, we let
\begin{align*}
&\beta = \frac{1-\gamma}{2}\lambda_0 \eta_0,\\ 
& R = \frac{13}{2}\|\bm{\theta}^\star\|_2 + \frac{5}{4}, \\ 
&t_{\mix} = \tmix(T^{-\frac{\alpha}{2}}) \leq \frac{\log m + (\alpha/2)\log T}{\log(1/\rho)} \quad \text{and}\\ 
&R' = \eta_0 \sqrt{\frac{2\tmix}{2\alpha-1}\log \frac{27T\tmix}{\delta}} (2\|\bm{\theta}^\star\|_2+1)\left(\frac{\beta}{2\alpha}\right)^{-\frac{\alpha}{2(1-\alpha)}}, 
\end{align*}
and, for each $1 \leq t \leq T$,
\[
\mathcal{H}_t = \Big\{ \|\bm{\Delta}_j\|_2 \leq \min\{R'j^{-\frac{\alpha}{2}},R\}, \forall j \leq t \Big\} \quad \text{and} \quad \tilde{\bm{\Delta}}_t = \bm{\Delta}_t \mathds{1}(\mathcal{H}_t).
\]
Theorem \ref{thm:markov-deltat-convergence} shows that $\mathbb{P}(\mathcal{H}_T) \geq 1-\frac{\delta}{3}$.
\paragraph{Bounding $I_2$} In order to invoke the matrix Freedman's inequality on the term $I_2$, we firstly relate it to a martingale. Specifically, for every $i \in [T]$, we define $\bm{U}_i$ as
\begin{align}\label{eq:defn-Ui}
\bm{U}_i = \mathbb{E}_i \left[\sum_{j=i}^{\infty} (\bm{A}_j \bm{\theta}^\star - \bm{b}_j)\right].
\end{align}
It is then easy to verify that on one hand, the norm of $\bm{U}_i$ is uniformly bounded due to the exponential convergence of the Markov chain. Specifically, since for any positive integers $i<j$, it can be guaranteed that
\begin{align*}
&\left\|\mathbb{E}_i[\bm{A}_j \bm{\theta}^\star - \bm{b}_j]\right\|_2 \\ 
&=\left\|\mathbb{E}_{s_{j-1} \sim P^{j-i-1}(\cdot \mid s_i),s_j \sim P(\cdot \mid s_{j-1})}[\bm{A}_j \bm{\theta}^\star - \bm{b}_j]\right\|_2 \\ 
&= \left\|\mathbb{E}_{s_{j-1} \sim P^{j-i-1}(\cdot \mid s_i),s_j \sim P(\cdot \mid s_{j-1})}[\bm{A}_j \bm{\theta}^\star - \bm{b}_j]-\mathbb{E}_{s_{j-1} \sim \mu,s_j \sim P(\cdot \mid s_{j-1})}[\bm{A}_j \bm{\theta}^\star - \bm{b}_j]\right\|_2 \\ 
&\leq d_{\mathsf{TV}}(P^{j-i-1}(\cdot\mid s_i),\mu) \cdot \sup_{s_{j-1},s_j}\|\bm{A}_j \bm{\theta}^\star - \bm{b}_j\|_2 \\ 
&\leq m\rho^{j-i-1} (2\|\bm{\theta}^\star\|_2+1).
\end{align*}
Therefore, the norm of $\bm{U}_i$ is bounded by
\begin{align}\label{eq:U-bound}
\|\bm{U}_i\|_2 &\leq \|\bm{A}_i \bm{\theta}^\star - \bm{b}_i\|_2 + \sum_{j=i+1}^{\infty}\mathbb{E}_i \|\bm{A}_j \bm{\theta}^\star - \bm{b}_j\|_2 \nonumber \\ 
&\leq (2\|\bm{\theta}^\star\|_2+1) \left(1+\sum_{j=i+1}^{\infty} m\rho^{j-i-1}\right)\nonumber \\ 
& \lesssim \frac{1}{1-\rho} (2\|\bm{\theta}^\star\|_2+1);
\end{align}
On the other hand, $\bm{A}_i \bm{\theta}^\star - \bm{b}_i$ can be represented as
\begin{align}\label{eq:Ui-telescope}
\bm{A}_i \bm{\theta}^\star - \bm{b}_i &= \bm{U}_i - \mathbb{E}_i[\bm{U}_{i+1}]\nonumber \\
&= (\bm{U}_i - \mathbb{E}_{i-1}[\bm{U}_i]) + (\mathbb{E}_{i-1}[\bm{U}_i] - \mathbb{E}_i[\bm{U}_{i+1}])\nonumber \\ 
&=: \bm{m}_i + (\mathbb{E}_{i-1}[\bm{U}_i] - \mathbb{E}_i[\bm{U}_{i+1}])
\end{align}
Here, the first term $\bm{U}_i -\bm{U}_{i+1}$ can be analyzed by the telescoping technique, while 
\begin{align}\label{eq:defn-mi}
\bm{m}_i:= \bm{U}_{i} - \mathbb{E}_{i-1}[\bm{U}_{i}]
\end{align}
is a martingale difference process. Furthermore, we observe that when $s_0$ is drawn from the stationary distribution $\mu$, the covariance matrix $\mathbb{E}_{0}\left[\bm{m}_i \bm{m}_i^\top\right]$ is time-invariant, and can be expressed as
\begin{align}\label{eq:var-mi}
\mathbb{E}[\bm{m}_i \bm{m}_i^\top ] &= \mathbb{E}[\bm{m}_1 \bm{m}_1^\top]\nonumber \\ 
&= \mathbb{E}[(\bm{U}_1 - \mathbb{E}_0[\bm{U}_1])(\bm{U}_1 - \mathbb{E}_0[\bm{U}_1])^\top]\nonumber\\
&= \mathbb{E}[\bm{U}_1 \bm{U}_1^\top] - \mathbb{E}[\mathbb{E}_0[\bm{U}_1]\mathbb{E}_0[\bm{U}_1^\top]]\nonumber\\
&\overset{(i)}{=} \mathbb{E}[\bm{U}_1 \bm{U}_1^\top] - \mathbb{E}[\mathbb{E}_1[\bm{U}_2]\mathbb{E}_1[\bm{U}_2^\top]]\nonumber\\
&= \mathbb{E}[(\bm{A}_1 \bm{\theta}^\star -\bm{b}_1 + \mathbb{E}_1[\bm{U}_2])(\bm{A}_1 \bm{\theta}^\star -\bm{b}_1 + \mathbb{E}_1[\bm{U}_2])^\top] - \mathbb{E}[\mathbb{E}_1[\bm{U}_2]\mathbb{E}_1[\bm{U}_2^\top]]\nonumber\\
&= \mathbb{E}[(\bm{A}_1 \bm{\theta}^\star -\bm{b}_1)(\bm{A}_1 \bm{\theta}^\star -\bm{b}_1)^\top] + \mathbb{E}[(\bm{A}_1 \bm{\theta}^\star -\bm{b}_1)\mathbb{E}_1[\bm{U}_2]^\top] + \mathbb{E}[\mathbb{E}_1[\bm{U}_2](\bm{A}_1 \bm{\theta}^\star -\bm{b}_1)^\top]\nonumber\\
&= \mathbb{E}[(\bm{A}_1 \bm{\theta}^\star -\bm{b}_1)(\bm{A}_1 \bm{\theta}^\star -\bm{b}_1)^\top]\nonumber \\ 
&+ \sum_{j=2}^{\infty} \mathbb{E}[(\bm{A}_1 \bm{\theta}^\star -\bm{b}_1)(\bm{A}_j \bm{\theta}^\star - \bm{b}_j)^\top + (\bm{A}_j \bm{\theta}^\star - \bm{b}_j)(\bm{A}_1 \bm{\theta}^\star -\bm{b}_1)^\top]\nonumber\\
&= \widetilde{\bm{\Gamma}},
\end{align}
according to the definition of $\widetilde{\bm{\Gamma}}$ (as in eq.~\eqref{eq:defn-tilde-Gamma}). Notice here that we applied the rule of total expectation throughout this deduction, and took advantage of the time-invariant property of the distribution of $\{\bm{U}_i\}_{1 \leq i \leq T}$ in (i).

In order to relate $I_2$ to the martingale difference process $\bm{m}_i$, we invoke the  relation \eqref{eq:Ui-telescope} to obtain
\begin{align*}
\frac{1}{T}\sum_{i=1}^T \bm{Q}_i (\bm{A}_i \bm{\theta}^\star - \bm{b}_i)
&= \frac{1}{T}\sum_{i=1}^T \bm{Q}_i \bm{m}_i + \frac{1}{T}\sum_{i=1}^T \bm{Q}_i (\mathbb{E}_{i-1}[\bm{U}_i] - \mathbb{E}_i[\bm{U}_{i+1}]) \\ 
&= \frac{1}{T}\sum_{i=1}^T \bm{Q}_i \bm{m}_i + \frac{1}{T}\sum_{i=1}^T (\bm{Q}_{i-1}\mathbb{E}_{i-1}[\bm{U}_i] - \bm{Q}_{i}\mathbb{E}_i[\bm{U}_{i+1}]) + \frac{1}{T}\sum_{i=1}^T (\bm{Q}_i - \bm{Q}_{i-1})\mathbb{E}_{i-1}[\bm{U}_i] \\ 
&= \underset{I_{21}}{\underbrace{\frac{1}{T}\sum_{i=1}^T \bm{Q}_i \bm{m}_i}} + \underset{I_{22}}{\underbrace{\frac{1}{T}(\bm{Q}_0 \mathbb{E}_0[\bm{U}_1] - \bm{Q}_T \mathbb{E}_T [\bm{U}_{T+1}])}}+ \underset{I_{23}}{\underbrace{\frac{1}{T}\sum_{i=1}^T (\bm{Q}_i - \bm{Q}_{i-1})\mathbb{E}_{i-1}[\bm{U}_i]}}
\end{align*}
where we applied the telescoping technique in the last equation. The uniform boundedness of $\|\bm{Q}_t\|$, as indicated by Lemma \ref{lemma:Q-bound}, and the uniform boundedness of $\|\bm{U}_i\|_2$, as indicated by \eqref{eq:U-bound}, guarantee that
\begin{align}
\label{eq:markov-bar-deltat-I22-bound}
\left\|\frac{1}{T}(\bm{Q}_0 \mathbb{E}_0[\bm{U}_1] - \bm{Q}_T \mathbb{E}_T [\bm{U}_{T+1}])\right\|_2  
&\leq \frac{1}{T} (\|\bm{Q}_0\| \sup \|\bm{U}_1\|_2 + \|\bm{Q}_T\| \sup \|\bm{U}_T\|_2)\notag\\ 
&\lesssim  \left(\frac{2}{\beta}\right)^{\frac{1}{1-\alpha}}\left(\frac{2m}{1-\rho}\right)(2\|\bm{\theta}^\star\|_2+1)
\end{align}
deterministically.
Meanwhile, the norm of $I_{23}$ is bounded by invoking Lemma \ref{lemma:delta-Q}:
\begin{align}
\label{eq:markov-bar-deltat-I23-bound}
\left\|\frac{1}{T}\sum_{i=1}^T (\bm{Q}_i - \bm{Q}_{i-1})\mathbb{E}_{i-1}[\bm{U}_i]\right\| \lesssim \eta_0 \left[\eta_0 \Gamma\left(\frac{1}{1-\alpha}\right)+\alpha\right]\left(\frac{1}{\beta}\right)^{\frac{1}{1-\alpha}} \left(\frac{2m}{1-\rho}\right)(2\|\bm{\theta}^\star\|_2+1)\frac{\log T}{T}
\end{align}
almost surely. 

It now boils down to bounding the norm of $I_{21}$. Towards this end, we firstly observe that
\begin{align*}
\frac{1}{T}\sum_{i=1}^T \mathbb{E}_{s_{i-1}\sim\mu,s_i \sim P(\cdot \mid s_{i-1})}\|\bm{Q}_i\bm{m}_i\|_2^2 = \mathsf{Tr}(\tilde{\bm{\Lambda}}_T),
\end{align*}
and that 
\begin{align*}
\frac{1}{T}\|\bm{Q}_i\bm{m}_i\|_2 \leq \frac{1}{T} \eta_0 \left(\frac{2}{\beta}\right)^{\frac{1}{1-\alpha}}\left(\frac{2m}{1-\rho}\right)(2\|\bm{\theta}^\star\|_2+1)
\end{align*}
almost surely for all $i \in [T]$, according to Lemma \ref{lemma:Q-bound} and Equation \eqref{eq:U-bound}. Now consider a sequence of matrix-valued functions $\bm{F}_i:\mathcal{S} \times \mathcal{S} \to \mathbb{R}^{(d+1) \times (d+1)}$, defined as
\begin{align*}
\bm{F}_i(s,s') = \begin{pmatrix}
0 & (\bm{Q}_i\bm{m}_i(s,s'))^\top \\ 
\bm{Q}_i\bm{m}_i(s,s') & \bm{0}_{d\times d}.
\end{pmatrix}, \quad \forall i \in [T].
\end{align*}
It can then be verified that 
\begin{align*}
\left\|\mathbb{E}_{s\sim \mu, s' \sim P(\cdot \mid s)}[\bm{F}_i^2(s,s')] \right\|= \mathbb{E}_{s\sim \mu, s' \sim P(\cdot \mid s)}\|\bm{Q}_i\bm{m}_i(s,s')\|_2^2,
\end{align*}
and that
\begin{align*}
\|\bm{F}_i(s,s')\| = \|\bm{Q}_i\bm{m}_i(s,s')\|_2, \quad \forall s,s' \in \mathcal{S}.
\end{align*}
Therefore, a direct application of Corollary \ref{thm:matrix-bernstein-mtg} yields
\begin{align}\label{eq:markov-bar-deltat-I21-bound}
\left\|\frac{1}{T}\sum_{i=1}^T \bm{Q}_i \bm{m}_i\right\|_2 &\lesssim 2\sqrt{\frac{2\mathsf{Tr}(\tilde{\bm{\Lambda}}_T)}{T}\log \frac{12d}{\delta}} \nonumber \\ 
&+ \eta_0 \left(\frac{2}{\beta}\right)^{\frac{1}{1-\alpha}}\left(\frac{2m}{1-\rho}\right)(2\|\bm{\theta}^\star\|_2+1)(1-\lambda)^{-\frac{1}{4}} \frac{1}{T}\log^{\frac{3}{2}}\frac{6d}{\delta},
\end{align}
with probability at least $1-\frac{\delta}{3}$.

\paragraph{Bounding $I_3$} Applying a similar technique as in the proof of Theorem \ref{thm:markov-deltat-convergence}, we decompose the term $I_3$ as
\begin{align}\label{eq:markov-bar-deltat-I3-decompose}
&\frac{1}{T}\sum_{i=1}^T \bm{Q}_i (\bm{A}_i-\bm{A})\bm{\Delta}_{i-1} \nonumber \\ 
&=\frac{1}{T} \sum_{i=1}^T \bm{Q}_i(\bm{A}_i - \bm{A}) (\bm{\Delta}_{i-1} -\bm{\Delta}_{i_{\mix}}) + \frac{1}{T}\sum_{i=1}^T \bm{Q}_i(\bm{A}_i - \bm{A}) \bm{\Delta}_{i_{\mix}} \nonumber \\
&= \underset{I_{31}}{\underbrace{\frac{1}{T} \sum_{i=1}^T \bm{Q}_i(\bm{A}_i - \bm{A}) (\bm{\Delta}_{i-1} -\bm{\Delta}_{i_{\mix}})}} + \underset{I_{32}}{\underbrace{\frac{1}{T}\sum_{i=1}^T \bm{Q}_i(\bm{A}_i - \mathbb{E}_{i_{\mix}}[\bm{A}_i])\bm{\Delta}_{i_{\mix}} }}+ \underset{I_{33}}{\underbrace{\frac{1}{T}\sum_{i=1}^T \bm{Q}_i(\mathbb{E}_{i_{\mix}}[\bm{A}_i] - \bm{A}) \bm{\Delta}_{i_{\mix}} }}.
\end{align}
Recall from the proof of Theorem \ref{thm:markov-deltat-convergence} that 
\begin{align*}
\left\|\bm{\Delta}_{i-1} -\bm{\Delta}_{i_{\mix}}\right\|_2&= \left\|\bm{\theta}_{i-1} -\bm{\theta}_{i_{\mix}}\right\|_2 \\ 
&\leq \frac{t_{\mix}}{1-\alpha} \eta_i (2 \max_{1 \leq j < i} \|\bm{\Delta}_j\|_2 + 2\|\bm{\theta}^\star\|_2  + 1);
\end{align*}
hence the norm of $I_{31}$ can be bounded by
\begin{align}
\label{eq:markov-bar-deltat-I31-bound}
\left\|\frac{1}{T} \sum_{i=1}^T \bm{Q}_i(\bm{A}_i - \bm{A}) (\bm{\Delta}_{i-1} -\bm{\Delta}_{i_{\mix}})\right\|_2
&\lesssim \frac{1}{T} \sum_{i=1}^T \|\bm{Q}_i\| \cdot \frac{t_{\mix}}{1-\alpha} \eta_i (2 \max_{1 \leq j < i} \|\bm{\Delta}_j\|_2 + 2\|\bm{\theta}^\star\|_2  + 1) \nonumber \\ 
&\lesssim \frac{\tmix}{(1-\alpha)T}\left(\frac{2}{\beta}\right)^{\frac{1}{1-\alpha}}\sum_{i=1}^T \eta_i (2 \max_{1 \leq j < T} \|\bm{\Delta}_j\|_2 + 2\|\bm{\theta}^\star\|_2  + 1) \nonumber \\ 
&\lesssim \frac{\tmix \eta_0 }{(1-\alpha)^2}  \left(\frac{2}{\beta}\right)^{\frac{1}{1-\alpha}} (2 \max_{1 \leq j < T} \|\bm{\Delta}_j\|_2 + 2\|\bm{\theta}^\star\|_2  + 1) T^{-\alpha}.
\end{align}
The term $I_{32}$ can be decomposed into $\tmix$ martingales and bounded by the vector Azuma's inequality, invoking a similar technique to the tackling of the term $I_4$ in the proof of Theorem \ref{thm:markov-deltat-convergence}. With details omitted, we obtain with probability at least $1-\frac{\delta}{3}$ that
\begin{align}\label{eq:markov-bar-deltat-I32-bound}
&\left\|\frac{1}{T}\sum_{i=1}^T \bm{Q}_i(\bm{A}_i - \mathbb{E}_{i_{\mix}}[\bm{A}_i])\tilde{\bm{\Delta}}_{i_{\mix}}\right\|_2^2 \lesssim \left(\frac{2}{\beta}\right)^{\frac{1}{1-\alpha}}  \sqrt{\frac{\tmix}{1-\alpha}\log \frac{9\tmix}{\delta}}R' T^{-\frac{\alpha+1}{2}}.
\end{align}
The term $I_{33}$ is bounded by the mixing property of the Markov chain, specifically
\begin{align}\label{eq:markov-bar-deltat-I33-bound}
\left\|\frac{1}{T}\sum_{i=1}^T \bm{Q}_i(\mathbb{E}_{i_{\mix}}[\bm{A}_i] - \bm{A}) \tilde{\bm{\Delta}}_{i_{\mix}}\right\|_2 \lesssim \frac{1}{T} \left(\frac{2}{\beta}\right)^{\frac{1}{1-\alpha}} T^{-\frac{\alpha}{2}} \sum_{i=1}^T R' (i_{\mix})^{-\frac{\alpha}{2}} \lesssim \left(\frac{2}{\beta}\right)^{\frac{1}{1-\alpha}} R' T^{-\alpha}.
\end{align}
\paragraph{Completing the proof} Combining \eqref{eq:markov-bar-deltat-I21-bound}, \eqref{eq:markov-bar-deltat-I22-bound}, \eqref{eq:markov-bar-deltat-I23-bound}, \eqref{eq:markov-bar-deltat-I31-bound}, \eqref{eq:markov-bar-deltat-I32-bound}, and \eqref{eq:markov-bar-deltat-I33-bound} by a union bound argument and plugging in the definition of $R'$, we obtain
\begin{align*}
\|\bar{\bm{\Delta}}_T\|_2 &\lesssim 2\sqrt{\frac{2\mathsf{Tr}(\widetilde{\bm{\Lambda}}_T)}{T} \log \frac{6d}{\delta}} \\ 
&+ \frac{\eta_0\tmix(T^{-\frac{\alpha}{2}})}{(1-\alpha)^2}  (2\|\bm{\theta}^\star\|_2+1)\left(\frac{1-\gamma}{2}\lambda_0\eta_0\right)^{-\frac{1}{1-\alpha}}T^{-\alpha} \\ 
&+ \eta_0 \sqrt{\frac{2\tmix(T^{-\frac{\alpha}{2}})}{2\alpha-1}\log \frac{27T \tmix(T^{-\frac{\alpha}{2}})}{\delta}}(2\|\bm{\theta}^\star\|_2+1)\left(\frac{1-\gamma}{2}\lambda_0\eta_0\right)^{-\frac{2+\alpha}{2(1-\alpha)}}T^{-\alpha} \\ 
&+ \frac{\eta_0\tmix(T^{-\frac{\alpha}{2}})}{\sqrt{(1-\alpha)(2\alpha-1)}} \log \frac{27T \tmix(T^{-\frac{\alpha}{2}})}{\delta}(2\|\bm{\theta}^\star\|_2+1)\left(\frac{1-\gamma}{2}\lambda_0\eta_0\right)^{-\frac{2+\alpha}{2(1-\alpha)}}T^{-\frac{\alpha+1}{2}} \\ 
&+ \eta_0 \frac{m}{1-\rho} (2\|\bm{\theta}^\star\|_2+1)\left(\frac{1-\gamma}{4}\lambda_0\eta_0\right)^{-\frac{1}{1-\alpha}}T^{-1} \\ 
&\cdot \left[(1-\lambda)^{-\frac{1}{4}}\log^{\frac{3}{2}}\frac{6d}{\delta} + \left(\eta_0 \Gamma\left(\frac{1}{1-\alpha}\right)+\alpha\right) \log T\right].
\end{align*}
Notice that all the terms beginning from the second line can all be bounded by
\begin{align*}
\widetilde{C}T^{-\alpha}\log^{\frac{3}{2}} \frac{dT}{\delta},
\end{align*}
where $\widetilde{C}$ is a problem-related quantity depending on $\alpha,\eta_0,\lambda_0, \gamma, m,\rho$ and $\lambda$. The theorem follows immediately.

\subsection{Proof of Theorem \ref{thm:TD-Berry--Esseen}} \label{app:proof-TD-Berry--Esseen}
Following the precedent of \cite{wu2024statistical}, we approach this Berry--Esseen bound by introducing a Gaussian comparison term. Specifically, the triangle inequality indicates
\begin{align}\label{eq:TD-Berry--Esseen-decompose}
d_{\mathsf{C}}(\sqrt{T} \bar{\bm{\Delta}}_T,\mathcal{N}(\bm{0},\widetilde{\bm{\Lambda}}^{\star}))\leq d_{\mathsf{C}}(\sqrt{T} \bar{\bm{\Delta}}_T,\mathcal{N}(\bm{0},\widetilde{\bm{\Lambda}}_T)) + d_{\mathsf{C}}(\mathcal{N}(\bm{0},\widetilde{\bm{\Lambda}}_T), \mathcal{N}(\bm{0},\widetilde{\bm{\Lambda}}^{\star}))
\end{align}
where the second term on the right-hand-side can be bounded by the following proposition.
\begin{lemma}\label{lemma:Gaussian-comparison}
With $\tilde{\bm{\Lambda}}^\star$ and $\tilde{\bm{\Lambda}}_T$ defined as in \eqref{eq:defn-tilde-Lambdastar} and \eqref{eq:defn-tilde-LambdaT} respectively, it can be guaranteed for any $\eta_0 \leq \frac{1}{2\lambda_{\Sigma}}$ that
\begin{align*}
d_{\mathsf{C}}(\mathcal{N}(\bm{0},\widetilde{\bm{\Lambda}}_T), \mathcal{N}(\bm{0},\widetilde{\bm{\Lambda}}^{\star})) \lesssim \frac{\sqrt{d\mathsf{cond}(\bm{\tilde{\Gamma}})}}{(1-\gamma)\lambda_0\eta_0} T^{\alpha-1} + O(T^{2\alpha-2}).
\end{align*}
\end{lemma}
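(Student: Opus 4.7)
The plan is to reduce the convex-distance comparison between two Gaussians sharing the same mean to a comparison of their covariance matrices, and then use the asymptotic expansion of $\tilde{\bm{\Lambda}}_T - \tilde{\bm{\Lambda}}^{\star}$ already supplied by Theorem~\ref{thm:Lambda}. Concretely, since convex sets are Borel sets we have $d_{\mathsf{C}}\leq d_{\mathsf{TV}}$, so Theorem~\ref{thm:DMR} gives
\[
d_{\mathsf{C}}\bigl(\mathcal{N}(\bm{0},\tilde{\bm{\Lambda}}_T),\mathcal{N}(\bm{0},\tilde{\bm{\Lambda}}^{\star})\bigr) \;\lesssim\; \bigl\|\tilde{\bm{\Lambda}}^{\star -1/2}(\tilde{\bm{\Lambda}}_T-\tilde{\bm{\Lambda}}^{\star})\tilde{\bm{\Lambda}}^{\star -1/2}\bigr\|_{\mathsf{F}},
\]
so everything reduces to controlling that sandwiched Frobenius norm.

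Next I would split $\tilde{\bm{\Lambda}}_T-\tilde{\bm{\Lambda}}^{\star}=T^{\alpha-1}\bm{X}(\tilde{\bm{\Lambda}}^{\star})+\bm{E}_T$ using Theorem~\ref{thm:Lambda}, where $\|\bm{E}_T\|\leq \widetilde{C}T^{2\alpha-2}\|\tilde{\bm{\Gamma}}\|$ and $\bm{X}(\tilde{\bm{\Lambda}}^{\star})$ solves the Lyapunov equation \eqref{eq:defn-X-Lambda}. For the leading piece, I would bound
\[
\bigl\|\tilde{\bm{\Lambda}}^{\star -1/2}\bm{X}(\tilde{\bm{\Lambda}}^{\star})\tilde{\bm{\Lambda}}^{\star -1/2}\bigr\|_{\mathsf{F}} \;\leq\; \sqrt{d}\,\bigl\|\tilde{\bm{\Lambda}}^{\star -1/2}\bm{X}(\tilde{\bm{\Lambda}}^{\star})\tilde{\bm{\Lambda}}^{\star -1/2}\bigr\|\;\leq\;\sqrt{d}\,\|\tilde{\bm{\Lambda}}^{\star -1}\|\,\|\bm{X}(\tilde{\bm{\Lambda}}^{\star})\|,
\]
and then invoke Lemma~\ref{lemma:Lyapunov} (with $\bm{E}=\tilde{\bm{\Lambda}}^{\star}/\eta_0$) to get $\|\bm{X}(\tilde{\bm{\Lambda}}^{\star})\|\leq \|\tilde{\bm{\Lambda}}^{\star}\|/\bigl(2(1-\gamma)\lambda_0\eta_0\bigr)$. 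The product $\|\tilde{\bm{\Lambda}}^{\star -1}\|\,\|\tilde{\bm{\Lambda}}^{\star}\|$ equals $\mathsf{cond}(\tilde{\bm{\Lambda}}^{\star})$, which I would then convert to $\mathsf{cond}(\tilde{\bm{\Gamma}})$: using $\tilde{\bm{\Lambda}}^{\star}=\bm{A}^{-1}\tilde{\bm{\Gamma}}\bm{A}^{-\top}$ together with $\|\bm{A}\|\leq 1+\gamma$ (from $\|\bm{\phi}\|_2\leq 1$) and $\|\bm{A}^{-1}\|\leq 1/\bigl((1-\gamma)\lambda_0\bigr)$ from \eqref{eq:lemma-A-6} yields $\lambda_{\min}(\tilde{\bm{\Lambda}}^{\star})\gtrsim \lambda_{\min}(\tilde{\bm{\Gamma}})$ and $\lambda_{\max}(\tilde{\bm{\Lambda}}^{\star})\lesssim \|\tilde{\bm{\Gamma}}\|/\bigl((1-\gamma)\lambda_0\bigr)^{2}$. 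These, combined with the square-root one should be able to extract by working with $\|\cdot\|_{\mathsf{F}}^{2}=\mathsf{Tr}((\tilde{\bm{\Lambda}}^{\star -1}\bm{X})^{2})$ and applying the trace version of Lemma~\ref{lemma:Lyapunov}, should yield the claimed $\sqrt{d\,\mathsf{cond}(\tilde{\bm{\Gamma}})}/((1-\gamma)\lambda_{0}\eta_{0})\cdot T^{\alpha-1}$ leading term.

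Finally, for the remainder $\bm{E}_T$, the same sandwich-norm manipulation combined with $\|\bm{E}_T\|\lesssim T^{2\alpha-2}\|\tilde{\bm{\Gamma}}\|$ and $\lambda_{\min}(\tilde{\bm{\Lambda}}^{\star})\gtrsim\lambda_{\min}(\tilde{\bm{\Gamma}})$ produces a contribution of order $\sqrt{d}\,\mathsf{cond}(\tilde{\bm{\Gamma}})\,T^{2\alpha-2}$, which is absorbed into the $O(T^{2\alpha-2})$ remainder term of the statement. The main obstacle is to make sure the leading-order sandwich norm really comes out as $\sqrt{d\,\mathsf{cond}(\tilde{\bm{\Gamma}})}$ rather than the crude bound $\sqrt{d}\cdot\mathsf{cond}(\tilde{\bm{\Gamma}})$: the naive spectral-$\times$-trace decomposition gives the latter, and getting the exponent $1/2$ on the condition number requires a more careful trace computation on $\tilde{\bm{\Lambda}}^{\star -1}\bm{X}(\tilde{\bm{\Lambda}}^{\star})$, likely exploiting the Lyapunov identity $\eta_{0}(\bm{A}\bm{X}+\bm{X}\bm{A}^{\top})=\tilde{\bm{\Lambda}}^{\star}$ directly together with the spectral-gap bound \eqref{eq:lemma-A-2}.
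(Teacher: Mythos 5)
Your high-level architecture — reduce $d_{\mathsf{C}}$ to $d_{\mathsf{TV}}$ (the correct direction, since convex sets form a subclass of Borel sets), apply the Devroye--Mehrabian--Reddad bound of Theorem \ref{thm:DMR}, expand $\tilde{\bm{\Lambda}}_T-\tilde{\bm{\Lambda}}^\star$ via Theorem \ref{thm:Lambda}, and control the Lyapunov solution via Lemma \ref{lemma:Lyapunov} — is the natural one, and it is almost certainly what underlies the result: the paper itself offers no self-contained argument here, only the one-line citation to Theorem 3.3 of \cite{wu2024statistical} with $\bar{\bm{\Lambda}}_T,\bm{\Lambda}^\star,\bm{\Gamma}$ replaced by their tilded counterparts. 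Your treatment of the remainder $\bm{E}_T$ is also fine, since any problem-dependent constants there are absorbed into the $O(T^{2\alpha-2})$ term.

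The gap is exactly the one you flag yourself and do not close. The chain $\|\tilde{\bm{\Lambda}}^{\star-1/2}\bm{X}\tilde{\bm{\Lambda}}^{\star-1/2}\|_{\mathsf{F}}\leq \sqrt{d}\,\|\tilde{\bm{\Lambda}}^{\star-1}\|\,\|\bm{X}\|\leq \sqrt{d}\,\mathsf{cond}(\tilde{\bm{\Lambda}}^\star)/\bigl(2(1-\gamma)\lambda_0\eta_0\bigr)$ delivers a \emph{full} power of a condition number, and passing from $\mathsf{cond}(\tilde{\bm{\Lambda}}^\star)$ to $\mathsf{cond}(\tilde{\bm{\Gamma}})$ via $\tilde{\bm{\Lambda}}^\star=\bm{A}^{-1}\tilde{\bm{\Gamma}}\bm{A}^{-\top}$ costs a further factor of $\mathsf{cond}(\bm{A})^2$; since $\mathsf{cond}(\tilde{\bm{\Gamma}})\geq 1$, this is genuinely weaker than the claimed $\sqrt{d\,\mathsf{cond}(\tilde{\bm{\Gamma}})}$, not merely a cosmetic difference. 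The refinements you gesture at do not obviously rescue it: the standard improvement $\|\bm{M}\|_{\mathsf{F}}^2\leq \|\bm{M}\|\,\mathsf{Tr}(\bm{M})$ for the positive definite matrix $\bm{M}=\tilde{\bm{\Lambda}}^{\star-1/2}\bm{X}\tilde{\bm{\Lambda}}^{\star-1/2}$, with $\mathsf{Tr}(\bm{M})\leq\|\tilde{\bm{\Lambda}}^{\star-1}\|\mathsf{Tr}(\bm{X})$ and the trace half of Lemma \ref{lemma:Lyapunov}, lands on the same $\sqrt{d}\cdot\mathsf{cond}$ expression; and the conjugated Lyapunov identity $\eta_0(\tilde{\bm{A}}\tilde{\bm{X}}+\tilde{\bm{X}}\tilde{\bm{A}}^\top)=\bm{I}$ with $\tilde{\bm{A}}=\tilde{\bm{\Lambda}}^{\star-1/2}\bm{A}\tilde{\bm{\Lambda}}^{\star 1/2}$ cannot be fed back into Lemma \ref{lemma:Lyapunov}, because that lemma relies on the positive definiteness of the symmetric part of $\bm{A}$ (via \eqref{eq:lemma-A-1}), a property not preserved under the non-orthogonal similarity transform. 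So the step that actually produces the exponent $1/2$ on $\mathsf{cond}(\tilde{\bm{\Gamma}})$ — the entire analytic content of the lemma beyond routine manipulations — is missing. (For what it is worth, the weaker constant you do obtain would still suffice for the downstream use in Theorem \ref{thm:TD-berry-esseen}, where only the $T$-dependence is reported; but it does not prove the lemma as stated.)
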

\begin{proof}
This lemma is a direct generalization of Theorem 3.3 in \cite{wu2024statistical}, where $\bar{\bm{\Lambda}}_T$ is replaced by $\tilde{\bm{\Lambda}}_T$ and $\bm{\Lambda}^\star$ is replaced by $\tilde{\bm{\Lambda}}^\star$. 
\end{proof}

We next focus on the first term on the right-hand-side of \eqref{eq:TD-Berry--Esseen-decompose}. For this, we notice that 
\begin{align*}
d_{\mathsf{C}}(\sqrt{T} \bar{\bm{\Delta}}_T,\mathcal{N}(\bm{0},\widetilde{\bm{\Lambda}}_T)) = d_{\mathsf{C}}(\sqrt{T} \bm{A}\bar{\bm{\Delta}}_T,\mathcal{N}(\bm{0},\bm{A}\widetilde{\bm{\Lambda}}_T\bm{A}^\top));
\end{align*}
and we will focus on bounding the latter.

Recall that $\sqrt{T}\bm{A}\bar{\bm{\Delta}}_T$ can be decomposed as
\begin{align}\label{eq:delta-decomposition-markov}
\sqrt{T}\bm{A}\bar{\bm{\Delta}}_T&= \underset{I_1}{\underbrace{\frac{\bm{A}}{\sqrt{T} \eta_0} \bm{Q}_0 \bm{\Delta}_0}} - \underset{I_2}{\underbrace{\frac{\bm{A}}{\sqrt{T}} \sum_{i=1}^T \bm{Q}_i (\bm{A}_i  - \bm{A})\bm{\Delta}_{i-1}}} -\underset{I_3} {\underbrace{\frac{\bm{A}}{\sqrt{T}} \sum_{i=1}^T \bm{Q}_i (\bm{A}_i \bm{\theta}^\star - \bm{b}_i) }}.
\end{align}
In order to derive the non-asymptotic rate at which $\sqrt{T}\bm{A}\bar{\bm{\Delta}}_T$ converges to its Gaussian distribution, we derive the convergence of $I_1$, $I_2$ and $I_3$ accordingly in the following paragraphs. For readability concerns, we will only keep track of dependence on $T$ and $d$ in this proof, and use $\widetilde{C}$ to denote any problem-related parameters that are related to $\alpha,\gamma,\eta_0,\lambda_0,m,\rho$.

\paragraph{The $a.s.$ convergence of $I_1$} 
Lemma \ref{lemma:Q-bound} directly implies that as $T \to \infty$, $I_1$ is bounded by
\begin{align}\label{eq:markov-CLT-I1-converge}
\left\|\frac{\bm{A}}{\sqrt{T} \eta_0} \bm{Q}_0 \bm{\Delta}_0\right\|_2 \leq \frac{1}{\sqrt{T} \eta_0} \|\bm{A}\bm{Q}_0\| \|\bm{\Delta}_0\|_2 \lesssim \lambda_{\Sigma} \left(\frac{2}{\beta}\right)^{\frac{1}{1-\alpha}}\|\bm{\theta}^\star\|_2 T^{-\frac{1}{2}}.
\end{align}
almost surely.

\paragraph{Bounding $I_2$ with high probability} The convergence of $I_2$ is result of the uniform boundedness of  $\bm{Q}_i$, the convergence of $\{\bm{\Delta}_t\}$, and the mixing property of the Markov chain. Specifically, we again apply the technique in the proof of Theorem \ref{thm:markov-deltat-convergence} and define
\begin{align}\label{eq:defn-markov-CLT-tmix}
&\tmix = \tmix(T^{-\frac{1}{2}}), \quad \text{and} \quad i_{\mix} = \max\left\{i-\tmix, 0\right\}.
\end{align}
Assumption \ref{as:mixing} implies that (see \ref{eq:tmix-bound-L2}) 
\begin{align}
\label{eq:tmix.bound}
\tmix \leq \frac{\log m + \frac{1}{2}\log T}{\log(1/\rho)} \lesssim \frac{\log T}{1-\rho}.
\end{align}
The term $I_2$ can be decomposed as
\begin{align*}
&\frac{1}{\sqrt{T}} \sum_{i=1}^T \bm{AQ}_i(\bm{A}_i - \bm{A}) (\bm{\Delta}_i -\bm{\Delta}_{i_{\mix}}) + \frac{1}{\sqrt{T}}\sum_{i=1}^T \bm{AQ}_i(\bm{A}_i - \bm{A}) \bm{\Delta}_{i_{\mix}} \\
&= \underset{I_{21}}{\underbrace{\frac{1}{\sqrt{T}} \sum_{i=1}^T \bm{AQ}_i(\bm{A}_i - \bm{A}) (\bm{\Delta}_{i-1} -\bm{\Delta}_{i_{\mix}})}} + \underset{I_{22}}{\underbrace{\frac{1}{\sqrt{T}}\sum_{i=1}^T \bm{AQ}_i(\bm{A}_i - \mathbb{E}_{i_{\mix}}[\bm{A}_i])\bm{\Delta}_{i_{\mix}} }}\\ 
&+ \underset{I_{23}}{\underbrace{\frac{1}{\sqrt{T}}\sum_{i=1}^T \bm{AQ}_i(\mathbb{E}_{i_{\mix}}[\bm{A}_i] - \bm{A}) \bm{\Delta}_{i_{\mix}} }};
\end{align*}
for the term $I_{21}$, recall that the difference between $\bm{\Delta}_{i-1}$ and $\bm{\Delta}_{i_{\mix}}$ can be further decomposed into
\begin{align*}
\bm{\Delta}_{i-1}-\bm{\Delta}_{i_{\mix}} = \bm{\theta}_{i-1} - \bm{\theta}_{i_{\mix}} &= \sum_{j=i_{\mix}+1}^{i-1} \eta_j (\bm{\theta}_{j} - \bm{\theta}_{j-1})\\  
&= -\sum_{j=i_{\mix}+1}^{i-1} \eta_j(\bm{A}_j \bm{\theta}_{j-1} - \bm{b}_j) \\ 
&= -\sum_{j=i_{\mix}+1}^{i-1} \eta_j(\bm{A}_j \bm{\theta}^\star - \bm{b}_j) - \sum_{j=i_{\mix}+1}^{i-1} \eta_j\bm{A}_j \bm{\Delta}_{j-1}.
\end{align*}

\color{black}

Hence, the decomposition of $I_2$ can be expressed as
\begin{align}\label{eq:markov-Berry--Esseen-I2-decompose}
I_2 &= -\underset{I_{20}}{\underbrace{\frac{1}{\sqrt{T}}\sum_{i=1}^T \left[\bm{AQ}_i (\bm{A}_i - \bm{A})\sum_{j=i_{\mix}+1}^{i-1} \eta_j (\bm{A}_j \bm{\theta}^\star - \bm{b}_j)\right]}} \nonumber \\ 
&- \underset{I_{21}'}{\underbrace{\frac{1}{\sqrt{T}}\sum_{i=1}^T \left[\bm{AQ}_i (\bm{A}_i - \bm{A})\sum_{j=i_{\mix}+1}^{i-1} \eta_j \bm{A}_j \bm{\Delta}_{j-1}\right]}} + I_{22} + I_{23},
\end{align}
where the norm of $I_{20}$ is bounded almost surely by
\begin{align}\label{eq:markov-Berry--Esseen-I20}
\|I_{20}\|_2&\leq \frac{1}{\sqrt{T}}\sum_{i=1}^T \|\bm{AQ}_i\| \|\bm{A}_i - \bm{A}\| \cdot \sum_{j=i_{\mix}+1}^{i-1} \eta_j (2\|\bm{\theta}^\star\|_2 + 1)\nonumber \\ 
&\lesssim \frac{1}{\sqrt{T}}\sum_{i=1}^T (2+\widetilde{C}i^{\alpha-1})\cdot \tmix \eta_i (2\|\bm{\theta}^\star\|_2 + 1)\nonumber \\ 
&\lesssim (2\|\bm{\theta}^\star\|_2 + 1) \left[\frac{\eta_0 }{1-\rho}T^{\frac{1}{2}-\alpha}\log T + \widetilde{C}T^{-\frac{1}{2}}\log^2 T\right]\nonumber \\
&= \frac{\eta_0 }{1-\rho}(2\|\bm{\theta}^\star\|_2 + 1) T^{\frac{1}{2}-\alpha}\log T + o(T^{\frac{1}{2}-\alpha}).
\end{align}
Here, the second line follows from Lemma \ref{lemma:Q-bound}, and the third line uses the bound \eqref{eq:tmix.bound} on $\tmix$.


For the term $I_{21}'$, we invoke the fact that for any vectors $\bm{x}_1,\bm{x}_2,\ldots,\bm{x}_n \in \mathbb{R}^d$, it can be guaranteed that
\begin{align*}
\left\|\sum_{i=1}^n \bm{x}_i\right\|_2^2 \leq n \sum_{i=1}^n \|\bm{x}_i\|_2^2;
\end{align*}
Consequently, the norm of $I_{21}'$ is bounded, in expectation, by
\begin{align}\label{eq:markov-Berry--Esseen-I21}
\mathbb{E}\|I_{21}'\|_2^2 &= \frac{1}{T}  \mathbb{E}\left\|\sum_{i=1}^T \sum_{j=i_{\mix}+1}^{i-1}\bm{AQ}_i (\bm{A}_i - \bm{A}) \eta_j \bm{A}_j \bm{\Delta}_{j-1}\right\|_2^2 \nonumber \\ 
&\leq \frac{1}{T} \cdot (T\tmix) \sum_{i=1}^T \sum_{j=i_{\mix}+1}^{i-1} \mathbb{E}\|\bm{AQ}_i (\bm{A}_i - \bm{A}) \eta_j \bm{A}_j \bm{\Delta}_{j-1}\|_2^2 \nonumber 
\\ 
&\leq \tmix \sum_{i=1}^T \left\|\bm{AQ}_i\right\|^2 \|\bm{A}_i - \bm{A}\|^2 \cdot \left(4  \sum_{j=i_{\mix}+1}^{i-1} \eta_j^2 \mathbb{E}\|\bm{\Delta}_{j-1}\|_2^2\right)\nonumber \\ 
&\lesssim \eta_0^2 \tmix^2 (2\|\bm{\theta}^\star\|_2+1)^2 \sum_{i=1}^T (2+\widetilde{C}i^{\alpha-1})i^{-2\alpha} \left(\frac{\eta_0}{\lambda_0(1-\gamma)}\frac{1}{(1-\rho)^2}i^{-\alpha} \log^2 i + \widetilde{C}'i^{-1}\log^2 i\right)\nonumber \\ 
& \lesssim \widetilde{C}(2\|\bm{\theta}^\star\|_2+1)^2 T^{1-3\alpha} \log^4 T = o(T^{\frac{3}{2}-3\alpha}),
\end{align}
where we invoke Theorem \ref{thm:markov-L2-convergence} in the fourth line.

The term $I_{22}$ can be decomposed into $\tmix$ martingales:
\begin{align*}
&\left\|\frac{1}{\sqrt{T}}\sum_{i=1}^T \bm{AQ}_i(\bm{A}_i - \mathbb{E}_{i_{\mix}}[\bm{A}_i])\bm{\Delta}_{i_{\mix}}\right\|_2^2   \\
&= \frac{1}{T}\left\|\sum_{r=1}^{\tmix} \sum_{i=0}^{T'-1} \bm{AQ}_{i\tmix + r}(\bm{A}_{i\tmix + r} - \mathbb{E}_{(i-1)\tmix + r}[\bm{A}_{i\tmix + r}])\bm{\Delta}_{(i-1)\tmix + r}\right\|_2^2 \\ 
&\leq \frac{\tmix}{T} \sum_{r=1}^{\tmix} \left\|\sum_{i=0}^{T'-1} \bm{AQ}_{i\tmix + r}(\bm{A}_{i\tmix + r} - \mathbb{E}_{(i-1)\tmix + r}[\bm{A}_{i\tmix + r}])\bm{\Delta}_{(i-1)\tmix + r}\right\|.
\end{align*}
Notice here that for any $r \in [\tmix]$, the sequence
\begin{align*}
\left\{\bm{AQ}_{i\tmix + r}(\bm{A}_{i\tmix + r} - \mathbb{E}_{(i-1)\tmix + r}[\bm{A}_{i\tmix + r}])\bm{\Delta}_{(i-1)\tmix + r}\right\}_{i=0}^{T'-1}
\end{align*}
is a martingale difference. Therefore, its expected norm can be bounded by
\begin{align*}
&\mathbb{E}\left\|\sum_{i=0}^{T'-1}\bm{AQ}_{i\tmix + r}(\bm{A}_{i\tmix + r} - \mathbb{E}_{(i-1)\tmix + r}[\bm{A}_{i\tmix + r}])\bm{\Delta}_{(i-1)\tmix + r}\right\|_2^2 \\ 
&= \sum_{i=0}^{T'-1} \mathbb{E}\left\|\bm{AQ}_{i\tmix + r}(\bm{A}_{i\tmix + r} - \mathbb{E}_{(i-1)\tmix + r}[\bm{A}_{i\tmix + r}])\bm{\Delta}_{(i-1)\tmix + r} \right\|_2^2 \\ 
&\lesssim  \sum_{i=0}^{T'-1} \|\bm{AQ}_{i\tmix + r}\|^2 \mathbb{E}\|\bm{\Delta}_{(i-1)\tmix + r}\|_2^2 
\end{align*}
Therefore, the norm of $I_{22}$ is bounded by
\begin{align}
\label{eq:markov-CLT-I22-bound}
&\mathbb{E}\left\|\frac{1}{\sqrt{T}}\sum_{i=1}^T \bm{AQ}_i(\bm{A}_i - \mathbb{E}_{i_{\mix}}[\bm{A}_i])\bm{\Delta}_{i_{\mix}}\right\|_2^2 \nonumber \\
&\leq \frac{\tmix}{T}  \sum_{i=1}^T \|\bm{AQ}_i\|^2\mathbb{E}\|\bm{\Delta}_{i_{\mix}}\|_2^2 \nonumber \\ 
&\leq \frac{\log T}{(1-\rho) T} \sum_{i=1}^T (2+O(i^{\alpha-1}))^2  \left[\frac{\eta_0}{\lambda_0(1-\gamma)} \frac{1}{(1-\rho)^2} (2\|\bm{\theta}^\star\|_2+1)^2 i_{\mix}^{-\alpha} \log^2 i + O(i_{\mix}^{-1} \log^2 i)\right] \nonumber \\ 
&\leq \frac{\eta_0}{\lambda_0(1-\gamma)} \frac{1}{(1-\rho)^3}  (2\|\bm{\theta}^\star\|_2+1)^2 T^{-\alpha} \log^3 T + O(T^{-1}\log^3 T)\nonumber \\ 
&= \frac{\eta_0}{\lambda_0(1-\gamma)} \frac{1}{(1-\rho)^3}  (2\|\bm{\theta}^\star\|_2+1)^2 T^{-\alpha} \log^3 T + o(T^{-\alpha}).
\end{align}

For $I_{23}$, we make use of the fact that since 
\begin{align*}
\max_{s \in \mathcal{S}} d_{\mathsf{TV}}(P^{\tmix}(\cdot \mid s), \mu) \leq T^{-1/2},
\end{align*}
the difference between $\mathbb{E}_{i_{\mix}}[\bm{A}_i]$ and $\bm{A} = \mathbb{E}_{\mu}[\bm{A}_i]$ is bounded by
\begin{align*}
\left\|\mathbb{E}_{i_{\mix}}[\bm{A}_i] - \mathbb{E}_{\mu}[\bm{A}_i] \right\|\leq \max_{s \in \mathcal{S}} d_{\mathsf{TV}}(P^{\tmix}(\cdot \mid s), \mu) \cdot \sup_{s_{i-1},s_{i}} \|\bm{A}_i\| \leq 2T^{-1/2}.
\end{align*}
Hence, by AM-GM inequality, the expected norm of $I_{23}$ is bounded by
\begin{align}\label{eq:markov-CLT-I23-bound}
&\mathbb{E}\left\|\frac{1}{\sqrt{T}}\sum_{i=1}^T \bm{AQ}_i(\mathbb{E}_{i_{\mix}}[\bm{A}_i] - \bm{A}) \bm{\Delta}_{i_{\mix}}\right\|_2^2 \nonumber \\
&\leq \sum_{i=1}^T \mathbb{E}\left\|\bm{AQ}_i(\mathbb{E}_{i_{\mix}}[\bm{A}_i] - \bm{A}) \bm{\Delta}_{i_{\mix}} \right\|_2^2 \nonumber \\ 
&\leq \sum_{i=1}^T \mathbb{E} \left\{ \|\bm{AQ}_i\|^2 \|\mathbb{E}_{i_{\mix}}[\bm{A}_i] - \bm{A}\|^2 \|\bm{\Delta}_{i_{\mix}} \|_2^2 \right\}\nonumber \\ 
&\lesssim \sum_{i=1}^T (2+O(i^{\alpha-1})) (T^{-1}) \left[\frac{\eta_0}{\lambda_0(1-\gamma)} \frac{1}{(1-\rho)^2} (2\|\bm{\theta}^\star\|_2+1)^2 i_{\mix}^{-\alpha} \log^2 i + O(i_{\mix}^{-1} \log^2 i)\right]\nonumber \\ 
&\lesssim \frac{\eta_0}{\lambda_0(1-\gamma)} \frac{1}{(1-\rho)^2} (2\|\bm{\theta}^\star\|_2+1)^2 T^{-\alpha}\log^2 T + O(T^{-1}\log^2 T) \nonumber \\ 
&=  \frac{\eta_0}{\lambda_0(1-\gamma)} \frac{1}{(1-\rho)^2} (2\|\bm{\theta}^\star\|_2+1)^2 T^{-\alpha}\log^2 T +o(T^{-\alpha}). 
\end{align}
Combining \eqref{eq:markov-Berry--Esseen-I2-decompose}, \eqref{eq:markov-Berry--Esseen-I20}, \eqref{eq:markov-Berry--Esseen-I21}, \eqref{eq:markov-CLT-I22-bound} and \eqref{eq:markov-CLT-I23-bound}, we obtain
\begin{align*}
\mathbb{E}\left\|I_2 - \widetilde{C}_1'T^{\frac{1-2\alpha}{2}} \log T + o(T^{\frac{1}{2}-\alpha})\right\|_2^2 
&= \mathbb{E}\left\|I_2 - I_{20}\right\|_2^2 \\ 
&\leq 3\left(\mathbb{E}\|I_{21}'\|_2^2 + \mathbb{E}\|I_{22}\|_2^2 + \mathbb{E}\|I_{23}\|_2^2\right)\\ 
&\lesssim (\widetilde{C}_2')^3 T^{-\alpha}\log^3 T + o(T^{\frac{3}{2}-3\alpha} + T^{-\alpha}),
\end{align*}
where we use $\widetilde{C}_1'$ and $\widetilde{C}_2'$ to denote problem-related quantities
\begin{align}
&\widetilde{C}_1' = \frac{\eta_0}{1-\rho}(2\|\bm{\theta}^\star\|_2+1), \quad \text{and} \label{eq:Berry--Esseen-C10}\\ 
&\widetilde{C}_2' = \frac{1}{1-\rho} \left(\frac{\eta_0(2\|\bm{\theta}^\star\|_2+1)^2}{\lambda_0(1-\gamma)}\right)^{\frac{1}{3}}. \label{eq:Berry--Esseen-C20}
\end{align}
Therefore, the Chebyshev's inequality  directly implies that
\begin{align*}
&\mathbb{P}\left(\left\|I_2 - \widetilde{C}_1'T^{\frac{1}{2}-\alpha}\log T -o(T^{\frac{1}{2}-\alpha}) \right\|_2 \gtrsim \widetilde{C}_2'T^{-\frac{\alpha}{3}}\log T + o(T^{\frac{1}{2}-\alpha} + T^{-\frac{\alpha}{3}})\right)\\
 &\lesssim \widetilde{C}_2'T^{-\frac{\alpha}{3}}\log T + o(T^{\frac{1}{2}-\alpha} + T^{-\frac{\alpha}{3}}).
\end{align*}
Applying the triangle inequality, we obtain the bound on $I_2$ with high probability by triangle inequality:
\begin{align}\label{eq:markov-Berry--Esseen-I2}
\mathbb{P}\left(\left\|I_2 \right\|_2 \gtrsim \widetilde{C}_1'T^{\frac{1}{2}-\alpha}\log T + \widetilde{C}_2'T^{-\frac{\alpha}{3}}\log T + o(T^{\frac{1}{2}-\alpha} + T^{-\frac{\alpha}{3}})\right)
 \lesssim \widetilde{C}_2'T^{-\frac{\alpha}{3}}\log T + o(T^{\frac{1}{2}-\alpha} + T^{-\frac{\alpha}{3}}).
\end{align}

\paragraph{A Berry--Esseen bound for $I_3$}

Following the decomposition of the term $I_2$ in the proof of Theorem \ref{thm:TD-whp}, we represent $I_3$ as
\begin{align*}
&\frac{1}{\sqrt{T}}\sum_{i=1}^T \bm{AQ}_i(\bm{A}_i \bm{\theta}^\star - \bm{b}_i )\\
 &= \underset{I_{31}}{\underbrace{\frac{1}{\sqrt{T}}\sum_{i=1}^T \bm{AQ}_i \bm{m}_i}} + \underset{I_{32}}{\underbrace{\frac{\bm{A}}{\sqrt{T}}(\bm{Q}_0 \mathbb{E}_0[\bm{U}_1] - \bm{Q}_T \mathbb{E}_T [\bm{U}_{T+1}])}}+ \underset{I_{33}}{\underbrace{\frac{\bm{A}}{\sqrt{T}}\sum_{i=1}^T (\bm{Q}_i - \bm{Q}_{i-1})\mathbb{E}_{i-1}[\bm{U}_i]}}
\end{align*}
where $\bm{U}_i$ is defined as in \eqref{eq:defn-Ui} and $\bm{m}_i$ is defined as in \eqref{eq:defn-mi}. Here, the norm of $I_{32}$ and $I_{33}$ can be bounded by $O(T^{-\frac{1}{2}})$ almost surely; it now boils down to the term $I_{31}$, for which we aim to apply Corollary \ref{thm:Berry--Esseen-mtg}. Specifically, let 
\begin{align*}
\bm{f}_i(s_i,s_{i-1}) = \bm{AQ}_i \bm{m}_i,
\end{align*}
it is easy to verify that for all $i \in [T]$,
\begin{align*}
\|\bm{f}_i(s_i,s_{i-1})\|_2 &\leq \|\bm{A}\bm{Q}_i \bm{m}_i\|_2  \\ 
&\leq \|\bm{AQ}_i\|\|\bm{m}_i\|_2\\ 
&\leq (2+O(i^{\alpha-1})) \cdot \frac{m}{1-\rho}(2\|\bm{\theta}^\star\|_2+1)\\ 
&\lesssim \eta_0 \left(\frac{1}{\beta}\right)^{\frac{1}{1-\alpha}}\frac{m}{1-\rho}(2\|\bm{\theta}^\star\|_2+1), \quad \text{a.s.}
\end{align*}
Meanwhile,
\begin{align*}
&\frac{1}{T}\sum_{i=1}^T \mathbb{E}[\bm{f}_i\bm{f}_i^\top] = \bm{A}\tilde{\bm{\Lambda}}_T \bm{A}^\top, \quad \text{with} \\ 
&\|\bm{A}\tilde{\bm{\Lambda}}_T \bm{A}^\top - \tilde{\bm{\Gamma}}\| \leq O(T^{\alpha-1}).
\end{align*} 
and when $T$ satisfies \eqref{eq:Lambda-T-condition}, it can be guaranteed that $\lambda_{\min}(\bm{A}\tilde{\bm{\Lambda}}_T \bm{A}^\top) \geq \frac{1}{2}\lambda_{\min}(\tilde{\bm{\Gamma}})$. Hence, a direct application of Corollary \ref{thm:Berry--Esseen-mtg} reveals that
\begin{align}\label{eq:markov-Berry--Esseen-I31}
d_{\mathsf{C}}\left(\frac{1}{\sqrt{T}}\sum_{i=1}^T \bm{AQ}_i \bm{m}_i,\mathcal{N}(\bm{0},\bm{A}\tilde{\bm{\Lambda}}_T \bm{A}^\top)\right) \leq \widetilde{C}_3 T^{-\frac{1}{4}}\log T + o(T^{-\frac{1}{4}}),
\end{align}
where $\widetilde{C}_3$ is a problem-related quantity
\begin{align}
\widetilde{C}_3 &= \Bigg\{ \left(\frac{p}{(p-1)(1-\lambda)}\log\left(d\left\|\frac{\mathrm{d}\nu}{\mathrm{d}\mu}\right\|_{\mu,p}\right)\right)^{\frac{1}{4}}\cdot \frac{m}{1-\rho}(2\|\bm{\theta}^\star\|_2+1) \nonumber \\ 
&+ \sqrt{\frac{m}{1-\rho}(2\|\bm{\theta}^\star\|_2+1)} \cdot \eta_0 \left(\frac{1}{(1-\gamma)\lambda_0\eta_0}\right)^{\frac{1}{2(1-\alpha)}} \log^{\frac{1}{4}}(d\|\tilde{\bm{\Gamma}}\|) \Bigg\} \cdot \sqrt{d}\|\tilde{\bm{\Gamma}}\|_{\mathsf{F}}^{\frac{1}{2}}.\label{eq:Berry--Esseen-C3}
\end{align}

\paragraph{Completing the proof} 
The proof now boils down to combining the convergence rate of $I_1,I_2$ and the Berry--Esseen bound on $I_3$. For simplicity, we denote
\begin{align*}
\bm{\delta}_T := \sqrt{T} \bm{A}\bar{\bm{\Delta}}_T - \frac{1}{\sqrt{T}}\sum_{i=1}^T\bm{AQ}_i{\bm{m}}_i = I_1 - I_2 - I_{32} - I_{33}.
\end{align*}
From the previous calculations, we have shown that
\begin{align*}
\mathbb{P}\left(\left\|\bm{\delta_T} \right\|_2 \gtrsim \widetilde{C}_1'T^{\frac{1}{2}-\alpha}\log T + \widetilde{C}_2'T^{-\frac{\alpha}{3}}\log T + o(T^{\frac{1}{2}-\alpha} + T^{-\frac{\alpha}{3}})\right)
 \lesssim \widetilde{C}_2'T^{-\frac{\alpha}{3}}\log T + o(T^{\frac{1}{2}-\alpha} + T^{-\frac{\alpha}{3}}),
\end{align*}
and that
\begin{align*}
\sup_{\mathcal{A} \in \mathscr{C}}\left|\mathbb{P}\left(\frac{1}{\sqrt{T}}\sum_{i=1}^T{\bm{AQ}_i\bm{m}}_i\in \mathcal{A}\right) - \mathbb{P}(\bm{A}\widetilde{\bm{\Lambda}}_T^{ \frac{1}{2}}\bm{z} \in \mathcal{A})\right| \leq \widetilde{C}_3 T^{-\frac{1}{4}}{\log T} + o(T^{-\frac{1}{4}}).
\end{align*}
We now combine these two results to bound the difference between the distributions of $\sqrt{T} \bm{A}\bar{\bm{\Delta}}_T$ and $\mathcal{N}(\bm{0},\bm{A}\widetilde{\bm{\Lambda}}_T\bm{A}^\top)$. Considering any convex set $\mathcal{A} \subset \mathbb{R}^d$, define
\begin{align*}
&\mathcal{A}^{\varepsilon} := \{\bm{x} \in \mathbb{R}^d: \inf_{y \in \mathcal{A}} \|\bm{x} - \bm{y}\|_2 \leq \varepsilon\}, \quad \text{and}  \qquad \mathcal{A}^{-\varepsilon}:=\{\bm{x} \in \mathbb{R}^d: B(\bm{x},\varepsilon) \subset \mathcal{A}\}. 
\end{align*}
Direct calculation yields
\begin{align*}
\mathbb{P}(\sqrt{T}\bm{A}\bar{\bm{\Delta}}_T \in \mathcal{A}) &= \mathbb{P}(\sqrt{T}\bm{A}\bar{\bm{\Delta}}_T \in \mathcal{A},\|\bm{\delta}_T\|_2 > \varepsilon) + \mathbb{P}(\sqrt{T}\bm{A}\bar{\bm{\Delta}}_T \in \mathcal{A},\|\bm{\delta}_T\|_2 \leq \varepsilon) \\ 
&\leq \mathbb{P}(\|\bm{\delta}_T\|_2 > \varepsilon) + \mathbb{P}(\sqrt{T}\bm{A}\bar{\bm{\Delta}}_T \in \mathcal{A},\|\bm{\delta}_T\|_2 \leq \varepsilon).
\end{align*}
Here, the triangle inequality implies
\begin{align*}
\left(\sqrt{T}\bm{A}\bar{\bm{\Delta}}_T \in \mathcal{A},\|\bm{\delta}_T\|_2 \leq \varepsilon \right) \Rightarrow \frac{1}{\sqrt{T}}\sum_{i=1}^T\bm{AQ}_i{\bm{m}}_i \in \mathcal{A}^{\varepsilon}.
\end{align*}
Hence, $\mathbb{P}(\sqrt{T}\bm{A}\bar{\bm{\Delta}}_T \in \mathcal{A})$ is upper bounded by
\begin{align*}
\mathbb{P}(\sqrt{T}\bm{A}\bar{\bm{\Delta}}_T \in \mathcal{A}) 
&\leq \mathbb{P}(\|\bm{\delta}_T\|_2 > \varepsilon) + \mathbb{P}\left(\frac{1}{\sqrt{T}}\sum_{i=1}^T{\bm{AQ}_i\bm{m}}_i \in \mathcal{A}^{\varepsilon}\right)\\
&\leq \mathbb{P}(\|\bm{\delta}_T\|_2 > \varepsilon) + \mathbb{P}\left(\bm{A}\widetilde{\bm{\Lambda}}_T^{\frac{1}{2}}\bm{z} \in \mathcal{A}^{\varepsilon}\right)+\widetilde{C}_3 T^{-\frac{1}{4}}{\log T}  + o(T^{-\frac{1}{4}})\\ 
&\leq \mathbb{P}(\|\bm{\delta}_T\|_2 > \varepsilon) + \mathbb{P}\left(\bm{A}\widetilde{\bm{\Lambda}}_T^{\frac{1}{2}}\bm{z} \in \mathcal{A}\right) + \|\tilde{\bm{\Gamma}}\|_{\mathsf{F}}^{\frac{1}{2}}\varepsilon +\widetilde{C}_3 T^{-\frac{1}{4}}{\log T}  + o(T^{-\frac{1}{4}}),
\end{align*}
where we invoked Theorem \ref{thm:Gaussian-reminder} in the last inequality. By letting 
\begin{align*}
\varepsilon \asymp \widetilde{C}_1'T^{\frac{1}{2}-\alpha}\log T + \widetilde{C}_2'T^{-\frac{\alpha}{3}}\log T + o(T^{\frac{1}{2}-\alpha} + T^{-\frac{\alpha}{3}}),
\end{align*}
we obtain
\begin{align*}
\mathbb{P}(\sqrt{T}\bm{A}\bar{\bm{\Delta}}_T \in \mathcal{A}) 
&\leq \widetilde{C}_2'T^{-\frac{\alpha}{3}}\log T + o(T^{\frac{1}{2}-\alpha} + T^{-\frac{\alpha}{3}})  + \mathbb{P}\left(\bm{A}\widetilde{\bm{\Lambda}}_T^{\frac{1}{2}}\bm{z} \in \mathcal{A}\right) \\ 
&+ \|\tilde{\bm{\Gamma}}\|_{\mathsf{F}}^{\frac{1}{2}} \left(\widetilde{C}_1'T^{\frac{1}{2}-\alpha}\log T + \widetilde{C}_2'T^{-\frac{\alpha}{3}}\log T + o(T^{\frac{1}{2}-\alpha} + T^{-\frac{\alpha}{3}})\right) +\widetilde{C}_3 T^{-\frac{1}{4}}{\log T}  + o(T^{-\frac{1}{4}}) \\ 
&= \mathbb{P}\left(\bm{A}\widetilde{\bm{\Lambda}}_T^{\frac{1}{2}}\bm{z} \in \mathcal{A}\right) + (\widetilde{C}_1 T^{\frac{1}{2}-\alpha} + \widetilde{C}_2 T^{-\frac{\alpha}{3}}+ \widetilde{C}_3 T^{-\frac{1}{4}}){\log T}  + o(T^{\frac{1}{2}-\alpha} + T^{-\frac{\alpha}{3}} + T^{-\frac{1}{4}}).
\end{align*}
Here, in the last equality, we denote, for simplicity,
\begin{align}
&\widetilde{C}_1 = \|\tilde{\bm{\Gamma}}\|_{\mathsf{F}}^{\frac{1}{2}}\widetilde{C}_1' = \|\tilde{\bm{\Gamma}}\|_{\mathsf{F}}^{\frac{1}{2}}\frac{\eta_0}{1-\rho}(2\|\bm{\theta}^\star\|_2+1), \quad \text{and} \label{eq:Berry--Esseen-C1} \\ 
&\widetilde{C}_2 = (\|\tilde{\bm{\Gamma}}\|_{\mathsf{F}}^{\frac{1}{2}} + 1) \widetilde{C}_2' = (\|\tilde{\bm{\Gamma}}\|_{\mathsf{F}}^{\frac{1}{2}} + 1)\frac{1}{1-\rho} \left(\frac{\eta_0(2\|\bm{\theta}^\star\|_2+1)^2}{\lambda_0(1-\gamma)}\right)^{\frac{1}{3}} \label{eq:Berry--Esseen-C2}.
\end{align}
Using the same technique as in the proof of Corollary \ref{thm:Berry--Esseen-mtg}, a lower bound can be derived symmetrically. By taking a supremum over $\mathcal{A} \in \mathscr{C}$, it can be guaranteed that
\begin{align}\label{eq:TD-Berry--Esseen-intermediate}
d_\mathsf{C}(\sqrt{T}\bm{A}\bar{\bm{\Delta}}_T,\mathcal{N}(\bm{0},\bm{A}\tilde{\bm{\Lambda}}_T \bm{A}^\top)) &\lesssim (\widetilde{C}_1 T^{\frac{1}{2}-\alpha} + \widetilde{C}_2 T^{-\frac{\alpha}{3}}+ \widetilde{C}_3 T^{-\frac{1}{4}}){\log T} + o(T^{\frac{1}{2}-\alpha} + T^{-\frac{\alpha}{3}} + T^{-\frac{1}{4}}).
\end{align}
Further combining \eqref{eq:TD-Berry--Esseen-intermediate} with \eqref{eq:TD-Berry--Esseen-decompose} and Lemma \ref{lemma:Gaussian-comparison}, we obtain the Berry--Esseen bound
\begin{align*}
d_{\mathsf{C}}(\sqrt{T}\bar{\bm{\Delta}}_T,\mathcal{N}(\bm{0},\tilde{\bm{\Lambda}}^\star)) &\lesssim (\widetilde{C}_1 T^{\frac{1}{2}-\alpha} + \widetilde{C}_2 T^{-\frac{\alpha}{3}}+ \widetilde{C}_3 T^{-\frac{1}{4}} + \widetilde{C}_4 T^{\alpha-1}){\log T} \\ 
& + o(T^{\frac{1}{2}-\alpha} + T^{-\frac{\alpha}{3}} + T^{-\frac{1}{4}} + T^{\alpha-1})
\end{align*}
with $\widetilde{C}_1$, $\widetilde{C}_2$, $\widetilde{C}_3$ defined as in \eqref{eq:Berry--Esseen-C1}, \eqref{eq:Berry--Esseen-C2}, \eqref{eq:Berry--Esseen-C3} respectively, and
\begin{align*}
\widetilde{C}_4 = \frac{\sqrt{d\mathsf{cond}(\bm{\tilde{\Gamma}})}}{(1-\gamma)\lambda_0\eta_0}.
\end{align*}
Finally, when $\alpha = \frac{3}{4}$, we have, coincidentally,
\begin{align*}
\frac{1}{2}-\alpha = -\frac{\alpha}{3} = -\frac{1}{4} = \alpha-1.
\end{align*} 
Hence, Theorem \ref{thm:TD-Berry--Esseen} follows from taking $\tilde{C} = \max\{\tilde{C}_1,\tilde{C}_2,\tilde{C}_3,\tilde{C}_4\}$.

\subsection{Proof of Relation~\eqref{eq:TD-Berry--Esseen-tight}}\label{app:proof-Berry--Esseen-tight} 
Following the same logic as Appendix B.4.1 in \cite{wu2024statistical}, we can obtain a lower bound on the difference between $\mathcal{N}(\bm{0},\tilde{\bm{\Lambda}}_T)$ and $\mathcal{N}(\bm{0},\tilde{\bm{\Lambda}}^\star)$. Specifically, when $T$ is sufficiently large,
\begin{align*}
d_{\mathsf{C}}(\mathcal{N}(\bm{0},\tilde{\bm{\Lambda}}_T),\mathcal{N}(\bm{0},\tilde{\bm{\Lambda}}^\star)) = d_{\mathsf{TV}}(\mathcal{N}(\bm{0},\tilde{\bm{\Lambda}}_T),\mathcal{N}(\bm{0},\tilde{\bm{\Lambda}}^\star)) \gtrsim O(T^{\alpha-1}).
\end{align*}
Meanwhile, when $\alpha > \frac{3}{4}$, both $\frac{1}{2}-\alpha$ and $-\frac{\alpha}{3}$ are less than $-\frac{1}{4}$. Therefore, the upper bound \eqref{eq:TD-Berry--Esseen-intermediate} is transformed as
\begin{align*}
d_\mathsf{C}(\sqrt{T}\bar{\bm{\Delta}}_T,\mathcal{N}(\bm{0},\tilde{\bm{\Lambda}}_T )) \leq O(T^{-\frac{1}{4}}).
\end{align*}
In combination, the triangle inequality reveals
\begin{align*}
d_{\mathsf{C}}(\sqrt{T}\bar{\bm{\Delta}}_T,\mathcal{N}(\bm{0},\tilde{\bm{\Lambda}}^\star)) &\geq d_{\mathsf{C}}(\mathcal{N}(\bm{0},\tilde{\bm{\Lambda}}_T),\mathcal{N}(\bm{0},\tilde{\bm{\Lambda}}^\star)) - d_\mathsf{C}(\sqrt{T}\bar{\bm{\Delta}}_T,\mathcal{N}(\bm{0},\tilde{\bm{\Lambda}}_T ))\\ 
&\gtrsim O(T^{\alpha-1}) - O(T^{-\frac{1}{4}})\\ 
&\gtrsim O(T^{\alpha-1}) \gtrsim O(T^{-\frac{1}{4}}).
\end{align*}
Here, in the last line, we applied the fact that since $\alpha > \frac{3}{4}$, $\alpha - 1 \geq -\frac{1}{4}$.

\paragraph{Choice of stepsizes} We conclude by noting that choice of the stepsize in Theorem \ref{thm:TD-whp} and Theorem \ref{thm:TD-Berry--Esseen} are different: in Theorem \ref{thm:TD-whp}, $\eta_0$ depends on $\delta$ and other problem-related quantities like $\lambda_0$ and $\gamma$, and $\alpha$ can take any value between $\frac{1}{2}$ and $1$; however, in Theorem \ref{thm:TD-Berry--Esseen}, the initial stepsize $\eta_0$ can take any value less than $1/2\lambda_{\Sigma}$, while $\alpha$ is set to the specific value of $\frac{3}{4}$. In fact, our proof of Theorem~\ref{thm:TD-Berry--Esseen} allows for a general choice of $\alpha$; however, using other values of $\alpha$ other than $3/4$ appears to be suboptimal.

\section{Proof of supportive lemmas and propositions}

\subsection{Proof of Proposition \ref{prop:PE}}\label{proof-prop-PE}
We address these recursive relations in order.
\paragraph{Proof of \eqref{eq:PE1}.} By definition and according to Proposition \ref{prop:P-top}, the left-hand-side is featured by
\begin{align*}
\left\|\left(\mathcal{P}^* \exp(t\bm{H})\bm{g}^{\parallel}\right)^{\parallel}\right\|_{\mu} &= \left\|\left(\exp(t\bm{H})\bm{g}^{\parallel}\right)^{\parallel}\right\|_{\mu} \\ 
&= \mathbb{E}_{\mu}\left[\exp(t\bm{H})\bm{g}^{\parallel}\right] = \left\|\mathbb{E}_{\mu} [\exp(t\bm{H})]\right\| \cdot \|\bm{g}^{\parallel}\|.
\end{align*}
Since $\mu(\bm{H}) = \bm{0}$, the expectation of $\exp(t\bm{H})$ can be bounded by
\begin{align*}
\left\|\mathbb{E}_{\mu}\exp(t\bm{H})\right\| = \left\|\sum_{k=0}^{\infty} \mathbb{E}_{\mu} \frac{1}{k!}(t\bm{H})^k\right\| &\leq \|\bm{I}\| + \sum_{k=2}^{\infty} \frac{1}{k!} \left\|\mathbb{E}_{\mu}(t\bm{H})^2 \right\| \\ 
&\leq 1 + \sum_{k=2}^{\infty} \frac{1}{k!} (tM)^k = \exp(tM) - tM.
\end{align*}
\paragraph{Proof of \eqref{eq:PE2}.} Since $\bm{g}^{\parallel}(x) = \mu(\bm{g}) = \|\bm{g}^{\parallel}\|_{\mu}$ is a constant function, the left hand side can be bounded by
\begin{align*}
\left\|\left(\mathcal{P}^* \exp(t\bm{H})\bm{g}^{\parallel}\right)^{\perp}\right\|_{\mu}&= \|\bm{g}^{\parallel}\|_{\mu} \cdot \left\|\left(\mathcal{P}^* \exp(t\bm{H})\right)^{\perp}\right\|_{\mu} = \|\bm{g}^{\parallel}\|_{\mu} \cdot \left\|\mathcal{P}^* \left(\exp(t\bm{H})\right)^{\perp}\right\|_{\mu} \\ 
&\leq \|\bm{g}^{\parallel}\|_{\mu} \cdot\lambda \left\| \left(\exp(t\bm{H})\right)^{\perp}\right\|_{\mu} \\ 
&= \|\bm{g}^{\parallel}\|_{\mu} \cdot\lambda \left\| \left(\exp(t\bm{H})-\bm{I}\right)^{\perp}\right\|_{\mu} \\ 
&\leq \|\bm{g}^{\parallel}\|_{\mu} \cdot\lambda \sup_{x \in \mathcal{S}} \left\| \exp(t\bm{H}(x))-\bm{I}\right\| \\ 
&\leq \|\bm{g}^{\parallel}\|_{\mu} \cdot\lambda (\exp(tM)-1).
\end{align*}
\paragraph{Proof of \eqref{eq:PE3}.} By definition, the left-hand-side can be represented as
\begin{align*}
\left\|\left(\mathcal{P}^* \exp(t\bm{H})\bm{g}^{\perp}\right)^{\parallel}\right\|_{\mu} = \left\|\mu\left(\mathcal{P}^* \exp(t\bm{H})\bm{g}^{\perp}\right)\right\| &= \left\|\mu(\exp(t\bm{H})\bm{g}^{\perp})\right\| \\ 
&= \left\|\mu((\exp(t\bm{H}) - \bm{I}) \bm{g}^{\perp})\right\| \\ 
&\leq \left\|(\exp(t\bm{H}) - \bm{I}) \bm{g}^{\perp}\right\|_{\mu} \\ 
&\leq \sup_{x \in \mathcal{S}} \left\|\exp(t\bm{H}) - \bm{I}\right\| \cdot \left\|\bm{g}^{\perp}\right\|_{\mu} \\ 
&\leq (\exp(tM)-1)  \left\|\bm{g}^{\perp}\right\|_{\mu}.
\end{align*}
\paragraph{Proof of \eqref{eq:PE4}.} As a direct implication of Proposition \ref{prop:P-top}, the left-hand-side is featured by
\begin{align*}
\left\|\left(\mathcal{P}^* \exp(t\bm{H})\bm{g}^{\perp}\right)^{\perp}\right\|_{\mu} = \left\|\mathcal{P}^* \left(\exp(t\bm{H})\bm{g}^{\perp}\right)^{\perp}\right\|_{\mu} 
&\leq \lambda \left\|\left(\exp(t\bm{H})\bm{g}^{\perp}\right)^{\perp}\right\|_{\mu} \\ 
&\leq \lambda \left\|\exp(t\bm{H})\bm{g}^{\perp}\right\|_{\mu} \\ 
&\leq \lambda \sup_{x \in \mathcal{S}}\|\exp(t\bm{H}(x))\| \|\bm{g}^{\perp}\|_{\mu} \\ 
&\leq \lambda \exp(tM) \|\bm{g}^{\perp}\|_{\mu}.
\end{align*}

\subsection{Proof of Lemma \ref{lemma:Uk}}\label{app:proof-lemma-Uk}
Direct calculation reveals
\begin{align}\label{eq:lemma-Uk-1}
\frac{\alpha_1 + \alpha_4}{2} + \frac{\sqrt{(\alpha_1 - \alpha_4)^2 + 4\alpha_3^2}}{2} \nonumber &= \alpha_1 + \frac{\sqrt{(\alpha_1 - \alpha_4)^2 + 4\alpha_3^2}}{2} - \frac{\alpha_1 -\alpha_4}{2} \nonumber \\ 
&= \alpha_1 + \frac{2\alpha_3^2}{\sqrt{(\alpha_1 - \alpha_4)^2 + 4\alpha_3^2} + (\alpha_1 -\alpha_4)}.
\end{align}
In what follows, we firstly illustrate that
\begin{align}\label{eq:lemma-Uk-2}
&\sqrt{(\alpha_1 - \alpha_4)^2 + 4\alpha_3^2} + (\alpha_1 -\alpha_4) \nonumber \\ 
&=\sqrt{((1-\lambda)e^x - x)^2 + 4(e^x-1)^2} + (1-\lambda)e^x - x\nonumber \\ 
&=: f(x) \geq 2(1-\lambda).
\end{align}
In fact, since $f(0) = 2(1-\lambda)$, it suffices to show that $f'(x) \geq 0$ for all $x \geq 0$. Towards this end, observe that
\begin{align*}
f'(x) &= \frac{((1-\lambda)e^x - x) ((1-\lambda) e^x-1) + 4(e^x-1)e^x}{\sqrt{((1-\lambda)e^x - x)^2 + 4(e^x-1)^2}} + (1-\lambda) e^x-1 \\ 
&= \frac{[(1-\lambda)e^x-1]\left[\sqrt{((1-\lambda)e^x - x)^2 + 4(e^x-1)^2} - ((1-\lambda)e^x - x)\right] + 4(e^x-1)e^x}{\sqrt{((1-\lambda)e^x - x)^2 + 4(e^x-1)^2}}
\end{align*}
Since the denominator is always positive, we now focus on showing that the numerator. Specifically, we discuss the following three cases:
\begin{enumerate}
\item If $(1-\lambda)e^x - 1 \geq 0$, then since $\sqrt{((1-\lambda)e^x - x)^2 + 4(e^x-1)^2} > (1-\lambda)e^x - x$, the numerator is positive;
\item If $(1-\lambda)e^x - 1 < 0$ and $(1-\lambda)e^x - x \geq 0$, then by triangle inequality,
\begin{align*}
\sqrt{((1-\lambda)e^x - x)^2 + 4(e^x-1)^2} - ((1-\lambda)e^x - x) \leq 2(e^x-1).
\end{align*}
Meanwhile, since $(1-\lambda)e^x - 1 > -1 > -e^x$, it can be guaranteed that
\begin{align*}
&[(1-\lambda)e^x-1]\left[\sqrt{((1-\lambda)e^x - x)^2 + 4(e^x-1)^2} - ((1-\lambda)e^x - x)\right] + 4(e^x-1)e^x \\ 
&> -e^x [2(e^x-1)]+ 4(e^x-1)e^x > 0.
\end{align*}
\item If $(1-\lambda)e^x - 1 < 0$ and $(1-\lambda)e^x - x < 0$, then also by triangle inequality,
\begin{align*}
\sqrt{((1-\lambda)e^x - x)^2 + 4(e^x-1)^2} - ((1-\lambda)e^x - x) &\leq 2(e^x-1) + 2(x-(1-\lambda)e^x) \\ 
& < 2(e^x-1 + x) < 4(e^x-1).
\end{align*}
Therefore, again because $(1-\lambda)e^x - 1 > -1 > -e^x$, the numerator is bounded below by
\begin{align*}
&[(1-\lambda)e^x-1]\left[\sqrt{((1-\lambda)e^x - x)^2 + 4(e^x-1)^2} - ((1-\lambda)e^x - x)\right] + 4(e^x-1)e^x \\ 
&> e^{-x} \cdot 4(e^x-1) + 4(e^x-1)e^x > 0.
\end{align*}
\end{enumerate}
In all three cases, we have proved that $f'(x) > 0$. This complete the proof of \eqref{eq:lemma-Uk-2}. As a direct consequence of \eqref{eq:lemma-Uk-1} and \eqref{eq:lemma-Uk-2}, we obtain
\begin{align*}
&\frac{\alpha_1 + \alpha_4}{2} + \frac{\sqrt{(\alpha_1 - \alpha_4)^2 + 4\alpha_3^2}}{2} \\ 
& \leq \alpha_1 + \frac{\alpha_3^2}{1-\lambda} = (e^x-x) + \frac{(e^x-1)^2}{1-\lambda}.
\end{align*}
We now proceed to further bound this upper bound. On one hand, when $x \in (0,1)$, It can be guaranteed that $e^x - x < 1+x^2$ and $e^x-1 < 2x$. Therefore,
\begin{align*}
(e^x-x) + \frac{(e^x-1)^2}{1-\lambda} &\leq 1 + x^2 + \frac{(2x)^2}{1-\lambda} \\ 
&< 1 + \frac{5x^2}{1-\lambda} < \exp\left(\frac{5x^2}{1-\lambda}\right)
\end{align*}
where we invoked the fact that $1+x < e^x$ in the last inequality. On the other hand, when $x > 1$, define
\begin{align*}
g(x) = \exp\left(\frac{5x^2}{1-\lambda}\right) - \left[(e^x-x) + \frac{(e^x-1)^2}{1-\lambda}\right];
\end{align*}
it is easy to illustrate that $g'(x) > 0$ for any $x > 1$, and therefore $g(x)$ is monotonically increasing with respect to $x$. This completes the proof of the Lemma.

\subsection{Proof of Proposition \ref{prop:Stein-smooth}}\label{app:proof-Stein-smooth}
This proposition is a generalization of Proposition 2.2 in \cite{gallouët2018regularity}, and the proofs are similar to each other. Recall from \cite{gallouët2018regularity}, proof of Proposition 2.2, that for any $\bm{\alpha} \in \mathbb{R}^d$ with $\|\bm{\alpha}\| =1$,
\begin{align*}
\bm{\alpha}^\top \nabla^2 f_g(\bm{x}) \bm{\alpha} &= -\int_0^1 \frac{1}{2(1-t)}\mathbb{E}\left[((\bm{\alpha}^\top \bm{z})^2 -1)g(\sqrt{t}\bm{x}+\sqrt{1-t}\bm{z})\right] \mathrm{d}t.
\end{align*}
Hence, the difference between $\nabla^2 f_g(\bm{x})$ and $\nabla^2 f_g(\bm{y})$ can be featured by
\begin{align}\label{eq:Stein-smooth-decompose}
&\bm{\alpha}^\top (\nabla^2 f_g(\bm{x})-\nabla^2 f_g(\bm{y})) \bm{\alpha} \nonumber \\ 
&= -\int_0^1 \frac{1}{2(1-t)}\mathbb{E}\left[((\bm{\alpha}^\top \bm{z})^2 -1)\left(g(\sqrt{t}\bm{x}+\sqrt{1-t}\bm{z})-g(\sqrt{t}\bm{y}+\sqrt{1-t}\bm{z})\right)\right] \mathrm{d}t \nonumber \\ 
&= -\underset{I_1}{\underbrace{\int_0^{1-\eta} \frac{1}{2(1-t)}\mathbb{E}\left[((\bm{\alpha}^\top \bm{z})^2 -1)\left(g(\sqrt{t}\bm{x}+\sqrt{1-t}\bm{z})-g(\sqrt{t}\bm{y}+\sqrt{1-t}\bm{z})\right)\right] \mathrm{d}t}} \nonumber \\ 
&- \underset{I_2}{\underbrace{\int_{1-\eta}^1 \frac{1}{2(1-t)}\mathbb{E}\left[((\bm{\alpha}^\top \bm{z})^2 -1)g(\sqrt{t}\bm{x}+\sqrt{1-t}\bm{z})\right] \mathrm{d}t}} \nonumber \\ 
&+ \underset{I_3}{\underbrace{\int_{1-\eta}^1 \frac{1}{2(1-t)}\mathbb{E}\left[((\bm{\alpha}^\top \bm{z})^2 -1)g(\sqrt{t}\bm{y}+\sqrt{1-t}\bm{z})\right] \mathrm{d}t}},
\end{align} 
where $\eta \in (0,1]$ is a variable to be determined later. We address the terms $I_1$, $I_2$ and $I_3$ accordingly.
\paragraph{Bounding $I_1$.} Since $g(\bm{x}) = h(\bm{\Sigma}^{\frac{1}{2}}\bm{x}+\bm{\mu})$ and $h \in \mathsf{Lip}_1$, it can be guaranteed that
\begin{align*}
\left|g(\sqrt{t}\bm{x}+\sqrt{1-t}\bm{z})-g(\sqrt{t}\bm{y}+\sqrt{1-t}\bm{z})\right| \leq \sqrt{t}\left\|\bm{\Sigma}^{\frac{1}{2}}(\bm{x} - \bm{y})\right\|_2;
\end{align*}
hence, $I_1$ can be bounded by
\begin{align*}
&\left|\int_0^{1-\eta} \frac{1}{2(1-t)}\mathbb{E}\left[((\bm{\alpha}^\top \bm{z})^2 -1)\left(g(\sqrt{t}\bm{x}+\sqrt{1-t}\bm{z})-g(\sqrt{t}\bm{y}+\sqrt{1-t}\bm{z})\right)\right] \mathrm{d}t\right| \\ 
&\leq \left\|\bm{\Sigma}^{\frac{1}{2}}(\bm{x} - \bm{y})\right\|_2 \mathbb{E}\left|(\bm{\alpha}^\top \bm{z})^2-1\right| \int_0^{1-\eta} \frac{\sqrt{t}}{2(1-t)}\mathrm{d}t.
\end{align*}
Here, since $\bm{z} \sim \mathcal{N}(\bm{0},\bm{I}_d)$ and $\|\bm{\alpha}\|_2 = 1$, we have $\bm{\alpha}^\top \bm{z} \sim \mathcal{N}(0,1)$ and therefore $(\bm{\alpha}^\top \bm{z})^2 \sim \chi^2(1)$. Consequently, $\mathbb{E}\left|(\bm{\alpha}^\top \bm{z})^2-1\right|$ is the standard error of $\chi^2(1)$ distribution, thus a universal constant. Meanwhile, the integral is bounded by
\begin{align*}
\int_0^{1-\eta} \frac{\sqrt{t}}{2(1-t)}\mathrm{d}t \leq \int_0^{1-\eta} \frac{1}{2(1-t)}\mathrm{d}t=-\frac{1}{2}(\log \eta).
\end{align*}
In combination, the term $I_1$ is bounded by
\begin{align}\label{eq:Stein-smooth-I1}
&\left|\int_0^{1-\eta} \frac{1}{2(1-t)}\mathbb{E}\left[((\bm{\alpha}^\top \bm{z})^2 -1)\left(g(\sqrt{t}\bm{x}+\sqrt{1-t}\bm{z})-g(\sqrt{t}\bm{y}+\sqrt{1-t}\bm{z})\right)\right] \mathrm{d}t\right| \nonumber \\ 
&\lesssim \left\|\bm{\Sigma}^{\frac{1}{2}}(\bm{x} - \bm{y})\right\|_2 (-\log \eta).
\end{align}

\paragraph{Bounding $I_2$.} Since $(\bm{\alpha}^\top \bm{z})^2 \sim \chi^2(1)$, we naturally have $\mathbb{E}[(\bm{\alpha}^\top \bm{z})^2] = 1$. Therefore, $I_2$ can be rephrased as
\begin{align*}
&\int_{1-\eta}^1 \frac{1}{2(1-t)}\mathbb{E}\left[((\bm{\alpha}^\top \bm{z})^2 -1)g(\sqrt{t}\bm{x}+\sqrt{1-t}\bm{z})\right] \mathrm{d}t\\ 
&= \int_{1-\eta}^1 \frac{1}{2(1-t)}\mathbb{E}\left[((\bm{\alpha}^\top \bm{z})^2 -1)\left(g(\sqrt{t}\bm{x}+\sqrt{1-t}\bm{z})-g(\sqrt{t}\bm{x})\right)\right] \mathrm{d}t,
\end{align*}
and its absolute value can be bounded by
\begin{align*}
&\left|\int_{1-\eta}^1 \frac{1}{2(1-t)}\mathbb{E}\left[((\bm{\alpha}^\top \bm{z})^2 -1)g(\sqrt{t}\bm{x}+\sqrt{1-t}\bm{z})\right] \mathrm{d}t\right| \\ 
&\leq \int_{1-\eta}^1 \frac{1}{2(1-t)}\mathbb{E}\left[|(\bm{\alpha}^\top \bm{z})^2 -1|\left|g(\sqrt{t}\bm{x}+\sqrt{1-t}\bm{z})-g(\sqrt{t}\bm{x})\right|\right] \mathrm{d}t \\ 
&\leq \int_{1-\eta}^1 \frac{1}{2(1-t)}\mathbb{E}\left[|(\bm{\alpha}^\top \bm{z})^2 -1| \sqrt{1-t}\|\bm{\Sigma}^{\frac{1}{2}}\bm{z}\|_2\right] \mathrm{d}t \\ 
&\leq \|\bm{\Sigma}^{\frac{1}{2}}\| \mathbb{E}[|(\bm{\alpha}^\top \bm{z})^2 -1|\|\bm{z}\|_2] \int_{1-\eta}^1 \frac{1}{2\sqrt{1-t}}\mathrm{d}t
\end{align*}
As is illustrated by Equation (26) in \cite{gallouët2018regularity}, the expectation is bounded by
\begin{align*}
\mathbb{E}[|(\bm{\alpha}^\top \bm{z})^2 -1|\|\bm{z}\|_2] \lesssim \sqrt{d},
\end{align*}
and the integral is bounded by
\begin{align*}
\int_{1-\eta}^1 \frac{1}{2\sqrt{1-t}}\mathrm{d}t \leq \sqrt{\eta}.
\end{align*}
So in combination, the term $I_2$ is bounded by
\begin{align}\label{eq:Stein-smooth-I2}
&\left|\int_{1-\eta}^1 \frac{1}{2(1-t)}\mathbb{E}\left[((\bm{\alpha}^\top \bm{z})^2 -1)g(\sqrt{t}\bm{x}+\sqrt{1-t}\bm{z})\right] \mathrm{d}t\right| \\ 
&\lesssim \sqrt{d}\|\bm{\Sigma}^{\frac{1}{2}}\| \sqrt{\eta}.
\end{align}
The term $I_3$ can be bounded by a similar manner.
\paragraph{Completing the proof.} Combining \eqref{eq:Stein-smooth-decompose}, \eqref{eq:Stein-smooth-I1} and \eqref{eq:Stein-smooth-I2} by triangle inequality, we obtain
\begin{align*}
&\left|\bm{\alpha}^\top (\nabla^2 f_g(\bm{x})-\nabla^2 f_g(\bm{y})) \bm{\alpha}\right| \\ 
&\lesssim \left\|\bm{\Sigma}^{\frac{1}{2}}(\bm{x} - \bm{y})\right\|_2 (-\log \eta) + \sqrt{d}\|\bm{\Sigma}^{\frac{1}{2}}\| \sqrt{\eta}.
\end{align*}
In the case where $2\|\bm{\Sigma}^{\frac{1}{2}}(\bm{x} - \bm{y})\|_2 >\sqrt{d}\|\bm{\Sigma}^{\frac{1}{2}}\|_2$, we can simply take $\eta=1$, yielding the bound
\begin{align}\label{eq:Stein-smooth-case1}
\left|\bm{\alpha}^\top (\nabla^2 f_g(\bm{x})-\nabla^2 f_g(\bm{y})) \bm{\alpha}\right| \lesssim \sqrt{d} \|\bm{\Sigma}^{\frac{1}{2}}\|;
\end{align}
otherwise, when $2\|\bm{\Sigma}^{\frac{1}{2}}(\bm{x} - \bm{y})\|_2 \leq \sqrt{d}\|\bm{\Sigma}^{\frac{1}{2}}\|$, we can set
\begin{align*}
\eta = \frac{4\|\bm{\Sigma}^{\frac{1}{2}}(\bm{x} - \bm{y})\|_2^2}{d\|\bm{\Sigma}\|},
\end{align*}
yielding the bound 
\begin{align}\label{eq:Stein-smooth-case2}
\left|\bm{\alpha}^\top (\nabla^2 f_g(\bm{x})-\nabla^2 f_g(\bm{y})) \bm{\alpha}\right| \lesssim 2\|\bm{\Sigma}^{\frac{1}{2}}(\bm{x} - \bm{y})\|_2 \left(1+\log \frac{\sqrt{d}\|\bm{\Sigma}^{\frac{1}{2}}\|}{\|\bm{\Sigma}^{\frac{1}{2}}(\bm{x} - \bm{y})\|_2}\right).
\end{align}
For simplicity, we use $f(x,a)$ to denote the piecewise function
\begin{align*}
f(x,a) = \begin{cases}
&x + x\log a - x \log x, \quad \text{if } x \in [0,a]; \\ 
&a, \quad \text{if } x > a.
\end{cases}
\end{align*}
It is easy to illustrate that
\begin{align*}
f(x) \leq (1+\log a)^+ x + e^{-1}.
\end{align*}
Therefore, by combining \eqref{eq:Stein-smooth-case1} and \eqref{eq:Stein-smooth-case2}, we obtain
\begin{align*}
\left|\bm{\alpha}^\top (\nabla^2 f_g(\bm{x})-\nabla^2 f_g(\bm{y})) \bm{\alpha}\right|  &\leq f(2\|\bm{\Sigma}^{\frac{1}{2}}(\bm{x} - \bm{y})\|_2,\sqrt{d}\|\bm{\Sigma}^{\frac{1}{2}}\|) \\ 
&\leq (2+\log (d\|\bm{\Sigma}\|))^+ \cdot \|\bm{\Sigma}^{\frac{1}{2}}(\bm{x} - \bm{y})\|_2 + e^{-1}.
\end{align*}

\subsection{Proof of Equation \eqref{eq:correlated-norms}}\label{app:proof-correlated-norms}
Essentially, it suffices to show that for any fixed matrix $\bm{A} \in \mathbb{R}^{d \times d}$ and random vector $\bm{x} \in \mathbb{R}^d$,
\begin{align}
\mathbb{E}\|\bm{Ax}\|_2^2 \mathbb{E}\|\bm{x}\|_2 \leq \mathbb{E}\|\bm{Ax}\|_2^2\|\bm{x}\|_2.
\end{align}
To see this, we use
\begin{align*}
\bm{A}^\top \bm{A} = \bm{PDP}^\top
\end{align*}
to denote the eigen decomposition of $\bm{A}^\top\bm{A}$, where $\bm{P}$ is a ortho-normal matrix and $\bm{D} = \text{diag}\{\lambda_1,\ldots,\lambda_d\}$ where $\lambda_1 \geq \lambda_2 \geq \ldots \geq \lambda_d \geq 0$. Further denote $\bm{y} = \bm{Px}$, then the norms of $\bm{x}$ and $\bm{Ax}$ can be represented by
\begin{align*}
&\|\bm{Ax}\|_2^2 = \bm{x}^\top \bm{PDP}^\top \bm{x} = \bm{y}^\top \bm{Dy} = \sum_{i=1}^d \lambda_i y_i^2, \quad \text{and} \\ 
&\|\bm{x}\|_2 = \|\bm{y}\|_2 = \sqrt{\sum_{i=1}^d y_i^2}.
\end{align*}
For every $i \in [d]$, it is easy to verify that
\begin{align*}
y_i^2 \quad \text{and} \quad \|\bm{y}\|_2
\end{align*}
are positively correlated, and therefore
\begin{align*}
\mathbb{E}\|\bm{Ax}\|_2^2 \mathbb{E}\|\bm{x}\|_2 &= \mathbb{E}\left[\sum_{i=1}^d \lambda_i y_i^2\right]\mathbb{E}\|\bm{y}\|_2 = \sum_{i=1}^d \lambda_i \mathbb{E}[y_i^2] \mathbb{E}\|\bm{y}\|_2 \\ 
&\leq \sum_{i=1}^d \lambda_i \mathbb{E}[y_i^2 \|\bm{y}\|_2] = \mathbb{E}\left[\left(\sum_{i=1}^d \lambda_i y_i^2\right) \cdot \|\bm{y}\|_2 \right] = \mathbb{E}\|\bm{Ax}\|_2^2\|\bm{x}\|_2.
\end{align*}
Here, the inequality on the third line follows from the Chebyshev's association inequality.

\subsection{Proof of Lemma \ref{lemma:E-delta-tmix}}\label{app:proof-lemma-E-delta-tmix}
The TD update rule \eqref{eq:TD-update-all} directly implies that
\begin{align}\label{eq:theta-tmix-decompose}
\bm{\theta}_t - \bm{\theta}_{t-t_{\mix}}&= \sum_{i=t-t_{\mix}}^{t-1} (\bm{\theta}_{i+1} - \bm{\theta}_i) \nonumber \\ 
&= \sum_{i=t-t_{\mix}}^{t-1} \eta_i (\bm{A}_i\bm{\theta}_i-\bm{b}_i)\nonumber \\ 
&= \sum_{i=t-t_{\mix}}^{t-1} \eta_i (\bm{A}_i\bm{\theta}^\star-\bm{b}_i) + \sum_{i=t-t_{\mix}}^{t-1} \eta_i \bm{A}_i \bm{\Delta}_i.
\end{align}
We will apply this relation to prove the three bounds respectively.
\paragraph{Proof of Equation \eqref{eq:E-delta-tmix-1}.} By triangle inequality, \eqref{eq:theta-tmix-decompose} implies that
\begin{align*}
\mathbb{E}\|\bm{\theta}_t - \bm{\theta}_{t-t_{\mix}}\|_2 & \leq  \sum_{i=t-t_{\mix}}^{t-1} \eta_i \mathbb{E}\|(\bm{A}_i\bm{\theta}^\star-\bm{b}_i)\|_2 + \sum_{i=t-t_{\mix}}^{t-1} \eta_i \mathbb{E}\|\bm{A}_i \bm{\Delta}_i\| \\ 
&\leq \sum_{i=t-t_{\mix}}^{t-1} \eta_i (2\|\bm{\theta}^\star\|_2+1) + \sum_{i=t-t_{\mix}}^{t-1} \eta_i 2\mathbb{E}\|\bm{\Delta}_i\|_2\\ 
&\leq t_{\mix}\eta_{t-t_{\mix}}(2\|\bm{\theta^\star}\|_2 + 1)+ 2\eta_{t - t_{\mix}}\sum_{i=t-t_{\mix}}^{t-1}\mathbb{E}\|\bm{\Delta}_i\|_2,
\end{align*}
where the last line follows from the fact that the stepsizes $\{\eta_t\}_{t \geq 0}$ are non-increasing.
\paragraph{Proof of Equation \eqref{eq:E-delta-tmix-2}.} We firstly notice that for a set of $n$ vectors $\bm{x}_1,\bm{x}_2,\ldots,\bm{x}_n$, it always holds true that
\begin{align*}
\left\|\sum_{i=1}^n \bm{x}_i\right\|_2^2 \leq n \sum_{i=1}^n \|\bm{x}_i\|_2^2.
\end{align*}
Therefore, \eqref{eq:theta-tmix-decompose} implies the following bound for $\mathbb{E}\|\bm{\theta}_t - \bm{\theta}_{t-t_{\mix}}\|_2^2$:
\begin{align*}
\mathbb{E}\|\bm{\theta}_t - \bm{\theta}_{t-t_{\mix}}\|_2^2 &\leq 2t_{\mix} \cdot \left\{\sum_{i=t-t_{\mix}}^{t-1} \eta_i^2 \mathbb{E}\|(\bm{A}_i\bm{\theta}^\star-\bm{b}_i)\|_2^2 + \sum_{i=t-t_{\mix}}^{t-1} \eta_i^2 \mathbb{E}\|\bm{A}_i \bm{\Delta}_i\|_2^2\right\} \\ 
&\leq 2t_{\mix}\cdot \left\{t_{\mix} \eta_{t-t_{\mix}}^2 (2\|\bm{\theta}^\star\|_2+1)^2 + 4 \eta_{t-t_{\mix}}^2 \sum_{i=t-t_{\mix}}^{t-1} \mathbb{E}\|\bm{\Delta}_i\|_2^2\right\}\\ 
&= 2t_{\mix}\eta_{t-t_{\mix}}^2\left[t_{\mix}(2\|\bm{\theta}^\star\|_2+1)^2 + 4 \sum_{i=t-t_{\mix}}^{t-1} \mathbb{E}\|\bm{\Delta}_i\|_2^2\right].
\end{align*}
\paragraph{Proof of Equation \eqref{eq:E-delta-tmix-3}.} By triangle inequality, \eqref{eq:theta-tmix-decompose} implies that
\begin{align*}
&\mathbb{E}[\|\bm{\Delta}_{t-t_{\mix}}\|_2 \|\bm{\theta}_{t} - \bm{\theta}_{t-t_{\mix}}\|_2] \\ 
&\leq \sum_{i=t-t_{\mix}}^{t-1} \eta_i \mathbb{E}[\|\bm{\Delta}_{t-t_{\mix}}\|_2 \|\bm{A}_i\bm{\theta}^\star-\bm{b}_i\|_2] + \sum_{i=t-t_{\mix}}^{t-1} \eta_i \mathbb{E}[\|\bm{\Delta}_{t-t_{\mix}}\|_2 \|\bm{A}_i\bm{\Delta}_i\|_2] \\ 
&\leq \sum_{i=t-t_{\mix}}^{t-1} \eta_i (2\|\bm{\theta}^\star\|_2+1)\mathbb{E}\|\bm{\Delta}_{t-t_{\mix}}\|_2 + \sum_{i=t-t_{\mix}}^{t-1} \eta_i \mathbb{E}[\frac{1}{2}\|\bm{\Delta}_{t-t_{\mix}}\|_2^2 + \frac{1}{2}\|\bm{A}_i\bm{\Delta}_i\|_2] \\ 
&\leq t_{\mix} \eta_{t-t_{\mix}}(2\|\bm{\theta}^\star\|_2+1)\mathbb{E}\|\bm{\Delta}_{t-t_{\mix}}\|_2 + \frac{1}{2}\eta_{t-t_{\mix}} \left(\mathbb{E}\|\bm{\Delta}_{t-t_{\mix}}\|_2^2 + 2\sum_{i=t-t_{\mix}}^{t-1} \|\bm{\Delta}_i\|_2^2\right).
\end{align*}
This completes the proof of the lemma.

\end{document}